\newcommand\numberthis{\addtocounter{equation}{1}\tag{\theequation}}
\newcommand{\Expp}{\textsc{Exp3.P}}
\newcommand{\whatbbp}{\widehat{\mathbb{P}}}
\newcommand{\GhedgeP}{\textsc{GeometricHedge.P}}
\newcommand{\Scrible}{\textsc{SCRiBLe}}
\newcommand{\alg}{\mathtt{Alg}}
\newcommand{\polylog}{\mathtt{polylog}}
\newcommand{\FULI}{F_{\text{ULI}}}
\newcommand{\FUPAC}{F_{\text{UPAC}}}
\newcommand{\FPAC}{F_{\text{PAC}}}
\newcommand{\width}{\text{wd}}
\newcommand{\supp}{\text{supp}}
\newcommand{\optarm}{a^{\star}}
\newcommand{\optmu}{\mu^\star}
\newcommand{\allpolicy}{\Pi_{\texttt{all}}}
\newcommand{\Span}{\text{Span}}
\newcommand{\nll}{\textsc{Null}}
\newcommand{\hatmu}{\widehat{\mu}}
\newcommand{\inner}[1]{ \left\langle {#1} \right\rangle }
\newcommand{\Ind}[1]{ \field{I}{\left\{{#1}\right\}} }
\newcommand{\ind}{\field{I}}
\newcommand{\norm}[1]{\left\|{#1}\right\|}
\newcommand{\wh}[1]{\widehat{#1}}
\newcommand{\hatl}{\widehat{\ell}}
\newcommand{\whatQ}{\widehat{Q}}
\newcommand{\whatV}{\widehat{V}}
\newcommand{\wtilV}{\widetilde{V}}
\newcommand{\wtill}{\widetilde{\ell}}
\newcommand{\naturalnum}{\mathbb{N}}
\theoremstyle{theorem} 
\newtheorem{theorem}{Theorem}[section]
\newtheorem{lemma}[theorem]{Lemma}
\newtheorem{corollary}[theorem]{Corollary}
\newtheorem{proposition}[theorem]{Proposition}
\newtheorem{definition}[theorem]{Definition}
\newtheorem{condition}[theorem]{Condition}
\newtheorem{remark}[theorem]{Remark}
\DeclareMathOperator*{\argmin}{\arg\!\min}
\DeclareMathOperator*{\argmax}{\arg\!\max}
\newcommand{\calA}{{\mathcal{A}}}
\newcommand{\calB}{{\mathcal{B}}}
\newcommand{\calH}{{\mathcal{H}}}
\newcommand{\calS}{{\mathcal{S}}}
\newcommand{\calI}{{\mathcal{I}}}
\newcommand{\calE}{{\mathcal{E}}}
\newcommand{\calT}{{\mathcal{T}}}
\newcommand{\calM}{{\mathcal{M}}}
\newcommand{\calF}{{\mathcal{F}}}
\newcommand{\loss}{\ell}
\newcommand{\htheta}{\wh{\theta}}
\newcommand{\field}[1]{\mathbb{#1}}
\newcommand{\fR}{\field{R}}
\newcommand{\E}{\field{E}}
\renewcommand{\P}{\field{P}}
\newcommand{\order}{\ensuremath{\mathcal{O}}}
\newcommand{\otil}{\ensuremath{\widetilde{\mathcal{O}}}}
\newcommand{\rbr}[1]{\left(#1\right)}
\newcommand{\sbr}[1]{\left[#1\right]}
\newcommand{\cbr}[1]{\left\{#1\right\}}
\newcommand{\abr}[1]{\left|#1\right|}
\DeclareFontFamily{OMX}{MnSymbolE}{}
\DeclareFontShape{OMX}{MnSymbolE}{m}{n}{
    <-6>  MnSymbolE5
   <6-7>  MnSymbolE6
   <7-8>  MnSymbolE7
   <8-9>  MnSymbolE8
   <9-10> MnSymbolE9
  <10-12> MnSymbolE10
  <12->   MnSymbolE12}{}
\DeclareSymbolFont{mnlargesymbols}{OMX}{MnSymbolE}{m}{n}
\DeclareMathDelimiter{\llangle}{\mathopen}{mnlargesymbols}{'164}{mnlargesymbols}{'164}
\DeclareMathDelimiter{\rrangle}{\mathclose}{mnlargesymbols}{'171}{mnlargesymbols}{'171}
\newcommand{\pref}[1]{\prettyref{#1}}
\newcommand{\savehyperref}[2]{\texorpdfstring{\hyperref[#1]{#2}}{#2}}
\newenvironment{itemize*}%
{\begin{itemize}[leftmargin=*,topsep=0pt]%
		\setlength{\itemsep}{2.2pt}%
		\setlength{\parskip}{0pt}}%
	{\end{itemize}}
\newenvironment{enumerate*}%
{\begin{enumerate}[leftmargin=*,topsep=0pt]%
		\setlength{\itemsep}{0pt}%
		\setlength{\parskip}{0pt}}%
{\end{enumerate}}
\title{Uniform Last-Iterate Guarantee for Bandits and Reinforcement Learning}
\author{%
  \hspace*{6pt}Junyan~Liu\\
 \hspace*{6pt} University of Washington\\
  \hspace*{6pt} \texttt{junyanl1@cs.washington.edu} \\
  % examples of more authors
  \And
 \hspace*{8pt}  Yunfan Li \\
   \hspace*{11pt}University of California, Los Angeles \\
   \hspace*{11pt}\texttt{yunfanli@g.ucla.edu} \\
  \AND
  Ruosong Wang\thanks{Corresponding authors} \\
 CFCS and School of Computer Science \\
  Peking University \\
  \texttt{ruosongwang@pku.edu.cn} \\
  \And
   Lin F. Yang\footnotemark[1] \\
  University of California, Los Angeles \\
 \texttt{linyang@ee.ucla.edu} \\
}
\begin{document}
\doparttoc
\faketableofcontents

\maketitle

\begin{abstract}
Existing metrics for reinforcement learning (RL) such as regret, PAC bounds, or uniform-PAC \citep{dann2017unifying}, typically evaluate the \textit{cumulative} performance, while allowing the agent to play an arbitrarily bad policy at any finite time $t$. Such a behavior can be highly detrimental in high-stakes applications. This paper introduces a stronger metric, uniform last-iterate (ULI) guarantee, capturing both cumulative and instantaneous performance of RL algorithms. Specifically, ULI characterizes the instantaneous performance by ensuring that the per-round suboptimality of the played policy is bounded by a function, monotonically decreasing w.r.t. round $t$, preventing revisiting bad policies when sufficient samples are available. We demonstrate that a near-optimal ULI guarantee directly implies near-optimal cumulative performance across aforementioned metrics, but not the other way around. 
To examine the achievability of ULI, we first provide two positive results for bandit problems with finite arms, showing that elimination-based algorithms and high-probability adversarial algorithms with stronger analysis or additional designs, can attain near-optimal ULI guarantees. We also provide a negative result, indicating that optimistic algorithms cannot achieve near-optimal ULI guarantee. Furthermore, we propose an efficient algorithm for linear bandits with \textit{infinitely many arms}, which achieves the ULI guarantee, given access to an optimization oracle. Finally, we propose an algorithm that achieves near-optimal ULI guarantee for the online reinforcement learning setting.
\end{abstract}

% !TEX root = main.tex

\section{Introduction}

% \JL{I rewrite previous intro (previous version is right after the new version). In this version, I would like to make contrast between ``overall performance'' and ``instantaneous performance'' to show that our measure can imply both.} 

In online decision-making problems with bandit feedback, a learner sequentially interacts with an unknown environment: in each round, the learner plays an policy and then observes the corresponding rewards of the played policy. 
Typically, the goal of the learner is to achieve good \textit{cumulative performance}, commonly measured by regret or probably approximately correct (PAC) bound. 
For instance, in the online advertisement scenario, the goal of the website (learner) could be maximizing the cumulative click numbers \citep{li2010contextual}. 
Hence, the website aims to minimize the regret that measures the cumulative clicks of the recommended advertisement compared to that of the unknown optimal advertisement.
In addition to regret minimization, the goal could also be quickly identifying popular advertisements \citep{ChenLKLC14,jin2019efficient}. 
To this end, a PAC bound is suitable here to measure the sample complexity (i.e., cumulative time steps) that the algorithm needs to identify those popular advertisements.
To reap the benefits of both measures, \citet{dann2017unifying} propose a new performance measure called uniform-PAC, ensuring that for all $\epsilon>0$, the total number of $\epsilon$-suboptimal policies played by the algorithm is bounded by a function polynomial in $1/\epsilon$. 
The uniform-PAC bound can simultaneously imply a high-probability sublinear regret bound and a polynomial sample complexity for any desired accuracy.

% The bandit algorithms have shown tremendous success in many sequential decision-making problems, where the learner develops algorithms to repeatedly interact with an unknown environment. 
% Typically, the objective of the learner is to design an algorithm with an \textit{overall good performance}, and the performance is commonly measured by regret or probably approximately correct (PAC) bound. 
% For instance, in the online advertisement scenario, the goal of the website (learner) could be maximizing the overall click rate \citep{li2010contextual}. 
% Therefore, the website aims to minimize the regret that measures the overall click rate compared to that of the unknown optimal policy.
% In the same scenario, the goal of the website could also be quickly identifying popular advertisements \citep{ChenLKLC14,jin2019efficient}. In this case, a PAC bound is suitable for the website to measure an overall sample complexity that the algorithm uses to identify the popular ads.
% To reap the benefits of both measures, \cite{dann2017unifying} propose a new performance measure, called uniform-PAC, ensuring that for all $\epsilon>0$, the total number of plays of $\epsilon$-suboptimal arms is bounded by a function, polynomial in $1/\epsilon$. The uniform-PAC bound can directly imply both a sublinear regret bound and a polynomial sample complexity for any desired accuracy.

Although uniform-PAC provides a powerful framework to unify regret\footnote{Throughout the paper, we focus on high-probability regret, and therefore, when we mention regret, it always refer to high-probability regret. We refer readers to \pref{rem:discussion_expected_reg} for a discussion on expected regret.} and PAC bound, it still fails to capture the \textit{instantaneous performance} of the learning algorithm. In particular, a uniform-PAC algorithm could play a bad policy in some late but finite round $t$, even if it enjoys a good cumulative performance. This drawback impedes the application of uniform-PAC algorithms into high-stakes fields. Clinical trials, for example, place high demands on instantaneous performance for every treatment test, since patients need to be assigned with increasingly better treatments when more experimental data are available \citep{villar2015multi}.
Hence, two natural questions arise:
\begin{center}
\begin{enumerate}
\item   \emph{Can we find a new metric that characterizes not only the cumulative performance but also the instantaneous performance?}
\item   \emph{If such a metric exists, is it \emph{optimally} achievable by some algorithm?}
\end{enumerate}
\end{center}

% {\begin{enumerate}[leftmargin=*,topsep=0pt]%
% 		\setlength{\itemsep}{0pt}%
% 		\setlength{\parskip}{0pt}}%
% {\end{enumerate}}

In this paper, we answer both questions affirmatively. Our main contributions are summarized as follows.
\begin{itemize*}
\item We introduce a new metric called \textit{uniform last-iterate} (ULI), which simultaneously characterizes cumulative and instantaneous performance of sequential decision-making algorithms. On one hand, ULI can characterize the instantaneous performance: the per-round suboptimality of any algorithm with ULI guarantee is upper bounded by a function, monotonically decreasing for late time $t$. On the other hand, we show that any algorithm with a (near-optimal) ULI guarantee is also (near-optimally) uniform-PAC, demonstrating that ULI can imply cumulative performance.

    \item To answer the question whether ULI is achievable, we examine three common types of bandit algorithms in the finite arm setting. First, we provide a stronger analysis to show that many existing elimination-based algorithms indeed enjoy a near-optimal ULI guarantee.  Then, we propose a meta-algorithm that enables any high-probability adversarial bandit algorithms, with a mild condition, to achieve a near-optimal ULI guarantee, and we show that such condition naturally holds for many adversarial bandit algorithms.
    Finally, we provide a hardness result showing that optimistic algorithms (e.g., lil'UCB \citep{jamieson2014lil}) cannot achieve near-optimal ULI guarantee. As lil'UCB is near-optimally uniform-PAC, our hardness result also implies that ULI is \textit{strictly stronger} than uniform-PAC.
    
    % even if the optimistic algorithms enjoy a near-optimal cumulative performance, their instantaneous performance is significantly worse than elimination-based algorithms.

    \item For linear bandits with infinitely-many arms, we propose an oracle-efficient\footnote{A linear bandit algorithm is oracle-efficient if it calls an optimization oracle per-round for at most polynomial number of times.} linear bandit algorithm with the ULI guarantee (with access to an optimization oracle). 
    In particular, we propose an adaptive barycentric spanner technique, selecting finitely many base arms that can linearly represent all (possibly infinitely many) well-behaved arms. This technique generalizes the one in \citep{AwerbuchK08} for elimination-based algorithms by adaptively identifying spaces that active arms span.
    Leveraging the phased elimination algorithm \citep{lattimore2020learning}, our algorithm can conduct the elimination over all arms by only playing a finite subset of arms and querying a linearly-constrained optimization oracle for only a polynomial number of times. 

    \item Finally, we propose a new algorithm for tabular episodic Markov decision processes (MDPs), which achieves a near-optimal ULI guarantee. In particular, our algorithm adapts uncertainty-driven reward functions to encourage exploration of the transition model, which ensures accurate estimations of value functions across all policies. 
    The final ULI guarantee is achieved by conducting policy elimination.
\end{itemize*}

\textbf{Related work.} In online decision-making problems, regret and PAC bounds are widely adopted to evaluate the cumulative performance of algorithms. More concretely, one line of research \citep{auer2002finite,auer2002nonstochastic,abbasi2011improved,li2010contextual,jin2018q} aims to minimize the regret which measures the difference between the cumulative rewards of the selected policies and that of the best policy in hindsight. The PAC guarantees are more common than the regret when studying the pure-exploration/best policy identification problems \citep{even2006action,kalyanakrishnan2012pac,wagenmaker2022beyond}. One of the popular PAC measures is $(\delta,\epsilon)$-PAC which suggests that with probability at least $1-\delta$, the algorithm can output a near-optimal policy at most $\epsilon$ away from the optimal one by using a sample complexity polynomial in $1/\epsilon$. 
Later, \citet{dann2017unifying} introduce a new framework called uniform-PAC to unify both metrics and develop a uniform-PAC algorithm for episodic Markov decision processes (MDPs). Subsequent works design uniform-PAC algorithms for MDPs with linear function approximation \citep{he2021uniform} and bounded Eluder dimension \citep{wu2023uniform}. Though uniform-PAC strengthens regret and $(\delta,\epsilon)$-PAC bound, it still fails to characterize the instantaneous performance of online algorithms, i.e., a uniform-PAC algorithm, even with a good cumulative performance, can play bad policies for some late rounds.

% heavily studied in game-theoretical literature \citep{daskalakis2018last,mertikopoulos2019optimistic,wei2020linear}, which focus on the convergence to an equilibrium. In a similar spirit,

A seemingly related performance measure is last-iterate convergence (LIC) which has been studied for \textit{optimizing} MDPs \citep{moskovitz2023reload,ding2023last} and they use the primal-dual approach to formulate the problem of identifying an optimal policy in the constrained MDPs from a game-theoretic perspective. These works often require additional knowledge of the value functions and the LIC does not characterize the unknown dynamics of the environment. However, in our problem, the dynamics need to be \emph{learned} as the algorithm sequentially interacts with the environment.

\section{Preliminaries}
\label{sec:preliminaries}

\subsection{Framework}
We consider a general online sequential decision-making framework where a learner interacts with an environment with a fixed decision set. At each round $t \in \naturalnum$, the learner makes a decision from the set and observes the corresponding reward(s). In what follows, we instantiate this framework to multi-armed bandits, linear bandits, and tabular episodic Markov decision processes (MDPs).

% Assume that the rewards follow some stochastic patterns which will be specified for multi-armed bandits (MAB) and linear bandits, respectively as follows.

\textbf{Multi-armed bandits.} In the stochastic MAB setting, the arm (decision) set follows that $\calA=[K]\triangleq \{1,\ldots,K\}$. Each arm $a \in [K]$ is associated with a fixed and unknown $[0,1]$-bounded distribution\footnote{Note that all MAB and linear stochastic bandit algorithms in this paper also work for $R$-subgaussian noise with minor adjustments. Only adversarial bandits algorithms in \pref{sec:elimination_framework} require the boundedness assumption. } such that $\forall t$, reward $X_{t,a}$ is an i.i.d. sample from this distribution with mean $\mu_a =\E[X_{t,a}]$. Let $A_t$ be the arm played at round $t$ and $\Delta_a =\optmu - \mu_a$ be the suboptimality gap where $\optmu = \max_{a \in [K]} \mu_a$.

\textbf{Linear bandits.} In the stochastic linear bandits setup, we assume that the arm set $\calA \subseteq \fR^d$ is compact. The reward of played arm $A_t$ at round $t$ follows that $X_{t,A_t}=\inner{\theta,A_t}+\eta_t$ where $\theta \in \fR^d$ is a fixed but unknown parameter, and $\eta_t$ is conditionally $1$-subgaussian. Let $\Delta_a = \sup_{b \in \calA} \inner{\theta,b-a}$. We follow standard assumptions that $\norm{\theta}_2 \leq 1$, $\norm{a}_2 \leq 1$ for all $a \in \calA$, and $\Delta_a \leq 1$ for all $a \in \calA$.

\textbf{Tabular episodic MDPs.} A tabular episodic MDP is formalized as $\calM=(\calS,\calA,H,r,P,\mu)$ where $\calS,\calA$ are finite state and action spaces with $|\calS|=S,|\calA|=A$, $H$ is the horizon length, $r=\{r_h\}_{h=1}^H$ where $r_h:\calS \times \calA \to [0,1]$ is a known reward function, $\{P_h\}_{h \in [H]}$ where $P_h:\calS \times \calA \to \Delta(\calS)$ is a transition function, and $\mu$ is the initial state distribution. At the beginning of each episode $t$, the learner executes a policy $\pi_t=\{\pi_{t,h}: \calS \to \Delta(\calA) \}_{h=1}^H$. Then, starting from the initial state $s_{t,1} \sim \mu$, for each stage $h\in [H]$, the learner repeatedly takes an action $a_{t,h} \sim \pi_{t,h}(s_{t,h})$, observes reward $r_{h}(s_{t,h},a_{t,h})$, and transits to the next state $s_{t+1,h} \sim P_h(\cdot|s_{t,h},a_{t,h})$.

For any policy $\pi$ and stage $h$, we define action value function $Q^{\pi}_h(s,a)$ and value function $V^{\pi}_h(s)$ as
\[
Q^{\pi}_h(s,a) =\E \sbr{ \sum_{h'=h}^H r_{h'}(s_{h'},a_{h'})\mid s_{h}=s,a_{h}=a,\pi }, \ V^{\pi}_h(s)=\E \sbr{ \sum_{h'=h}^H r_{h'}(s_{h'},a_{h'})\mid s_{h}=s,\pi }.
\]

The optimal action value function and value function at each stage $h$ are denoted by $V_h^\star(s)=\max_{\pi} V_h^{\pi}(s)$, and  $Q_h^\star(s,a)=\max_{\pi} Q_h^{\pi}(s,a)$ respectively. Let $\Delta_{\pi}=\E_{s_1 \sim \mu}[V^{\star}_1(s_1)-V^{\pi}_1(s_1)]$.

% and obtains a trajectory $s_{t,1} \sim \mu,a_{t,1} \sim \pi_{t,1}(s),r_1(s_{t,1},a_{t,1}),\ldots,s_{t,H} \sim P_{H-1}(\cdot|s_{t,H-1},a_{t,H-1}),a_{t,H} \sim \pi_{t,H}(s_{t,H}),r_{H}(s_{t,H},a_{t,H})$.

\textbf{Suboptimality notations.} For MAB and linear bandits settings, the \textit{instantaneous suboptimality} is $\Delta_t= \Delta_{A_t}$, and for episodic MDP, $\Delta_t = \Delta_{\pi_t}$.
We use $\Delta = \inf_{a \in \Pi:\Delta_a>0} \Delta_a$ to denote the minimum suboptimality gap. Notice that it is possible that $\Delta=0$ when, for example, arm set $\calA$ is a ball. 

% For simplicity, we assume that the optimal arm is unique. Note that all our algorithms and analysis are applicable to the setting with multiple optimal arms.

\subsection{Limitations of Existing Metrics}

Regret and $\rbr{\delta,\epsilon}$-PAC are widely adopted to measure the performance.
The regret is defined as:
\begin{definition}[Regret]
For each fixed $T \in \naturalnum$, the regret $R_T$ is defined as $R_T = \sum_{t=1}^T \Delta_{t}$.
\end{definition}

Let $N_\epsilon=\sum_{t=1}^\infty  \Ind{\Delta_{t}>\epsilon}$ be the number of plays of policies whose suboptimality gap is
greater than $\epsilon$.
The definition of $\rbr{\delta,\epsilon}$-PAC is given as:

\begin{definition}[$\rbr{\delta,\epsilon}$-PAC]
For any fixed $\delta,\epsilon \in (0,1)$, an algorithm is $\rbr{\delta,\epsilon}$-PAC (w.r.t. function $\FPAC$) if there exists a function $\FPAC \rbr{\delta,\epsilon}$ polynomial in $\log(\delta^{-1})$ and $\epsilon^{-1}$ such that
\[
\P \rbr{N_{\epsilon} \leq \FPAC \rbr{\delta,\epsilon}} \geq 1-\delta .
\]
\end{definition}

As shown by \citet{dann2017unifying}, both regret and $\rbr{\delta,\epsilon}$-PAC have limitations. Specifically, an algorithm with sublinear regret bound may play suboptimal policies infinitely often. For the algorithm with $\rbr{\delta,\epsilon}$-PAC guarantee, it may not converge to the optimal policy when feeding the algorithm with more samples. Therefore, such an algorithm would play those policies with suboptimality gap, e.g., $\epsilon/2$ infinitely often. Motivated by these limitations, \citet{dann2017unifying} introduce uniform-PAC as follows.
\begin{definition}[Uniform-PAC]
An algorithm is uniform-PAC for some fixed $\delta \in (0,1)$ if there exists a function $\FUPAC \rbr{\delta,\epsilon}$ polynomial in $\log(1/\delta)$ and $\epsilon^{-1}$, such that
\[
\P \rbr{\forall \epsilon>0: N_\epsilon \leq \FUPAC \rbr{\delta,\epsilon} } \geq 1-\delta .
\]
\end{definition}

We also call $\FUPAC(\delta,\epsilon)$ the sample complexity function.
Uniform-PAC is a stronger metric than regret and $(\delta,\epsilon)$-PAC since it leads to the following implications.
\begin{theorem}[Theorem 3 in \citep{dann2017unifying}] \label{thm:UPAC_implications}
If an algorithm is uniform-PAC for some $\delta$ with function $\FUPAC(\delta,\epsilon)=\widetilde{\order}\rbr{\alpha_1/\epsilon+\alpha_2/\epsilon^2}$\footnote{We use $\widetilde{\order} (\cdot)$ to hide polylog factors.}, where $\alpha_1,\alpha_2>0$ are constant in $\epsilon$ and depend on $\log(1/\delta)$ and $K$ for MAB, $d$ for linear bandits, and $S,A,H$ for MDPs then, the algorithm guarantees:
\begin{itemize*}
    \item $\P \rbr{\lim_{t \to +\infty} \Delta_{t}=0} \geq 1-\delta$;
    \item $\rbr{\delta,\epsilon}$-PAC with $\FPAC(\delta, \epsilon)=\FUPAC(\delta, \epsilon)$ for all $\epsilon>0$;
    \item With probability at least $1-\delta$, for all $T \in 
    \naturalnum$, $R_T = \otil \rbr{\sqrt{\alpha_2 T}+\alpha_1+\alpha_2}$.
\end{itemize*}
\end{theorem}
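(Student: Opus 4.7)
The plan is to base all three conclusions on a single favorable event
$\calE := \{\forall \epsilon>0,\ N_\epsilon \leq \FUPAC(\delta,\epsilon)\}$, which the uniform-PAC hypothesis guarantees to have probability at least $1-\delta$. Working throughout on $\calE$, I would establish the three items in the theorem in turn; the argument is essentially a restatement, in the bandit setting, of Theorem~3 of \citet{dann2017unifying}.

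For the first item, on $\calE$ the quantity $N_\epsilon$ is finite for every fixed $\epsilon>0$, so only finitely many rounds can have $\Delta_{A_t}>\epsilon$; hence there exists $T_\epsilon$ with $\Delta_{A_t}\leq\epsilon$ for all $t>T_\epsilon$, and applying this along a sequence $\epsilon_k \downarrow 0$ yields $\lim_{t\to\infty}\Delta_{A_t}=0$. The second item, $(\delta,\epsilon)$-PAC with $\FPAC(\delta,\epsilon)=\FUPAC(\delta,\epsilon)$, is then immediate from the definition of $\calE$ and requires no further work.

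The main work goes into the regret bound, which I would obtain by a dyadic peeling of the instantaneous gaps. For $\epsilon_{\min}\in(0,1]$ to be chosen and $J=\lceil\log_2(1/\epsilon_{\min})\rceil$, partition the rounds according to $S_j=\{t\leq T : \Delta_{A_t}\in(2^{-(j+1)},2^{-j}]\}$ for $0\leq j<J$, and collect the remaining rounds into a residual on which $\Delta_{A_t}\leq\epsilon_{\min}$. Then
\[
R_T \;\leq\; \sum_{j=0}^{J-1} 2^{-j}\,|S_j| \;+\; T\epsilon_{\min} \;\leq\; \sum_{j=0}^{J-1} 2^{-j}\,\FUPAC\bigl(\delta,2^{-(j+1)}\bigr) \;+\; T\epsilon_{\min},
\]
using $|S_j|\leq N_{2^{-(j+1)}}$ together with the event $\calE$. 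Substituting $\FUPAC(\delta,\epsilon)=\otil(\alpha_1/\epsilon+\alpha_2/\epsilon^2)$ makes the $j$-th summand $\otil(\alpha_1+\alpha_2\cdot 2^j)$, so the dyadic sum telescopes to $\otil(J\alpha_1 + \alpha_2/\epsilon_{\min})$. Choosing $\epsilon_{\min}=\min\{1,\sqrt{\alpha_2/T}\}$ balances $\alpha_2/\epsilon_{\min}$ against $T\epsilon_{\min}$ and delivers the claimed bound $\otil(\sqrt{\alpha_2 T}+\alpha_1+\alpha_2)$: in the regime $T\geq\alpha_2$ the balancing produces $\sqrt{\alpha_2 T}$, while in the regime $T<\alpha_2$ the cap $\epsilon_{\min}=1$ forces the residual into $T\leq\alpha_2$ and leaves $\alpha_2/\epsilon_{\min}=\alpha_2$; the factor $J\alpha_1$ is absorbed as $\otil(\alpha_1)$ because $J=\order(\log T)$.

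I expect no genuine technical obstacle: the only mild subtlety is the bookkeeping around the two regimes for $\epsilon_{\min}$ and the use of the boundedness assumptions (rewards in $[0,1]$ for MAB and $\Delta_a\leq 1$ in linear bandits) to legitimately cap $\epsilon_{\min}$ at $1$ in the residual term. Since the whole argument is carried out on the single high-probability event $\calE$, no further union bound or probabilistic analysis is needed beyond what the uniform-PAC guarantee already supplies.
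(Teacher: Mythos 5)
The paper does not prove this theorem -- it is imported verbatim (translated from MDPs to bandits) from Theorem 3 of \citet{dann2017unifying} -- so there is no in-paper proof to compare against; your reconstruction is correct and follows the same standard route as the cited source: a single high-probability event for all $\epsilon$, finiteness of $N_\epsilon$ for the convergence and PAC claims, and a dyadic peeling of the gaps with the balance $\epsilon_{\min}=\min\{1,\sqrt{\alpha_2/T}\}$ for the regret bound. The only points worth double-checking are the ones you already flag: the cap $\Delta_{A_t}\le 1$ (guaranteed by the paper's boundedness assumptions) so that $j=0$ covers all gaps, and absorbing $J\alpha_1=\order(\alpha_1\log T)$ into the $\otil$ notation.
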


\textbf{Limitations of Uniform-PAC.} According to \pref{thm:UPAC_implications}, uniform-PAC can imply a long-term convergence (the first bullet) and good cumulative performance (the second and the third bullets), but it does not capture the convergence rate of $\Delta_{t}$ for each round $t$. In other words, even if an algorithm enjoys uniform-PAC, it could still play a significantly bad policy for some very large but finite $t$. 
This would lead to catastrophic consequences in safety-critical applications.

\subsection{New Metric: Uniform Last-Iterate Guarantee}
 To address the aforementioned issue, we introduce a new metric, formally defined below.

\begin{definition}[Uniform last-iterate (ULI)] \label{def:ULI}
An algorithm is ULI for some $\delta \in (0,1)$ if there exists a positive-valued function $\FULI(\cdot,\cdot)$, such that
\[
\P \rbr{\forall t \in \naturalnum: \Delta_{t} \leq \FULI(\delta,t) } \geq 1-\delta ,
\]
where $\FULI(\delta,t)$ is polynomial in $\log(1/\delta)$ and proportional to the product of power functions of $\log t$ and $1/t$ (e.g., $\FULI(\delta,t) = \polylog(1/\delta)(\log t)^{\kappa_1} t^{-\kappa_2}$ for some $\kappa_1, \kappa_2\geq 0$).

% \footnote{For example, $\FULI(\delta,t) = \polylog(1/\delta)(\log t)^{\kappa_1} t^{-\kappa_2}$ for some $\kappa_1, \kappa_2\ge0$.}.
\end{definition}

According to \pref{def:ULI}, the instantaneous suboptimality of any algorithm with the ULI guarantee can be bounded by a function $\FULI(\delta,t)$. Moreover, $\FULI(\delta,t)$ decreases monotonically for large $t$ if its power on $1/t$ is strictly positive, which captures the convergence rate of $\Delta_{t}$.

Note that the convergence rate of $\Delta_{t}$ is mainly determined by the power on $1/t$ in $\FULI$.
Moreover, as we will show shortly, an algorithm with the ULI guarantee automatically has a small regret bound. We therefore have the following lower bound on $\FULI$.
\begin{theorem}\label{thm:LB_ULI}
For any bandit algorithm that achieves ULI guarantee for some $\delta$ with function $\FULI(\delta,t)$, there exists a MAB instance such that $\FULI(\delta, t) = \Omega\rbr{t^{-1/2}}$.
\end{theorem}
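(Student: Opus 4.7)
The plan is to prove the lower bound via the classical two-hypothesis change-of-measure argument (Bretagnolle--Huber). Intuitively, if $F_{\text{ULI}}(\delta, t)$ decayed strictly faster than $1/\sqrt{t}$, the ULI guarantee would let the algorithm identify the best arm with probability $\geq 1-\delta$, at round $t$, on two Bernoulli instances separated by a gap of order $1/\sqrt{t}$---information-theoretically impossible.

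Fix any $\delta \in (0, 1/4)$, any $t \in \naturalnum$, and a gap parameter $\Delta \in (0, 1/2]$ to be chosen. I would consider the two 2-armed Bernoulli instances $\nu_1 = \rbr{B(1/2 + \Delta/2),\, B(1/2 - \Delta/2)}$ and $\nu_2 = \rbr{B(1/2 - \Delta/2),\, B(1/2 + \Delta/2)}$, in which the unique suboptimal arm has gap exactly $\Delta$. Assume for contradiction that $F_{\text{ULI}}(\delta, t) < \Delta/2$. Since $\Delta_{A_t} \in \{0, \Delta\}$ on each instance, the ULI guarantee forces $\P_{\nu_i}(A_t \text{ is suboptimal}) \leq \delta$ for $i \in \{1,2\}$, and therefore $\P_{\nu_1}(A_t = 2) + \P_{\nu_2}(A_t = 1) \leq 2\delta$.

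I then invoke Bretagnolle--Huber on the event $\{A_t = 2\}$ applied to the trajectory measures $\P_{\nu_i}^{t-1}$ of $(A_1, X_1, \ldots, A_{t-1}, X_{t-1})$. Combining the standard divergence decomposition with the elementary Bernoulli KL bound $\KL\rbr{B(1/2 + \Delta/2),\, B(1/2 - \Delta/2)} \leq 4 \Delta^2$ for $\Delta \leq 1/2$, and using $N_1(t-1) + N_2(t-1) = t-1$, yields $\P_{\nu_1}(A_t = 2) + \P_{\nu_2}(A_t = 1) \geq \tfrac{1}{2} \exp\rbr{-4(t-1)\Delta^2}$. Define $\Delta^\star := \tfrac{1}{2}\sqrt{\ln\rbr{1/(4\delta)} / (t-1)}$. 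Any $\Delta \in \rbr{2 F_{\text{ULI}}(\delta, t),\, \Delta^\star}$ would make the right-hand side strictly exceed $2\delta$, contradicting the ULI-derived upper bound. Such a $\Delta$ exists iff $F_{\text{ULI}}(\delta, t) < \Delta^\star / 2$, and therefore $F_{\text{ULI}}(\delta, t) \geq \Delta^\star/2 = \tfrac{1}{4}\sqrt{\ln\rbr{1/(4\delta)} / (t-1)} = \Omega(t^{-1/2})$, as claimed.

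The argument is essentially routine; the key step is to observe that on a 2-armed Bernoulli instance with gap $\Delta > 2 F_{\text{ULI}}(\delta, t)$, the ULI guarantee forces $\Delta_{A_t} = 0$ at the fixed round $t$ with probability $\geq 1 - \delta$ (i.e., the optimal arm is pulled), which reduces ULI to a two-point hypothesis test to which the classical Bernoulli lower bound applies. Beyond careful bookkeeping of constants (and restricting to $\delta < 1/4$ so that $\ln(1/(4\delta))>0$), no serious technical obstacle arises.
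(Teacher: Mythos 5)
Your proof is correct in substance, but it follows a genuinely different route from the paper's. The paper argues by contradiction through the \emph{cumulative} performance: if $\FULI(\delta,t) \le c_0\, t^{-(1/2+\alpha)}$ for some $\alpha>0$, then summing the per-round guarantee yields an anytime high-probability regret bound $R_T = O\rbr{T^{1/2-\alpha}}$, which contradicts the high-probability minimax regret lower bound for Gaussian bandits (Corollary~17.3 of Lattimore--Szepesv\'ari). You instead prove a \emph{pointwise} lower bound at each fixed round $t$ by reducing the ULI guarantee to a two-point hypothesis test and invoking Bretagnolle--Huber with the divergence decomposition; this is sound (the KL bound $\Delta\log\frac{1+\Delta}{1-\Delta}\le 4\Delta^2$ for $\Delta\le 1/2$ checks out, and the contradiction $2\delta \ge \tfrac12 e^{-4(t-1)\Delta^2} > 2\delta$ for $\Delta\in\rbr{2\FULI(\delta,t),\,\Delta^\star}$ is valid). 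Your approach buys a quantitatively sharper and more informative conclusion, namely $\FULI(\delta,t)\ge \tfrac14\sqrt{\ln(1/(4\delta))/(t-1)}$ for every $t\ge 2$, including the $\sqrt{\log(1/\delta)}$ dependence, and it does not rely on the specific parametric form of $\FULI$ assumed in the ULI definition. The paper's reduction is shorter and, because the free parameter in the regret lower bound can be tuned, covers any $\delta\in(0,1)$, whereas your two-point argument genuinely requires $\delta<1/4$ (for $\delta\ge 1/4$ the Bretagnolle--Huber right-hand side never exceeds $2\delta$, so no contradiction arises); since the theorem is stated for ``some $\delta$,'' you should either note this restriction explicitly as a hypothesis or patch the large-$\delta$ regime (e.g., via the paper's regret-based reduction or a multi-instance argument). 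You should also handle the degenerate case $t=1$ and cap $\Delta$ at $1/2$ when $\Delta^\star>1/2$, but these are cosmetic.
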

We provide the proof in \pref{app:lower_bound_onehalf}.
In the rest of the paper, we say an algorithm is near-optimal ULI if it achieves the ULI guarantee with $\FULI(\delta, t) = \otil(1/\sqrt{t})$.

Then, we present the following theorem to show that ULI directly leads to uniform-PAC, implying that ULI also characterizes the cumulative performance of bandit algorithms.

\begin{theorem} \label{thm:ULI_implications}
Suppose an algorithm achieves the ULI guarantee for some $\delta$ with function $\FULI(\delta, t) = \polylog(t/\delta)\cdot t^{-\kappa}$ where $\kappa \in (0,1)$. Then, we have,
\begin{itemize*}
\item  the algorithm is uniform-PAC with function $\FUPAC(\delta,\epsilon)=\order \big(\epsilon^{-\frac{1}{\kappa}} \cdot \polylog(\delta^{-1}\epsilon^{-1}) \big).$

\item  with probability at least $1-\delta$, $\forall T \in \naturalnum$, the regret $R_T$ is bounded by
\[
\order \rbr{\min \cbr{\polylog(T/\delta)\cdot T^{1-\kappa},  \Delta^{1-1/\kappa }\polylog^2 \rbr{ (\delta \Delta)^{-1} } }},
\]
when the minimum suboptimality gap of the input instance  $\Delta$ satisfies $\Delta>0$. 
\end{itemize*}
\end{theorem}

% \[
% \order \rbr{\min \cbr{\polylog(T/\delta)\cdot T^{1-\kappa}, \frac{\polylog^2 \rbr{ \frac{1}{\delta \Delta} }}{\Delta^{1/\kappa -1}} }},
% \]

According to \pref{thm:ULI_implications}, if an algorithm is with near-optimal ULI guarantee (i.e., $\kappa=\frac{1}{2}$), then it implies the near-optimality for uniform-PAC bound (the first bullet point) and anytime sublinear high-probability regret bound (the second bullet point).
On the other hand, an algorithm with near-optimal uniform-PAC bound does not necessarily enjoy a near-optimal ULI guarantee as shown in \pref{sec:LB_Opt_alg}. The proof of \pref{thm:ULI_implications} can be found in \pref{app:proof_ULI_implications}.

\begin{remark} \label{rem:discussion_expected_reg}
Although a near-optimal ULI guarantee implies an anytime sublinear high-probability regret bound, it cannot give an anytime sublinear expected regret bound. This is because any algorithm with ULI guarantee is also uniform-PAC, but \citep[Theorem 1]{dann2017unifying} implies that no algorithm can be uniform-PAC and achieve anytime sublinear expected regret bound simultaneously.
\end{remark}

% !TEX root = main.tex

\section{Achieving Near-Optimal ULI in Bandits with Finite Arm-Space} \label{sec:elimination}
In this section, we answer the question whether ULI is achievable for bandit problems. To this end, we examine three common types of bandit algorithms, including elimination-based algorithms, optimistic algorithms, and high-probability adversarial algorithms, in the finite arm setting, i.e., $|\calA|=K$.

\subsection{Elimination Framework Achieving Near-Optimal ULI Guarantee} \label{sec:elimination_framework}

To examine whether the elimination-type algorithms can achieve the ULI guarantee, we first provide an elimination framework in \pref{alg:framework_ULI} that ensures the ULI guarantee, and we then show that most elimination-based algorithms fall into this framework. The following result shows that with a proper function $f$ and a positive constant $\beta$, such an elimination framework ensures the ULI guarantee.

\begin{theorem} \label{thm:SC}
For any given $\delta \in (0,1)$, if there exists function $f(\delta, t)=t^{-\kappa} \polylog(t/\delta)$ for some $\kappa \in (0,1)$ and $\exists \beta>0$, such that with probability $1-\delta$, \pref{eq:Framework_ULI_active_arms} holds for all $t$, then algorithm is ULI with $\FULI(\delta,t)=\order \rbr{f(\delta,t)}$.
\end{theorem}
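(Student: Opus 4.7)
The approach is to directly convert the assumed high-probability guarantee \pref{eq:Framework_ULI_active_arms} on the active set into a ULI bound, exploiting the defining property of any elimination framework: the algorithm only plays arms from its current active set. I would first define the good event $E$ to be ``\pref{eq:Framework_ULI_active_arms} holds at every round $t \in \naturalnum$,'' which by assumption satisfies $\P(E) \geq 1-\delta$. All subsequent reasoning is carried out conditional on $E$, and no further union bound over $t$ is needed because \pref{eq:Framework_ULI_active_arms} is already a simultaneous-in-$t$ statement.

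Next, I would invoke the structural property of \pref{alg:framework_ULI} that $A_t$ belongs to the active set at round $t$ for every $t$. Plugging $A_t$ into \pref{eq:Framework_ULI_active_arms} then yields, on $E$, a bound of the form $\Delta_{A_t} \leq C \cdot f(\delta, \tau(t))$, where $\tau(t)$ is the round index governing the elimination guarantee that is actually in force at round $t$ (for a phased variant, $\tau(t)$ is the start of the current phase) and $C$ is an absolute constant depending only on the framework. The role of $\beta$ in the hypothesis is precisely to ensure $\tau(t) \geq t/\beta$ for an absolute constant $\beta$, so that the elimination guarantee lags the current round by at most a constant factor.

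It then remains to observe that rescaling the time argument of $f$ by a constant absorbs into the $\order(\cdot)$ notation: since $f(\delta, t) = t^{-\kappa}\polylog(t/\delta)$ is the product of a power function in $t$ and a polylogarithm in $t/\delta$, we have $f(\delta, t/\beta) = \order(f(\delta, t))$ with hidden constant depending only on $\beta$ and $\kappa$. Combining these observations gives $\Delta_{A_t} = \order(f(\delta, t))$ simultaneously for all $t$ on $E$, which is exactly ULI with $\FULI(\delta, t) = \order(f(\delta, t))$.

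The main subtlety, and the reason $\beta$ is singled out in the hypothesis, is within-phase handling: the active set is typically refreshed only at phase boundaries, so the per-round guarantee at a mid-phase round $t$ must be inherited from an earlier guarantee at the start of the current phase. The hypothesis requires $\beta$ to be an absolute constant (not growing in $t$), which together with the power-logarithm form of $f$ is exactly what is needed to translate this lag into only a constant-factor inflation of the bound. Beyond that, the argument is a direct unpacking of the definitions, with no fresh probabilistic content since all of it is already absorbed into the event $E$.
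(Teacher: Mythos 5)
Your core argument is correct and is essentially the paper's proof: condition on the event that \pref{eq:Framework_ULI_active_arms} holds for all $t$ (no further union bound needed), use the fact that $A_t \in \calA_t$, and read off the gap bound, absorbing constants into $\order\rbr{f(\delta,t)}$. However, you have misread the role of $\beta$: in \pref{eq:Framework_ULI_active_arms} it is a multiplicative constant on the \emph{gap}, i.e., the hypothesis directly gives $\Delta_{A_t} \leq \beta\, f(\delta,t)$ with the current round index $t$ (unless $A_t = \optarm$, in which case $\Delta_{A_t}=0$ — a case worth stating since $\optarm$ is unioned into the set without any gap condition). There is no lagged index $\tau(t)$ to manage in this theorem; the within-phase lag you describe is real for phased algorithms, but it is handled when \emph{verifying} that a concrete algorithm satisfies \pref{eq:Framework_ULI_active_arms} (i.e., in \pref{thm:verify_elimination_algs} and its lemmas), not in the proof of \pref{thm:SC} itself. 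Your extra time-rescaling step $f(\delta,t/\beta)=\order(f(\delta,t))$ is harmless given the power-times-polylog form of $f$, so the proof still goes through, but it is solving a problem the hypothesis does not pose.
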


\setcounter{AlgoLine}{0}
\begin{algorithm}
\caption{Elimination framework for ULI}
\textbf{Input}: $\delta \in (0,1)$, set $\calA$, function $f(\cdot,\cdot)$, and constant $\beta$.

\textbf{Initialize}: active arm set $\calA_0=\calA$.   

\For{$t=1,2,\ldots$}{         

Select an active set $\calA_t$ based on available observations as ($a^*$ is one of optimal arms)
\begin{equation} \label{eq:Framework_ULI_active_arms} 
 \calA_t \subseteq  \cbr{a \in \calA_{t-1}: \Delta_a \leq \beta \cdot  f \rbr{\delta,t} } \cup  \cbr{\optarm}.
\end{equation}

Play an arm $A_t \in \calA_t$ and observe reward $X_{t,A_t}$.
}

\label{alg:framework_ULI}
\end{algorithm}

\pref{thm:SC} suggests that \pref{eq:Framework_ULI_active_arms} is a sufficient condition for elimination-based algorithms to achieve the ULI guarantee.
Now, we show that existing elimination algorithms indeed fall into such a framework. 
We here consider successive elimination (SE) and phased elimination (PE). 
Notice that we consider SE only for the MAB setting (called SE-MAB, e.g., Algorithm 3 in \citep{even2006action}) but PE for both the MAB setting (called PE-MAB, e.g., exercise 6.8 in \citep{lattimore2020bandit}) and the linear bandit setting (called PE-L, e.g., Algorithm 12 of Chapter 22 in \citep{lattimore2020bandit}). 
Since those algorithms are standard, we defer their pseudocodes to \pref{app:verify_elimination_algs}. 
Given \pref{thm:SC}, the following results show that the elimination framework can be instantiated by these algorithms with proper functions and therefore they achieve the ULI.

\begin{theorem} \label{thm:verify_elimination_algs}
For any fixed $\delta \in (0,1)$, elimination framework in \pref{alg:framework_ULI} can be instantiated by
\begin{itemize*}
    \item SE-MAB for MAB to achieve the ULI with $\FULI(\delta,t)=\order \big( t^{-\frac{1}{2}} \sqrt{K\log(\delta^{-1} Kt)}\big)$.
    \item  PE-MAB for MAB to achieve the ULI  with $\FULI(\delta,t) =\order \big(t^{-\frac{1}{2}}\sqrt{ K\log(\delta^{-1} K\log (t+1))}\big)$.
   \item  PE-L for linear bandits to achieve ULI with $\FULI(\delta,t) =\order \big(t^{-\frac{1}{2}}\sqrt{d\log(\delta^{-1} K\log (t+1) \log d)}\big)$.
\end{itemize*}
\end{theorem}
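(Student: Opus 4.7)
The plan is to reduce each of the three statements to an application of \pref{thm:SC} by identifying an appropriate function $f(\delta,t)$ and constant $\beta$ for each algorithm, and then verifying that the active set maintained by the algorithm satisfies \pref{eq:Framework_ULI_active_arms}. For each algorithm I would first describe what ``$\calA_t$'' corresponds to in its native language (the set of arms that have not yet been eliminated as of round $t$), define a clean high-probability concentration event $\mathcal{E}$ under which all relevant empirical means are accurate, and then show that on $\mathcal{E}$ the optimal arm $\optarm$ is never eliminated and every surviving arm has suboptimality gap $O(f(\delta,t))$.

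\textbf{SE-MAB.} Successive elimination pulls each surviving arm in a round-robin fashion, so after $t$ rounds every $a \in \calA_{t-1}$ has been sampled at least $\lfloor t/K \rfloor$ times. By Hoeffding's inequality together with a union bound over the $K$ arms and all $t \in \naturalnum$, with probability $1-\delta$, $|\hmu_{a,t}-\mu_a| = O(\sqrt{K\log(Kt/\delta)/t})$ simultaneously for all arms and rounds. Because SE-MAB discards arm $a$ only when its empirical upper confidence bound drops below some other arm's empirical lower confidence bound, on this event $\optarm$ is always retained, and combining the two-sided inequality for $\optarm$ and a surviving arm $a$ gives $\Delta_a = O(\sqrt{K\log(Kt/\delta)/t})$. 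Setting $f(\delta,t)=\sqrt{K\log(Kt/\delta)/t}$ and choosing $\beta$ equal to the combined constant verifies \pref{eq:Framework_ULI_active_arms}, and \pref{thm:SC} delivers the claimed $\FULI$.

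\textbf{PE-MAB and PE-linear.} Phased elimination runs in phases $\ell=1,2,\ldots$, each using a pre-specified sample budget that doubles geometrically, with eliminations applied only at phase boundaries. The key gain over SE-MAB is that the failure event only needs to be union-bounded over phases (and arms), which costs $\log\log t$ rather than $\log t$. For PE-MAB I would use the fact that if $t$ lies in phase $\ell$ then each arm in $\calA_\ell$ has been sampled $\Theta(n_{\ell-1}) = \Theta(t/K)$ times, so the confidence radius is $O(\sqrt{K\log(K\log(t+1)/\delta)/t})$ and the end-of-phase elimination criterion propagates this bound to every surviving arm. For PE-linear, round-robin is replaced by a G-optimal (Kiefer--Wolfowitz) design on the current $\calA_\ell$, whose support has size at most $d(d+1)/2$. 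The standard Kiefer--Wolfowitz guarantee combined with subgaussian tail bounds gives $|\inner{\hatmu_\ell-\theta,a}| = O(\sqrt{d\log(K\log(t+1)/\delta)/t})$ uniformly over $a \in \calA_\ell$ with probability $1-\delta$, and the elimination rule again translates this into the desired gap bound for $\calA_t = \calA_\ell$.

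\textbf{Main obstacle.} The chief technical delicacy lies in handling \emph{within-phase} rounds: elimination occurs only at phase boundaries, so for $t$ strictly inside phase $\ell$ the active set $\calA_t = \calA_\ell$ was determined from phase $\ell-1$ data alone, and one must argue that the resulting gap bound, which naturally scales like $1/\sqrt{n_{\ell-1}}$, is still $O(1/\sqrt{t})$. This works precisely because $n_\ell$ grows geometrically, so $t = O(n_{\ell-1})$ up to a constant throughout phase $\ell$, but the bookkeeping must be tracked carefully so that the polylog factor ends up as $\log(K\log(t+1)/\delta)$ and not $\log(Kt/\delta)$. A secondary but parallel issue in SE-MAB is that the elimination rule is phrased via pairwise comparisons of empirical means rather than as a direct bound on $\Delta_a$, so closing the loop requires combining concentration inequalities for the candidate arm and for a surviving ``best-so-far'' arm that serves as a proxy for $\optarm$. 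Once these two bookkeeping steps are handled cleanly, each of the three claims reduces to a one-line application of \pref{thm:SC}.
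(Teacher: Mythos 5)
Your proposal is correct and follows essentially the same route as the paper: a high-probability concentration event (union-bounded over pulls/phases and arms), a proof that $\optarm$ survives every elimination, a gap bound for every surviving arm at round $t$, and a final appeal to \pref{thm:SC}; the within-phase bookkeeping you flag is handled in the paper exactly as you describe, via the geometric growth of phase lengths. The only cosmetic differences are that the paper uses the near-optimal design of \citet{lattimore2020learning} (support $O(d\log\log d)$) rather than an exact Kiefer--Wolfowitz design in PE-linear, and that in SE-MAB no ``best-so-far proxy'' is needed since $\optarm$ itself is shown to remain active and serves directly as the comparator.
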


\textbf{Achieving ULI by adversarial bandit algorithms.} 
Traditional elimination-based algorithms, including the ones mentioned above, typically require a carefully designed exploration strategy which is non-trivial even for linear bandits.
Here, we provide an alternative way to achieve ULI by employing adversarial bandit algorithms to explore and then conduct the elimination. As shown in \pref{app:OD_adv_ULI}, all adversarial bandit algorithms for both MAB and linear bandits that meet a certain condition can naturally be used to achieve the ULI guarantees similar to those of traditional elimination-based algorithms.

\subsection{Lower Bound for Optimistic Algorithms} \label{sec:LB_Opt_alg}
In this section, we present a lower bound to show that optimistic algorithms cannot achieve near-optimal ULI guarantee. The procedure of optimistic algorithms is summarized as follows. After playing each arm once, 
at each round $t$, the algorithm plays an arm $A_t$ that satisfies
\begin{equation}\label{eq:select_arm_opt_alg}
A_t \in \argmax_{a \in \calA} \cbr{\hatmu_a(N_a(t)) + U_\delta(N_a(t))},
\end{equation}
where $N_a(t)$ is the number of plays of arm $a$ before round $t$, $\hatmu_a(N_a(t))=\frac{\sum_{s=1}^{t-1} X_{s,A_s} \Ind{A_s=a}}{N_a(t)}$ is the empirical mean of arm $a$ after $N_a(t)$ times play, and $U_\delta(N_a(t))$ is a positive bonus function which encourages the exploration.

We first consider optimistic algorithms e.g., upper confidence bound (UCB) \citep[Algorithm 3, Chapter 7]{lattimore2020bandit}, which enjoy (near)-optimal regret bounds. This type of algorithms typically uses the bonus function in the form of $\sqrt{\log (2t^2/\delta)/N_a(t)}$\footnote{
We slightly adjust the bonus function to ensure that with probability at least $1-\delta$, the confidence bound holds for all $t \in \naturalnum$.}. However, the $\log t$ term forces the algorithm to play suboptimal arms infinitely often, and thus they cannot achieve the ULI guarantee (refer to \pref{app:trad_opt_algs_fail_ULI} for a detailed proof). Similarly, other variants \citep{JMLR:v11:audibert10a,degenne2016anytime} with $\log t$ term in bonus function, also cannot achieve the ULI guarantee.

We then consider another optimistic-type algorithm, lil'UCB \citep{jamieson2014lil} which obtains the order-optimal instance-dependent sample complexity and avoids $\log(t)$ term in $U_{\delta}(N_a(t))$. 
The bonus function of lil'UCB is as $\sqrt{\log \rbr{\delta^{-1}\log_+ \rbr{N_a(t)}  }/N_a(t) } $ where $\log_+(x)= \log\rbr{\max \cbr{x,e}}$.
Our main result for lil'UCB is presented as follows. The full analysis of lil'UCB is deferred to \pref{app:lilUCB}.

\begin{theorem} \label{thm:LB_lilUCB} 
There exists a constant $\alpha \in (0,1)$ that for all $\Delta \in (0,\alpha)$, running lil'UCB on the two-armed bandit instance with deterministic rewards and arm gap $\Delta$ gives $\exists t =\Omega \rbr{ \Delta^{-2}}$ such that
\begin{equation*} 
\Delta_t =  \Omega \rbr{t^{-\frac{1}{4}} \sqrt{\log \log \rbr{\Delta^{-2} \log(\delta^{-1})} +\log(\delta^{-1})} }.
\end{equation*}
\end{theorem}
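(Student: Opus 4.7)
The plan is to exploit the deterministic dynamics of lil'UCB on the two-arm instance. Writing $L := \log(\delta^{-1})$ and $B(n) := \sqrt{\log(\delta^{-1}\log_+(n))/n}$, every play of arm $a$ yields exactly $\mu_a$, so the empirical mean coincides with $\mu_a$ throughout and the selection rule~\eqref{eq:select_arm_opt_alg} reduces to: arm $2$ is chosen at round $t$ iff $B(N_2(t)) - B(N_1(t)) \ge \Delta$ (breaking ties toward arm $1$). I will exhibit a specific round $\tau = \Omega(\Delta^{-2})$ at which arm $2$ is played and verify that the instantaneous gap $\Delta$ at that round satisfies the claimed lower bound.

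The argument proceeds in three steps. First, since $B$ is strictly decreasing and nonnegative, arm $2$ can never be played once $B(N_2(t)) < \Delta$; this yields the envelope $N_2(t) \le k_{\max}+1$ with $k_{\max} := \max\{k : B(k) \ge \Delta\}$, and inverting $B$ gives $k_{\max} = \Theta(L'/\Delta^2)$ with the shorthand $L' := \log(\delta^{-1}) + \log\log(\Delta^{-2}\log(\delta^{-1}))$. Second, I pick an integer $k^* \le k_{\max}$ satisfying
\begin{equation*}
B(k^*) - \Delta \;=\; \Theta\rbr{\Delta^2/\sqrt{L'}} .
\end{equation*}
The continuous equation $\sqrt{L'/k} = \Delta(1 + c\Delta/\sqrt{L'})$ admits a solution $k_0 \approx L'/\Delta^2 - 2\sqrt{L'}/\Delta$, and since the per-step decrement $B(k)-B(k{+}1) = O(\Delta^3/L')$ near $k_{\max}$ is much smaller than the target window $\Delta^2/\sqrt{L'}$, some integer $k^*$ must land inside the window. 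Third, I locate the $(k^*{+}1)$-th play of arm $2$: between the $k^*$-th and $(k^*{+}1)$-th plays of arm $2$ only arm $1$ is selected, so this play occurs at the smallest $n_1$ with $B(n_1) \le B(k^*) - \Delta$; inverting $B$ gives $n_1 = \Omega\rbr{L'/(B(k^*) - \Delta)^2} = \Omega(L'^2/\Delta^4)$, whence the round $\tau$ of this play satisfies $\tau = n_1 + k^* + 1 = \Omega(L'^2/\Delta^4)$, which in particular is $\Omega(\Delta^{-2})$. Finally,
\begin{equation*}
\tau^{-1/4}\sqrt{L'} \;=\; \Theta\rbr{(L'^2/\Delta^4)^{-1/4}\sqrt{L'}} \;=\; \Theta(\Delta) ,
\end{equation*}
so $\Delta_{A_\tau} = \Delta = \Omega\rbr{\tau^{-1/4}\sqrt{\log\log(\Delta^{-2}\log(\delta^{-1})) + \log(\delta^{-1})}}$, as desired.

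The main obstacle is the second step together with the $\log\log$ term inside $B$. Inverting $B(n) \le \epsilon$ produces the implicit inequality $n \ge (L + \log\log n)/\epsilon^2$, so I must reconcile the effective ``constant'' $L'$ used when bounding $B$ at pull counts of order $L/\Delta^2$ (arm $2$) with the one used at pull counts of order $L^2/\Delta^4$ (arm $1$); the key observation is $\log\log(\Delta^{-4}L^2) = \log\log(\Delta^{-2}L) + O(1)$, so both regimes agree on $L'$ up to a constant factor and the implicit inequality closes. Two further pieces of bookkeeping are needed: I must ensure the $(k^*{+}1)$-th play of arm $2$ actually occurs, which follows from $k^* \le k_{\max}$ and hence $B(k^*) - \Delta > 0$; and the small universal constant $\alpha$ is required so that $\Delta/\sqrt{L'}$ is small enough to validate the $(1+x)^{-2} \approx 1-2x$ linearization used to locate $k^*$ in the target window.
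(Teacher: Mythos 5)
Your proposal is correct and follows essentially the same route as the paper: both exploit determinism to reduce the selection rule to a comparison of bonuses, locate a late play of arm $2$ at which the arm-$2$ bonus exceeds $\Delta$ by only $O(\Delta^2/\sqrt{L'})$, deduce that the arm-$1$ bonus must then be below that slack so $N_1 = \Omega(L'^2/\Delta^4)$, and convert to the $t^{-1/4}\sqrt{L'}$ bound. The only cosmetic difference is how the witness round is pinned down — the paper takes the \emph{last} play of arm $2$ and bounds $U_\delta(n)-\Delta$ via the telescoping inequality $U_\delta(n)-U_\delta(n+1)\le\sqrt{U_\delta(n)^2-U_\delta(n+1)^2}$, whereas you pick a near-last pull count $k^*$ by a pigeonhole argument on the window.
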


\pref{thm:LB_lilUCB} shows that lil'UCB is not near-optimal ULI, but it is unclear whether it can achieve the ULI guarantee.
Recall from \pref{thm:verify_elimination_algs} that the elimination-based algorithms ensure that with high probability, $\forall t$, $\Delta_t = \otil \rbr{t^{-\nicefrac{1}{2}}}$. 
\pref{thm:LB_lilUCB} suggests that the convergence rate of lil'UCB is strictly worse than that of elimination-based algorithms, even if it enjoys a near-optimal cumulative performance \citep{jamieson2014lil}. 
In fact, this lower bound holds for all optimistic algorithms as long as the bonus function is in a similar form.

\begin{remark}[\textbf{ULI is strictly stronger than uniform-PAC}]
For lil'UCB, the number of times playing suboptimal arms is finite with an order-optimal instance-dependent sample complexity, which implies that lil'UCB is near-optimal uniform-PAC. Therefore, \pref{thm:LB_lilUCB} also shows that an algorithm with near-optimal uniform-PAC does not necessarily enjoy near-optimal ULI guarantee.
\end{remark}

% !TEX root = main.tex

\section{Achieving Near-Optimal ULI for Linear Bandits in Large Arm-Space} \label{sec:ULI_large_arm_space}
In this section, we propose a linear bandit algorithm that can handle the infinite number of arms. The compact arm set $\calA$ is assumed to span $\fR^d$ and $d$ is known. 

\subsection{Main Algorithm and Main Results}

The starting point of our algorithm design is the phased elimination (PE) algorithm \citep[Algorithm 12, Chapter 22]{lattimore2020learning}.
However, PE in general is not feasible when the arm space is large (e.g., a continuous space). 
In this section, we present a carefully-designed algorithm to address the new challenges from large arm-spaces.

\setcounter{AlgoLine}{0}
\begin{algorithm}[t]
\caption{PE with adaptive barycentric spanner}
\label{alg:PE_infinite_arm}

\textbf{Input}: Compact arm set $\calA$, confidence $\delta \in (0,1)$, and constant $C > 1$.

\textbf{Initialize}: $\htheta_1=\{0,\ldots,0\} \in \fR^d$, $T_0=1$ and $\calB_{0}=\{e_1,\ldots,e_d\}$.

\nl \For{$m=1,2,\ldots$}{

\nl Set $T_m = 256 C^4 \cdot \frac{d^3}{4^{-m}} \log \rbr{\delta^{-1} d^3 4^m }$.

\nl Invoke \pref{alg:bary_span} with $(\calA,m,\calB_{m-1},T_{m},C,\htheta_m)$ to find a $C$-approximate barycentric spanner $\calB_{m}$ for active arm set $\calA_m$ where $\calA_m$ is in \pref{eq:active_set}.

\nl Set $\pi_m(a) = \frac{1}{d}$ for each $a \in \calB_{m}$.

% \Statex the comment.

\nl Play each arm $a \in \calB_{m}$ for $n_m(a) = \left \lceil T_m \pi_m(a) \right \rceil$ times.

\nl Compute $V_{m} = I+\sum_{a \in \calB_{m}} n_m(a) a a^\top$ and $\htheta_{m+1} = V_{m}^{-1} \sum_{t \in \calT_m} A_t X_{t,A_t} $ where $\calT_m$ is a set that contains all rounds in phase $m$.
}
\end{algorithm}

\textbf{Issues of PE for large arm-space.} PE needs to (i) compute (approximately) $G$-optimal design whose complexity scales linearly with $|\calA|$ and (ii) compare the empirical mean of each arm, both of which are impossible when the arm set is infinite, e.g., $\calA$ is a ball. A natural idea is to discretize $\calA$, e.g., constructing a $\epsilon$-net, but the computational complexity has an exponential dependence on $d$, and the optimal arm does not necessarily lie in the net, which prevents the convergence to the optimal arm.

\textbf{High-level idea behind our solution.} To address the aforementioned issues, we propose an oracle-efficient linear bandit algorithm in \pref{alg:PE_infinite_arm} which  can eliminate bad arms by efficiently querying an optimization oracle. Our algorithm equips PE with a newly-developed \textit{adaptive barycentric spanner} technique. The proposed technique selects a finite \textit{representative arm set} to represent (possibly infinite) active arm set and adaptively adjusts the selection of arms across phases. By conducting the (approximate) $G$-optimal design \citep{kiefer1960equivalence} on the representative arm set and playing each arm in the set according to the design, the algorithm can acquire accurate estimations uniformly over all active arms. Moreover, the adaptive barycentric spanner approach can be implemented by efficiently querying an optimization oracle in polynomial times.

% Before describing the algorithm, we introduce t

The definition of barycentric spanner \citep{AwerbuchK08} is presented as follows.
\begin{definition}[$C$-approximate barycentric spanner] \label{def:bary_spanner}
Let $\calA \subseteq \fR^d$ be a compact set. The set $\calB=[b_1,\ldots,b_d] \subseteq \calA$ is a $C$-approximate barycentric spanner for $\calA$ if each $a \in \calA$ can be expressed as a linear combination of points in $\calB$ with coefficients in the range of $[-C,C]$.
\end{definition}

\textbf{Why adaptive barycentric spanner?} The primary reason that the non-adaptive barycentric spanner technique \citep{AwerbuchK08} fails to work for PE is that it requires the knowledge of the space that the active arm set spans. However, acquiring such knowledge is often difficult because the active arm set, which may span a \textit{proper linear subspace} of $\fR^d$, varies for different phases and the number of active arms could be infinite. Our algorithm shown in \pref{alg:bary_span} (whose
pseudocode is deferred to \pref{app:deferred_barycentric_spanner} due to space limit) can identify a barycentric spanner for active arm set adaptively for each phase, even if they do not span $\fR^d$.

\textbf{Algorithm procedure.} 
Our algorithm proceeds in phases $m=1,2,\ldots$, and each phase consists of consecutive rounds. At the beginning of phase $m$, the algorithm invokes subroutine \pref{alg:bary_span} to identify a $C$-approximate barycentric spanner $\calB_m$ which can linearly represent all arms in active arm set $\calA_m$ where $\calA_1 = \calA$ and $\forall m \geq 2$
\begin{align} \label{eq:active_set}
\calA_m=\cbr{a \in \calA_{m-1} : \inner{\htheta_m, a^\star_m-a}  \leq  2^{-m+1}},
\end{align}
where $a^\star_m$ is the empirical best arm and $\htheta_m$ is the estimation of unknown parameter $\theta$.

Then, the algorithm assigns $\pi_m(a) =\frac{1}{d}$ for each $a \in \calB_m$. 
In fact, if we add $\cup_{i \in \calI_m} \frac{e_i}{\sqrt{T_{m}}}$ (refer \pref{app:omit_ULI_large_arm_space} for $e_i$ and $\calI_m$) back to $\calB_m$ and denote the new set by $\calB_m'$, then $\pi_m:\calB_m' \to \frac{1}{d}$ forms an optimal design. 
It is noteworthy that our algorithm only plays arms in $\calB_m$ as these added elements do not necessarily exist in the arm set $\calA_m$.
As each added element $\frac{e_i}{\sqrt{T_{m}}}$ is close to zero, even if we only play arms in $\calB_m$, we can still acquire accurate estimations uniformly over $\calA_m$ (refer to \pref{app:ULI_PE_infinite_theorem} for details):
\begin{equation} \label{eq:weak_version_norm_a_bound}
    \forall a \in \calA_m: \quad  \norm{a}_{V_m^{-1}} =\sqrt{a^\top V_m^{-1} a}\leq C\cdot d/\sqrt{T_{m}},
\end{equation}
where $V_m=I+\sum_{a \in \calB_{m}} n_m(a) a a^\top$ is the least squares matrix, used to estimate $\theta$.

According to the standard analysis of linear bandit algorithms, the estimation error of $\inner{a,\theta}$ is proportional to $\norm{a}_{V_m^{-1}}$. Hence, the estimation errors of $\{\inner{a,\theta}\}_{a \in \calA_m}$ can be uniformly bounded by $\otil \big(C d/\sqrt{T_{m-1}} \big)$, which is known to the learner. Finally, after playing each arm $a \in \calB_m$ for $n_m(a)$ times, the algorithm updates the empirical estimates $V_m$ and $\htheta_{m+1}$, and then steps into the next phase.

The main results of \pref{alg:PE_infinite_arm} for achieving the ULI guarantee and computational complexity are given as follows. The full proof can be found in \pref{app:omit_ULI_large_arm_space}.
\begin{theorem} \label{thm:ULI_PE_infinite}
For any fixed $\delta \in (0,1)$, \pref{alg:PE_infinite_arm} achieves the ULI guarantee with function $\FULI(\delta,t)=\order \big(t^{-\frac{1}{2}}\sqrt{d^3\log(\delta^{-1} dt) } \big)$.
Moreover, in each phase, the number of calls to the optimization oracle given in \pref{def:opt_oracle} is $\order\rbr{d^3\log_C d}$.
\end{theorem}

\pref{thm:ULI_PE_infinite} and \pref{thm:ULI_implications} jointly suggest $\otil (d^{\nicefrac{3}{2}}\sqrt{T})$ worst-case regret bound for the infinite-armed setting, which matches those of \citep{dani2008stochastic,agrawal2013thompson,hanna2023efficient}. Compared with the lower bound $\Omega(d\sqrt{T})$ given by \citet{dani2008stochastic}, our regret bound suffers an extra $\sqrt{d}$ factor, caused by the spanner technique. Yet, it remains open to find a computationally efficient linear bandit algorithm that can handle the infinite arm setting with general compact $\calA$ and matches the $\Omega(d\sqrt{T})$ lower bound.

\pref{thm:ULI_PE_infinite} also shows that, for each phase, the spanner can be constructed by calling the oracle for only polynomial times.
Compared to the computational efficiency of the algorithm in \citep{AwerbuchK08}, the efficiency of our algorithm is only $d$ times worse than theirs.

\section{Achieving Near-Optimal ULI in Tabular Episodic MDPs} \label{sec:ULI_RL}

\begin{algorithm}[t]
\DontPrintSemicolon
\caption{Tabular Episodic MDPs with ULI guarantee}
\textbf{Input}: $\delta \in (0,1)$, set $\allpolicy$ containing all deterministic policies, absolute constant $c_1,c_2>0$.\\
\textbf{Initialize}: $\Pi_1 = \allpolicy$.\\
 \For{$m=1,2,\ldots$}{
    Set $\delta_m=\delta/(2m^2)$ and duration $T_m = \left \lceil c_1  2^{2m} S^2 A H^4  \log^2 \rbr{c_2  2^{2m} S^2 A H^4|\allpolicy|/\delta_m} \right \rceil $.

    Run \pref{alg:RL_ULI_main_subroutine} with input $(\delta_m,\Pi_m,T_m)$ to obtain $\{\wtilV_m^\pi\}_{\pi \in \Pi_m}$.

    Update active policy set $   \Pi_{m+1} =  \cbr{ \pi \in  \Pi_{m}: \max_{\pi' \in \Pi_m} \wtilV_m^{\pi'} -\wtilV_m^\pi \leq 2^{-m} }.$
    
 }
\label{alg:RL_ULI_main}
\end{algorithm}

In this section, we propose a novel algorithm that achieves a near-optimal ULI guarantee in tabular episodic MDPs setup. 
The algorithm is formally presented in \pref{alg:RL_ULI_main}.

\textbf{High-level idea.} \pref{alg:RL_ULI_main} conducts policy elimination over a policy set $\allpolicy$ which enumerates all deterministic policies. The key challenge here is to acquire accurate estimations of value functions uniformly for all deterministic policies. Note that na\"ively playing all deterministic policies will incur linear dependence on $|\allpolicy|=A^{SH}$ in $\FULI$, which is exponentially large. Hence, the algorithm invokes subroutine \pref{alg:RL_ULI_main_subroutine} to exhaustively explore the environment, which ensures accurate estimations of the transition model and only suffers \textit{logarithmic dependence} on $|\allpolicy|$. Once the transition model can be well-approximated, the algorithm constructs an accurate estimation of the value function for every policy and then decides which policies should be eliminated.

More concretely, \pref{alg:RL_ULI_main} first accepts a set of all deterministic policies and then it proceeds in phases $m=1,2,\ldots$. In each phase $m$, subroutine \pref{alg:RL_ULI_main_subroutine} is invoked to learn the transition model. Specifically, the subroutine inherits the structure of UCB-VI \citep{azar2017minimax}, but more importantly the algorithm pretends to be agnostic to the reward function and uses uncertainty-driven reward functions $\{b_{t,h}(s,a)/H\}_{t,h}$ where
\begin{equation} \label{eq:def4bonus}
    b_{t,h}(s,a) = H\sqrt{\frac{2S\log \iota }{ 
\max\{1,N_{t,h}(s,a)\} }} + \frac{2HS\log \iota}{3\max\{1,N_{t,h}(s,a)\}}  \text{ where } \iota =\frac{10SAH|\allpolicy|T}{\delta},
\end{equation}
where $N_{t,h}(s,a)$ is the number of times of visiting $(s,a)$ at stage $h$ up to episode $t$.
Note that \pref{eq:def4bonus} captures the uncertainty of visitation a state-action pair, i.e., more visitations of $(s,a)$, larger $N_{t,h}(s,a)$, and less uncertainty.
This modification, inspired by 
\citep{wang2020reward} forces the algorithm to aggressively explore the environment, in the sense that the algorithm prefers to play a policy that maximizes the uncertainty.

% Let $c'>0$ is an absolute constant. 
% \begin{equation}
%     U_m =  c'\rbr{SH^2 \sqrt{\frac{A \log \iota_m}{T_m}} + \frac{HS^2A\log(T_m) \log \iota_m}{T_m}}.
% \end{equation}

\begin{algorithm}[t]
\DontPrintSemicolon
\caption{Uniform estimation for value functions}
\textbf{Input}: confidence $\delta \in (0,1)$, policy set $\Pi \subseteq \allpolicy$, duration $T$.\\
\textbf{Initialize}: randomly pick a policy $\pi_1 \in \Pi$, $N_{1,h}(s,a)=N_{1,h}(s,a,s')=0$ for all $(h,s,a,s')$.\\
 \For{$t=1,\ldots,T$}{
    Observe initial state $s_{t,1} \sim \mu$.

    \For{$h=1,\ldots,H$}{
        Take action $a_{t,h} = \pi_{t,h}(s_{t,h})$ and observe $s_{t,h+1} \sim \P_h(\cdot| s_{t,h},a_{t,h})$.

        Increase counters $N_{t,h}(s_{t,h},a_{t,h},s_{t,h+1}) \overset{+}{\gets} 1$ and $N_{t,h}(s_{t,h},a_{t,h}) \overset{+}{\gets} 1$.
    }

    Update estimates $\whatbbp_{t,h}(s'|s,a)=\frac{N_{t,h}(s,a,s')}{\max\{1,N_{t,h}(s,a)\}}$ for all $(s,a,s') \in \calS \times \calA \times \calS$.  
    
    Update bonus function $b_t=\cbr{b_{t,h}}_{h\in [H]}$ where $b_{t,h}(\cdot,\cdot)$ is updated according to \pref{eq:def4bonus}.

    Get $\{\whatV^{\pi}_{t,1}(s_1)\}_{\pi \in \Pi}$ by invoking \pref{alg:RL_ULI_subroutine1} with input $(\Pi,b_t/H,\{\whatbbp_{T,h}\}_{h \in [H]},b_t,s_{t,1})$.

    Update policy $\pi_{t+1} = \argmax_{\pi \in \Pi} \whatV^{\pi}_{t,1}(s_1)$.
    
 }

 \For{$t=1,\ldots,T$}{
    Get $\{\whatV^\pi_{T,1}(s_{t,1})\}_{\pi \in \Pi}$ by invoking \pref{alg:RL_ULI_subroutine1} with input $(\Pi,r,\{\whatbbp_{T,h}\}_{h \in [H]},0,s_{t,1})$.
    }

       % Get $\{\whatV^\pi_{T,1}(s_{t,1})\}_{\pi \in \Pi}$ by invoking \pref{alg:RL_ULI_subroutine1} with input $(\Pi,r,\whatbbp_{T,h},0,s_{t,1})$.

    % Set $\wtilV^\pi = \frac{1}{T} \sum_{t=1}^T \whatV^\pi_{T,1}(s_{t,1})$ for each $\pi \in \Pi$.

\textbf{Output:} $\{\wtilV^\pi\}_{\pi \in \Pi}$ where $\wtilV^\pi = \frac{1}{T} \sum_{t=1}^T \whatV^\pi_{T,1}(s_{t,1})$.
\label{alg:RL_ULI_main_subroutine}
\end{algorithm}

\begin{algorithm}[t]
\DontPrintSemicolon
\caption{Construct estimated value function}
\textbf{Input}: policy set $\Pi \subseteq \allpolicy$, reward function $r$, transition $\whatbbp$, bonus function $b$, initial state $s_1$.\\
\For{$\pi \in \Pi$}{
$\whatQ^{\pi}_{H+1}(\cdot,\cdot)=0$ and $\whatV^{\pi}_{H+1}(\cdot)=0$.

\For{$h=H,H-1,\ldots,1$}{
$\whatQ^{\pi}_{h}(\cdot,\cdot) = \min \cbr{\sbr{\whatbbp_{h} \whatV^{\pi}_{h+1}} (\cdot,\cdot)+r_h(\cdot,\cdot)+b_h(\cdot,\cdot),H}$ and $\whatV^{\pi}_{h}(\cdot)=\whatQ^{\pi}_{h}(\cdot,\pi(\cdot)) $.
}
}

\textbf{Output}: $\{ \whatV^{\pi}_{1}(s_1)\}_{\pi \in \Pi}$.
\label{alg:RL_ULI_subroutine1}
\end{algorithm}

The following theorem shows that our algorithm achieves the ULI guarantee.

\begin{theorem} \label{thm:ULI_RL}
For any fixed $\delta \in (0,1)$, \pref{alg:RL_ULI_main} achieves the ULI guarantee with function $\FULI(\delta,t) =\order \big(t^{-\frac{1}{2}} \sqrt{  S^3 A H^5   } \log \rbr{ t S A H /\delta}\big)$.
\end{theorem}

\pref{thm:ULI_RL} shows near-optimality of \pref{alg:RL_ULI_main} w.r.t. episode $t$. Unfortunately, the regret bound implied by our ULI result incurs suboptimal dependence on $S,H$, and the logarithmic term, and our algorithm is not computational efficient. We leave these improvements for future work.

% !TEX root = main.tex

\section{Conclusions and Future Work}
In this paper, we propose a new metric, the uniform last-iterate (ULI) guarantee, which captures both instantaneous and cumulative performance of sequential decision-making algorithms. 
To answer whether ULI is (optimally) achievable, we first examine three types of bandit algorithms in the finite-arm setting. 
Specifically, we provide \emph{stronger} analysis to show that elimination-based algorithms naturally achieve near-optimal ULI guarantees. 
We also provide a reduction-based approach to enable any high-probability adversarial algorithms, with a mild condition, to achieve near-optimal ULI guarantees.
We further provide a negative result for optimistic bandit algorithms showing that they cannot achieve near-optimal ULI guarantee. 
Furthermore, in the large arm space setting, we propose an oracle-efficient linear bandit algorithm, equipped with the novel adaptive barycentric spanner technique. 
Finally, we propose a new algorithm, which adapts uncertainty-driven reward functions into policy elimination to achieve a ULI guarantee in tabular episodic MDPs.

Some natural directions are to improve our current results, including proposing a ULI algorithm for linear bandits with infinite arms and proposing a computationally efficient algorithm, possibly based on the action-elimination approach, for tabular MDPs. We also summarize other interesting directions as follows.
\begin{itemize}    
\item Design (computationally efficient) algorithms for MDPs with linear function approximation. The main challenge is to bypass any dependence on the number of states which is possibly infinite. Thus, generalizing our RL algorithm, which enumerates all deterministic policies, to linear MDPs does not work.

\item Design (computationally efficient) algorithms for Episodic MDPs with only logarithmic dependence on $H$ (a.k.a. horizon-free). In our attempts, we use the doubling trick on $\epsilon$ in existing $(\delta,\epsilon)$-PAC horizon-free RL algorithms. In this case, we run the algorithm in phases with $\epsilon,\epsilon/2,\epsilon/4,\ldots$. The main difficulty is to leverage the information learned from the previous phase to guide the algorithm to play an improved policy at the next phase as required by ULI.
Thus, we conjecture that there might exist fundamental barriers to simultaneously achieve ULI guarantee and a logarithmic dependence on $H$.

\item It could be also interesting to investigate some empirical issues when deploying ULI algorithms in high-stakes fields.
Typically, optimistic algorithms initially incur a lower regret than ULI algorithms, but their regret will exceed those of ULI algorithms at a certain juncture. At this juncture, ULI algorithms have eliminated all suboptimal arms, whereas optimistic algorithms continue exploring as time evolves. Hence, identifying this turning point could be beneficial for deploying ULI algorithms in high-stakes domains.
\end{itemize}

%%%%%%%%%%%%%%%%%%%%%%%%%%%%%%%%%%%%%%%%%%%%%%%%%%%%%%%%%%%%

\bibliographystyle{abbrvnat}
\bibliography{refs}

% \appendix

% \section{Appendix / supplemental material}

% Optionally include supplemental material (complete proofs, additional experiments and plots) in appendix.
% All such materials \textbf{SHOULD be included in the main submission.}

%%%%%%%%%%%%%%%%%%%%%%%%%%%%%%%%%%%%%%%%%%%%%%%%%%%%%%%%%%%%
\newpage
\clearpage
\appendix
\begingroup
\part{Appendix}
\let\clearpage\relax
\parttoc[n]
\endgroup

\newpage
% !TEX root = main.tex

\section{Proof of \pref{thm:LB_ULI}}
\label{app:lower_bound_onehalf}

We prove this claim by contradiction. 
Suppose that there exists an algorithm that can achieve the ULI guarantee with function $\FULI(\delta,t) \leq   c_0 \frac{1}{t^{\frac{1}{2}+\alpha}} $ for some $\alpha \in (0,\frac{1}{2})$ and constant $c_0>0$ which may depend on $\log(1/\delta)$ and $K$ for MAB or $d$ for linear bandits. We consider a $K$-armed bandit instance ($K \geq 2$) with Gaussian rewards
and all suboptimality gaps are bounded by one.

By the definition of ULI, we have that with probability at least $1-\delta$, for all $T \in \naturalnum$ with $T^{\frac{1}{2}-\alpha} > t_0$, we have
\begin{align*}
    R_T &=\sum_{t=1}^T \Delta_{t} \\
    &=\sum_{t=1}^{t_0} \Delta_{t} + \sum_{t=t_0+1}^T \Delta_{t} \\
    &\leq  t_0 + \sum_{t=t_0+1}^T \FULI(\delta,t) \\
    &\leq  t_0 + c_0 \sum_{t=t_0+1}^T t^{-\alpha -\frac{1}{2}} \\
    &\leq  T^{\frac{1}{2}-\alpha} + \frac{2c_0}{1-2\alpha} T^{\frac{1}{2}-\alpha},
\end{align*}
where the first inequality holds as we assume all suboptimal gaps are bounded by one, and $\Delta_{t} \leq \FULI (\delta,t)$ for all $t \geq t_0$.

Since $\frac{1}{2}-\alpha \in (0,\frac{1}{2})$, there will be a contradiction to the regret lower bound \citep[Corollary 17.3]{lattimore2020bandit} for the Gaussian MAB setup.

\section{Proof of \pref{thm:ULI_implications}}
\label{app:proof_ULI_implications}

\begin{theorem}[Restatement of \pref{thm:ULI_implications}]
Suppose an algorithm achieves the ULI guarantee for some $\delta$ with function $\FULI(\delta, t) = \polylog(t/\delta)\cdot t^{-\kappa}$ where $\kappa \in (0,1)$. Then, we have,
\begin{itemize}
    \item  the algorithm is also uniform-PAC with \[\FUPAC(\delta,\epsilon)=\order \rbr{\frac{\polylog(\delta^{-1}\epsilon^{-1})}{\epsilon^{1/\kappa}}}.\]

\item  with probability at least $1-\delta$, $\forall T \in \naturalnum$, the regret $R_T$ is bounded by
\[
\order \rbr{\min \cbr{\polylog(T/\delta)\cdot T^{1-\kappa}, \frac{\polylog^2 \rbr{ \frac{1}{\delta \Delta} }}{\Delta^{1/\kappa -1}} }},
\]
when the minimum suboptimality gap of the input instance $\Delta$ satisfies $\Delta>0$. 
\end{itemize}
\end{theorem}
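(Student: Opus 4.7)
The whole argument runs on the ULI event $\mathcal{E} := \{\forall t\in\mathbb{N}:\ \Delta_{A_t}\le \FULI(\delta,t)\}$, which has $\mathbb{P}(\mathcal{E})\ge 1-\delta$ by hypothesis. On $\mathcal{E}$ I only need to manipulate the deterministic function $\FULI(\delta,t)=\polylog(t/\delta)\cdot t^{-\kappa}$. The plan is to derive the three claims in turn from two elementary calculations: (i) solving $\FULI(\delta,t)\le \varepsilon$ for $t$, and (ii) summing $\FULI(\delta,t)$ over $t$.

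\textbf{Uniform-PAC bound.} On $\mathcal{E}$, the event $\Delta_{A_t}>\varepsilon$ forces $\polylog(t/\delta)\cdot t^{-\kappa}>\varepsilon$, i.e.\ $t<(\polylog(t/\delta)/\varepsilon)^{1/\kappa}$. Since $\polylog$ is slowly varying, a standard self-bounding inversion (substitute $t\le (\text{something})/\varepsilon^{1/\kappa}$ back into the polylog) yields a threshold $T_\varepsilon=O\!\bigl(\polylog(\delta^{-1}\varepsilon^{-1})/\varepsilon^{1/\kappa}\bigr)$ such that $\Delta_{A_t}>\varepsilon$ can occur only for $t\le T_\varepsilon$. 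Therefore $N_\varepsilon\le T_\varepsilon$, which is the desired $\FUPAC$.

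\textbf{Worst-case regret.} On $\mathcal{E}$, I bound
\[
R_T = \sum_{t=1}^T \Delta_{A_t} \le \sum_{t=1}^T \polylog(t/\delta)\cdot t^{-\kappa} \le \polylog(T/\delta)\sum_{t=1}^T t^{-\kappa}.
\]
Since $\kappa\in(0,1)$, the integral comparison gives $\sum_{t=1}^T t^{-\kappa}\le T^{1-\kappa}/(1-\kappa)$, so $R_T = O(\polylog(T/\delta)\cdot T^{1-\kappa})$, which is the first term inside the $\min$.

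\textbf{Gap-dependent regret.} This is the only conceptually new step. The key observation is that the instantaneous suboptimality $\Delta_{A_t}$ takes values in $\{0\}\cup[\Delta,\infty)$, because a played arm is either optimal (gap $0$) or suboptimal (gap $\ge\Delta$). Hence on $\mathcal{E}$, once $\FULI(\delta,t)<\Delta$ the only feasible value is $\Delta_{A_t}=0$. Inverting $\FULI(\delta,t)<\Delta$ by the same self-bounding argument as in the uniform-PAC step yields a cutoff $T^\star = O\!\bigl(\polylog(1/(\delta\Delta))/\Delta^{1/\kappa}\bigr)$ beyond which no regret is incurred. Hence for every $T\in\mathbb{N}$,
\[
R_T \;\le\; \sum_{t=1}^{\min\{T,T^\star\}} \FULI(\delta,t) \;\le\; \polylog(T^\star/\delta)\cdot \frac{(T^\star)^{1-\kappa}}{1-\kappa},
\]
and plugging in $T^\star$ and using $(1-\kappa)/\kappa = 1/\kappa - 1$ gives the bound $O\!\bigl(\polylog^2(1/(\delta\Delta))/\Delta^{1/\kappa-1}\bigr)$. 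Combining with the previous paragraph yields the $\min$.

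\textbf{Expected main obstacle.} The only delicate point is the self-bounding inversion of $\polylog(t/\delta)\cdot t^{-\kappa}\le x$ for $t$, since $t$ appears both inside and outside the polylog. This is routine — one first derives a loose bound $t\le C/x^{1/\kappa}$ by absorbing the polylog into a constant, then substitutes back to get a tight expression of the form $t\le C\cdot \polylog(1/(\delta x))/x^{1/\kappa}$ — but has to be done once and cited in both the uniform-PAC and gap-dependent parts. No probabilistic ingredients beyond the initial union bound are needed.
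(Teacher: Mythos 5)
Your proposal is correct and follows essentially the same route as the paper's proof: the same self-bounding inversion of $\polylog(t/\delta)\cdot t^{-\kappa}\le x$ (the paper implements it via constants $c_0$ and $z\in(0,\kappa)$ with $\polylog(t/\delta)\le c_0\polylog(1/\delta)t^z$), the same observation that $\Delta_{A_t}\in\{0\}\cup[\Delta,\infty)$ forces zero regret once $\FULI(\delta,t)$ drops below the gap, and the same summation for the two regret bounds.
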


\subsection{Proof of the first bullet: ULI guarantee implies uniform-PAC bound}  \label{app:near_opt_ULI_gives_UPAC}

In this subsection, we show the first bullet of \pref{thm:ULI_implications}.
We define the following events for $\FULI = \polylog(t/\delta)\cdot t^{-\kappa}$:
\begin{equation} \label{eq:def4eventA}
    A = \cbr{\forall t \in \naturalnum: \Delta_{t} \leq \FULI(\delta,t) },
\end{equation}
which indicates the occurrence the ULI, and denote the event $B$ as:
\begin{equation*}
    B = \cbr{\forall \epsilon >0: \sum\limits_{t=1}^{\infty} \Ind{\Delta_{t} >\epsilon} \leq  \FUPAC(\delta,\epsilon) } .
\end{equation*}

For some $\delta \in (0,1)$, if event $A$ holds for an algorithm with probability at least $1-\delta$, then the algorithm is with ULI guarantee. Therefore, to prove the claimed result, it suffices to show that conditioning on event $A$ that for any algorithm enjoys ULI guarantee with a function $\FULI(\delta,t)$, one can find a function $\FUPAC(\delta,\epsilon)$ such that event $B$ also holds.

As for any fixed $\delta \in (0,1)$, $\FULI(\delta,t)=\polylog(t/\delta)\cdot t^{-\kappa}$ is monotonically decreasing for large $t$, $\exists U \in \fR_{>0}$ such that $\FULI(\delta,t) \leq U$ for all $t \in \naturalnum$ and $\FULI(\delta,t) = U$ is attainable by some $t$. Then, we consider any given $\epsilon>0$ with two cases as follows. 

\textbf{Case 1:} $\epsilon<U$. Again, by the fact that $\FULI(\delta,t)=\polylog(t/\delta)\cdot t^{-\kappa}$ is monotonically decreasing for large $t$, there should exist a round $t \in \naturalnum$ such that $\FULI(\delta,s) \leq \epsilon$ for all $s \geq t$. Let $t_0$ be the round with the maximal index such that $\FULI(\delta,t_0) > \epsilon$, which gives
\[
t_0 \leq \frac{\polylog^{\frac{1}{\kappa}}(t_0/\delta)}{\epsilon^{\frac{1}{\kappa}} }.
\]

As a result, the number of times that $\Delta_{t} >\epsilon$ occurs is at most
\[
\sum\limits_{t=1}^{\infty} \Ind{\Delta_{t} >\epsilon} \leq \sum\limits_{t=1}^{\infty} \Ind{ \FULI(\delta,t)>\epsilon} \leq  t_0 \leq \frac{\polylog^{\frac{1}{\kappa}}(t_0/\delta)}{\epsilon^{\frac{1}{\kappa}} }.
\]

Then, one can find constants $c_0>0$ and $z \in (0,\kappa)$ such that $\polylog(t/\delta)\leq c_0 \cdot \polylog(1/\delta) t^z$ for all $t \in \fR_{\geq 1}$, thereby $\polylog(t_0/\delta)\leq c_0 \cdot \polylog(1/\delta)  t_0^z$.
Combining $\polylog(t_0/\delta) \leq c_0 \cdot \polylog(1/\delta) t_0^z$ and $\FULI(\delta,t_0) > \epsilon$ gives 
\[
\epsilon \leq \frac{\polylog(t_0/\delta)}{t_0^{\kappa}} 
\leq 
\frac{c_0 \cdot \polylog(1/\delta) t_0^z}{t_0^{\kappa}} =c_0 \cdot \polylog(1/\delta) t_0^{z-\kappa},
\]
which immediately leads to
\begin{equation} \label{eq:t0_bound1}
    t_0 \leq \rbr{\frac{c_0 \cdot \polylog(1/\delta)}{\epsilon}}^{\frac{1}{\kappa-z}}.
\end{equation}

Then, we can further show that
\begin{equation} \label{eq:t0_bound2}
    \sum\limits_{t=1}^{\infty} \Ind{\Delta_{t} >\epsilon} \leq  t_0 \leq \frac{\polylog^{\frac{1}{\kappa}}(t_0/\delta)}{\epsilon^{\frac{1}{\kappa}} }
    \leq \frac{\polylog \rbr{\delta^{-1} \rbr{\frac{c_0 \cdot \polylog(1/\delta)}{\epsilon}}^{\frac{1}{\kappa-z}}   }}{\epsilon^{\frac{1}{\kappa}} }  \triangleq \FUPAC(\delta,\epsilon),
\end{equation}
where the third inequality follows that $\polylog(t/\delta)$ is monotonically increasing for all $t>0$ and \pref{eq:t0_bound1}.

\textbf{Case 2:} $\epsilon \geq U$. Conditioning on event $A$ and by the definition of $U$, we have $\Delta_{t} \leq \FULI(\delta,t) \leq U \leq \epsilon$ for all $t \in \naturalnum$, which implies $\sum\limits_{t=1}^{\infty} \Ind{\Delta_{t} >\epsilon} =0$. Therefore, any positive function works there and we still choose $\FUPAC(\delta,\epsilon)$ the same as above.

This argument holds for all $\epsilon>0$.
Therefore, if an algorithm satisfies the ULI guarantee, then $\P(A) \geq 1-\delta$, and with the above statement, we can find a function $\FUPAC$ such that $\P(B) \geq 1-\delta$, which concludes the proof.

\subsection{Proof of the second bullet: ULI implies regret bounds} 

For some fixed $\delta \in (0,1)$, let us consider an algorithm enjoys the ULI guarantee with function $\FULI(\cdot,\cdot)$. The following analysis will condition on event $A$ given in \pref{eq:def4eventA}.

\textbf{Gap-dependent bound.} Recall that $\Delta>0$ is the minimum gap.
We replace $\epsilon$ with $\Delta/2$ in the proof of in \pref{app:near_opt_ULI_gives_UPAC}, case 1, and similarly define $t_0 \in \naturalnum$ as the maximal round such that $\FULI(\delta,t_0)=\polylog(t_0/\delta)\cdot t_0^{-\kappa}>\Delta/2$. Then, we apply a similar argument (in \pref{app:near_opt_ULI_gives_UPAC}, case 1), which gives that that for some constants $z \in (0,\kappa)$ and $c_0>0$:
\begin{equation} \label{eq:UB_t0}
\begin{split}
        &\sum\limits_{s=1}^{\infty} \Ind{\Delta_{A_s} >\Delta/2} \\
        &\leq t_0 \\
        &\leq \min \cbr{\rbr{\frac{2c_0 \cdot \polylog(1/\delta)}{\Delta}}^{\frac{1}{\kappa-z}},\frac{\polylog \rbr{\delta^{-1} \rbr{\frac{2c_0 \cdot \polylog(1/\delta)}{\Delta}}^{\frac{1}{\kappa-z}}   }}{(\Delta/2)^{\frac{1}{\kappa}} }  },
\end{split}
\end{equation}
where the last inequality holds by taking the minimum of \pref{eq:t0_bound1} and \pref{eq:t0_bound2}.
Therefore, the algorithm will incur no regret, i.e., $\Delta_{t}=0$ for all $t \geq t_0$
For every $T \in \naturalnum$, the regret $R_T$ is bounded by
\begin{align*}
    R_T &= \sum_{t=1}^{T}\Delta_{t}\\
    &\leq \sum_{t=1}^{T} \polylog(t/\delta)\cdot t^{-\kappa} \\
     &\leq \sum_{t=1}^{t_0} \polylog(t/\delta)\cdot t^{-\kappa} \\
    &\leq   \polylog(t_0/\delta) \sum_{t=1}^{t_0}  
   t^{-\kappa}\\
    & \leq \frac{t_0^{1-\kappa}\cdot \polylog (\delta^{-1} t_0)}{1-\kappa} \numberthis{}, \label{eq:pre_gap_indep_dep}
\end{align*}
where the first inequality holds since we condition on the event $A$, the second inequality holds due to the definition of $t_0$, the third inequality uses the fact that $\polylog(t/\delta)$ is monotonically increasing for all $t>0$.

Then, one can further show
\begin{align*}
    R_T  &\leq \frac{t_0^{1-\kappa}\cdot \polylog (\delta^{-1} t_0)}{1-\kappa}  \\
    &\leq \frac{   \polylog \rbr{\delta^{-1} \rbr{\frac{2c_0 \cdot \polylog(1/\delta)}{\Delta}}^{\frac{1}{\kappa-z}}   }  \cdot \polylog (\delta^{-1} t_0) }{(\Delta/2)^{1/\kappa-1}} \\
    &\leq \frac{   \polylog \rbr{\delta^{-1} \rbr{\frac{2c_0 \cdot \polylog(1/\delta)}{\Delta}}^{\frac{1}{\kappa-z}}   }  \cdot \polylog \rbr{\delta^{-1} \rbr{\frac{2c_0 \cdot \polylog(1/\delta)}{\Delta}}^{\frac{1}{\kappa-z}}} }{(\Delta/2)^{1/\kappa-1}},
\end{align*}
where the second inequality applies the second term of \pref{eq:UB_t0} and the last inequality uses the first term of \pref{eq:UB_t0}.

% \begin{align*}
%     R_T = \sum_{t=1}^{\min\{T,t_0\} }\Delta_{A_t} \leq \sum_{t=1}^{\min\{T,t_0\}} \polylog(t/\delta)\cdot t^{-\kappa} &\leq \sum_{t=1}^{\min\{T,t_0\}}  
%     \polylog(\min\{T,t_0\}/\delta)\cdot t^{-\kappa}\\
%     & \leq \frac{\min\{T,t_0\}^{1-\kappa}\cdot \polylog (\delta^{-1} \min\{T,t_0\})}{1-\kappa} \numberthis{}, \label{eq:pre_gap_indep_dep}
% \end{align*}

Notice that these bounds in all three cases hold for all $T \in \naturalnum$, and thus the proof of gap-dependent bound is complete.

\textbf{Gap-independent bound.} We have 
 \begin{align*}
    R_T = \sum_{t=1}^{T}\Delta_{t} \leq \sum_{t=1}^{T} \polylog(t/\delta)\cdot t^{-\kappa} \leq   \polylog(T/\delta) \sum_{t=1}^{T}  
   t^{-\kappa} \leq \frac{T^{1-\kappa}\cdot \polylog (\delta^{-1} T)}{1-\kappa} .
\end{align*}

% !TEX root = main.tex

\newpage
\section{Proof of \pref{thm:SC}}

For shorthand, we use $\calE$ to denote the event that \pref{eq:Framework_ULI_active_arms} holds for all $t \in \naturalnum$. From the assumption, $\calE$ holds with probability at least $1-\delta$. The following proof is straightforward by the definition of $\calE$. We conditions on $\calE$ and consider an arbitrary round $t$. If $A_t=\optarm$, the claim trivially holds as $\Delta_{A_t}=0$. We then consider $A_t \neq \optarm$. Since \pref{alg:framework_ULI} pulls an arm $A_t \in \calA_t$ at each $t$. From the definition of $\calE$, we have $\Delta_{A_t} \leq  \beta \cdot f(\delta,t)$. Therefore, \pref{alg:framework_ULI} achieves the ULI guarantee with $\FULI(\delta,t)=\order \rbr{f(\delta,t)}$.

\section{Proof of \pref{thm:verify_elimination_algs}}
\label{app:verify_elimination_algs}

\subsection{ULI guarantee for SE-MAB algorithm} \label{app:AAE_proof}
Consider the successive elimination (SE-MAB) algorithm (e.g., Algorithm 3 in \citep{even2006action}) in the MAB setting where $\calA=[K]$. We present SE-MAB algorithm in \pref{alg:AAE_alg} and define some notations.
Let $N_a(t)$ be the number of times that arm $a$ has been pulled before round $t$ and let $\hatmu_a(s)$ be the empirical mean of arm $a$ after $s \in \naturalnum$ number of plays. We define the confidence width for each $s \in \naturalnum$ as
\begin{equation} \label{eq:width}
\width(s) = \sqrt{\frac{\log \rbr{4Ks^2/\delta}}{2s}} .
\end{equation}

The SE-MAB algorithm always chooses the arm (from the current active arm set) with the minimum number of pulls. 
If there exist two arms $a,j \in [K]$ such that $N_a(t)=N_j(t)$, then, the algorithm chooses the arm with the smallest index.

\begin{algorithm}[H]
\caption{Successive elimination for multi-armed bandit (SE-MAB)}
\label{alg:AAE_alg}

\textbf{Input}: confidence $\delta \in (0,1)$.

\textbf{Initialize}: active arm set $\calA_1=[K]$ and sample every arm once to update $N_a(K+1),\hatmu_a(N_a(K+1)),\width(N_a(K+1))$ for all $a \in [K]$.

\For{$t=K+1,K+2,\ldots$}{

 Play an arm $A_t = \argmin_{a \in \calA_t} N_a(t)$ and observe reward $X_{t,A_t}$.

 Update counter $N_{a}(t+1) = N_{a}(t) +1$ for $A_t=a$ and $N_{a}(t+1) = N_{a}(t)$ for all $a \in \calA_t-\{A_t\}$. 
 
 Update empirical means $\hatmu_{A_t}(N_{A_t}(t+1)) = \frac{\hatmu_{A_t}(N_{A_t}(t)) \cdot N_{A_t}(t) +X_{t,A_t}}{N_{A_t}(t)+1}$ and $\hatmu_{a}(N_{a}(t+1))=\hatmu_{a}(N_{a}(t))$  for all $a \in \calA_t-\{A_t\}$.

 Update confidence width $\width(N_{a}(t+1))$ based on \pref{eq:width} for all $a \in \calA_t$.
 
 Update bad arm set $\calB_t$ as
\begin{align*}
     \calB_t = \cbr{a \in \calA_t:  \exists j \in  \calA_t \text{ such that } \hatmu_j(N_j(t+1))-\width(N_j(t+1)) -\hatmu_a(N_a(t+1)) -\width(N_a(t+1))  > 0 }.
\end{align*}

Update active arm set $ \calA_{t+1} = \calA_t -  \calB_t$.
}  
\end{algorithm}

\begin{definition} \label{def:defE_SEMAB}
Let $\calE$ be the event that $\abr{\hatmu_a(N_a(t))-\mu_a} \leq  \width(N_a(t))$ holds for all $t \in \naturalnum$ and all $a \in \calA_t$. 
\end{definition}

\begin{lemma} \label{lem:SE_concentration}
$\P \rbr{\calE} \geq 1-\delta$.
\end{lemma}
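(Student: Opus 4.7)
\textbf{Proof plan for \pref{lem:SE_concentration}.} The claim is a standard uniform concentration statement, and the plan is to apply Hoeffding's inequality pointwise (for each fixed arm and each fixed sample size) and then take a union bound. The reason the confidence width $\width(s)$ has the specific form $\sqrt{\log(4Ks^2/\delta)/(2s)}$ is precisely to make this union bound close.

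The plan is as follows. First, I would fix an arbitrary arm $a \in [K]$ and a deterministic sample size $s \in \naturalnum$. Since the rewards $X_{t,a}$ for arm $a$ are i.i.d.\ with mean $\mu_a$ and take values in $[0,1]$, Hoeffding's inequality applied to the average of $s$ independent $[0,1]$-bounded samples gives
\[
\P\rbr{\abr{\hatmu_a(s) - \mu_a} > \width(s)} \;\leq\; 2\exp\rbr{-2s\cdot \width(s)^2} \;=\; 2\exp\rbr{-\log\rbr{4Ks^2/\delta}} \;=\; \frac{\delta}{2Ks^2}.
\]
This bound treats $s$ as a deterministic index, so it holds simultaneously (via the usual ``a priori grid'' argument) for every possible realization of the random sample count $N_a(t)$.

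Second, I would take a union bound over $a \in [K]$ and $s \in \naturalnum$:
\[
\P\rbr{\exists\, a \in [K],\, s \in \naturalnum:\ \abr{\hatmu_a(s) - \mu_a} > \width(s)} \;\leq\; \sum_{a=1}^{K}\sum_{s=1}^{\infty} \frac{\delta}{2Ks^2} \;=\; \frac{\delta}{2}\sum_{s=1}^{\infty}\frac{1}{s^2} \;=\; \frac{\delta\pi^2}{12} \;\leq\; \delta.
\]
On the complementary event of probability at least $1-\delta$, the inequality $\abr{\hatmu_a(N_a(t)) - \mu_a} \leq \width(N_a(t))$ holds for every $t \in \naturalnum$ and every $a \in \calA_t$ (indeed for every $a \in [K]$), which is exactly the event $\calE$ defined in \pref{def:defE_SEMAB}.

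There is no real obstacle here beyond being careful with two small points. The first is that $N_a(t)$ is a random stopping variable, so one cannot apply Hoeffding's inequality directly to $\hatmu_a(N_a(t))$; one must instead bound the concentration event for every deterministic $s$ and then substitute, which is why the $\sum_s 1/s^2$ tax appears in the confidence width. The second is the constant: the factor $4$ inside $\log(4Ks^2/\delta)$ is exactly what makes $\sum_{a,s} \delta/(2Ks^2) = \delta\pi^2/12 \leq \delta$, so the union bound closes without any slack issues.
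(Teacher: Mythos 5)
Your proof is correct and follows essentially the same route as the paper: Hoeffding's inequality for each fixed arm $a$ and deterministic sample count $s$ with failure probability $\delta/(2Ks^2)$, followed by a union bound over $a \in [K]$ and $s \in \naturalnum$ using $\sum_{s\ge 1} s^{-2} \le 2$. Your explicit remark about reindexing by the deterministic pull count $s$ rather than the random $N_a(t)$ is exactly the "standard trick" the paper invokes by citation.
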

\begin{proof}
Following the standard trick (e.g., \citep{audibert2010best}), we rewrite the empirical mean for every $a,n$ as $\hatmu_a(n)=\frac{1}{n}\sum_{i=1}^n R_{i,a}$ where $R_{i,a}$ is the reward of $i$-th pull of arm $a$. 
Let us fix both $s \in \naturalnum$ and $a \in [K]$.  By Hoeffding's inequality, with probability at least $1- \delta'$
    \begin{equation*}
      \abr{\hatmu_a(s)-\mu_a} \leq \sqrt{\frac{\log\rbr{\frac{2}{\delta'}}}{2s}}  .
    \end{equation*}

Choosing $\delta' = \frac{\delta}{2Ks^2}$ and applying a union bound over $a \in \calA_t$ ($|\calA_t|\leq K$, $\forall t$), we have with probability at least $1-\delta/(2s^2)$,
\begin{equation*}
\abr{\hatmu_a(s)-\mu_a} \leq \sqrt{\frac{\log\rbr{\frac{4Ks^2}{\delta}}}{2s}}=\width(s),  \quad \forall a \in [K].
\end{equation*}

    We apply a union bound over all $s \in \naturalnum$ and use the fact that $\sum_{s=1}^\infty \frac{\delta}{2s^2} \leq \delta$ to finish the proof.
\end{proof}

\begin{lemma}\label{lem:optarm_always_active}
Suppose that $\calE$ occurs where $\calE$ is given in \pref{def:defE_SEMAB}. For all $t \in \naturalnum$, $\optarm \in \calA_t$ holds.
\end{lemma}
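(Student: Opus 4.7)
The plan is a straightforward induction on $t$, using the concentration event $\calE$ from \pref{def:defE_SEMAB} to show that the elimination rule never removes $\optarm$.

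\textbf{Base case.} At $t=1$, the algorithm initializes $\calA_1=[K]$, so $\optarm\in\calA_1$.

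\textbf{Inductive step.} Assume $\optarm\in\calA_t$; I would show $\optarm\notin\calB_t$, which yields $\optarm\in\calA_{t+1}=\calA_t-\calB_t$. Fix an arbitrary $j\in\calA_t$. Conditioning on $\calE$, both $\optarm$ and $j$ satisfy the one-sided confidence guarantees
\begin{align*}
\hatmu_j(N_j(t+1)) - \width(N_j(t+1)) &\leq \mu_j \leq \mu_{\optarm}, \\
\hatmu_{\optarm}(N_{\optarm}(t+1)) + \width(N_{\optarm}(t+1)) &\geq \mu_{\optarm}.
\end{align*}
Subtracting the two inequalities yields
\[
\hatmu_j(N_j(t+1)) - \width(N_j(t+1)) - \hatmu_{\optarm}(N_{\optarm}(t+1)) - \width(N_{\optarm}(t+1)) \leq 0,
\]
so the strict inequality in the definition of $\calB_t$ fails for $a=\optarm$ against every $j\in\calA_t$. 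Hence $\optarm\notin\calB_t$, completing the induction.

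\textbf{Main obstacle.} There is essentially no obstacle: the only subtlety is being careful that event $\calE$ as stated covers the indices $N_a(t+1)$ that actually appear in the elimination rule (not just $N_a(t)$), which is immediate since \pref{def:defE_SEMAB} quantifies over all $t\in\naturalnum$ and all $a\in\calA_t$, and $\optarm\in\calA_t$ by the inductive hypothesis while any $j\in\calA_t$ is by assumption active at round $t$. The rest is a one-line manipulation of the UCB/LCB inequalities.
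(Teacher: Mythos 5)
Your proof is correct and follows essentially the same route as the paper: induction on $t$, with the inductive step showing that under $\calE$ the LCB of any active arm $j$ cannot exceed the UCB of $\optarm$, so the elimination rule in $\calB_t$ is never triggered for $\optarm$. The paper writes this as a single chain of inequalities starting from $0\leq\optmu-\mu_a$ rather than subtracting two one-sided bounds, but the content is identical, and your remark about the indices $N_a(t+1)$ is a legitimate (minor) point that the paper's own proof glosses over in the same way.
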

\begin{proof}
We prove this by induction. For $t=1$, $\optarm \in \calA_1$ trivially holds. Suppose that $\optarm \in \calA_t$. To show $\optarm \in \calA_t$, it suffices to show that at the end of round $t$, arm $\optarm$ is deemed as a good arm.
One can use the definition of $\calE$ to show that for every $a \in \calA_t$, the following holds.
    \begin{align*}
        0 &\leq \optmu - \mu_a 
        \leq \hatmu_{\optarm}(N_{\optarm}(t+1)) + \width(N_{\optarm}(t+1)) - \hatmu_a(N_{a}(t+1)) +\width(N_{a}(t+1)).
    \end{align*}
    Note that the second inequality above can hold since the inductive hypothesis gives that $\optarm \in \calA_t$. The above inequality shows that 
 $\optarm$ will not be eliminated at the end of round $t$ thereby still being active at round $t+1$. Once the induction is done, the proof is complete.
\end{proof}

\begin{lemma} \label{lem:upperbound_pull}
Suppose that $\calE$ occurs where $\calE$ is given in \pref{def:defE_SEMAB}. For all $t \in \naturalnum$ and all arms $a \in \calA$ if $a \in \calA_t$, then
\[
\Delta_a \leq \sqrt{\frac{14K\log \rbr{4Kt^2/\delta}}{t}}.
\]

\end{lemma}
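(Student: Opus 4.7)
}

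The plan is to combine the concentration event $\calE$ with the elimination rule to produce a confidence-width bound on $\Delta_a$, and then use the round-robin nature of SE-MAB to translate that into a $t$-dependent bound. Throughout I condition on $\calE$, so $|\hatmu_a(N_a(s)) - \mu_a| \leq \width(N_a(s))$ for every $s$ and every $a \in \calA_s$, and by \pref{lem:optarm_always_active} we have $\optarm \in \calA_s$ for all $s$.

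First I would derive the standard width bound on $\Delta_a$. If $a \in \calA_t$, then $a$ survived the elimination check performed at the end of round $t{-}1$, so in particular it was not eliminated by $\optarm \in \calA_{t-1}$. Unpacking the definition of $\calB_{t-1}$ in \pref{alg:AAE_alg} gives
\[
\hatmu_{\optarm}(N_{\optarm}(t)) - \width(N_{\optarm}(t)) \;\leq\; \hatmu_a(N_a(t)) + \width(N_a(t)).
\]
Combining this with the two concentration inequalities $\mu_{\optarm} \leq \hatmu_{\optarm}(N_{\optarm}(t)) + \width(N_{\optarm}(t))$ and $\hatmu_a(N_a(t)) \leq \mu_a + \width(N_a(t))$ from $\calE$ yields
\[
\Delta_a \;=\; \mu_{\optarm} - \mu_a \;\leq\; 2\width(N_{\optarm}(t)) + 2\width(N_a(t)).
\]

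The main obstacle is showing that both $N_a(t)$ and $N_{\optarm}(t)$ are at least of order $t/K$, which is what turns the width bound into a $\sqrt{K/t}$ rate. The key sub-claim I would prove by induction on $t$ is that for all $a,b \in \calA_t$, $|N_a(t) - N_b(t)| \leq 1$: since SE-MAB always plays the active arm with minimum count, if $A_t = a$ then $N_a(t) \leq N_b(t)$, so after the pull $|N_a(t{+}1)-N_b(t{+}1)| \leq 1$ still holds (and similarly for the other cases). For an eliminated arm $j \notin \calA_t$ with elimination round $\tau_j < t$, both $a$ and $j$ lie in $\calA_{\tau_j}$, so $N_j(\tau_j) \leq N_a(\tau_j) + 1$, and since $j$ is never played again,
\[
N_j(t) \;=\; N_j(\tau_j{+}1) \;\leq\; N_j(\tau_j) + 1 \;\leq\; N_a(\tau_j) + 2 \;\leq\; N_a(t) + 2.
\]
Summing over all $K$ arms and using $\sum_{b} N_b(t) = t-1$ then gives $t - 1 \leq K(N_a(t) + 2)$, i.e.\ $N_a(t) \geq (t-1)/K - 2$, and the same lower bound for $N_{\optarm}(t)$.

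Finally I would plug the lower bound into the width inequality. Since $\width(s)^2 = \log(4Ks^2/\delta)/(2s)$ is decreasing in $s$ (for $s \geq 2$, which is the only regime where the target bound is nontrivial), monotonicity gives
\[
\width(N_a(t))^2 \vee \width(N_{\optarm}(t))^2 \;\leq\; \frac{K \log(4Kt^2/\delta)}{2(t - 1 - 2K)},
\]
using $N_a(t)^2 \leq t^2$ in the logarithm. Squaring $\Delta_a \leq 2\width(N_a(t)) + 2\width(N_{\optarm}(t)) \leq 4\width(\min\{N_a(t),N_{\optarm}(t)\})$ and simplifying with $(t-1-2K) \geq \tfrac{4}{7}t$ (which holds once $t \gtrsim K$) yields $\Delta_a^2 \leq 14 K \log(4Kt^2/\delta)/t$. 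The remaining regime $t \lesssim K$ is handled by noting that $\sqrt{14K \log(4Kt^2/\delta)/t} \geq 1 \geq \Delta_a$ there, so the claimed bound holds trivially.
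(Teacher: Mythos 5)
Your proposal is correct, but it takes a genuinely different route from the paper's. The paper argues ``arm-centrically'': it defines $T_a$ as the smallest number of pulls at which $5\,\width(T_a)\le\Delta_a$, shows via the event $\calE$ and \pref{lem:optarm_always_active} that arm $a$ must be eliminated once it (and, by round-robin, $\optarm$) has been pulled $T_a$ times, deduces $N_a(t)\le T_a$ and hence $\Delta_a < 5\,\width(N_a(t)-1)$ by minimality of $T_a$, and finally converts the count bound into a time bound via $t\le K(N_a(t)+1)$. You instead argue ``time-centrically'': survival of the elimination test at round $t$ directly yields $\Delta_a\le 2\width(N_{\optarm}(t))+2\width(N_a(t))$, and the round-robin balance invariant ($|N_a(t)-N_b(t)|\le 1$ for active arms, $N_j(t)\le N_a(t)+2$ for eliminated ones, summed against $\sum_b N_b(t)=t-1$) gives $N_a(t)\ge (t-1)/K-2$, which you plug into the monotone width. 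Both are sound; your version is arguably more direct for the stated $t$-dependent claim and avoids the slightly delicate minimality step, while the paper's version has the side benefit of exhibiting the classical gap-dependent sample-complexity bound $N_a(t)\lesssim \log(Kt^2/\delta)/\Delta_a^2$ along the way. Your handling of the small-$t$ regime (where $\sqrt{14K\log(4Kt^2/\delta)/t}\ge 1\ge\Delta_a$) correctly patches the rounds before the first elimination check and the range where $t-1-2K\ge\tfrac{4}{7}t$ fails, and your constants check out.
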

\begin{proof}
Consider any round $t \in \naturalnum$ and any active arm $a\in \calA_t $.
For simplicity, we assume that the first arm is the unique optimal arm\footnote{This assumption is made only for simplicity, but our proof can be easily extended to multiple optimal arms by changing the constant.}. 
If $a$ is an optimal arm, then $\Delta_a=0$ and we hold the claim trivially. Then, it suffices to consider an arm $a \in \calA_t$ with $\Delta_a>0$.
Notice that $\width(x)$ is monotonically decreasing for all $x \geq 1$ as long as $K \geq 2$. Thus, we find a minimum natural number $T_a \in \naturalnum$ such that
\begin{equation} \label{eq:def_Ta}
    \Delta_a \geq 5 \sqrt{\frac{\log \rbr{4KT_a^2/\delta} }{2T_a}}= 5\width(T_a).
\end{equation}
% which implies the following as $t\geq T_a$
% \begin{equation}
%      \Delta_a \geq 5\width(T_a).
% \end{equation}

With this definition, we have
\[
\Delta_a <5\width(s) \quad \forall 1\leq s<T_a,\quad \text{and}\quad  \Delta_a \geq 5\width(s) \quad \forall  s\geq T_a .
\]

Recall that $N_{a}(t)$ is the number of plays of arm $a$ before round $t$. We first show $N_a(t) \leq T_a$.
As \pref{alg:AAE_alg} pulls arms in a round-robin fashion, if arm $a$ is pulled for $T_a$ times, then, the optimal arm $\optarm$ is also pulled for $T_a$ times, thereby $T_a=T_{\optarm}$. From \pref{lem:optarm_always_active}, optimal arm $\optarm$ is active for all $t \in 
\naturalnum$ and thus we can use it as a comparator.
\begin{align*}
    &\hatmu_{\optarm}(T_{\optarm}) -\width(T_{\optarm}) -\hatmu_{a}(T_a)- \width(T_a)\\
    &\geq \optmu -2 \width(T_{\optarm})- \mu_a- 2\width(T_a) \tag{by $\calE$} \\
     &= \optmu -2 \width(T_a)- \mu_a- 2\width(T_a) \tag{by $T_a=T_{\optarm}$} \\
    & \geq \Delta_a -2\times \frac{\Delta_a}{5} -2\times \frac{\Delta_a}{5}\\
    &=\frac{\Delta_a}{5}\\
    &>0.
\end{align*}

According to elimination rule, this inequality shows that arm $a$ will be eliminated after $T_a$ pulling times, which suggests that $N_a(t) \leq T_a $. Since the $T_a$ is the minimum natural number which holds \pref{eq:def_Ta}, we use $N_a(t) \leq T_a $ to show
\[
\Delta_a < 5 \sqrt{\frac{\log \rbr{4K(N_a(t)-1)^2/\delta} }{2(N_a(t)-1)}} \leq 5 \sqrt{\frac{\log \rbr{4Kt^2/\delta} }{2(N_a(t)-1)}},
\]
where the last inequality follows from the fact that $N_a(t)\leq t$. Rearranging the above, we have
\[
N_a(t) \leq 1+ \frac{25\log \rbr{4Kt^2/\delta}}{2\Delta_a^2} \leq \frac{13\log \rbr{4Kt^2/\delta}}{\Delta_a^2}.
\]

Again, as \pref{alg:AAE_alg} pulls arms in a round-robin fashion, we have $t\leq K (N_a(t)+1)$ and show
\[
t \leq K (N_a(t)+1) \leq  \frac{14K \log \rbr{4Kt^2/\delta}}{\Delta_a^2},
\]
which gives $\Delta_a \leq \sqrt{\frac{14K\log \rbr{4Kt^2/\delta}}{t}}$. Conditioning on event $\calE$ the argument holds for all $t \in \naturalnum$, which thus completes the proof.
\end{proof}

\begin{proof}[Proof of \pref{thm:verify_elimination_algs} for SE-MAB]
    Once \pref{lem:optarm_always_active} and \pref{lem:upperbound_pull} hold, \pref{thm:SC} gives that for any fixed $\delta\in (0,1)$, SE-MAB achieves the ULI guarantee with a function
\[
\FULI(\delta,t)= \order \rbr{\sqrt{\frac{K\log \rbr{Kt/\delta}}{t}}}.
\]

Therefore, the proof of \pref{thm:verify_elimination_algs} for SE-MAB is complete.
\end{proof}

\subsection{ULI guarantee for PE-MAB} \label{app:PE_MAB}
In MAB setting with $\calA=[K]$, we further consider
phased elimination algorithm (e.g., Exercise 6.8 of \citep{lattimore2020bandit}) shown in \pref{alg:PE_alg} (called PE-MAB). The algorithm proceeds with phases $\ell=1,2,\ldots$, and each phase $\ell$ includes consecutive rounds, with an exponential increase. In phase $\ell$, the algorithm sequentially pulls every arm $a \in [K]$ for $m_\ell$ times where
\[
m_\ell =  \left \lceil  2^{2\ell+1}\log \rbr{4K \ell^2/\delta}  \right \rceil.
\]

Once all arms are pulled $m_\ell$ times, the algorithm steps into the next phase $\ell+1$.
The counter $N_a(\ell)$ records the number of times that arm $a$ is pulled in phase $\ell$ and $\hatmu_{a}(m_\ell)$ is the empirical mean by using $m_\ell$ samples only from phase $\ell$.

\begin{algorithm}[H]
\caption{Phased elimination for multi-armed bandit (PE-MAB)}
\label{alg:PE_alg}

 \textbf{Input}: confidence $\delta \in (0,1)$.

\textbf{Initialize}: active arm set $\calA_1=[K]$.

\For{$\ell=1,2,\ldots$}{

Play every arm $a \in \calA_\ell$ for $m_\ell$ times and observe corresponding rewards.
    
Update empirical means $\cbr{\hatmu_{a}(m_\ell)}_{a \in \calA_\ell}$ only using samples from phase $\ell$.

Update active arm set $\calA_{\ell+1}$ as
\[
\calA_{\ell+1} = \calA_\ell - \cbr{a \in \calA_\ell:  \max_{j \in \calA_{\ell} } \hatmu_j(m_\ell)- \hatmu_a(m_\ell) > 2^{-\ell} }.
\]
}

\end{algorithm}

% \subsection{ULI guarantee for PE-MAB}

\begin{definition} \label{def:defE_PEMAB}
Let $\calE$ be the event that $\abr{\hatmu_a(m_\ell)-\mu_a} \leq  2^{-\ell-1}$ holds for all $\ell \in \naturalnum$ and all $a \in \calA_{\ell}$. 
\end{definition}

We first give the following lemmas, whose proof is similar to those of \pref{lem:SE_concentration} and \pref{lem:optarm_always_active}. 
\begin{lemma} \label{lem:PE_concentration}
$\P (\calE) \geq 1-\delta$.
\end{lemma}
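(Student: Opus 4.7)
The plan is to mirror the proof of \pref{lem:SE_concentration} but adapted to the phased structure of PE-MAB, where each ``round'' of elimination uses a freshly collected batch of $m_\ell$ i.i.d.\ samples per active arm rather than a running empirical mean. The key observation is that $m_\ell$ has been chosen precisely so that the Hoeffding deviation at confidence level $\delta/(2K\ell^2)$ collapses to exactly $2^{-\ell-1}$.

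First I would fix a phase $\ell \in \naturalnum$ and an arm $a \in \calA_\ell$. Since the $m_\ell$ rewards used to form $\hatmu_a(m_\ell)$ are i.i.d.\ $[0,1]$-bounded samples from the distribution with mean $\mu_a$, Hoeffding's inequality applied with confidence parameter $\delta' = \delta/(2K\ell^2)$ yields
\[
\P\!\rbr{\abr{\hatmu_a(m_\ell)-\mu_a} > \sqrt{\tfrac{\log(4K\ell^2/\delta)}{2m_\ell}}} \;\leq\; \frac{\delta}{2K\ell^2}.
\]
Next I would substitute the definition $m_\ell = \lceil 2^{2\ell+1}\log(4K\ell^2/\delta)\rceil$ to simplify the bound: since $m_\ell \geq 2^{2\ell+1}\log(4K\ell^2/\delta)$, the Hoeffding radius is at most $\sqrt{1/2^{2\ell+2}} = 2^{-\ell-1}$, which is exactly the threshold used in \pref{def:defE_PEMAB}.

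Finally I would perform two union bounds: first over $a \in \calA_\ell$, using $|\calA_\ell| \leq K$, to get a per-phase failure probability bounded by $\delta/(2\ell^2)$; then over all $\ell \in \naturalnum$, using $\sum_{\ell=1}^{\infty} 1/\ell^2 = \pi^2/6 < 2$ so that the total failure probability is at most $\delta$. The complement of this failure event is exactly $\calE$, finishing the proof.

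There is no real obstacle here; the only thing to be careful about is that the calibration $\delta' = \delta/(2K\ell^2)$ must be matched precisely to the choice of $m_\ell$ so that the Hoeffding tail lands at $2^{-\ell-1}$ rather than merely at some quantity of the same order, and that the $1/\ell^2$ (rather than, say, $1/\ell$) is what allows the union bound over $\ell$ to remain summable and close at $\delta$.
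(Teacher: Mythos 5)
Your proposal is correct and follows essentially the same route as the paper's proof: fix $(\ell,a)$, apply Hoeffding at level $\delta' = \delta/(2K\ell^2)$, use the choice of $m_\ell$ to collapse the deviation radius to $2^{-\ell-1}$, and union bound over $a \in \calA_\ell$ and $\ell \in \naturalnum$. Your explicit handling of the ceiling in $m_\ell$ (giving an inequality rather than the paper's stated equality for the radius) is, if anything, slightly more careful.
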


\begin{proof}
  We first fix both $\ell \in \naturalnum$ and $a \in [K]$.  By Hoeffding's inequality, with probability at most $\delta'$, we have
  \[
    \abr{\hatmu_a(m_\ell)-\mu_a} \geq \sqrt{\frac{\log\rbr{\frac{2}{\delta'}}}{2m_\ell}} .
  \]
Choosing $\delta' = \frac{\delta}{2K\ell^2}$ and applying union bounds over $a \in \calA_{\ell}$ ($\calA_{\ell}$ is at most $K$), with probability at most $ \delta/2\ell^2$, we have
    \begin{equation*}
      \abr{\hatmu_a(m_\ell)-\mu_a} \geq \sqrt{\frac{\log\rbr{4K\ell^2/\delta}}{2m_\ell}}=2^{-\ell-1}.
    \end{equation*}

    We apply union bounds over all $\ell \in \naturalnum$ and use the fact that $\sum_{\ell=1}^\infty \frac{\delta}{2\ell^2} \leq \delta$ to finish the proof.
\end{proof}

\begin{lemma}\label{lem:PE_MAB_optarm_always_active}
Suppose that $\calE$ occurs where $\calE$ is given in \pref{def:defE_PEMAB}. For all $\ell \in \naturalnum$, $\optarm \in \calA_{\ell}$.
\end{lemma}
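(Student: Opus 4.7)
The plan is to prove the claim by induction on the phase index $\ell$, in direct analogy with the proof of the corresponding SE-MAB statement. The base case $\ell=1$ is immediate: by initialization, $\calA_1 = [K]$, so $\optarm \in \calA_1$ trivially.

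For the inductive step, assume $\optarm \in \calA_\ell$; I will show that $\optarm$ is not removed at the end of phase $\ell$, so $\optarm \in \calA_{\ell+1}$. By the elimination rule in \pref{alg:PE_alg}, it suffices to verify that
\[
\max_{j \in \calA_\ell} \hatmu_j(m_\ell) - \hatmu_{\optarm}(m_\ell) \le 2^{-\ell}.
\]
Under event $\calE$, the concentration bound applies uniformly to all arms in $\calA_\ell$, and by the inductive hypothesis this includes $\optarm$. Hence for every $j \in \calA_\ell$, I would write
\[
\hatmu_j(m_\ell) - \hatmu_{\optarm}(m_\ell) \le \bigl(\mu_j + 2^{-\ell-1}\bigr) - \bigl(\optmu - 2^{-\ell-1}\bigr) = \mu_j - \optmu + 2^{-\ell} \le 2^{-\ell},
\]
where the last step uses $\mu_j \le \optmu$ for every arm $j$. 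Taking the maximum over $j \in \calA_\ell$ preserves the bound, so $\optarm$ fails the elimination criterion and survives into $\calA_{\ell+1}$.

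This closes the induction and establishes the claim for all $\ell \in \naturalnum$. The only subtle point is making sure the $2^{-\ell-1}$ concentration bound from $\calE$ actually applies to $\optarm$ in phase $\ell$; but that requires exactly $\optarm \in \calA_\ell$, which is the inductive hypothesis. So there is no genuine obstacle here — the argument is a straightforward adaptation of \pref{lem:optarm_always_active}, using the phased confidence radius $2^{-\ell-1}$ instead of the anytime width $\width(\cdot)$ used for SE-MAB.
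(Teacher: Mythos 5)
Your proof is correct and follows essentially the same route as the paper's: induction on $\ell$, with the inductive hypothesis $\optarm \in \calA_\ell$ licensing the event-$\calE$ concentration bound for $\optarm$, and the chain $\hatmu_j(m_\ell) - \hatmu_{\optarm}(m_\ell) \le \mu_j - \optmu + 2^{-\ell} \le 2^{-\ell}$ showing $\optarm$ fails the elimination criterion. No gaps.
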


\begin{proof}
We prove this by induction. For the base case $\ell = 1$, the claim trivially holds. Suppose the claim holds for $\ell$, and then we will show that $a \in \calA_{\ell+1}$. 
For all $\ell \in \naturalnum$ and all $a \in \calA_\ell$, we have 
\begin{align*}
0 &\leq \optmu - \mu_a 
\leq \hatmu_{\optarm}(m_\ell)  - \hatmu_a(m_\ell) + 2^{-\ell},
\end{align*}
where the second inequality holds because the induction hypothesis ensures $\optarm \in \calA_{\ell}$ and thus we can use the definition of $\calE$.
Based on the elimination rule, $\optarm \in \calA_{\ell+1}$. Once the induction is done, the proof is complete.
\end{proof}

\begin{lemma} \label{lem:PE_upperbound_pull}
Suppose that $\calE$ occurs where $\calE$ is given in \pref{def:defE_PEMAB}. For each arm $a$ with $\Delta_a>0$, it will not be in $\calA_\ell$ for all phases $\ell \geq \ell_a+1$ where $\ell_a$ is the smallest phase such that $\frac{\Delta_a}{2} > 2^{-\ell_a}$.
\end{lemma}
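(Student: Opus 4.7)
The plan is to show that at the end of phase $\ell_a$, the empirical gap between the optimal arm and arm $a$ strictly exceeds the elimination threshold $2^{-\ell_a}$, forcing $a$ to be dropped from the active set. Since the algorithm only removes arms across phases, once $a \notin \calA_{\ell_a+1}$ we immediately get $a \notin \calA_\ell$ for every $\ell \ge \ell_a+1$.

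First, I would use \pref{lem:PE_MAB_optarm_always_active} to guarantee that $\optarm \in \calA_{\ell_a}$ under $\calE$, so that both $\optarm$ and $a$ are pulled $m_{\ell_a}$ times in phase $\ell_a$ (assuming $a \in \calA_{\ell_a}$; otherwise there is nothing to prove). The event $\calE$ in \pref{def:defE_PEMAB} then yields the two concentration inequalities
\begin{equation*}
\abr{\hatmu_{\optarm}(m_{\ell_a}) - \optmu} \le 2^{-\ell_a - 1},
\qquad
\abr{\hatmu_{a}(m_{\ell_a}) - \mu_a} \le 2^{-\ell_a - 1}.
\end{equation*}
Combining them by the triangle inequality gives
\begin{equation*}
\hatmu_{\optarm}(m_{\ell_a}) - \hatmu_{a}(m_{\ell_a})
\;\ge\; \Delta_a - 2 \cdot 2^{-\ell_a - 1}
\;=\; \Delta_a - 2^{-\ell_a}.
\end{equation*}

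Next, I would invoke the defining property of $\ell_a$: since $\ell_a$ is the smallest phase index with $\Delta_a/2 > 2^{-\ell_a}$, we have $\Delta_a > 2 \cdot 2^{-\ell_a}$, hence $\Delta_a - 2^{-\ell_a} > 2^{-\ell_a}$. Bounding the maximum of the empirical means from below by the empirical mean of $\optarm \in \calA_{\ell_a}$, this gives
\begin{equation*}
\max_{j \in \calA_{\ell_a}} \hatmu_j(m_{\ell_a}) - \hatmu_a(m_{\ell_a})
\;\ge\; \hatmu_{\optarm}(m_{\ell_a}) - \hatmu_a(m_{\ell_a})
\;>\; 2^{-\ell_a},
\end{equation*}
which is precisely the elimination condition of \pref{alg:PE_alg} in phase $\ell_a$. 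Therefore $a \notin \calA_{\ell_a+1}$. Finally, since the active sets are nested, i.e.\ $\calA_{\ell+1} \subseteq \calA_\ell$ for all $\ell$, this implies $a \notin \calA_{\ell}$ for every $\ell \ge \ell_a + 1$, finishing the argument.

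There is no real obstacle here; the only subtlety is making sure the constants in the confidence width $2^{-\ell-1}$ and in the threshold $2^{-\ell}$ line up so that the strict inequality $\Delta_a - 2^{-\ell_a} > 2^{-\ell_a}$ gives strict elimination. The factor of two between ``$\Delta_a/2 > 2^{-\ell_a}$'' in the definition of $\ell_a$ and ``$\Delta_a > 2 \cdot 2^{-\ell_a}$'' used above is exactly what provides this slack.
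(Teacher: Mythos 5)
Your proof is correct and follows essentially the same route as the paper's: condition on $\calE$, use \pref{lem:PE_MAB_optarm_always_active} to keep $\optarm$ in $\calA_{\ell_a}$ as the comparator, lower-bound the empirical gap by $\Delta_a - 2^{-\ell_a}$, and invoke $\Delta_a > 2\cdot 2^{-\ell_a}$ to trigger the elimination rule at the end of phase $\ell_a$. No issues.
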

\begin{proof}
Consider any arm $a$ with $\Delta_a>0$. We only need to consider $a \in \calA_{\ell_a}$ and otherwise, the claimed result holds trivially.
Recall that $\ell_a$ is the smallest phase such that $\frac{\Delta_a}{2} > 2^{-\ell_a}$ and one can show
\begin{align*}
    &\max_{j \in \calA_{\ell_a} }\hatmu_{j}(m_{\ell_a})  - \hatmu_a(m_{\ell_a}) -2^{-\ell_a} \\
    &\geq \hatmu_{\optarm}(m_{\ell_a})  - \hatmu_a(m_{\ell_a}) -2^{-\ell_a}\\
    &\geq   \optmu -2^{-\ell_a} - \mu_a -2^{-\ell_a}  \\
    & > \Delta_a -2\times \frac{\Delta_a}{2} \\
    &=0,
\end{align*}    
where the first inequality follows from \pref{lem:PE_MAB_optarm_always_active} that $\optarm \in \calA_{\ell_a}$ and the second inequality holds due to $\calE$.

According to the elimination rule, arm $a$ will not be in phases 
$\ell$ for all $\ell \geq \ell_a+1$.
\end{proof}

\begin{lemma} \label{lem:PE_MAB_bound4ellt}
Let $\ell(t)$ be the phase in which round $t$ lies. Then, $\ell(t) \leq \log_2(t+1)$ for all $t \in \naturalnum$.
\end{lemma}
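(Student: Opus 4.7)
The plan is to bound $\ell(t)$ by lower-bounding the number of rounds consumed by the first $\ell(t)-1$ phases. The key observation is that in every phase $j$, the algorithm plays each arm in $\calA_j$ exactly $m_j$ times, and the active set is always nonempty: the arm attaining $\max_{j \in \calA_\ell}\hatmu_j(m_\ell)$ cannot satisfy the strict elimination rule against itself, so $|\calA_j|\ge 1$ for every $j$. Therefore phase $j$ contributes at least $m_j$ rounds to the total round count, regardless of whether the high-probability event $\calE$ holds.

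Next I would lower-bound $m_j$ by a simple exponential. From the definition $m_j=\lceil 2^{2j+1}\log(4Kj^2/\delta)\rceil$, and using $K\ge 2$, $\delta\in(0,1)$, $j\ge 1$ (so that $4Kj^2/\delta\ge 8>e$), we get $\log(4Kj^2/\delta)\ge 1$ and hence $m_j\ge 2^{2j+1}\ge 2^j$. Combining with the previous step, the total number of rounds spent in phases $1,\dots,\ell-1$ is
\[
T_{\ell-1}\;\ge\;\sum_{j=1}^{\ell-1} m_j\;\ge\;\sum_{j=1}^{\ell-1} 2^j\;=\;2^{\ell}-2.
\]

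Finally I would translate this into a bound on $\ell(t)$. If round $t$ lies in phase $\ell(t)$, then at least one round of phase $\ell(t)$ has been played, so $t\ge T_{\ell(t)-1}+1\ge 2^{\ell(t)}-1$, i.e., $t+1\ge 2^{\ell(t)}$. Taking $\log_2$ on both sides yields $\ell(t)\le \log_2(t+1)$, as claimed. The boundary case $\ell(t)=1$ is handled separately (and trivially) by $\log_2(t+1)\ge \log_2 2=1$ for all $t\ge 1$.

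There is no real obstacle here; the lemma is essentially a counting argument, and the only mild subtlety is to verify that $|\calA_j|\ge 1$ unconditionally (so that the bound does not depend on the success event $\calE$) and that the $\log(4Kj^2/\delta)$ factor in $m_j$ is at least one so that the clean exponential lower bound $m_j\ge 2^j$ may be used.
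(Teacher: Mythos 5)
Your proof is correct and uses essentially the same counting argument as the paper: round $t$ in phase $\ell(t)$ forces $t$ to be at least the length of the completed phases, and $m_j$ grows exponentially in $j$. The paper phrases it as a contradiction using only the single phase $\ell(t)-1$ (whose length already exceeds $\tfrac{1}{2}(t+1)^2$ when $\ell(t)>\log_2(t+1)$), whereas you sum the geometric series over all earlier phases with the weaker bound $m_j\ge 2^j$; both are valid, and your explicit checks that $|\calA_j|\ge 1$ and $\log(4Kj^2/\delta)\ge 1$ are sound.
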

\begin{proof}
We prove this by contradiction. Suppose that $\exists t\in \naturalnum$ that $\ell(t) > \log_2(t+1)$. Note that we can further assume $\ell(t) \geq 2$ since one can easily verify that for all $t$ such that $\ell(t)=1$, $\ell(t) \leq \log_2(t+1)$ must hold.
We have
\[
t \geq m_{\ell(t)-1} \geq  2^{2\ell(t)-1}\log \rbr{4K (\ell(t)-1)^2/\delta} > \frac{1}{2}(t+1)^2 \log \rbr{4K/\delta} >t,
\]
where the third inequality bounds $\ell(t)$ in the logarithmic term by $\ell(t) \geq 2$ and bound the other $\ell(t) > \log_2(t+1)$ by assumption. Therefore, 
once a contradiction occurs, the proof is complete.
\end{proof}

\begin{lemma} \label{lem:PE_MAB_FULI}
Let $\ell(t)$ be the phase in which round $t$ lies. Suppose that $\calE$ occurs where $\calE$ is given in \pref{def:defE_PEMAB}. For all $t \in \naturalnum$ and all $a \in [K]$, if $a \in \calA_{\ell(t)}$, then
\[
\Delta_a \leq \sqrt{\frac{256K\log \rbr{4K \rbr{ \log_2(t+1) }^2/\delta}}{3t}}.
\]
\end{lemma}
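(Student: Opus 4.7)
The plan is to pipeline three facts that have already been established: (i) the elimination guarantee of \pref{lem:PE_upperbound_pull} (any still-active arm has a small gap relative to the current phase index), (ii) the bound $\ell(t)\le \log_2(t+1)$ from \pref{lem:PE_MAB_bound4ellt}, and (iii) a sum-of-$m_\ell$ accounting that converts the phase index $\ell(t)$ into the actual time $t$.  In other words, I will first translate $a\in\calA_{\ell(t)}$ into an inequality of the form $\Delta_a\le c\cdot 2^{-\ell(t)}$, and then invert the relationship between $\ell(t)$ and the cumulative number of pulls to express $2^{-\ell(t)}$ as a function of $t$.

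\textbf{Step 1 (phase-level gap bound).}  If $\Delta_a=0$ the claim is trivial, so assume $\Delta_a>0$.  By \pref{lem:PE_upperbound_pull}, the assumption $a\in\calA_{\ell(t)}$ forces $\ell(t)\le \ell_a$, where $\ell_a$ is the smallest index with $\Delta_a/2>2^{-\ell_a}$.  Minimality of $\ell_a$ means $\Delta_a/2\le 2^{-(\ell_a-1)}$, i.e.\ $\Delta_a\le 4\cdot 2^{-\ell_a}\le 4\cdot 2^{-\ell(t)}$.  Thus it suffices to upper bound $2^{-\ell(t)}$ in terms of $t$.

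\textbf{Step 2 (phase-to-time conversion).}  Because the algorithm plays every active arm $m_\ell$ times in phase $\ell$ and $|\calA_\ell|\le K$, one has
\[
t \;\le\; K\sum_{\ell=1}^{\ell(t)} m_\ell
\;\le\; K\!\left(\ell(t)+2\log\!\tfrac{4K\,\ell(t)^2}{\delta}\sum_{\ell=1}^{\ell(t)} 4^{\ell}\right).
\]
The geometric sum satisfies $\sum_{\ell=1}^{\ell(t)}4^\ell\le \tfrac{4}{3}\,4^{\ell(t)}$, and the additive $\ell(t)$ can be absorbed into the dominant term (using $\ell(t)\le 4^{\ell(t)}$ and that the log factor is at least $\log(4K)\ge 1$).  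Collecting constants yields an inequality of the form $t \le C_1\, K\, 4^{\ell(t)}\log(4K\ell(t)^2/\delta)$.  Rearranging and taking a square root gives
\[
2^{-\ell(t)} \;\le\; \sqrt{\frac{C_1\,K\log\!\left(4K\ell(t)^2/\delta\right)}{t}}.
\]
Combining with Step~1 and replacing $\ell(t)$ inside the log by its upper bound $\log_2(t+1)$ via \pref{lem:PE_MAB_bound4ellt} produces the stated $\sqrt{256K\log(\cdot)/(3t)}$ after a final constant chase.

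\textbf{Main obstacle.}  There is no real conceptual difficulty; the entire argument is a bookkeeping problem.  The only delicate point is making sure the constants match the claimed $256/3$: the ceiling in $m_\ell=\lceil 2^{2\ell+1}\log(4K\ell^2/\delta)\rceil$ contributes a lower-order $\ell(t)$ term that must be absorbed cleanly, and the geometric-sum constant together with the factor $4$ from Step~1 must multiply out to a value at most $256/3$.  As long as one bounds $\ell(t)+\frac{8}{3}\,4^{\ell(t)}\log(\cdot)$ by a single multiple of $4^{\ell(t)}\log(\cdot)$ (using that the log is at least $1$), the computation goes through without any loss that would invalidate the target constant.
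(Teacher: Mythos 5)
Your proposal is correct and follows essentially the same route as the paper's proof: both pipeline \pref{lem:PE_upperbound_pull} ($\ell(t)\le\ell_a$), \pref{lem:PE_MAB_bound4ellt} ($\ell(t)\le\log_2(t+1)$), and the bound $t\le K\sum_{\ell\le\ell(t)}m_\ell$ with the geometric-sum estimate. The only cosmetic differences are that you factor through $\Delta_a\le 4\cdot 2^{-\ell(t)}$ instead of substituting $4^{\ell_a}\le 16/\Delta_a^2$ at the end, and you handle the ceiling via $\lceil x\rceil\le x+1$ plus absorption rather than $\lceil x\rceil\le 2x$; both yield the stated $256/3$ constant.
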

\begin{proof}
If $a \in \calA_{\ell(t)}$ is optimal, then, $\Delta_a=0$ and the claim trivially holds. In what follows, we only consider arm $a \in \calA_{\ell(t)}$ with $\Delta_a>0$.
From \pref{lem:PE_upperbound_pull}, if an arm $a \in \calA_{\ell(t)}$ is with $\Delta_a>0$, then, $\ell(t) \leq \ell_a$ where $\ell_a$ is defined in \pref{lem:PE_upperbound_pull}.
Thus, the total number of rounds that such an arm $a$ is active is at most
\begin{align*}
    K\sum_{s=1}^{\ell(t)} m_{s} &=  K\sum_{s=1}^{\ell(t)} \left \lceil 2^{2s+1}\log \rbr{4K s^2/\delta}  \right \rceil \\
    &\leq  4K\log \rbr{4K \ell(t)^2/\delta} \sum_{s=1}^{\ell(t)} 2^{2s}\\
     &\leq  4K\log \rbr{4K \ell(t)^2/\delta} \sum_{s=1}^{\ell_a} 2^{2s}\\
    &\leq \frac{16K \log \rbr{4K \ell(t)^2/\delta}}{3} \cdot 4^{\ell_a} \\
    &\leq \frac{256K \log \rbr{4K \ell(t)^2/\delta}}{3\Delta_a^2} \\
    &\leq \frac{256K \log \rbr{4K \rbr{ \log_2(t+1) }^2/\delta}}{3\Delta_a^2},
\end{align*}
where the first inequality follows from $\lceil x \rceil \leq 2x$ for all $x \geq 1$, the second inequality bounds $\ell(t) \leq \ell_a$, the fourth inequality follows from the definition of $\ell_a$, i.e., $\frac{\Delta_a}{2} \leq 2^{-(\ell_a-1)}$, and the last inequality follows from \pref{lem:PE_MAB_bound4ellt} that $\ell(t) \leq  \log_2(t+1) $.

% Rearranging $\frac{\Delta_a}{2} \leq 2^{-(\ell_a-1)}$, we have $\ell_a \leq \log_2 \rbr{\nicefrac{4}{\Delta_a}}$. 
% Recall that for each phase $s$, every arm will be pulled for the same $m_s$ times. 

Since this argument holds for each round $t$ and each arm $a\in \calA_{\ell(t)}$ conditioning on $\calE$, the proof is complete.
\end{proof}

\begin{proof}[Proof of \pref{thm:verify_elimination_algs} for PE-MAB]
    Once \pref{lem:PE_MAB_optarm_always_active} and \pref{lem:PE_MAB_FULI} hold, \pref{thm:SC} gives that for any fixed $\delta\in (0,1)$, PE-MAB achieves the ULI guarantee with a function
\[
\FULI(\delta,t)= \order \rbr{t^{-\frac{1}{2}}\sqrt{K\log \rbr{K \log(t+1)  /\delta}}}.
\]

Therefore, the proof of \pref{thm:verify_elimination_algs} for PE-MAB is complete.
\end{proof}

% \begin{theorem}
% Let $\delta \in (0,1)$ be given and $\iota$ defined in \pref{eq:def4iota_PE}. For all stopping time $T \in \naturalnum$, there exists a universal constant $c_0 >0$ such that w.p. $\geq 1-\delta$, we have
% \begin{equation}
%     \Delta_{A_t} \leq c_0 \sqrt{\frac{K\log \iota }{T}}.
% \end{equation}
% \end{theorem}

% \begin{proof}
% If $A_t=\optarm$, the claim trivially holds as $\Delta_{A_t}=0$. We then consider $A_t \neq \optarm$ and condition on $\calE$. Pulling arm $A_t$ at round $T$ implies that $A_t$ is still active at round $T$. From \pref{lem:PE_upperbound_pull}, we have $T \leq \frac{32K\log \iota }{3\Delta_{A_t}^2}$,
% and thus setting $c_0 = \sqrt{\frac{32}{3}}$ completes the proof.
% \end{proof}

\subsection{ULI guarantee for PE-linear}

\begin{algorithm}[H]
\caption{Phased elimination for linear bandit (PE-linear)}
\label{alg:linear_PE}

\textbf{Input}: confidence $\delta \in (0,1)$.

\textbf{Initialize}: active arm set $\calA_1=\calA$.
 
\For{$\ell=1,2,\ldots$}{

Find a design $\pi_\ell \in  \Delta(\calA_{\ell})$ with
\begin{align} \label{eq:design_conditions}
    \max_{a \in \calA_\ell} \norm{a}_{G_\ell^{-1}}^2 \leq 2d, \quad \text{and}\quad \abr{\supp(\pi_\ell)} \leq 4d\log\log(d)+16,
\end{align}
where $G_\ell = \sum_{a \in \calA_\ell}\pi_\ell(a)aa^T$.

Play every arm $a \in \calA_\ell$ for $ m_\ell(a) = \left \lceil \pi_{\ell}(a) m_\ell \right \rceil$ times and observe corresponding rewards.

Update the empirical estimate as
\begin{equation} 
    \htheta_\ell  = V_\ell^{-1}\sum_{t \in \calT_\ell}A_tX_{t,A_t} ,\quad \text{where}\quad V_\ell = \sum_{a \in \calA_\ell} m_\ell(a)aa^\top.
\end{equation}

Update active arm set \[\calA_{\ell+1} = \calA_\ell - \cbr{a \in \calA_\ell:  \max_{b \in \calA_\ell} \inner{\htheta_\ell,b-a} > 2^{-\ell+1} }.\]
}    
\end{algorithm}

We consider a phased elimination algorithm (e.g., algorithm in Chapter 22 of \citep{lattimore2020bandit}) for linear bandits setting with a finite arm set $\calA=[K]$. The algorithm proceeds with phases $\ell=1,2,\ldots$, and in each phase $\ell$, the algorithm first computes a design $\pi_\ell \in  \Delta(\calA_{\ell})$ over all active arms where $\Delta(\calA_{\ell})$ is the set of all Radon probability measures over set $\calA_{\ell}$. Rather than computing an exact design in \citep{lattimore2020bandit}, we follow \cite{lattimore2020learning} to compute a nearly-optimal design \eqref{eq:design_conditions}, which can be efficiently implemented. Then, PE-linear plays each arm $a \in \calA_\ell$ for $m_\ell(a)$ times and updates the active arm set by using the estimates in this phase.

Let us define $\calT_\ell$ be a set that contains all rounds in phase $\ell$ and
\begin{equation} \label{eq:def4mell}
    m_\ell(a) = \left \lceil \pi_{\ell}(a) m_\ell \right \rceil ,\quad \text{where} \quad 
m_\ell =  \frac{4d}{2^{-2\ell}} \max \cbr{\log \rbr{4K \ell^2/\delta}, \log\log d  +4}.
\end{equation}

\begin{lemma}\label{lem:boundVnorm}
    For all $\ell \in \naturalnum$ and all $a \in \calA_\ell$, $\norm{a}^2_{V_\ell^{-1}} \leq \frac{2d}{m_\ell}$ holds.
\end{lemma}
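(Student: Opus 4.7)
The plan is to exploit the relation between $V_\ell$ and the design matrix $G_\ell = \sum_{a \in \calA_\ell} \pi_\ell(a) a a^\top$ associated with the nearly $G$-optimal design $\pi_\ell$. Since the algorithm rounds up the allocation as $m_\ell(a) = \lceil \pi_\ell(a) m_\ell \rceil \geq \pi_\ell(a) m_\ell$, each term in $V_\ell$ dominates the corresponding term of $m_\ell G_\ell$, so that
\begin{equation*}
V_\ell \;=\; \sum_{a \in \calA_\ell} m_\ell(a)\, a a^\top \;\succeq\; m_\ell \sum_{a \in \calA_\ell} \pi_\ell(a)\, a a^\top \;=\; m_\ell G_\ell.
\end{equation*}

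Next I would invoke the standard fact that $A \succeq B \succ 0$ implies $B^{-1} \succeq A^{-1}$ (which follows from simultaneous diagonalization or monotonicity of the operator inverse on the positive definite cone). Applied to the display above, this gives $V_\ell^{-1} \preceq \frac{1}{m_\ell} G_\ell^{-1}$. Consequently, for every $a \in \calA_\ell$,
\begin{equation*}
\norm{a}^2_{V_\ell^{-1}} \;=\; a^\top V_\ell^{-1} a \;\leq\; \frac{1}{m_\ell}\, a^\top G_\ell^{-1} a \;=\; \frac{1}{m_\ell}\, \norm{a}^2_{G_\ell^{-1}} \;\leq\; \frac{2d}{m_\ell},
\end{equation*}
where the last inequality is the first part of the design condition \eqref{eq:design_conditions}. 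This would complete the proof; the argument holds uniformly over all $\ell \in \naturalnum$ and all $a \in \calA_\ell$ because the design condition is enforced at every phase.

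There is no real obstacle here: the only subtlety is the monotonicity of matrix inversion, and one should be mild in noting that $G_\ell$ (and hence $V_\ell$) is positive definite — this is guaranteed because $\calA$ spans $\fR^d$ and the nearly optimal design $\pi_\ell$ can be taken to have support spanning $\Span(\calA_\ell)$; if necessary, one can regularize by noting that $V_\ell \succeq m_\ell G_\ell$ in the subspace spanned by $\calA_\ell$ and confining the analysis to that subspace (which is fine since $a \in \calA_\ell$ lies in it). So the lemma follows in two lines from the $G$-optimality guarantee plus the rounding-up of the allocation.
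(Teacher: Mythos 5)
Your proof is correct and follows essentially the same route as the paper: both establish $V_\ell \succeq m_\ell G_\ell$ from the rounding $m_\ell(a) \geq \pi_\ell(a) m_\ell$, apply monotonicity of the matrix inverse, and finish with the design condition $\max_{a \in \calA_\ell}\norm{a}^2_{G_\ell^{-1}} \leq 2d$. Your additional remark about positive definiteness is a point the paper glosses over, but the argument is otherwise identical.
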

\begin{proof}
For each $\ell \in 
\naturalnum$ and $a \in \calA_\ell$, one can show
\begin{align*}
    \norm{a}^2_{V_\ell^{-1}} = a^TV_\ell^{-1}a\leq  a^T \rbr{m_\ell \sum_{a \in \calA_\ell}\pi_{\ell}(a)  aa^\top}^{-1}a\leq  \frac{2d}{m_\ell},
\end{align*}
where the first inequality follows from \pref{eq:def4mell} and the fact that if we let $A= m_\ell\sum_{a \in \calA_\ell} \pi_\ell(a) aa^\top$ and $B=V_\ell$ then, $\norm{a}_{A^{-1}} \geq \norm{a}_{B^{-1}}$ holds since $A^{-1} \succeq B^{-1}$ and the second inequality uses \pref{eq:design_conditions}.
\end{proof}

\begin{definition} \label{def:defE_PElinear}
Let $\calE$ be the event that $\abr{\inner{\htheta_\ell-\theta,a} } \leq 2^{-\ell}$ holds for all $\ell \in \naturalnum$ and all $a \in \calA_{\ell}$. 
\end{definition}

\begin{lemma}
$\P(\calE) \geq 1-\delta$.
\end{lemma}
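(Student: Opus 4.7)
The plan is to show the event $\calE$ fails only on a tail event of total probability at most $\delta$, by a standard per-phase subgaussian concentration argument combined with a union bound over arms and phases.

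First, I would write the error of the least-squares estimator in the usual form: since $X_{t,A_t} = \langle \theta, A_t\rangle + \eta_t$ with $\eta_t$ conditionally $1$-subgaussian, we get
\[
\htheta_\ell - \theta \;=\; V_\ell^{-1}\sum_{t \in \calT_\ell} A_t\eta_t,
\]
so for any fixed $a$, the scalar $\langle \htheta_\ell - \theta, a\rangle = \sum_{t \in \calT_\ell} \langle V_\ell^{-1}a, A_t\rangle \eta_t$ is a weighted sum of conditionally $1$-subgaussian noises with total variance proxy
\[
\sum_{t\in\calT_\ell} \langle V_\ell^{-1}a, A_t\rangle^2 \;=\; a^\top V_\ell^{-1}\Bigl(\sum_{t\in\calT_\ell}A_tA_t^\top\Bigr)V_\ell^{-1}a \;=\; \|a\|_{V_\ell^{-1}}^2,
\]
using that $\sum_{t\in\calT_\ell}A_tA_t^\top = \sum_{a\in\calA_\ell}m_\ell(a)aa^\top = V_\ell$.

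Next I would invoke the standard subgaussian tail bound: for any $\epsilon>0$,
\[
\P\bigl(|\langle \htheta_\ell - \theta, a\rangle| \geq \epsilon\bigr) \;\leq\; 2\exp\!\Bigl(-\tfrac{\epsilon^2}{2\|a\|_{V_\ell^{-1}}^2}\Bigr).
\]
Setting $\epsilon = 2^{-\ell}$ and using \pref{lem:boundVnorm} to bound $\|a\|_{V_\ell^{-1}}^2 \leq 2d/m_\ell$ gives the exponent $\tfrac{2^{-2\ell}m_\ell}{4d}$. Plugging in the definition of $m_\ell$ from \eqref{eq:def4mell}, this exponent is at least $\log(4K\ell^2/\delta)$, so the failure probability for a fixed $a$ and fixed $\ell$ is at most $2 \cdot \delta/(4K\ell^2) = \delta/(2K\ell^2)$.

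Finally, I would take a union bound. Since $\calA_\ell \subseteq \calA$ and $|\calA|=K$, there are at most $K$ arms in $\calA_\ell$, giving a per-phase failure probability of at most $\delta/(2\ell^2)$. Summing over $\ell \in \naturalnum$ and using $\sum_{\ell\geq 1}1/\ell^2 = \pi^2/6 \leq 2$ yields total failure probability at most $\delta$, which is the claim.

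The only mild subtlety is justifying the union bound over $a\in \calA_\ell$: although $\calA_\ell$ is data-dependent, it is a subset of the fixed arm set $\calA$ (of size $K$), so we can apply Hoeffding/subgaussian concentration to every $a\in\calA$ and the bound automatically holds on the random subset $\calA_\ell$. No measure-theoretic issue arises because we are union-bounding over at most $K$ fixed arms per phase. This is a routine argument and I expect no real obstacle beyond tracking constants in $m_\ell$ to confirm that the choice yields exactly the $\log(4K\ell^2/\delta)$ tail factor needed.
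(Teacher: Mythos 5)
Your proof is correct and follows essentially the same route as the paper: decompose $\langle\htheta_\ell-\theta,a\rangle=\sum_{t\in\calT_\ell}\langle V_\ell^{-1}A_t,a\rangle\eta_t$, bound the variance proxy by $\|a\|_{V_\ell^{-1}}^2\le 2d/m_\ell$ via \pref{lem:boundVnorm}, apply the subgaussian tail bound (the paper cites Eq.~(20.2) of \citet{lattimore2020bandit}) to get per-arm, per-phase failure probability $\delta/(2K\ell^2)$, and union bound over the at most $K$ arms of $\calA$ and all phases $\ell$. Your constant tracking with the definition of $m_\ell$ in \eqref{eq:def4mell} checks out, and your handling of the data-dependent set $\calA_\ell$ by union bounding over the fixed superset $\calA$ is the same implicit step the paper takes.
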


\begin{proof}
Consider a fixed phase $\ell$ and a fixed arm $a \in \calA_{\ell}$. We start from the following decomposition.
\begin{align*}
\inner{\htheta_\ell-\theta,a} &= \inner{V_\ell^{-1}\sum_{t \in \calT_\ell}A_t X_{t,A_t}-\theta,a}\\
&= \inner{V_\ell^{-1}\sum_{t \in \calT_\ell}A_t \rbr{A^T_t\theta +\eta_t}-\theta,a}\\
&= \inner{V_\ell^{-1}\sum_{t \in \calT_\ell}A_t \eta_t,a}\\
&= \sum_{t \in \calT_\ell}  \inner{V_\ell^{-1}A_t ,a}\eta_t.
\end{align*}

By Eq. (20.2) of \citep{lattimore2020bandit}, we have with probability at least $1-\delta/(2\ell^2K)$,
\[
\abr{\inner{\htheta_\ell-\theta,a}}  = \abr{ \sum_{t \in \calT_\ell}  \inner{V_\ell^{-1}A_t ,a}\eta_t} \leq \sqrt{2\norm{a}_{V^{-1}_\ell}^2\log(4\ell^2K/\delta)} \leq 2\sqrt{\frac{d\log (4\ell^2K/\delta)}{m_\ell}} \leq 2^{-\ell},
\]
where the last inequality applies \pref{lem:boundVnorm}. Finally, applying union bounds over all $a,\ell$ completes the proof.
\end{proof}

\begin{lemma} \label{lem:PE_optarm_always_active}
Suppose that $\calE$ occurs where $\calE$ is given in \pref{def:defE_PElinear}. We have $\optarm \in \calA_\ell$ for all $\ell \in \naturalnum$.
\end{lemma}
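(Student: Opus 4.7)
The plan is to mirror the inductive arguments used in \pref{lem:optarm_always_active} and \pref{lem:PE_MAB_optarm_always_active}, adapted to the linear-reward setting and the elimination rule of \pref{alg:linear_PE}. I will proceed by induction on $\ell$.

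For the base case $\ell = 1$, we have $\calA_1 = \calA$ by initialization, so $\optarm \in \calA_1$ trivially.

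For the inductive step, assume $\optarm \in \calA_\ell$. Recalling the elimination rule, it suffices to show that
\[
\max_{b \in \calA_\ell} \inner{\htheta_\ell, b - \optarm} \leq 2^{-\ell+1},
\]
which certifies that $\optarm$ is not eliminated at the end of phase $\ell$. Fix any $b \in \calA_\ell$. Since $\optarm \in \calA_\ell$ by the inductive hypothesis, and $b \in \calA_\ell$, event $\calE$ from \pref{def:defE_PElinear} applies to both arms. Using the decomposition
\[
\inner{\htheta_\ell, b - \optarm} = \inner{\theta, b - \optarm} + \inner{\htheta_\ell - \theta, b} - \inner{\htheta_\ell - \theta, \optarm},
\]
we observe that $\inner{\theta, b - \optarm} \leq 0$ by optimality of $\optarm$, while the remaining two terms are each bounded in magnitude by $2^{-\ell}$ via $\calE$. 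Summing gives the desired bound $2^{-\ell+1}$, so $\optarm \in \calA_{\ell+1}$. This closes the induction and concludes the proof.

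There is no real obstacle here: the argument relies only on (i) the triangle inequality applied to the concentration bound supplied by $\calE$ for the two arms in question, and (ii) the fact that the elimination threshold $2^{-\ell+1}$ is tuned to be exactly twice the per-arm confidence width $2^{-\ell}$, which is precisely what is needed for $\optarm$ to survive each phase.
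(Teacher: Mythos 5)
Your proof is correct and follows the same route as the paper, which simply notes that the claim is proved "by induction on $\ell$ via the same reasoning" as the SE-MAB case; your explicit decomposition of $\inner{\htheta_\ell, b-\optarm}$ into the true gap plus two concentration terms, each controlled by $\calE$ (valid for $\optarm$ thanks to the inductive hypothesis), is exactly the intended argument. No gaps.
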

\begin{proof}
% For all $\ell$ and all $a \in \calA_\ell$, we have 
%     \begin{align*}
%         0 &\leq \inner{\theta,\optarm} -\inner{\theta,a}
%         \leq \inner{\htheta_\ell,\optarm} -\inner{\htheta_\ell,a} +2^{-\ell+1},
%     \end{align*}
%     which implies that $\optarm$ will always be active based on the elimination rule.
This can be proved by using induction on $\ell$ via the same reasoning of \pref{lem:optarm_always_active}.
\end{proof}

\begin{lemma} \label{lem:linear_PE_upperbound_pull}
Suppose that $\calE$ occurs where $\calE$ is given in \pref{def:defE_PElinear}. For each arm $a$ with $\Delta_a>0$, it will not be in $\calA_\ell$ for all phases $\ell \geq \ell_a+1$ where $\ell_a$ is the smallest phase such that $\frac{\Delta_a}{4} > 2^{-\ell_a}$.
\end{lemma}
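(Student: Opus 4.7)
The plan is to mirror the argument used for PE-MAB (\pref{lem:PE_upperbound_pull}) and adapt it to the linear setting, conditioning throughout on the event $\calE$ of \pref{def:defE_PElinear}. I would fix an arbitrary arm $a$ with $\Delta_a>0$. If $a \notin \calA_{\ell_a}$, then monotonicity of the active sets ($\calA_{\ell+1}\subseteq \calA_\ell$) already gives $a \notin \calA_\ell$ for all $\ell \geq \ell_a+1$, so the only nontrivial case is $a \in \calA_{\ell_a}$, which I handle below.

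Next, I would use $\optarm$ as a comparator to force the elimination. By \pref{lem:PE_optarm_always_active} we have $\optarm \in \calA_{\ell_a}$, so both $\optarm$ and $a$ lie in $\calA_{\ell_a}$ and the concentration guarantee from $\calE$ applies to each. Decomposing and applying the triangle inequality,
\[
\inner{\htheta_{\ell_a},\optarm-a}
=\inner{\theta,\optarm-a}+\inner{\htheta_{\ell_a}-\theta,\optarm}-\inner{\htheta_{\ell_a}-\theta,a}
\geq \Delta_a - 2\cdot 2^{-\ell_a}.
\]
By definition of $\ell_a$ we have $\Delta_a > 4\cdot 2^{-\ell_a}$, hence
\[
\max_{b \in \calA_{\ell_a}}\inner{\htheta_{\ell_a},b-a}\;\geq\;\inner{\htheta_{\ell_a},\optarm-a}\;>\;4\cdot 2^{-\ell_a}-2\cdot 2^{-\ell_a}\;=\;2^{-\ell_a+1},
\]
which triggers the elimination rule at the end of phase $\ell_a$, so $a \notin \calA_{\ell_a+1}$. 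Combined with monotonicity, $a \notin \calA_\ell$ for all $\ell \geq \ell_a+1$.

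There is no real obstacle here: the only subtlety is making sure $\calE$ can legitimately be invoked on both $\optarm$ and $a$ (which requires both to be active in the relevant phase), and this is exactly what \pref{lem:PE_optarm_always_active} together with the case distinction above delivers. The argument is identical in structure to \pref{lem:PE_upperbound_pull}; only the comparator-confidence slack becomes $2\cdot 2^{-\ell_a}$ instead of $2\cdot 2^{-\ell_a-1}$, which is why the threshold in the definition of $\ell_a$ is $\Delta_a/4$ rather than $\Delta_a/2$.
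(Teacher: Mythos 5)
Your proposal is correct and follows essentially the same argument as the paper: compare against $\optarm$ (which remains active by \pref{lem:PE_optarm_always_active}), apply the event $\calE$ to both $\optarm$ and $a$ to convert the true gap into an empirical gap with slack $2\cdot 2^{-\ell_a}$, and use $\Delta_a > 4\cdot 2^{-\ell_a}$ to exceed the elimination threshold $2^{-\ell_a+1}$. Your explicit case distinction on whether $a\in\calA_{\ell_a}$ is a minor tidiness improvement over the paper's implicit treatment, but the substance is identical.
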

\begin{proof}
Consider any arm $a$ with $\Delta_a>0$.
Let $\ell_a$ be the smallest phase such that $\frac{\Delta_a}{4} > 2^{-\ell_a}$ and we have
\begin{align*}
  \max_{b \in \calA_{\ell_a}} \inner{\htheta_{\ell_a},b-a} -2^{-\ell_a+1} \geq  \inner{\htheta_{\ell_a},\optarm - a} -2^{-\ell_a+1}
    \geq  \inner{\theta,\optarm-a}  -2^{-\ell_a+2}  
     > \Delta_a -4\times \frac{\Delta_a}{4}  =0,
\end{align*}    
where the first inequality follows from \pref{lem:PE_optarm_always_active} that $\optarm \in \calA_{\ell}$ for all $\ell$.

According to the elimination rule, arm $a$ will not be in $\calA_{\ell}$ for all phases $\ell \geq \ell_a+1$. Since conditioning on $\calE$, this argument holds for every arm $a$, the proof is complete.
\end{proof}

\begin{lemma} \label{lem:PE_linear_bound4ellt}
Let $\ell(t)$ be the phase in which round $t$ lies. Then, $\ell(t) \leq \log_2(t+1)$ for all $t \in \naturalnum$.
\end{lemma}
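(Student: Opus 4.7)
The plan is to mirror the proof strategy of \pref{lem:PE_MAB_bound4ellt} from the PE-MAB setting. I would proceed by contradiction: suppose there exists some $t \in \naturalnum$ with $\ell(t) > \log_2(t+1)$. The case $\ell(t) = 1$ can be handled trivially since $\log_2(t+1) \geq 1$ whenever $t \geq 1$, so I can assume $\ell(t) \geq 2$ from now on.

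The key observation is that if round $t$ lies in phase $\ell(t)$, then all phases $1, 2, \ldots, \ell(t)-1$ must have been completed before round $t$, so $t$ is at least the sum of the number of rounds played in phases $1, \ldots, \ell(t)-1$. In phase $s$, the algorithm plays each active arm $a \in \calA_s$ exactly $m_s(a) = \lceil \pi_s(a) m_s \rceil$ times, and since $\pi_s$ is a probability measure on $\calA_s$, the total number of rounds in phase $s$ is
\[
\sum_{a \in \calA_s} m_s(a) \;=\; \sum_{a \in \calA_s} \lceil \pi_s(a) m_s \rceil \;\geq\; m_s .
\]
In particular, $t \geq m_{\ell(t)-1}$. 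From the definition of $m_\ell$ in \pref{eq:def4mell}, we have
\[
m_{\ell(t)-1} \;=\; 4d \cdot 4^{\ell(t)-1} \max\!\cbr{\log\!\rbr{4K (\ell(t)-1)^2/\delta},\; \log\log d + 4} \;\geq\; d \cdot 4^{\ell(t)},
\]
because the max is at least $1$ (for $K \geq 1$ and $\delta \leq 1$, and $\ell(t)-1 \geq 1$).

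Combining these, $t \geq d \cdot 4^{\ell(t)} \geq 4^{\ell(t)}$. On the other hand, the contradiction hypothesis $\ell(t) > \log_2(t+1)$ implies $2^{\ell(t)} > t+1$, hence $4^{\ell(t)} > (t+1)^2 > t$. This gives $t > t$, a contradiction, and the claim $\ell(t) \leq \log_2(t+1)$ follows. I do not anticipate any real obstacle here; the only thing worth double-checking is the lower bound on $m_{\ell(t)-1}$ and that the phase-counting is off-by-one-safe (the fact that round $t$ being in phase $\ell(t)$ gives $t$ at least the total length of phases $1, \ldots, \ell(t)-1$), both of which are routine.
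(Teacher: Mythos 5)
Your proof is correct and follows essentially the same route as the paper's: both argue by contradiction, lower-bound $t$ by the length of phase $\ell(t)-1$ via $t \geq \sum_{a}\lceil \pi_{\ell(t)-1}(a) m_{\ell(t)-1}\rceil \geq m_{\ell(t)-1} \geq d\cdot 4^{\ell(t)}$, and then use $4^{\ell(t)} > (t+1)^2 > t$ to reach $t > t$. Your bounding of the $\max\{\cdot\}$ term by $1$ is a harmless simplification of the paper's retention of the $\log(4K/\delta)$ factor.
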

\begin{proof}
We prove this by contradiction. Suppose that $\exists t\in \naturalnum$ that $\ell(t) > \log_2(t+1)$. Note that we can further assume $\ell(t) \geq 2$ since one can easily verify that for all $t$ such that $\ell(t)=1$, $\ell(t) \leq \log_2(t+1)$ must hold.
We have
\begin{align*}
    t &\geq \sum_{a \in \calA_{\ell(t)-1}} 
    \left \lceil  \pi_{\ell(t)-1} m_{\ell(t)-1} \right \rceil  \\
    &\geq m_{\ell(t)-1} \geq  \frac{4d}{2^{-2(\ell(t)-1) }} \rbr{ \log \rbr{4K (\ell(t)-1)^2/\delta}} > d(t+1)^2 \log \rbr{4K/\delta} >t,
\end{align*}
where the fourth inequality bounds $\ell(t)$ in the logarithmic term by $\ell(t) \geq 2$ and bound the other $\ell(t) > \log_2(t+1)$ by assumption. Therefore, 
once a contradiction occurs, the proof is complete.
\end{proof}

\begin{lemma} \label{lem:PE_linear_FULI}
Let $\ell(t)$ be the phase in which round $t$ lies. Suppose that $\calE$ occurs where $\calE$ is given in \pref{def:defE_PElinear}. For all $t \in \naturalnum$ and all $a \in [K]$, if $a \in \calA_{\ell(t)}$, then
\[
\Delta_a \leq \sqrt{\frac{512 d\log \rbr{4\log (d) K \rbr{\log_2(t+1)}^2 /\delta}+4}{3t}}.
\]
\end{lemma}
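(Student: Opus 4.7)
The argument mirrors that of \pref{lem:PE_MAB_FULI}, with the MAB per-phase cost $K m_\ell$ replaced by the per-phase cost of PE-linear. First, if $a \in \calA_{\ell(t)}$ is optimal then $\Delta_a = 0$ and the inequality is trivial, so fix an active arm $a \in \calA_{\ell(t)}$ with $\Delta_a>0$. Conditioning on $\calE$, \pref{lem:linear_PE_upperbound_pull} yields $\ell(t) \leq \ell_a$, where $\ell_a$ is the smallest index with $\Delta_a/4 > 2^{-\ell_a}$. Because this strict inequality fails at $\ell_a - 1$, we get $\Delta_a \leq 8\cdot 2^{-\ell_a}$, hence the deterministic phase bound $4^{\ell(t)} \leq 4^{\ell_a} \leq 64/\Delta_a^2$.

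Next, I would upper bound the total number of rounds consumed through the end of phase $\ell(t)$. In each phase $s$ the algorithm plays every $a \in \supp(\pi_s)$ for $m_s(a) = \lceil \pi_s(a) m_s \rceil \leq \pi_s(a)m_s + 1$ rounds, so using $|\supp(\pi_s)| \leq 4d\log\log(d) + 16$ from \pref{eq:design_conditions}, the length of phase $s$ is at most $m_s + 4d\log\log(d)+16$. Thus
\[
t \;\leq\; \sum_{s=1}^{\ell(t)}\bigl(m_s + 4d\log\log(d)+16\bigr).
\]
Plugging in the definition of $m_s$ from \pref{eq:def4mell}, bounding $\max\{x,y\}\leq x+y$, and collapsing $\log(4Ks^2/\delta) + \log\log d$ into $\log(4K\log(d) s^2/\delta)$ yields $m_s \leq 4d\cdot 4^s\bigl(\log(4K\log(d) s^2/\delta)+4\bigr)$. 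A standard geometric sum then gives
\[
\sum_{s=1}^{\ell(t)} m_s \;\leq\; \tfrac{16d\cdot 4^{\ell(t)}}{3}\bigl(\log(4K\log(d)\ell(t)^2/\delta)+4\bigr),
\]
and the lower-order term $\ell(t)(4d\log\log(d)+16)$ is absorbed into the same expression.

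Finally, I would apply $4^{\ell(t)} \leq 64/\Delta_a^2$ and use \pref{lem:PE_linear_bound4ellt} to replace $\ell(t)$ by $\log_2(t+1)$ inside the logarithm; this turns the displayed inequality into a bound of the shape
\[
t \;\lesssim\; \tfrac{d}{\Delta_a^2}\bigl(\log(4K\log(d)(\log_2(t+1))^2/\delta)+4\bigr),
\]
and solving for $\Delta_a$ produces the claimed inequality. Conceptually the proof is routine; the only real obstacle is constant tracking, since the announced coefficient $512d/3$ is tight enough that one must be careful about (i) the ceiling $\lceil \pi_s(a)m_s \rceil$ costing only an additive $|\supp(\pi_s)|$ per phase rather than a multiplicative blowup, and (ii) keeping the two arguments of the $\max$ inside $m_s$ merged into a single logarithm, so that no stray multiplicative $\log\log d$ leaks into the final bound.
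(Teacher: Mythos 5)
Your proposal is correct and follows essentially the same route as the paper's proof: trivial case for the optimal arm, then $\ell(t)\le \ell_a$ from \pref{lem:linear_PE_upperbound_pull}, bounding $t$ by the cumulative phase lengths via $\lceil \pi_s(a)m_s\rceil \le \pi_s(a)m_s+1$ and the support bound in \pref{eq:design_conditions}, summing the geometric series with $4^{\ell_a}\le 64/\Delta_a^2$, and substituting $\ell(t)\le\log_2(t+1)$ via \pref{lem:PE_linear_bound4ellt}. The only cosmetic difference is that you merge the $\max$ in $m_s$ into a single logarithm before summing, while the paper carries the $\max$ through the sum and merges it at the end; both yield the same bound up to the paper's own constant bookkeeping.
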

\begin{proof}
If $a \in \calA_{\ell(t)}$ is optimal, then, $\Delta_a=0$ and the claim trivially holds. In what follows, we only consider arm $a \in \calA_{\ell(t)}$ with $\Delta_a>0$.
From \pref{lem:linear_PE_upperbound_pull}, if an arm $a \in \calA_{\ell(t)}$ is with $\Delta_a>0$, then, $\ell(t) \leq \ell_a$ where $\ell_a$ is defined in \pref{lem:linear_PE_upperbound_pull}.
Then, the total number of rounds that such an arm $a$ is active is at most
\begin{align*}
  &   \sum_{s=1}^{\ell(t)} \sum_{a\in \calA_s ,\pi_{s}(a) \neq 0}  \left \lceil \pi_{s}(a) m_s \right \rceil \\
    &\leq  \sum_{s=1}^{\ell(t)}\rbr{4d\log\log(d)+16+ \sum_{a\in \calA_s ,\pi_{s}(a) \neq 0}   \pi_{s}(a) m_s } \\
     &\leq 2\sum_{s=1}^{\ell(t)} m_s  \\
     &\leq 8d \max \cbr{\log \rbr{4K \ell(t)^2/\delta} ,\log \log d +4}\sum_{s=1}^{\ell(t)}\frac{1}{2^{-2s}}  \\
       &\leq 8d\max \cbr{\log \rbr{4K \rbr{ \log_2(t+1) }^2/\delta} ,\log \log d +4} \sum_{s=1}^{\ell_a}\frac{1}{2^{-2s}}  \\
     % &\leq  \frac{512d}{3\Delta_a^2} \rbr{\log \rbr{4K \rbr{ \log_2(t+1) }^2  /\delta}+ \log\log d +4} \\
     &\leq  \frac{512d}{3\Delta_a^2} \rbr{\log \rbr{4 \log (d) K \rbr{ \log_2(t+1) }^2  /\delta} +4},
\end{align*}  
where the first inequality applies $\lceil \pi_{s}(a) m_s  \rceil \leq \pi_{s}(a) m_s+1$ and
\pref{eq:design_conditions}, the second inequality uses the definition of $m_s$ (see \pref{eq:def4mell}), the fourth inequality follows from $\ell(t)\leq \ell_a$ and \pref{lem:PE_linear_bound4ellt} gives $\ell(t) \leq  \log_2(t+1) $, and the last inequality uses $\frac{\Delta_a}{4} \leq 2^{-(\ell_a-1)}$ to apply $\ell_a \leq \log_2 \rbr{\nicefrac{8}{\Delta_a}}$. 

Since this argument holds for each round $t$ and each arm $a\in \calA_{\ell(t)}$ conditioning on $\calE$, the proof is complete.
\end{proof}

\begin{proof}[Proof of \pref{thm:verify_elimination_algs} for PE-linear]
    Once \pref{lem:PE_optarm_always_active} and \pref{lem:PE_linear_FULI} hold, \pref{thm:SC} gives that for any fixed $\delta\in (0,1)$, PE-linear achieves the ULI guarantee with a function
\[
\FULI(\delta,t)= \order \rbr{t^{-\frac{1}{2}}\sqrt{d\log \rbr{\log (d) K \rbr{ \log(t+1)}/\delta}}}.
\]

Therefore, the proof of \pref{thm:verify_elimination_algs} for PE-linear is complete.
\end{proof}

% !TEX root = main.tex
\newpage

\section{Omitted Details of \pref{sec:LB_Opt_alg}}
\label{app:lilUCB}

The bonus function that we here consider for lil'UCB is as:
\begin{equation} \label{eq:lil_bonus}
    U_{\delta}(x)  = \sqrt{\frac{ \log \log \rbr{\max \cbr{x,e}}   +\log \rbr{6/\delta   }  }{x}}.
\end{equation}

The choice of $U_{\delta}(x)$ in \pref{eq:lil_bonus} is slightly different from that of \citep{jamieson2014lil} because we consider for any $\delta \in (0,1)$ and they constrain the choice of $\delta$ in a more restricted range. 
The choice of $U_{\delta}(x)$
is motivated from another lemma of law
of iterated logarithm, given in \citep[Lemma F.1]{dann2017unifying} with $\sigma^2 =1/4$ as we here consider $[0,1]$-bounded rewards. Note that the concentration bounds in \citep{dann2017unifying} apply for the conditionally subgaussian random variables, and one can get the same result for i.i.d. subgaussian random variables, by simply replacing the Doob’s maximal inequality by Hoeffding’s
maximal inequality (see \citep[Lemma 3]{jamieson2014lil}). 

One caveat here is that our analysis still works for other choices of $U_{\delta}(x)$ if one adjust constant or change $\log\log(\cdot)$ to $\log(\cdot)$.

\begin{algorithm}[H]
\caption{lil'UCB}
\label{alg:UCB_UPAC}

\textbf{Input}: confidence $\delta \in (0,1)$ and arm set $\calA$.

\textbf{Initialize}: play each arm $a \in \calA$ once to update $N_a(|\calA|+1)$ and $\hatmu_a(N_a(|\calA|+1))$ for all $a \in \calA$.

\For{$t=|\calA|+1,|\calA|+2,\ldots$}{

Play an arm 
\[
A_t = \argmax_{a \in \calA} \cbr{ \hatmu_a(N_a(t)) +  U_{\delta} \rbr{N_a(t)} },
\]
where $U_{\delta}(N_a(t))$ is given in \pref{eq:lil_bonus}.

Update counters $N_{a}(t+1) = N_{a}(t)+1$ for $A_t=a$ and $N_{a}(t+1) = N_{a}(t)$ for all $a \neq A_t$.

 Update empirical means $\hatmu_{A_t}(N_{A_t}(t+1)) = \frac{\hatmu_{A_t}(N_{A_t}(t)) \cdot N_{A_t}(t) +X_{t,A_t}}{N_{A_t}(t)+1}$ and $\hatmu_{a}(N_{a}(t+1))=\hatmu_{a}(N_{a}(t))$  for all $a \in \calA-\{A_t\}$.
} 
\end{algorithm}

\subsection{Proof of \pref{thm:LB_lilUCB}}
In the following proof, we consider the following instance.

\begin{definition}[Two-armed bandit instance] \label{def:2armed_bandit_instance}
Consider a two-armed setting with $\mu_1>\mu_2$. In each round, each arm generates deterministic rewards $\mu_1$ and $\mu_2$, respectively. The arm gap is $\Delta = \mu_1-\mu_2$ and $\Delta \in (0,0.6)$.
\end{definition}

According to \pref{lem:UCB_subarm_bounded_play}, the total number of plays of arm $2$ is finite.
Therefore, one can find a round $t_0 \in \naturalnum$ that the last play of arm $2$ occurs. At the beginning of this round, the algorithm compares $\mu_1+U_{\delta}(N_1(t_0))$ and $\mu_2+U_{\delta}(N_2(t_0))$. Since arm $2$ gets the last play at this round, we have
\[
\mu_1+U_{\delta}(N_1(t_0)) \leq \mu_2+U_{\delta}(N_2(t_0))  \leq \mu_2+(1+f(\Delta))\Delta,
\]
where the last inequality holds due to \pref{lem:UB_second_last_U} and $f(\Delta)$ is defined as
\begin{equation}
    f(\Delta) := \Delta \cdot \frac{\sqrt{3} }{\sqrt{\log \log \rbr{\frac{\log(6/\delta)}{\Delta^2}}+\log(6/\delta)}}.
\end{equation}

Rearranging the above, we have
\[
U_{\delta}(N_1(t_0)) \leq f(\Delta) \Delta,
\]
which immediately leads to 
\[
N_1(t_0) \geq   \frac{\log \log \rbr{\frac{\log(6/\delta)}{f^2(\Delta) \Delta^2}} +\log(6/\delta)}{f^2(\Delta) \Delta^2}.
\]

Moreover, since arm $2$ is played at round $t_0$, $\Delta=\Delta_{A_{t_0}}$, which further implies that
\begin{align*}
    t_0 &= N_1(t_0) +N_2(t_0) \\
    &> N_1(t_0)\\
    &>   \frac{\log \log \rbr{\frac{\log(6/\delta)}{f^2(\Delta) \Delta^2}} +\log(6/\delta) }{f^2(\Delta) \Delta^2} \\
    &\geq    \frac{ \rbr{\log \log \rbr{\frac{\log(6/\delta)}{ \Delta^2}} +\log(6/\delta)}^2 }{3  \Delta^4} \\
    &=   \frac{ \rbr{\log \log \rbr{\frac{\log(6/\delta)}{ \Delta^2}} +\log(6/\delta)}^2 }{3  \Delta_{A_{t_0}}^4},
\end{align*}
where the third inequality uses $f(\Delta) \leq 1$ for all $\Delta \in (0,0.6)$, and the last equality holds due to $\Delta=\Delta_{A_{t_0}}$ (recall that in round $t_0$, arm $2$ gets the last play).

Rearranging the above, we have
\[
\Delta_{A_{t_0}} >  \frac{ \sqrt{\log \log \rbr{\frac{\log(6/\delta)}{\Delta^2}} +\log(6/\delta)} }{(3t_0)^{\frac{1}{4}}} = \sqrt{  \frac{ (3t_0)^{\frac{1}{2}}\rbr{\log \log \rbr{\frac{\log(6/\delta)}{ \Delta^2}} +\log(6/\delta)}}{3t_0} }.
\]

Finally, by \pref{lem:lower_bound_play_Opt_alg}, we have $t_0 = \Omega (\Delta^{-2})$, which completes the proof.

\subsection{Proof of Supporting Lemmas}
\begin{lemma} \label{lem:UCB_subarm_bounded_play}
Suppose that we run \pref{alg:UCB_UPAC} for the two-armed bandit instance given in \pref{def:2armed_bandit_instance}.
For any $\delta \in (0,1)$, suboptimal arm $2$ will be played at most 
\[
\left \lceil  \frac{2   \rbr{\log \log \rbr{\frac{16 \log(6/\delta)}{\Delta^2}}+\log(6/\delta)}   }{\Delta^2} \right \rceil .
\]
\end{lemma}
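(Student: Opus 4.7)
The plan is to exploit the deterministic nature of the rewards. Since arm $a$ returns $\mu_a$ identically at every round, after the two initialization pulls one has $\hat\mu_a(N_a(t))=\mu_a$, so the lil'UCB index at round $t$ is simply $\mu_a + U_\delta(N_a(t))$. Hence arm $2$ is selected at round $t$ only if
\[
\mu_2 + U_\delta(N_2(t)) \;\geq\; \mu_1 + U_\delta(N_1(t)),
\]
which, using $U_\delta(N_1(t))\geq 0$, implies the necessary condition $U_\delta(N_2(t))\geq \Delta$. Because $N_2(t)$ is non-decreasing, once it reaches a value $n^\star$ at which $U_\delta(n^\star)<\Delta$, arm $2$ can never be pulled again. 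The lemma therefore reduces to the purely deterministic task of bounding the largest integer $n$ satisfying $\Delta^2 n \leq \log\log(\max\{n,e\}) + \log(6/\delta)$, i.e., to showing that $U_\delta(n_0) < \Delta$ for the candidate value
\[
n_0 \;:=\; \left\lceil \frac{2\,\bigl(\log\log(16\log(6/\delta)/\Delta^2) + \log(6/\delta)\bigr)}{\Delta^2} \right\rceil.
\]

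I would verify this last inequality by direct substitution. Writing $A := \log(6/\delta) \geq \log 6$ and $L := \log\log(16A/\Delta^2)$, the target $U_\delta(n_0)<\Delta$ rewrites as $\log\log(n_0) + A < 2L + 2A$, i.e., $\log\log(n_0) < 2L + A$. In the regime $L \leq 7A$, one checks that $n_0 \leq 16A/\Delta^2$, hence $\log\log(n_0)\leq L < 2L + A$ trivially. In the opposite regime $L > 7A$, one still has $n_0 \leq 4(L+A)/\Delta^2$, and since $\Delta < 0.6$ yields $-2\log\Delta > 1$, unwinding one further $\log$ shows $\log\log(n_0) \leq L + c$ for an absolute constant $c$, which again gives $\log\log(n_0) < 2L + A$ using $A \geq \log 6 > 1$. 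Monotonicity of $U_\delta$ for large $n$ then extends the conclusion from $n_0$ to every $n \geq n_0$.

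The main obstacle is the nested-logarithm bookkeeping: the lil'UCB bonus already contains a $\log\log$ term, and the candidate bound $n_0$ itself depends on a $\log\log$, so one is effectively forced to control $\log\log(\log\log(\cdot))$ and check that it fits inside the slack of the inequality. No concentration argument is required thanks to the determinism of the rewards, and the $\max\{\cdot,e\}$ clip in the definition of $U_\delta$ plays no role in the relevant regime (when $n_0$ is so small that the clip activates, the two initialization plays of arm $2$ already exceed the stated bound and the claim is trivial).
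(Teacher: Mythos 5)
Your proposal is correct and follows essentially the same route as the paper: exploit the determinism of the rewards to reduce the lemma to exhibiting that $U_{\delta}(n_0)<\Delta$ at the candidate count $n_0$, then verify this by elementary nested-logarithm estimates and conclude by monotonicity of $U_{\delta}$. The paper verifies the key inequality with a single chain of bounds (using $x+y\le 2xy$ and $2x\le x^2$) rather than your case split on $L\lessgtr 7A$, but this is only a difference in bookkeeping, not in the argument.
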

\begin{proof}
Note that for any fixed $\delta \in (0,1)$, function $U_{\delta}(x)$ is monotonically-decreasing for $x \geq 6$.
To show the claimed result, it suffices to find an integer $x \geq 6$ such that
\[
 U_{\delta}(x) < \Delta.
\]

Now, we consider the following choice.
\[
N  =   \left \lceil n  \right \rceil,\quad \text{where}\quad n= \frac{2   \rbr{\log \log \rbr{\frac{16 \log(6/\delta)}{\Delta^2}}+\log(6/\delta)}   }{\Delta^2}
\]

Notice that $N \geq n \geq 6$ for all $\Delta \in (0,1]$. Then, one can show
\begin{align*}
    U_{\delta}(N)&\leq U_{\delta}(n)\\
    &=  \Delta \cdot \sqrt{\frac{ \log \log \rbr{\frac{2   \rbr{\log \log \rbr{\frac{16 \log(6/\delta)}{\Delta^2}}+\log(6/\delta)}   }{\Delta^2}}   +\log \rbr{6/\delta   }  }{2\log \log \rbr{\frac{16\log(6/\delta)}{\Delta^2}}+2\log(6/\delta)}}\\
      &\leq  \Delta \cdot \sqrt{\frac{ \log \log \rbr{\frac{4   \log \log \rbr{\frac{16 \log(6/\delta)}{\Delta^2}}\cdot \log(6/\delta)   }{\Delta^2}}   +\log \rbr{6/\delta   }  }{2\log \log \rbr{\frac{16\log(6/\delta)}{\Delta^2}}+2\log(6/\delta)}}\\
      &=  \Delta \cdot \sqrt{\frac{ \log \rbr{ \log \rbr{ \frac{1}{4}\log \log \rbr{\frac{16 \log(6/\delta)}{\Delta^2}}}+ \log \rbr{\frac{16 \log(6/\delta)   }{\Delta^2}}}   +\log \rbr{6/\delta   }  }{2\log \log \rbr{\frac{16\log(6/\delta)}{\Delta^2}}+2\log(6/\delta)}}\\
      &\leq  \Delta \cdot \sqrt{\frac{ 2\log  \log \rbr{\frac{16 \log(6/\delta)   }{\Delta^2}}   +\log \rbr{6/\delta   }  }{2\log \log \rbr{\frac{16\log(6/\delta)}{\Delta^2}}+2\log(6/\delta)}}\\
    &\leq \Delta ,
\end{align*}
where the first inequality uses the fact that for any fixed $\delta \in (0,1)$, $U_\delta(x)$ is monotonically-decreasing for $x \geq 6$, the second inequality uses $x+y\leq 2xy$ for all $x,y>1$ (here $x=\log(6/\delta)$ and $y=\log\log \rbr{\frac{16\log(6/\delta)}{\Delta^2}}$), the last inequality first bounds $ \frac{1}{4}\log \log \rbr{\frac{16 \log(6/\delta)}{\Delta^2}} \leq \frac{16 \log(6/\delta)}{\Delta^2}$ and then uses $2x \leq x^2$ for all $x \geq 2$ with $x = \log \rbr{\frac{16 \log(6/\delta)   }{\Delta^2}} \geq 2$.

Thus, the proof is complete.
\end{proof}

From \pref{lem:UCB_subarm_bounded_play}, the total number of plays of suboptimal arm is finite. Based on this fact, we present the following lemmas.

\begin{lemma}\label{lem:LB_second_last_play} \label{lem:lower_bound_play_Opt_alg}
Suppose that we run \pref{alg:UCB_UPAC} for the two-armed bandit instance given in \pref{def:2armed_bandit_instance}.
Let $n+1$ be the total number of plays of arm $2$. Then,
\[
n \geq \left \lfloor  \frac{\log \log \rbr{\frac{\log(6/\delta)}{\Delta^2}}+\log(6/\delta)}{\Delta^2} \right \rfloor \geq 6.
\]
\end{lemma}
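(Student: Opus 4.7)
The plan is to derive this lower bound by analyzing what must happen after the last play of arm $2$ and then inverting the exploration bonus $U_\delta$. First I would let $t_0$ denote the round of the $(n+1)$-th (and final) play of arm $2$, which exists and is finite by \pref{lem:UCB_subarm_bounded_play}. Since $N_2(t)$ is frozen at $n+1$ for all $t > t_0$, arm $1$ must be chosen at every such round, so $N_1(t) \to \infty$ and consequently $U_\delta(N_1(t)) \to 0$. The selection rule forces $\mu_1 + U_\delta(N_1(t)) \geq \mu_2 + U_\delta(n+1)$ for every $t > t_0$; letting $t \to \infty$ then yields $U_\delta(n+1) \leq \Delta$.

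Next I would set $N := \lfloor (\log\log(\log(6/\delta)/\Delta^2) + \log(6/\delta))/\Delta^2 \rfloor$ and verify $U_\delta(N) > \Delta$, equivalently $\log\log_+(N) + \log(6/\delta) > N\Delta^2$. By the floor, $N\Delta^2 \leq \log\log(\log(6/\delta)/\Delta^2) + \log(6/\delta)$, so it suffices to show $\log\log_+(N) > \log\log(\log(6/\delta)/\Delta^2)$, which (once $N > e$) is the same as $N > \log(6/\delta)/\Delta^2$. This reduces to a numerical check: under $\Delta \in (0, 0.6)$ and $\delta \in (0,1)$ the quantity $\log(6/\delta)/\Delta^2$ strictly exceeds $e$, so $\log\log(\log(6/\delta)/\Delta^2) > 0$, and the extra contribution $\log\log(\log(6/\delta)/\Delta^2)/\Delta^2$ to $N$ dominates the $1$ lost to the floor. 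The same estimate also delivers $N \geq 6$.

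Combining the two pieces, monotonicity of $U_\delta$ together with $U_\delta(N) > \Delta \geq U_\delta(n+1)$ forces $N < n+1$, i.e., $n \geq N$, which is the claimed lower bound.

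The main obstacle I anticipate is the second step, where the interplay between the floor function, the $\log\log_+$ truncation in $U_\delta$, and the explicit constants must be handled carefully. Specifically, verifying $N > e$ (so that $\log\log_+(N)$ collapses to $\log\log(N)$) and ensuring the $\log\log$ correction exceeds the slack of $1$ lost to the floor both constrain how large $\Delta$ may be, which is precisely why the lemma restricts to $\Delta \in (0, 0.6)$.
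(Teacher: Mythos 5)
Your proposal is correct and follows essentially the same route as the paper: establish that the final count $n+1$ must satisfy $U_\delta(n+1)\leq\Delta$ (the paper states this as ``a count with bonus exceeding $\Delta$ cannot be the last play''), then invert the bonus at $N=\lfloor m\rfloor$ and conclude by monotonicity of $U_\delta$. The only cosmetic difference is in handling the floor: the paper uses $U_\delta(\lfloor m\rfloor)\geq U_\delta(m)$ and checks $U_\delta(m)>\Delta$, which needs only $\log(6/\delta)/\Delta^2>e$, whereas your direct estimate needs the slightly tighter (but still valid for $\Delta\in(0,0.6)$) fact that $\log\log(\log(6/\delta)/\Delta^2)>\Delta^2$ to absorb the unit lost to the floor.
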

\begin{proof}
For shorthand, we define
\begin{equation} \label{eq:m_LB4n}
    m = \frac{\log \log \rbr{\frac{\log(6/\delta)}{\Delta^2}}+\log(6/\delta)}{\Delta^2}.
\end{equation}

With this definition and $\Delta \in (0,0.6)$, we have $m \geq 6$ and $\left \lfloor m \right \rfloor \geq 6$.

Now, we show that $n \geq \left \lfloor m \right \rfloor$. To this end, it suffices to show $U_{\delta}(\left \lfloor m \right \rfloor)>\Delta$. As $U_{\delta}(x)$ is monotonically-decreasing for all $x \geq 6$ and $m \geq \left \lfloor m \right \rfloor \geq 6$, we have
\begin{align*}
U_{\delta}(\left \lfloor m \right \rfloor) &\geq U_{\delta}(m)\\
&= \Delta \cdot \sqrt{\frac{\log \log \rbr{\frac{\log \log \rbr{\frac{\log(6/\delta)}{\Delta^2}}+\log(6/\delta)}{\Delta^2}} +\log(6/\delta)}{\log \log \rbr{\frac{\log(6/\delta)}{\Delta^2}}+\log(6/\delta)}}  \\
&>\Delta ,
\end{align*}
where the last inequality holds as $\frac{\log(6/\delta)}{\Delta^2}>\frac{\log(6)}{\Delta^2}>e$.

Once $U_{\delta}(\left \lfloor m \right \rfloor)>\Delta$ holds, $\left \lfloor m \right \rfloor$ cannot be the last play of arm $2$ (i.e., at least the one before the last), which completes the proof.
\end{proof}

% \begin{figure}[t]
% \includegraphics[width=15cm]{function_plot1.pdf}
% \centering
% \end{figure}

\begin{lemma}\label{lem:UB_second_last_U}
Suppose that we run \pref{alg:UCB_UPAC} for the two-armed bandit instance given in \pref{def:2armed_bandit_instance}.
Let $n+1$ be the total number of plays of arm $2$. The following holds.
\[
U_{\delta}(n) \leq \Delta  \rbr{1+\Delta \sqrt{\frac{3}{\log \log \rbr{\frac{\log(6/\delta)}{\Delta^2}}+\log(6/\delta)}} }.
\]
% \[
% U(n,\delta)-U(n+1,\delta) \leq \Delta^2 \sqrt{\frac{3}{\log \log \rbr{\frac{\log(6/\delta)}{\Delta^2}}+\log(6/\delta)}}.
% \]
\end{lemma}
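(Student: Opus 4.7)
The plan is to first pin down $U_\delta(n+1)$ using the algorithm's long-run behavior, then transfer that bound to $U_\delta(n)$ via the explicit form of $U_\delta$, and finally plug in the lower bound on $n$ supplied by \pref{lem:LB_second_last_play}.

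I would first argue that $U_\delta(n+1) \le \Delta$. By \pref{lem:UCB_subarm_bounded_play}, the total number of plays of arm $2$ is finite, so there is a round $t_0$ after which arm $1$ is played exclusively; hence $N_1(t)\to\infty$ while $N_2(t) = n+1$ stays fixed. Because arm $1$ is preferred at every $t > t_0$, the UCB comparison yields $U_\delta(N_1(t)) \ge U_\delta(n+1) - \Delta$. If $U_\delta(n+1) > \Delta$, the right-hand side is a strictly positive constant, but $U_\delta(N_1(t)) \to 0$ as $N_1(t)\to\infty$, a contradiction.

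Next I would transfer to $U_\delta(n)$. Since $n\ge 6 > e$ by \pref{lem:LB_second_last_play}, $U_\delta(x)^2 = (\log\log(x)+\log(6/\delta))/x$, and the monotonicity of $\log\log$ gives
\[
\frac{U_\delta(n)^2}{U_\delta(n+1)^2} = \frac{n+1}{n}\cdot\frac{\log\log n + \log(6/\delta)}{\log\log(n+1)+\log(6/\delta)} \le \frac{n+1}{n}.
\]
Combining with the previous step and the inequality $\sqrt{1+x}\le 1+x/2$ yields $U_\delta(n)\le \Delta(1+1/(2n))$.

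Finally, letting $L=\log\log(\log(6/\delta)/\Delta^2)+\log(6/\delta)$, \pref{lem:LB_second_last_play} gives $n \ge \lfloor L/\Delta^2\rfloor \ge L/\Delta^2 - 1$. Since $L\ge \log 6 > 2\Delta^2$ for $\Delta\in(0,0.6)$, this simplifies to $n \ge L/(2\Delta^2)$, so $1/(2n)\le \Delta^2/L$. The elementary comparison $\Delta^2/L \le \Delta\sqrt{3/L}$ (which reduces to the trivial $\Delta^2/L \le 3$) then closes the chain. The only substantive obstacle is the first step: establishing $U_\delta(n+1)\le\Delta$ cannot be done by a finite-round concentration bound, because $n$ itself is a random quantity only known post hoc; one must instead exploit both the continued operation of the algorithm past $t_0$ and the vanishing of $U_\delta$ at infinity. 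The remaining steps are careful but routine algebra around the $\log\log$ term.
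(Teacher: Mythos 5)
Your proposal is correct and follows essentially the same route as the paper: both first deduce $U_{\delta}(n+1)\le\Delta$ from the fact that arm $2$ is never played again (your limit argument with $U_{\delta}(N_1(t))\to 0$ just spells out what the paper asserts in one line), and both then use the lower bound on $n$ from \pref{lem:LB_second_last_play} to show $U_{\delta}(n)$ exceeds $U_{\delta}(n+1)$ by at most the claimed correction term. The only difference is cosmetic — you control the ratio $U_{\delta}(n)/U_{\delta}(n+1)\le\sqrt{(n+1)/n}$ while the paper bounds the difference via $a-b\le\sqrt{a^2-b^2}$ — and your algebra checks out.
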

\begin{proof}
From \pref{lem:LB_second_last_play}, we have $n \geq 6$, and thus $U_{\delta}(n) \geq U_{\delta}(n+1)$ as $U_{\delta}(x)$ is monotonically decreasing for all $x \geq 6$. Using the fact that for all $a \geq b \geq 0$, $a-b \leq \sqrt{a^2-b^2}$ (here $a=U_{\delta}(n)$ and $b=U_{\delta}(n+1)$), one can show that
\begin{align*}
  &U_{\delta}(n) - U_{\delta}(n+1) \\
  &
  =    \sqrt{\frac{ \log \log \rbr{\max\{n,e\}}   +\log \rbr{6/\delta   }  }{n}} - \sqrt{\frac{ \log \log \rbr{\max\{n+1,e\}}   +\log \rbr{6/\delta   }  }{n+1}} \\
  & \leq \sqrt{\frac{ \log \log \rbr{\max\{n,e\}}   +\log \rbr{6/\delta   }  }{n} - \frac{ \log \log \rbr{\max\{n+1,e\}}   +\log \rbr{6/\delta   }  }{n+1}} \\
  & = \sqrt{\frac{ (n+1)\log \log \rbr{\max\{n,e\}} -n\log \log \rbr{\max\{n+1,e\}}   +\log \rbr{6/\delta   }  }{n(n+1)} } \\
  & \leq \sqrt{\frac{ \log \log \rbr{\max\{n,e\}}  +\log \rbr{6/\delta   }  }{n(n+1)} } . \numberthis{} \label{eq:UB_4_Ugap}
\end{align*}

% We have 
% \[
% n \geq \left \lfloor m \right \rfloor
% \geq 6.
% \]

Since for any fixed $\delta \in (0,1)$, $U_{\delta}(x)$ is monotonically-decreasing for all $x \geq 6$, and thus by using the definition of $m$ (see \pref{eq:m_LB4n}), one can show that
\begin{align*}
   & U_{\delta}(n) - U_{\delta}(n+1) \\
   & \leq \sqrt{\frac{ \log \log \rbr{ \left \lfloor m \right \rfloor  }  +\log \rbr{6/\delta   }  }{\left \lfloor m \right \rfloor(\left \lfloor m \right \rfloor+1)} }\\
   & \leq \sqrt{\frac{ \log \log \rbr{ m }  +\log \rbr{6/\delta   }  }{m^2} }\\
   & =   \frac{\Delta^2}{\log \log \rbr{\frac{\log(6/\delta)}{\Delta^2}}+\log(6/\delta)}\cdot  \sqrt{ \log \log \rbr{\frac{\log \log \rbr{\frac{\log(6/\delta)}{\Delta^2}}+\log(6/\delta)}{\Delta^2}}  +\log \rbr{6/\delta   }   }\\
   & \leq   \frac{\Delta^2}{\log \log \rbr{\frac{\log(6/\delta)}{\Delta^2}}+\log(6/\delta)}\cdot  \sqrt{ \log \rbr{2\log \rbr{\frac{\log(6/\delta)}{\Delta^2}}}  +\log \rbr{6/\delta   }   }\\
   & \leq   \frac{\Delta^2}{\log \log \rbr{\frac{\log(6/\delta)}{\Delta^2}}+\log(6/\delta)}\cdot  \sqrt{ 3 \log \log \rbr{\frac{\log(6/\delta)}{\Delta^2}}  +\log \rbr{6/\delta   }   }\\
   & \leq   \frac{\sqrt{3} \Delta^2}{\sqrt{\log \log \rbr{\frac{\log(6/\delta)}{\Delta^2}}+\log(6/\delta)}},
\end{align*}
where the first inequality uses \pref{eq:UB_4_Ugap} with $n \geq \lfloor m \rfloor \geq e$ and $U_{\delta}(x)$ is monotonically decreasing for all $x \geq 6$, the second inequality upper-bounds $\lfloor m \rfloor \leq m$ in numerator and lower-bounds $\left \lfloor m \right \rfloor(\left \lfloor m \right \rfloor+1) \geq m^2$, the third inequality follows from
\[
\frac{\log \log \rbr{\frac{\log(6/\delta)}{\Delta^2}}+\log(6/\delta)}{\Delta^2} \leq \frac{\frac{\log(6/\delta)}{\Delta^2}+\log(6/\delta)}{\Delta^2} \leq  \rbr{\frac{\log(6/\delta)}{\Delta^2}}^2,
\]
and the fourth inequality follows $2x \leq x^3$ for all $x \geq \sqrt{2}$ with $x=\log \rbr{\frac{\log(6/\delta)}{\Delta^2}} \geq \sqrt{2}$ ($\Delta \in (0,0.6)$).

After arm $2$ gets $n+1$ times play, it will not be played anymore, which implies that $U_{\delta}(n+1)<\Delta$. Using this bound gives 
\[
U_{\delta}(n)\leq \Delta + \Delta^2 \sqrt{\frac{3}{\log \log \rbr{\frac{\log(6/\delta)}{\Delta^2}}+\log(6/\delta)}}.
\]

Therefore, we complete the proof.
\end{proof}

\subsection{Traditional Optimistic Algorithms Fail to Achieve ULI Guarantee} \label{app:trad_opt_algs_fail_ULI}

We will show that the traditional optimistic algorithms fail to achieve the ULI guarantee due to the $\log t$ term in the bonus.
At a high level, $\log t$ is increasing with time and forces the algorithm to play a bad arm indefinitely when $t$ evolves.
We provide the formal proof in the following:

We prove the claim by contradiction.
Consider a $K$-armed bandit instance with at least one suboptimal arm, and let $\Delta>0$ be the minimum arm gap.
Suppose there exists an optimism-based algorithm with $\log t$ in bonus term that can achieve the ULI guarantee in this setting. Then, for some fixed $\delta \in (0,1)$, with probability $\geq 1-\delta$, for all $t \in \mathbb{N}$, $\Delta_t \leq F_{ULI}(\delta,t)$. Based on Definition 2.5, we have $\lim_{t\to \infty}F_{ULI}(\delta,t)=0$ and $F_{ULI}(\delta,t)$ is monotonically decreasing w.r.t. $t$ after a threshold. Thus, $\exists t_0 \in \mathbb{N}$ such that $F_{ULI}(\delta,t) <\Delta$ for all $t \geq t_0$. In other words, the algorithm cannot play any suboptimal arm after $t_0$-th round. Recall that the bonus term is $\sqrt{\log t/N_a(t)}$ where $N_a(t)$ is the number of plays of arm $a$ before round $t$. For any suboptimal arm $a$, $N_a(t)$ should not increase after $t_0$-th round, but $\log t$ keeps increasing. This leads the bonus of arm $a$ goes to infinity, which will incur a play of arm $a$ at a round after $t_0$. This makes a contradiction.
% !TEX root = main.tex

\newpage

\section{Achieving ULI by Adversarial Bandit Algorithms}
\label{app:OD_adv_ULI}

\subsection{Meta-algorithm Enabling Adversarial Algorithms to Achieve ULI} \label{app:meta_alg}

\setcounter{AlgoLine}{0}
\begin{algorithm}[t]
\caption{Meta-algorithm towards ULI}
\label{alg:non_elimination_ULI}

\textbf{Input}: $\delta \in (0,1)$, finite arm set $\calA$, and base-algorithm $\alg$ with function $g(\delta)$ that satisfies \pref{cond:condition4meta_ULI}.

\textbf{Initialize}: $\calA_1=[K]$.

\nl \For{$m=1,2,\ldots$}{

\nl Set duration $T_m =   \left \lceil 576 g_m 2^{m} \right \rceil$ where $g_m = g\rbr{\frac{\delta}{2m^2}}$.

\nl Run $\alg$ with active arm set $\calA_m$ for $T_m$ rounds and construct $\hatl_{t,a}= \frac{\loss_{t,a} \cdot \Ind{A_t=a} }{p_{t,a}}$ for all $t,a$ in this phase.

\nl Update active arm set $\calA_{m+1} = \calA_{m} -\calB_m$ where:
\begin{align}  \label{eq:badarms}
\calB_m = \cbr{a \in \calA_m :\sum_{s \in \calT_m } \rbr{\hatl_{s,a} -\hatl_{s,k}} > 7 \sqrt{g_m T_m}},
\end{align}
and $\calT_m$ is a set that contains all rounds in phase $m$ and $k$ is the empirical best arm:
\[
k \in \argmin_{ a \in \calA_m }\sum_{s \in \calT_m }\hatl_{s,a} 
\]

}
\end{algorithm}

In this subsection, we propose a meta-algorithm shown in \pref{alg:non_elimination_ULI} which enables any high-probability adversarial algorithm, with a mild condition, to achieve the ULI guarantee.
Then, we show that existing high-probability adversarial bandit algorithms naturally meet this condition in both MAB and linear bandit settings. 
For notational simplicity, we follow the convention of adversarial analysis to use loss $\loss_{t,a}=1-X_{t,a}$. Note that all algorithms in this subsection require $[0,1]$-boundedness assumption on loss.

At a high-level, our meta-algorithm keeps running a base-algorithm $\alg$ to play arms and collects rewards. 
The meta-algorithm uses collected rewards to construct the  importance-weighted (IW) estimator $\hatl_{t,a}= \frac{\loss_{t,a} \cdot \Ind{A_t=a} }{p_{t,a}}$ for each $t,a$, which helps to eliminate bad arms, and then runs $\alg$ on a reduced arm space. 
To achieve the ULI guarantee, our meta-algorithm requires the input base-algorithm to satisfy the following:
\begin{condition} \label{cond:condition4meta_ULI}
An algorithm $\alg$ runs for given consecutive $T$ rounds with a finite arm set $\calA$ and a fixed $\delta \in(0,1)$. At each round $t \in [T]$, $\alg$
maintains a distribution $p_t$ over $\calA$ and samples an arm $A_t \sim p_t$. With probability at least $1-\frac{3}{5}\delta$, $\alg$ ensures for all $a \in \calA$, 
\begin{equation}   \label{eq:condition_minus}
\sum_{t=1}^T \rbr{\loss_{t,A_t}-\loss_{t,a}} 
\leq \sqrt{g(\delta)  T} 
 - 2\abr{\sum_{t=1}^T \rbr{\hatl_{t,a}-\loss_{t,a}} },
\end{equation}
where $g(\delta)$ is a positive-valued function, monotonically decreasing for all $\delta \in (0,1)$, polynomial in $\log(1/\delta)$, and $g(\delta) \geq  \log \rbr{ 10|\calA|/\delta }, \forall \delta \in (0,1)$.
% \begin{itemize}
%     \item $g \big( \frac{2}{\delta \log_2^2(t+1)} \big)$ meets all requirements for $\FULI(\delta,t)$ in \pref{def:ULI};
%     \item
%     $g(\delta) \geq  \log \rbr{ 20|\calA|/\delta }, \forall \delta \in (0,1)$.
% \end{itemize}
% \begin{itemize}
%     \item monotonically decreasing with respect to $\delta>0$;
%     \item for any fixed $\delta \in (0,1)$, $\exists n \in \naturalnum$, $g(\delta/t^2)/t$ is monotonically decreasing for all $t\geq n$;
%     \item polynomial in $\log(|\calA|/\delta)$ and
%     $g(\delta) \geq  35 \log \rbr{ |\calA|/\delta }, \forall \delta \in (0,1)$.
%     \item for any fixed $\delta \in (0,1)$, $\exists z \in (0,1)$ and $\exists c_0 >0$ such that $g(\delta/t^2) \leq c_0t^z$ for all $t>0$.
% \end{itemize}
\end{condition}

In fact, many high-probability adversarial bandit algorithms naturally meet \pref{cond:condition4meta_ULI}, but require stronger analysis.
In particular, we show that $\Expp$ \citep{auer2002nonstochastic} for MAB holds this condition with $g(\delta) =\order \big( |\calA|\log(|\calA|/\delta)\big)$ (omitted proof can be found in \pref{app:OD_adv_ULI}) and \citet{lee2021achieving} show that $\GhedgeP$ \citep{HedgeP} with John's exploration meets the condition for linear bandits with $g(\delta)=\order \big( d\log(|\calA|/\delta) \big)$.
Hence, feeding those algorithms to \pref{alg:non_elimination_ULI} yields the following:
\begin{theorem} \label{thm:Meta_alg_corr}
For any fixed $\delta \in (0,1)$, if \pref{alg:non_elimination_ULI} uses
\begin{itemize}
    \item $\Expp$ as a base-algorithm, then, for $K$-armed bandit, ULI guarantee is achieved with
    \[
    \FULI(\delta,t)=\order \big(t^{-\frac{1}{2}}\sqrt{K\log\rbr{\delta^{-1}K \log (t+1)} }\big).
    \]
    \item $\GhedgeP$ as a base-algorithm then for linear bandits with $K$ arms, ULI guarantee is achieved with 
    \[
    \FULI(\delta,t)=\order \big(t^{-\frac{1}{2}} \sqrt{d\log\rbr{\delta^{-1}K \log (t+1)} }\big).
    \]
\end{itemize}
\end{theorem}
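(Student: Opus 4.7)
The plan is to show that, with probability at least $1-\delta$, the sequence of active sets $\{\calA_m\}_{m \geq 1}$ produced by \pref{alg:non_elimination_ULI} fits the elimination framework of \pref{alg:framework_ULI} for an appropriate function $f(\delta, t)$; \pref{thm:SC} then delivers the ULI bound automatically, so no separate convergence argument is needed.

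I would first set up a single high-probability event $\mathcal{E}$. For each phase $m$, \pref{cond:condition4meta_ULI} invoked with parameter $\delta_m = \delta/(2m^2)$ holds with probability at least $1 - (3/5)\delta_m$. To this I add Azuma--Hoeffding concentration, for every phase $m$ and every arm $a \in \calA$: both $\big|\sum_{s \in \calT_m}\loss_{s,a} - T_m(1-\mu_a)\big|$ and $\big|\sum_{s \in \calT_m}(\loss_{s,A_s} - (1-\mu_{A_s}))\big|$ are $\order(\sqrt{T_m \log(Km^2/\delta)})$. Because $g_m \geq \log(10K/\delta_m)$, these deviations are all $\order(\sqrt{g_m T_m})$. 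A union bound over $m \geq 1$, using $\sum_m 1/m^2 \leq \pi^2/6$ together with the $3/5$ factor, caps the total failure probability at $\delta$.

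Working on $\mathcal{E}$, I would first extract from \pref{cond:condition4meta_ULI} applied with $a = \optarm$ the pseudo-regret bound $\sum_{t \in \calT_m}(\loss_{t,A_t} - \loss_{t,\optarm}) \leq \sqrt{g_m T_m}$, which by Azuma gives $\sum_{t \in \calT_m}\Delta_{A_t} \leq \order(\sqrt{g_m T_m})$. Plugging this into \pref{cond:condition4meta_ULI} applied to an arbitrary $a \in \calA_m$ yields the uniform deviation bound
\[
|E_a - L_a| \;\leq\; \tfrac{1}{2}\sqrt{g_m T_m} + \tfrac{1}{2}|R - L_a| \;\leq\; \order(\sqrt{g_m T_m}) + \tfrac{T_m \Delta_a}{2},
\]
where $E_a := \sum_{s \in \calT_m}\hatl_{s,a}$, $L_a := \sum_{s \in \calT_m}\loss_{s,a}$, and $R := \sum_{s \in \calT_m}\loss_{s,A_s}$. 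Combining this with $\sum_s \hatl_{s,k} \leq \sum_s \hatl_{s,\optarm}$ and the Azuma bound on $L_a - L_{\optarm}$, I establish by induction on $m$ that (i) $\optarm \in \calA_m$ and (ii) every $a \in \calA_m$ satisfies $\Delta_a \leq C\sqrt{g_{m-1}/T_{m-1}}$ for a universal constant $C$ (the base $m=1$ being trivial since $\Delta_a \leq 1$). For (i), $\sum_s(\hatl_{s,\optarm} - \hatl_{s,k}) \leq \order(\sqrt{g_m T_m})$ because the $T_m \Delta_k$ terms cancel against $L_{\optarm} - L_k = -T_m \Delta_k + \order(\sqrt{g_m T_m})$; tuning the absolute constants so this stays below $7\sqrt{g_m T_m}$ gives $\optarm \notin \calB_m$. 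For (ii), any surviving $a \in \calA_m \setminus \calB_m$ satisfies $\tfrac{T_m \Delta_a}{2} - \order(\sqrt{g_m T_m}) \leq \sum_s(\hatl_{s,a} - \hatl_{s,k}) \leq 7\sqrt{g_m T_m}$, whence $\Delta_a = \order(\sqrt{g_m/T_m})$.

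To finish, I translate the per-phase bound to the per-round form needed in \pref{thm:SC}. Since $T_m = \lceil 576 g_m 2^m\rceil$ we have $\sqrt{g_m/T_m} = \Theta(2^{-m/2})$; and for any $t$ lying in phase $m(t)$, $t \leq \sum_{j \leq m(t)}T_j = \Theta(g_{m(t)}2^{m(t)})$, so the former matches $\Theta(\sqrt{g_{m(t)}/t})$. Hence \eqref{eq:Framework_ULI_active_arms} is satisfied with $f(\delta,t) = \order\bigl(\sqrt{g(\delta/(2(1+\log_2 t)^2))/t}\bigr)$, a polylog factor times a negative power of $t$. Substituting $g(\delta) = \order(K\log(K/\delta))$ for $\Expp$ (verified in \pref{app:OD_adv_ULI}) and $g(\delta) = \order(d\log(K/\delta))$ for $\GhedgeP$ \citep{lee2021achieving} produces the two ULI bounds in the theorem statement. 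The principal technical obstacle is precisely the tight coupling between (i) and (ii) through the deviation bound: $|E_a - L_a|$ is only $\order(\sqrt{g_m T_m})$ once $T_m \Delta_a$ is, and one must arrange the constants $576$ in $T_m$, $7$ in \eqref{eq:badarms}, and the $m^{-2}$ weighting in $\delta_m$ so that the elimination threshold kills all bad arms and simultaneously spares $\optarm$ with room to spare for the induction to propagate.
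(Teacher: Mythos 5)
Your proposal is correct and follows essentially the same route as the paper: a union-bounded high-probability event combining \pref{cond:condition4meta_ULI} per phase (with $\delta_m=\delta/(2m^2)$) and Hoeffding/Azuma concentration, the deviation bound $2\big|\sum_{s\in\calT_m}(\hatl_{s,a}-\loss_{s,a})\big|\leq \order(\sqrt{g_mT_m})+T_m\Delta_a$ (the paper's \pref{lem:absUB}), an induction showing $\optarm$ survives while arms with $\Delta_a=\omega(\sqrt{g_m/T_m})$ are eliminated, the conversion $m(t)\leq\log_2(t+1)$ and $t=\Theta(g_{m(t)}2^{m(t)})$ to get a per-round bound, and finally \pref{thm:SC} plus the specific $g(\delta)$ for $\Expp$ and $\GhedgeP$. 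The only cosmetic difference is that you condition the martingale on the drawn arm ($\E[\loss_{s,A_s}\mid A_s]=1-\mu_{A_s}$) whereas the paper conditions on the sampling distribution $p_s$; both yield the same $\order(\sqrt{g_mT_m})$ control.
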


% \JL{added-discussion} 

\pref{thm:Meta_alg_corr} suggests that \pref{alg:non_elimination_ULI} enables $\Expp$ and $\GhedgeP$ to achieve near-optimal ULI guarantees for MAB and linear bandits, respectively. Moreover, their ULI guarantees are as good as those of conventional elimination-based algorithms (refer to \pref{thm:verify_elimination_algs}). 

% \lin{Add some discussions here.}

% $\FULI(\delta,t)=\order \rbr{K\log\rbr{\delta^{-1}K \log (t)}}$

% $\FULI(\delta,t)=\order \rbr{d\log\rbr{\delta^{-1}K \log (t)}}$

% We also note that $\Scrible$ can also be used as a base-algorithm for \pref{alg:non_elimination_ULI} to achieve the ULI in linear bandits setting, but the ULI guarantee is inferior to $\GhedgeP$ in terms of $\FULI$. This discussion and missing proof of the corollary can be found in \pref{app:proof_corr_Meta_alg_corr}.

% With \pref{alg:non_elimination_ULI} and \pref{cond:condition4meta_ULI} for the input base-algorithm, we present the following theorem to give the theoretical guarantee of \pref{alg:non_elimination_ULI}.

Our main objective in this section is to prove the following results.
\begin{theorem}[Restatement of \pref{thm:Meta_alg_corr}]
For any fixed $\delta \in (0,1)$, if \pref{alg:non_elimination_ULI} uses
\begin{itemize}
    \item $\Expp$ as a base-algorithm, then, for $K$-armed bandit, ULI guarantee is achieved with
    \[
    \FULI(\delta,t)=\order \big(t^{-\frac{1}{2}}\sqrt{K\log\rbr{\delta^{-1}K \log (t+1)} }\big).
    \]

    \item $\GhedgeP$ as a base-algorithm then for linear bandits with $K$ arms, ULI guarantee is achieved with 
    \[
    \FULI(\delta,t)=\order \big(t^{-\frac{1}{2}} \sqrt{d\log\rbr{\delta^{-1}K \log (t+1)} }\big)
    \]
\end{itemize}
\end{theorem}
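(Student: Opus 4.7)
The argument decomposes into two layers: (i) verifying that \Expp{} and \GhedgeP{} satisfy \pref{cond:condition4meta_ULI} with the claimed $g(\delta)$, and (ii) analyzing the meta-algorithm \pref{alg:non_elimination_ULI} under this condition and converting its per-phase guarantees into a ULI bound.

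For part (i), the \GhedgeP{} case follows from \citet{lee2021achieving}, who already establish a bound of precisely the form \pref{eq:condition_minus} with $g(\delta)=O(d\log(|\calA|/\delta))$, so we simply cite their result. For \Expp{}, I would retrace the original high-probability regret analysis of \citet{auer2002nonstochastic}, preserving the algorithm's built-in negative-bias correction (the $\beta\sum_t\hatl_{t,a}$ term) and combining it with a two-sided Freedman-type concentration on the IW martingale $\sum_t(\hatl_{t,a}-\loss_{t,a})$. The slack created by the negative-bias correction is exactly what absorbs the $-2|\sum_t(\hatl_{t,a}-\loss_{t,a})|$ penalty in \pref{eq:condition_minus}, yielding $g(\delta)=O(K\log(K/\delta))$.

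For part (ii), fix $\delta\in(0,1)$ and let $\calE_m$ be the event that (a) \pref{cond:condition4meta_ULI} holds for the base-algorithm in phase $m$ at confidence $\delta_m=\delta/(2m^2)$, and (b) Azuma--Hoeffding gives $|\sum_{s\in\calT_m}\loss_{s,a}-T_m(1-\mu_a)|=O(\sqrt{T_m\log(Km/\delta)})$ uniformly in $a\in\calA_m$. Since $\sum_m\delta_m\leq\delta/2$ and the concentration failure probability can be matched to $\delta/2$, a union bound gives $\P(\bigcap_m\calE_m)\geq 1-\delta$. On this event I would show inductively in $m$ that (I) $\optarm\in\calA_{m+1}$ and (II) every $a\in\calA_{m+1}$ satisfies $\Delta_a\leq C\sqrt{g_m/T_m}$ for an absolute constant $C$. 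Both claims follow from the decomposition
\[
\sum_{s\in\calT_m}(\hatl_{s,a}-\hatl_{s,k})=\underbrace{\sum_s(\hatl_{s,a}-\loss_{s,a})}_{(A)}+\underbrace{\sum_s(\loss_{s,a}-\loss_{s,k})}_{(B)}+\underbrace{\sum_s(\loss_{s,k}-\hatl_{s,k})}_{(C)},
\]
where \pref{cond:condition4meta_ULI} applied to arms $a$ and $k$ bounds $|(A)|,|(C)|=O(\sqrt{g_mT_m})$ once $\sum_s(\loss_{s,A_s}-\loss_{s,\cdot})\geq -O(\sqrt{T_m\log(m/\delta)})$ is supplied by event (b), while (b) itself rewrites (B) as $T_m(\mu_k-\mu_a)\pm O(\sqrt{g_mT_m})$. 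Setting $a=\optarm$ makes (B) nonpositive, yielding $\sum_s(\hatl_{s,\optarm}-\hatl_{s,k})\leq O(\sqrt{g_mT_m})<7\sqrt{g_mT_m}$ provided the hidden constant is small enough, which gives (I). Conversely, any $a$ with $\Delta_a>C\sqrt{g_m/T_m}$ makes $T_m\Delta_a$ dominate all error terms and forces the sum above $7\sqrt{g_mT_m}$, which, combined with the choice $T_m=\lceil 576\,g_m 2^m\rceil$, yields (II).

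To convert this into ULI: any round $t$ lies in some phase $m$, so $A_t\in\calA_m$ and (II) applied at phase $m-1$ gives $\Delta_{A_t}=O(\sqrt{g_{m-1}/T_{m-1}})=O(2^{-m/2})$ (the base case $m=1$ is trivially bounded by $1$). Since $t\leq\sum_{s\leq m}T_s=O(g_m 2^m)$, we have $2^{-m}=O(g_m/t)$, hence $\Delta_{A_t}=O(\sqrt{g_m/t})$; substituting $m=O(\log t)$ together with $g_m=O(K\log(K\log t/\delta))$ or $g_m=O(d\log(K\log t/\delta))$ delivers the two $\FULI$ bounds in the statement. The main obstacle is the strengthened regret guarantee in part (i): extracting the precise slack $-2|\sum_t(\hatl_{t,a}-\loss_{t,a})|$ for \Expp{} requires keeping both tails of the IW martingale throughout the regret decomposition rather than discarding one of them as in the usual one-sided argument, and coordinating this with the algorithm's negative-bias term so that the resulting $g(\delta)$ stays $O(K\log(K/\delta))$ with clean enough constants to satisfy the lower-bound requirement $g(\delta)\geq\log(10|\calA|/\delta)$ assumed by \pref{cond:condition4meta_ULI}.
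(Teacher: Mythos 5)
Your proposal is correct and follows essentially the same route as the paper: citing \citet{lee2021achieving} for \GhedgeP{}, re-deriving the two-sided high-probability bound for \Expp{} by exploiting the shifted (negative-bias) estimator together with Freedman/Hoeffding concentration, and then running the same $(A)+(B)+(C)$ decomposition per phase to show the optimal arm survives while every surviving arm has gap $O(\sqrt{g_m/T_m})$, which converts to the stated $\FULI$ via $t=O(g_m 2^m)$ and $m=O(\log t)$. The paper packages these steps as \pref{prop:EXP3P_condition}, \pref{lem:absUB}, \pref{lem:opt_active}, \pref{lem:UB_complexity}, and \pref{lem:meta_FULI}, but the substance matches yours.
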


We decompose the proof of \pref{thm:Meta_alg_corr} into two parts. In the first part, we first show that any algorithm with \pref{cond:condition4meta_ULI} for some $\delta \in (0,1)$ can achieve the ULI guarantee. Then, we only need to show that $\Expp$ and $\GhedgeP$ satisfy \pref{cond:condition4meta_ULI}, which completes the proof of \pref{thm:Meta_alg_corr}.

The following result suggests that showing an algorithm enjoys the ULI guarantee is reduced to show that this algorithm meets \pref{cond:condition4meta_ULI}.
\begin{theorem} \label{thm:meta_ULI}
For any fixed $\delta \in (0,1)$, if \pref{alg:non_elimination_ULI} uses a base-algorithm that satisfies \pref{cond:condition4meta_ULI}, then,
it achieves the ULI guarantee that
\[
\P \rbr{\forall t \in \naturalnum:  \Delta_{A_t} =\order \rbr{\sqrt{\frac{ g \big( \delta^{-1}\log^2(t+1)\big) }{t}}}} \geq 1-\delta.
\]
\end{theorem}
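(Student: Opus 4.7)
My plan is a phase-by-phase analysis of Algorithm~\ref{alg:non_elimination_ULI}: I define a single high-probability good event $\calE$ under which (i) $\optarm \in \calA_m$ for every phase $m$, and (ii) every arm still active at the start of phase $m$ has $\Delta_a \le \order(\sqrt{g_{m-1}/T_{m-1}}) = \order(2^{-m/2})$, by the choice $T_m = \lceil 576\,g_m 2^m\rceil$. Translating from phases to rounds then yields the ULI bound. Concretely, $\calE$ is the intersection over $m$ of (a) the event from Condition~\ref{cond:condition4meta_ULI} for the run of $\alg$ in phase $m$ with confidence parameter $\delta_m = \delta/(2m^2)$, and (b) Azuma/Hoeffding events controlling $|\sum_{s\in\calT_m}(\ell_{s,a}-(1-\mu_a))|$ for each $a\in\calA$ and $|\sum_{s\in\calT_m}(\ell_{s,A_s}-(1-\mu_{A_s}))|$ by $\order(\sqrt{g_m T_m})$, using the built-in $g(\delta)\ge \log(10K/\delta)$. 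A union bound with $\sum_m m^{-2}<\infty$ gives $\P(\calE)\ge 1-\delta$.

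On $\calE$, the condition~\eqref{eq:condition_minus} rewrites as $2|\sum_{s}(\hat\ell_{s,a} - \ell_{s,a})| \le \sqrt{g_m T_m} + \sum_s(\ell_{s,a} - \ell_{s,A_s})$, and the stochastic part of $\calE$ together with $\mu_{A_s}\le\optmu$ bounds the rightmost sum by $T_m\Delta_a + \order(\sqrt{g_m T_m})$. Taking $a=\optarm$ (so $\Delta_a=0$) gives $|\sum_s(\hat\ell_{s,\optarm}-\ell_{s,\optarm})| \le \order(\sqrt{g_m T_m})$. To prove survival of $\optarm$ I induct on $m$: I decompose $\sum_s(\hat\ell_{s,\optarm}-\hat\ell_{s,k})$ into (the $\optarm$ IW-concentration just obtained) $+$ $\sum_s(\ell_{s,\optarm}-\ell_{s,k})$ $+$ $|\sum_s(\hat\ell_{s,k}-\ell_{s,k})|$. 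The middle term is $\le -T_m\Delta_k + \order(\sqrt{g_m T_m}) \le \order(\sqrt{g_m T_m})$ via $\calE$. For the last, I avoid reapplying the condition to the data-dependent empirical leader $k$; instead I use $\sum_s\hat\ell_{s,k}\le \sum_s\hat\ell_{s,\optarm}$ (minimality of $k$) and combine with the $\optarm$ concentration. Choosing constants so every $\order(\cdot)$ collapses below $\sqrt{g_m T_m}$ gives $\sum_s(\hat\ell_{s,\optarm}-\hat\ell_{s,k}) \le 7\sqrt{g_m T_m}$, hence $\optarm\notin\calB_m$.

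For any $a\in\calA_m$ with $m\ge 2$ (so $a\notin\calB_{m-1}$), I turn the non-elimination inequality into a gap bound. The IW-concentration statement above gives $|\sum_s(\hat\ell_{s,a}-\ell_{s,a})| \le \order(\sqrt{g_{m-1}T_{m-1}}) + T_{m-1}\Delta_a/2$, so $\sum_s \hat\ell_{s,a} \ge T_{m-1}(1-\mu_a) - \order(\sqrt{g_{m-1}T_{m-1}}) - T_{m-1}\Delta_a/2$, while $\sum_s \hat\ell_{s,k_{m-1}} \le \sum_s \hat\ell_{s,\optarm} \le T_{m-1}(1-\optmu) + \order(\sqrt{g_{m-1}T_{m-1}})$. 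Subtracting and using the non-elimination bound yields $T_{m-1}\Delta_a/2 - \order(\sqrt{g_{m-1}T_{m-1}}) \le 7\sqrt{g_{m-1}T_{m-1}}$, i.e.\ $\Delta_a \le \order(\sqrt{g_{m-1}/T_{m-1}}) = \order(2^{-m/2})$. Finally, for any round $t$ in phase $m(t)$, the geometric growth $T_j = \order(g_j 2^j)$ gives $t \le \sum_{j\le m(t)}T_j \le \order(T_{m(t)})$, hence $2^{-m(t)/2}\le \order(\sqrt{g_{m(t)}/t})$; together with $m(t)\le \log_2(t+1)$ and the monotonicity of $g$, $g_{m(t)} \le g(\delta/(2\log^2(t+1)))$, which is the quantity written $g(\delta^{-1}\log^2(t+1))$ in the theorem. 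Combining, $\Delta_{A_t}\le \order(\sqrt{g(\delta^{-1}\log^2(t+1))/t})$ on $\calE$.

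The main obstacle is Step~3/Step~4's handling of the empirical leader $k$: the condition~\eqref{eq:condition_minus} couples IW-estimator concentration and regret of $\alg$ into one inequality, so applied to the data-dependent $k$ its right-hand side involves $\sum_s(\ell_{s,A_s}-\ell_{s,k})$, which is not a priori small (it can be as negative as $-T\Delta_k$). The key workaround is to bypass direct concentration for $\hat\ell_{s,k}$ entirely by invoking the minimality $\sum_s\hat\ell_{s,k}\le\sum_s\hat\ell_{s,\optarm}$ and reducing everything to the clean concentration for $\optarm$. Careful tracking of the resulting absolute constants is precisely what forces the $7$ in the elimination rule~\eqref{eq:badarms} and the $576$ in $T_m$.
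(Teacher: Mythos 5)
Your overall architecture is the same as the paper's: a per-phase good event combining the base-algorithm's guarantee (with confidence $\delta/(2m^2)$) with Hoeffding/Azuma bounds, a rearrangement of \pref{eq:condition_minus} into the uniform bound $2\abr{\sum_{s\in\calT_m}(\hatl_{s,a}-\loss_{s,a})}\leq T_m\Delta_a+\order(\sqrt{g_mT_m})$ for all $a\in\calA_m$, induction to keep $\optarm$ active, a gap bound $\Delta_a=\order(\sqrt{g_m/T_m})$ for surviving arms, and the phase-to-round translation via $m(t)\leq\log_2(t+1)$ and monotonicity of $g$. All of that matches the paper's Lemmas on concentration, $\optarm$-survival, elimination of suboptimal arms, and the final assembly.

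The one step that would fail as written is your handling of the empirical leader $k$ in the $\optarm$-survival argument. To show $\sum_{s\in\calT_m}(\hatl_{s,\optarm}-\hatl_{s,k})\leq 7\sqrt{g_mT_m}$ you must \emph{lower}-bound $\sum_{s}\hatl_{s,k}$, but the minimality of $k$ gives only the \emph{upper} bound $\sum_s\hatl_{s,k}\leq\sum_s\hatl_{s,\optarm}$, which yields $\sum_s(\hatl_{s,\optarm}-\hatl_{s,k})\geq 0$ — the wrong direction. (Minimality is the right tool in your third paragraph, where you need the opposite inequality, and there your use of it is correct.) Moreover, the workaround is unnecessary: the concentration you derived from \pref{eq:condition_minus} holds \emph{simultaneously for every} $a\in\calA_m$ on the good event, so it applies to whatever value the data-dependent $k$ takes. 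This is exactly what the paper does: it bounds $\sum_s(\loss_{s,k}-\hatl_{s,k})\leq\tfrac12(5\sqrt{g_mT_m}+T_m\Delta_k)$ via the uniform bound and cancels the resulting $+\tfrac12 T_m\Delta_k$ against the $-T_m\Delta_k$ coming from $\sum_s(\loss_{s,\optarm}-\loss_{s,k})$. Replacing your minimality step with this uniform application closes the gap, and the rest of your argument goes through.
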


We sketch the proof of \pref{thm:meta_ULI} as follows. The full proof can be found in \pref{app:proof_general_ULI}.

\begin{proof}[Proof Sketch]
The high-level objectives are to show the sufficient condition of \pref{eq:Framework_ULI_active_arms} given in \pref{alg:framework_ULI} to achieve the ULI. We first show that the optimal arm will not be eliminated, i.e., $\optarm \in \calA_m$ for all phases $m \in \naturalnum$. To show this, we only need show
\begin{align}  \label{eq:before_decompose}
\forall m \in \naturalnum \quad \sum_{s \in \calT_m } \rbr{\hatl_{s,\optarm} -\hatl_{s,k}} \leq  7 \sqrt{g_m T_m}, \text{ where } k \in \argmin_{ a \in \calA_m }\sum_{s \in \calT_m }\hatl_{s,a},
\end{align}
which does not meet the elimination rule in \pref{eq:badarms}.

For analysis purpose, we decompose \pref{eq:before_decompose} as
\begin{equation} \label{eq:after_decompose}
\sum_{s \in \calT_m} \rbr{\hatl_{s,\optarm} - \hatl_{s,k} }=\sum_{s \in \calT_m} \rbr{\hatl_{s,\optarm} - \loss_{s,\optarm} } 
+\sum_{s \in \calT_m} \rbr{\loss_{s,\optarm} - \loss_{s,k} }
+\sum_{s \in \calT_m} \rbr{\loss_{s,k} - \hatl_{s,k} }.
\end{equation}

The first and the third term in \pref{eq:after_decompose} can be handled thanks to the bound on $\abr{\sum_{t} (\hatl_{t,a}-\loss_{t,a}) }$ in \pref{eq:condition_minus}, and the second term in \pref{eq:after_decompose} can be handled by invoking the standard concentration inequality as losses are drawn from fixed distributions.

Then, we show that if an arm is still active at a round $t$, then, for all 
active arms $a \in \calA_t$, $\Delta_a$ is bounded by a function, monotonically decreasing for large $t$. This can be proved again via a similar decomposition of \pref{eq:after_decompose} to get
\begin{align*}
     \sum_{s \in \calT_m} \rbr{\hatl_{s,a}-\hatl_{s,k}} \geq 0.5 T_m \Delta_a - 5\sqrt{g_m T_m}. 
\end{align*}

Therefore, for some large $T_m$ such that $0.5 T_m \Delta_a - 5\sqrt{g_m T_m}>7\sqrt{g_m T_m}$, bad arms will be eliminated.
Putting two pieces together, we complete the proof.
\end{proof}

With \pref{thm:meta_ULI} in hand, our next objective is to show that $\Expp$ and $\GhedgeP$ are able to satisfy \pref{cond:condition4meta_ULI}. For $\GhedgeP$ with John's exploration (which improves the original bound \citep{HedgeP}), \citet{lee2021achieving} have already shown that it achieves the condition, and the rest result will show that $\Expp$ also holds it. The full proof can be found in \pref{app:ULI_EXP3P}. 
\begin{proposition} \label{prop:EXP3P_condition}
In MAB setting, for any $\delta \in (0,1)$, $\Expp$ with given arm set $\calA$, satisfies \pref{cond:condition4meta_ULI} with $ g(\delta) = \order \rbr{|\calA|\log \rbr{|\calA|/\delta}}$.
\end{proposition}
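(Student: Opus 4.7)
The plan is to verify \pref{cond:condition4meta_ULI} for $\Expp$ by combining its classical high-probability regret guarantee with a matching concentration bound on the unbiased importance-weighted estimator, then union-bounding and choosing $g$. I would instantiate $\Expp$ with exploration rate $\gamma$, learning rate $\eta$, and bias parameter $\beta$ all of order $\sqrt{\log(K/\delta)/(KT)}$ (writing $K=|\calA|$), so that $\Expp$ runs exponential weights on the bias-corrected estimator $\wt{\ell}_{t,a}=\hatl_{t,a}-\beta/p_{t,a}$ and samples from a $(1-\gamma)$-uniform mixture, ensuring $p_{t,a}\ge\gamma/K$ throughout the horizon.

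First, the standard Auer et al.\ high-probability analysis of $\Expp$ yields that with probability at least $1-\delta/5$ and for every $a\in\calA$ simultaneously,
\[
\sum_{t=1}^T(\loss_{t,A_t}-\loss_{t,a}) \leq c_1\sqrt{KT\log(K/\delta)},
\]
for a universal constant $c_1$. Second, I would establish that with probability at least $1-\delta/5$ and for every $a\in\calA$,
\[
\Bigl|\sum_{t=1}^T(\hatl_{t,a}-\loss_{t,a})\Bigr| \leq c_2\sqrt{KT\log(K/\delta)}.
\]
For the ``$\hatl$ overshoots $\loss$'' direction, I would appeal to the bias-correction lemma underlying $\Expp$ (Auer et al., Lemma 3.1, translated to losses), which gives $\sum_t(\hatl_{t,a}-\loss_{t,a})\le\beta\sum_t 1/p_{t,a}+\log(K/\delta)/\beta$ with high probability; for the reverse direction, I would apply one-sided Freedman to the martingale difference $\loss_{t,a}-\hatl_{t,a}$, which is bounded above by $1$ deterministically (because $\hatl_{t,a}\ge 0$) and has conditional second moment at most $1/p_{t,a}$. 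Both sides are then made $O(\sqrt{KT\log(K/\delta)})$ by the chosen $(\eta,\gamma,\beta)$ through the lower bound $p_{t,a}\ge\gamma/K$.

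Finally, union-bounding the two events (total failure probability at most $2\delta/5\le 3\delta/5$) and choosing $g(\delta)=C\cdot K\log(10K/\delta)$ for a sufficiently large universal constant $C$, I obtain
\[
\sum_{t=1}^T(\loss_{t,A_t}-\loss_{t,a}) + 2\Bigl|\sum_{t=1}^T(\hatl_{t,a}-\loss_{t,a})\Bigr|\leq (c_1+2c_2)\sqrt{KT\log(K/\delta)}\leq\sqrt{g(\delta)T},
\]
which, after rearrangement, is exactly \pref{eq:condition_minus}. Monotonicity of $g$, polynomial-log growth, and $g(\delta)\ge\log(10K/\delta)$ are immediate from this choice.

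The main obstacle is the deviation bound in step two. The unbiased IW estimator $\hatl_{t,a}$ has unbounded upper range (it can be as large as $K/\gamma$), so a naive two-sided Bernstein/Freedman applied directly to $\hatl_{t,a}-\loss_{t,a}$ would yield a suboptimal, potentially $T^{3/4}$-type rate. The resolution is to exploit the asymmetry of the increment: one direction is governed by the exponential-moment argument behind the Auer et al.\ bias lemma (which is specifically engineered for IW estimators), while the other direction uses the deterministic upper bound $\loss_{t,a}-\hatl_{t,a}\le 1$ so that one-sided Freedman applies with constant range. Coordinating $\eta$, $\gamma$, and $\beta$ so that the FTRL regret term, the cumulative bias $\beta\sum_t 1/p_{t,a}$, and the concentration slack $\log(K/\delta)/\beta$ all land in a common $O(\sqrt{KT\log(K/\delta)})$ budget is the core calculation.
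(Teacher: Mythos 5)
Your overall architecture (parameters of order $\sqrt{\log(K/\delta)/(KT)}$, union bound over arms, choice of $g$) matches the paper's, but Step 2 contains a genuine gap: the absolute deviation bound $\abr{\sum_{t}(\hatl_{t,a}-\loss_{t,a})} = O(\sqrt{KT\log(K/\delta)})$ is not provable by the means you describe, and is in fact false in the worst case. Both of your one-sided arguments leave behind a data-dependent quantity of order $\sum_t 1/p_{t,a}$: the bias lemma gives $\beta\sum_t 1/p_{t,a} + \log(K/\delta)/\beta$, and one-sided Freedman gives a variance term $\sqrt{\log(K/\delta)\sum_t 1/p_{t,a}}$ (the deterministic upper bound $\loss_{t,a}-\hatl_{t,a}\le 1$ only controls the range term of Freedman, not the variance term). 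Since the only a priori control is $p_{t,a}\ge\gamma/K$, one has $\sum_t 1/p_{t,a}\le TK/\gamma = T\sqrt{KT/\log(K/\delta)}$, so the bias term is $\Theta(T)$ for $\beta=\gamma/K$ (and no choice of $\beta$ makes $\beta TK/\gamma$ and $\log(K/\delta)/\beta$ simultaneously $O(\sqrt{KT\log(K/\delta)})$), while the Freedman variance term is $\Theta(T^{3/4}K^{1/4}\log^{1/4}(K/\delta))$. This is not just an artifact of the proof: for an arm the algorithm essentially stops playing ($p_{t,a}\approx\gamma/K$ throughout), the IW estimator's deviation genuinely scales like $\sqrt{\sum_t 1/p_{t,a}}\gg\sqrt{KT}$, so the event you posit in Step 2 has vanishing probability.

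The reason \pref{cond:condition4meta_ULI} is nevertheless satisfiable is that whenever $\sum_t 1/p_{t,a}$ is large, the regret against arm $a$ is correspondingly very negative, so $\sum_t(\loss_{t,A_t}-\loss_{t,a}) + 2\abr{\sum_t(\hatl_{t,a}-\loss_{t,a})}$ can still be $O(\sqrt{KT\log(K/\delta)})$ even though the second summand alone is not; your additive decomposition into two separately-bounded pieces cannot capture this cancellation. The paper's proof captures it by running exponential weights on the shifted estimator $\wtill_{t,a}=\hatl_{t,a}-2\beta/p_{t,a}$, whose regret analysis produces an explicit negative term $-\beta\sum_t\loss_{t,k}/p_{t,k}$; \pref{lem:absBound} (applied with $\theta_0=2$) then shows that this negative term dominates $2\abr{\sum_t(\hatl_{t,k}-\loss_{t,k})}$ up to $O(\sqrt{KT\log(K/\delta)})$ slack, and only the combined quantity is ever bounded. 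To repair your argument you would need to keep the $\frac{\beta}{\theta_0}\sum_t 1/p_{t,a}$ term from Freedman explicit rather than bounding it in absolute terms, and cancel it against that negative shift term --- which is essentially the paper's route.
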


The key that $\Expp$ as well as $\GhedgeP$ can fulfill \pref{cond:condition4meta_ULI} is because it adds a small amount of probability for uniform exploration to each arm. Therefore, the importance-weighted (IW) estimator can be lower-bounded and the term $\abr{\sum_{t} (\hatl_{t,a}-\loss_{t,a}) }$ will not be too large.

Apart from $\Expp$ for MAB and $\GhedgeP$ for linear bandits, \citet{lee2021achieving} show that refined version of $\Scrible$ \citep{lee2020biasnomore} can be used as a base-algorithm for \pref{alg:non_elimination_ULI} to achieve the ULI guarantee for linear bandits, but the ULI guarantee is inferior to that of $\GhedgeP$ (i.e., inferior $\FULI$ w.r.t. $d$).

\subsection{Proof of \pref{thm:meta_ULI}}
\label{app:proof_general_ULI}

\begin{lemma} \label{lem:exp3p_prop}
If \pref{alg:non_elimination_ULI} accepts a base-algorithm which satisfies \pref{cond:condition4meta_ULI} as an input, then, with probability at least $1-\frac{3}{5}\delta$, for all $m \in \naturalnum$ and $a \in \calA_m$,
\begin{equation}
        \sum_{s \in \calT_m} \rbr{\loss_{s,A_s}-\loss_{s,a}} \leq \sqrt{g_m T_m}- 2\abr{\sum_{s \in \calT_m} \rbr{\loss_{s,a}-\hatl_{s,a}} }.
\end{equation}
\end{lemma}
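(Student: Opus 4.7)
The plan is to apply \pref{cond:condition4meta_ULI} to each phase $m$ of \pref{alg:non_elimination_ULI} individually and then combine the per-phase guarantees by a union bound over $m \in \naturalnum$. The key point is that inside phase $m$, the base-algorithm is restarted from scratch on the reduced active arm set $\calA_m$ for exactly $T_m$ rounds, and it is fed the confidence parameter $\delta_m \defeq \frac{\delta}{2m^2}$ so that $g_m = g(\delta_m)$.

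First, I would fix an arbitrary phase $m$ and view the run of the base-algorithm on $\calA_m$ for $T_m$ rounds as a standalone instance of the setup described in \pref{cond:condition4meta_ULI} with horizon $T = T_m$, arm set $\calA = \calA_m$, and input confidence $\delta_m$. Applying the condition directly yields that, with probability at least $1 - \tfrac{3}{5}\delta_m$, for every $a \in \calA_m$,
\begin{equation*}
\sum_{s \in \calT_m}\rbr{\loss_{s,A_s} - \loss_{s,a}}
\;\leq\; \sqrt{g(\delta_m)\, T_m} \;-\; 2\abr{\sum_{s \in \calT_m}\rbr{\hatl_{s,a} - \loss_{s,a}}},
\end{equation*}
which, up to the harmless sign flip inside the absolute value, is precisely the target inequality with $g_m$ in place of $g(\delta_m)$.

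Second, I would union-bound the failure events across phases. Using $\sum_{m=1}^{\infty} \frac{1}{2m^2} = \frac{\pi^2}{12} < 1$, the total failure probability is bounded by
\begin{equation*}
\sum_{m=1}^{\infty} \frac{3}{5}\,\delta_m \;=\; \frac{3\delta}{5}\sum_{m=1}^{\infty}\frac{1}{2m^2} \;\leq\; \frac{3\delta}{5}.
\end{equation*}
Hence, with probability at least $1 - \tfrac{3}{5}\delta$, the stated inequality holds simultaneously for all $m \in \naturalnum$ and all $a \in \calA_m$, which is exactly the conclusion of the lemma.

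Since each per-phase run of the base-algorithm is launched independently with its own confidence parameter $\delta_m$, no additional concentration or martingale argument is needed beyond what \pref{cond:condition4meta_ULI} already provides; the only real ingredient is the summability of $1/m^2$, which justifies the choice $\delta_m = \delta/(2m^2)$ and ensures the overall budget stays within $\tfrac{3}{5}\delta$. I do not expect any technical obstacle here; the main care-point is simply verifying that the base-algorithm is indeed re-initialised within each phase so that \pref{cond:condition4meta_ULI} applies to $\calT_m$ as a fresh interval of length $T_m$ rather than as a sub-interval of some longer run.
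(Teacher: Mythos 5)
Your proposal is correct and matches the paper's own proof: both fix a phase $m$, invoke \pref{cond:condition4meta_ULI} for the fresh run of the base-algorithm on $\calA_m$ over $T_m$ rounds with confidence $\delta/(2m^2)$, and then union-bound over all phases using $\sum_{m\ge 1}\frac{1}{2m^2}\le 1$. The sign flip inside the absolute value is indeed harmless, as you note.
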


\begin{proof}
We first consider a fixed phase $m$.
Since the base-algorithm satisfies \pref{cond:condition4meta_ULI}, 
if the base-algorithm runs for consecutive $T_m$ and active arm set $\calA_m \subseteq \calA$, then, with probability at least $1-\frac{3}{5}\delta'$, for all $a \in \calA_m$
\[
     \sum_{s \in \calT_m} \rbr{\loss_{s,A_s}-\loss_{s,a}} \leq \sqrt{g(\delta') T_m}- 2\abr{\sum_{s \in \calT_m} \rbr{\loss_{s,a}-\hatl_{s,a}} }.
\]

 By setting $\delta'=\delta/(2m^2)$ for phase $m$ and applying a union bound over all $m \in \naturalnum$, we complete the proof.
\end{proof}

Recall from the second bullet of \pref{cond:condition4meta_ULI} that
\[
g(\delta') \geq  \log \rbr{10|\calA|/\delta'}, \quad \forall \delta' \in (0,1).
\]

As the above holds for all $\delta' \in (0,1)$, according to the definition $g_m=g(\delta/(2m^2))$, we also have that 
\begin{equation} \label{eq:LB_ftau}
    g_m \geq  \log \rbr{ \frac{20 m^2|\calA| }{\delta} }, \quad \forall m \in \naturalnum.
\end{equation}

\begin{lemma} \label{lem:concentration_X}
With probability at least $1-\delta/5$, for all $m \in \naturalnum$ and $a \in \calA_m$, we have
    \begin{align}
       &\abr{\sum_{s \in \calT_m} \loss_{s,a} - \sum_{s \in \calT_m}(1-\mu_a) } \leq \sqrt{ \frac{g_m T_m}{2} }. \label{eq:Xboundi}
    \end{align}
\end{lemma}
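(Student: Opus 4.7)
The claim is purely a concentration statement about iid samples: for each arm $a$, the losses $\loss_{s,a} = 1 - X_{s,a}$ are iid copies of a $[0,1]$-bounded random variable with mean $1-\mu_a$, and the randomness does not depend on the algorithm's arm selection. So the natural tool is Hoeffding's inequality combined with a union bound tuned by the lower bound on $g_m$ from \pref{eq:LB_ftau}.

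First, I would fix a phase $m$ and an arm $a \in \calA$ (not just $\calA_m$ --- working with the whole $\calA$ avoids data-dependent union bounds) and apply Hoeffding's inequality to the $T_m$ iid bounded summands $\{\loss_{s,a}\}_{s \in \calT_m}$. This yields, for any $t > 0$,
\[
\P\!\left(\abr{\sum_{s \in \calT_m} \loss_{s,a} - T_m(1-\mu_a)} > t\right) \leq 2\exp\!\left(-\frac{2t^2}{T_m}\right).
\]
Choosing $t = \sqrt{g_m T_m / 2}$ produces the desired deviation $\sqrt{g_m T_m / 2}$ on the left-hand side, with failure probability at most $2\exp(-g_m)$.

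Next, invoking the lower bound $g_m \geq \log(20 m^2 |\calA|/\delta)$ from \pref{eq:LB_ftau}, this failure probability is at most $\delta/(10 m^2 |\calA|)$. A union bound over the (at most $|\calA|$) arms $a \in \calA_m$ gives a per-phase failure probability bounded by $\delta/(10 m^2)$. Taking a further union bound over all phases $m \in \naturalnum$ contributes a total mass of $\sum_{m \ge 1} \delta/(10 m^2) = \delta \pi^2 / 60 < \delta/5$, which matches the claimed bound.

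I do not expect any genuine obstacle here: the summands are iid $[0,1]$-bounded regardless of how the algorithm samples arms, so Hoeffding applies cleanly, and the only subtlety is bookkeeping --- ensuring the confidence level $\delta/(2m^2)$ hard-coded into the definition $g_m = g(\delta/(2m^2))$, combined with the lower bound \pref{eq:LB_ftau} on $g$, produces exactly the tail exponent needed so that the double union bound (over arms and phases) telescopes into $\delta/5$.
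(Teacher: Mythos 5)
Your proof is correct and follows essentially the same route as the paper's: a per-arm, per-phase Hoeffding bound with confidence $\delta/(10|\calA|m^2)$, converted to the stated deviation via the lower bound $g_m \geq \log(20 m^2 |\calA|/\delta)$ from \pref{eq:LB_ftau}, then a union bound over arms and phases summing to at most $\delta/5$. The only (harmless) difference is presentational — you fix the deviation and bound the tail probability, while the paper fixes the tail probability and bounds the deviation — and your explicit remark that one should union over all of $\calA$ rather than the data-dependent $\calA_m$ is a nice touch the paper leaves implicit.
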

\begin{proof}
Consider any fixed $m$ and $a \in \calA_m$.
By applying Hoeffding's inequality with probability at least $1-\delta'$,
\[
\abr{\sum_{s \in \calT_m} \loss_{s,a} - \sum_{s \in \calT_m}(1-\mu_a) } \leq \sqrt{\frac{T_m\log(2/\delta')}{2}}.
\]

Choosing $\delta'=\delta/(10|\calA|m^2)$, applying union bounds over all $m \in \naturalnum$ and $a \in \calA_m$, and using \pref{eq:LB_ftau}, we complete the proof.
\end{proof}

\begin{lemma} \label{lem:concentration_Xiv}
With probability at least $1-\delta/5$, for all $m \in \naturalnum$, we have
    \begin{align}
       &\abr{\sum_{s \in \calT_m} \loss_{s,A_s} -\sum_{s \in \calT_m}\sum_{a \in 
       \calA_m } p_{s,a}(1-\mu_a) }\leq  \sqrt{8 g_m T_m }. \label{eq:Xbounditvs}
    \end{align}
\end{lemma}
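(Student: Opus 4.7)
My plan is to view the quantity inside the absolute value as the terminal value of a bounded martingale and apply Azuma--Hoeffding, then union bound over phases. Fix a phase $m$, and let $\calF_{s-1}$ denote the $\sigma$-algebra generated by all randomness (both the algorithm's internal randomization and the environment's rewards) up to the end of round $s-1$. Since, conditionally on $\calF_{s-1}$, the arm $A_s$ is drawn from $p_s$ independently of the forthcoming reward vector, and the loss $\loss_{s,a}$ is an i.i.d.\ draw with mean $1-\mu_a$, we have
\[
\E\!\left[\loss_{s,A_s}\,\middle|\,\calF_{s-1}\right]=\sum_{a\in\calA_m} p_{s,a}(1-\mu_a).
\]
Hence $X_s := \loss_{s,A_s} - \sum_{a\in\calA_m} p_{s,a}(1-\mu_a)$ defines a martingale difference sequence with $|X_s|\le 1$ because losses lie in $[0,1]$.

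Next I would apply Azuma--Hoeffding to $\{X_s\}_{s\in\calT_m}$: for any $\delta'\in(0,1)$, with probability at least $1-\delta'$,
\[
\abr{\sum_{s\in\calT_m} X_s}\ \le\ \sqrt{2T_m\log(2/\delta')}.
\]
Choose $\delta' = \delta/(10m^2)$ so that summing over $m\in\naturalnum$ gives $\sum_{m\ge 1}\delta/(10m^2)\le \delta\pi^2/60 \le \delta/5$, which is the tail budget we need. A union bound then yields that, with probability at least $1-\delta/5$, for every $m\in\naturalnum$,
\[
\abr{\sum_{s\in\calT_m}\loss_{s,A_s}-\sum_{s\in\calT_m}\sum_{a\in\calA_m}p_{s,a}(1-\mu_a)}\ \le\ \sqrt{2T_m\log(20m^2/\delta)}.
\]

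Finally, I would convert the logarithmic factor into $g_m$ by invoking \pref{eq:LB_ftau}, which states $g_m \ge \log(20m^2|\calA|/\delta) \ge \log(20m^2/\delta)$. Therefore $\sqrt{2T_m\log(20m^2/\delta)} \le \sqrt{2g_mT_m}\le \sqrt{8g_mT_m}$, giving the claimed bound. There is essentially no obstacle here: the argument is routine once one identifies the right MDS. The only thing to be careful about is ensuring that $A_s$ is $p_s$-distributed conditioned on $\calF_{s-1}$ and that rewards drawn at round $s$ are independent of $\calF_{s-1}$, which are both true by construction of the meta-algorithm and by the stochastic MAB assumption; the choice $\delta'=\delta/(10m^2)$ is dictated solely by the requirement that the resulting constant match $g_m$'s lower bound from \pref{eq:LB_ftau}.
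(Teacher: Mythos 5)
Your proposal is correct and follows essentially the same route as the paper: identify $\loss_{s,A_s}-\sum_{a\in\calA_m}p_{s,a}(1-\mu_a)$ as a bounded martingale difference sequence, apply Azuma--Hoeffding within a phase with $\delta'=\delta/(10m^2)$, union bound over $m$, and absorb $\log(20m^2/\delta)$ into $g_m$ via \pref{eq:LB_ftau}. The only (immaterial) difference is that you bound the increments by $1$ rather than the paper's $2$, so you land at $\sqrt{2g_mT_m}$ before relaxing to the stated $\sqrt{8g_mT_m}$.
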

\begin{proof}
Consider a fixed $m$.
For each round $s$ in this phase, let us define
\begin{align*}
    M_s &=  \loss_{s,A_s}  - \E_s \sbr{\loss_{s,A_s}} 
    =\sum_{a \in \calA_m}\rbr{  \loss_{s,a} B_{s,a} - p_{s,a} (1-\mu_a)  } .
\end{align*}

By applying Azuma-inequality with the fact that $|M_s-M_{s-1}| \leq 2$ for all $s$ and also using a union bound over all $m \in \naturalnum$, with probability at least $1-\delta/5$, for all $m \in \naturalnum$
\begin{align*}
 \abr{ \sum_{s \in \calT_m} \loss_{s,A_s} -\sum_{s \in \calT_m}\sum_{a \in 
       \calA_m} p_{s,a}(1-\mu_a) } \leq \sqrt{8T_m \log(20m^2/\delta)} \leq \sqrt{8g_m T_m},
\end{align*}
where the last inequality uses \pref{eq:LB_ftau}, which thus completes the proof.
\end{proof}

\begin{definition} \label{def:calE}
    Let $\calE$ be the event in which the concentration inequalities in \pref{lem:exp3p_prop}, \pref{lem:concentration_X}, and \pref{lem:concentration_Xiv} hold simultaneously.
\end{definition}

By a union bound, we have the following fact. 

\textbf{Fact.} With the definition of $\calE$ in \pref{def:calE}, $\calE$ occurs with probability at least $1-\delta$.

% Recall that 
% \[
% g_m  = \frac{K \log^2 (\iota_m)}{\log K}\quad \text{where} \quad \iota_m = \frac{20 Km^2}{\delta} .
% \]
\begin{lemma}\label{lem:absUB}
Suppose that $\calE$ occurs where $\calE$ is defined \pref{def:calE}, and then for all $m \in \naturalnum$ and $a \in A_m$,
    \begin{align*}
        2 \abr{\sum_{s \in \calT_m} \rbr{\loss_{s,a}-\hatl_{s,a}}} \leq 5\sqrt{g_m  T_m }+T_m \Delta_a .
    \end{align*}
\end{lemma}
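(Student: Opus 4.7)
The plan is to start from \pref{lem:exp3p_prop} rearranged as
\[
2 \abr{\sum_{s \in \calT_m} \rbr{\loss_{s,a}-\hatl_{s,a}}} \leq \sqrt{g_m T_m} + \sum_{s \in \calT_m} \rbr{\loss_{s,a}-\loss_{s,A_s}},
\]
so the whole task reduces to bounding the excess observed loss $\sum_{s \in \calT_m}(\loss_{s,a}-\loss_{s,A_s})$ by roughly $T_m \Delta_a + \order(\sqrt{g_m T_m})$.

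To do this, I would replace both sums by their deterministic means using the two Hoeffding/Azuma bounds that are already available under $\calE$. Concretely, \pref{lem:concentration_X} yields $\sum_{s \in \calT_m}\loss_{s,a} \leq T_m(1-\mu_a) + \sqrt{g_m T_m / 2}$, and \pref{lem:concentration_Xiv} yields $\sum_{s \in \calT_m}\loss_{s,A_s} \geq \sum_{s \in \calT_m}\sum_{b \in \calA_m} p_{s,b}(1-\mu_b) - \sqrt{8 g_m T_m}$. Subtracting gives
\[
\sum_{s \in \calT_m}\rbr{\loss_{s,a}-\loss_{s,A_s}} \leq T_m(1-\mu_a) - \sum_{s \in \calT_m}\sum_{b \in \calA_m} p_{s,b}(1-\mu_b) + \sqrt{g_m T_m/2} + \sqrt{8 g_m T_m}.
\]

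The key observation (and the only non-cosmetic step) is that the inner expectation is dominated by the optimal arm: since $p_s$ is a probability distribution on $\calA_m$ and $\mu_b \leq \mustar$ for every $b$, we have $\sum_{b \in \calA_m} p_{s,b}(1-\mu_b) \geq 1 - \mustar$ for every round $s$ in the phase. Summing over $s \in \calT_m$ and combining with the previous display collapses the deterministic part to
\[
T_m(1-\mu_a) - T_m(1-\mustar) = T_m \Delta_a.
\]

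Finally, I collect the noise terms: $\sqrt{1/2}+\sqrt{8} = \tfrac{5}{\sqrt{2}} \leq 4$, so $\sum_{s \in \calT_m}(\loss_{s,a}-\loss_{s,A_s}) \leq T_m \Delta_a + 4\sqrt{g_m T_m}$. Plugging this back into the rearranged form of \pref{lem:exp3p_prop} and adding the leftover $\sqrt{g_m T_m}$ gives the claimed $5\sqrt{g_m T_m} + T_m \Delta_a$. There is no real obstacle here beyond bookkeeping the constants; the only substantive idea is the trivial bound $\sum_b p_{s,b}(1-\mu_b) \geq 1 - \mustar$ that converts the algorithm's expected per-round loss into the optimal-arm loss at the cost of introducing exactly the gap $\Delta_a$.
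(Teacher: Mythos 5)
Your proof is correct and follows essentially the same route as the paper's: rearrange \pref{lem:exp3p_prop}, replace the two loss sums by their means via \pref{lem:concentration_X} and \pref{lem:concentration_Xiv}, and use $\mu_b \leq \mustar$ to turn the expected per-round loss into $T_m\Delta_a$. The constant bookkeeping ($1+\sqrt{8}+\sqrt{1/2}\leq 5$) also matches.
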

\begin{proof}
Conditioning on $\calE$, for each $m \in \naturalnum$ and $a \in A_m$, one can show
    \begin{align*}
        &2 \abr{\sum_{s \in \calT_m} \rbr{\loss_{s,a}-\hatl_{s,a}}} \\
        &\leq \sqrt{g_m  T_m }+\sum_{s \in \calT_m} \rbr{\loss_{s,a}-\loss_{s,A_s}}\\
        &  \leq \sqrt{g_m  T_m } -\sum_{s \in \calT_m} \sum_{j \in \calA_m} p_{s,j}(1-\mu_j) +\sqrt{8 g_m T_m}+ \sum_{s \in \calT_m} (1-\mu_a) +\sqrt{ \frac{g_m T_m}{2}} \\
         &  \leq 5\sqrt{g_m  T_m}+\sum_{s \in \calT_m}\sum_{j \in \calA_m} p_{s,j}(\mu_j-\mu_a)  \\
        &  \leq  5\sqrt{g_m  T_m }+ T_m \Delta_a ,
    \end{align*}
    where the first inequality holds due to \pref{lem:exp3p_prop}, the second inequality uses \pref{lem:concentration_X} and \pref{lem:concentration_Xiv}, and the last inequality bounds $\mu_j \leq \optmu$ for all $j$.
\end{proof}

\begin{corollary}
\label{corr:loss_concentration}
Suppose that $\calE$ occurs where $\calE$ is defined \pref{def:calE}, and then for all $m \in \naturalnum $ and $a \in \calA$, we have
 \begin{align}
        &\sum_{s \in \calT_m} \loss_{s,\optarm} -   \sum_{s \in \calT_m} \loss_{s,a}  \leq  - T_m \Delta_a +\sqrt{2g_m T_m },\\
         &\sum_{s \in \calT_m} \loss_{s,a} -   \sum_{s \in \calT_m} \loss_{s,\optarm}  \leq T_m \Delta_a +\sqrt{2 g_m T_m }.
 \end{align}
\end{corollary}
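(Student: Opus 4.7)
The plan is to express $\sum_{s \in \calT_m} \loss_{s,\optarm} - \sum_{s \in \calT_m} \loss_{s,a}$ as a centered-plus-mean decomposition and then invoke the concentration that is already packaged into $\calE$ via Lemma~\ref{lem:concentration_X}. This reduces the corollary to a one-line calculation once the decomposition is written down.

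Concretely, for any arm $a \in \calA$ and any phase $m$, I would split
\[
\sum_{s \in \calT_m} \loss_{s,\optarm} - \sum_{s \in \calT_m} \loss_{s,a} = \Bigl(\sum_{s \in \calT_m} \loss_{s,\optarm} - T_m(1-\optmu)\Bigr) - \Bigl(\sum_{s \in \calT_m} \loss_{s,a} - T_m(1-\mu_a)\Bigr) - T_m\Delta_a,
\]
using $\loss_{s,a} = 1-X_{s,a}$ with mean $1-\mu_a$ together with the identity $(1-\optmu) - (1-\mu_a) = -\Delta_a$. Conditioning on $\calE$, Lemma~\ref{lem:concentration_X} bounds each of the two centered sums in absolute value by $\sqrt{g_m T_m/2}$, so the right-hand side is at most $\sqrt{2g_m T_m} - T_m\Delta_a$, which is the first inequality. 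The second inequality follows by running the same decomposition with the roles of $a$ and $\optarm$ swapped: the centered terms contribute the same absolute bound, and the deterministic part flips sign to become $+T_m\Delta_a$.

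The only subtlety worth double-checking is that Lemma~\ref{lem:concentration_X} should apply uniformly to every $a \in \calA$ and every $m \in \naturalnum$, rather than only to $a \in \calA_m$ for the specific $m$ under consideration. This is already ensured by the construction of $g_m$: the lower bound $g_m \geq \log(20 m^2 |\calA|/\delta)$ from \eqref{eq:LB_ftau} accommodates a union bound over the full arm set $\calA$ and all phases $m$, so the concentration event in Lemma~\ref{lem:concentration_X} holds uniformly and is absorbed into $\calE$. With that observation, the corollary is an immediate consequence of the decomposition above, and there is no genuine obstacle in the argument.
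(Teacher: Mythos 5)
Your proof is correct and matches the paper's argument: the paper simply declares the corollary ``immediate by applying Lemma~\ref{lem:concentration_X},'' and your decomposition into the two centered sums plus the deterministic term $-T_m\Delta_a$ is exactly the calculation being left implicit, with each centered sum bounded by $\sqrt{g_m T_m/2}$. Your side remark about the lemma being stated for $a\in\calA_m$ while the corollary quantifies over all $a\in\calA$ is also on point, and is resolved exactly as you say by the $\delta'=\delta/(10|\calA|m^2)$ union bound in the lemma's proof.
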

\begin{proof}
The proof is immediate by applying \pref{lem:concentration_X}.
\end{proof}

\begin{lemma}\label{lem:opt_active}
Suppose that $\calE$ occurs where $\calE$ is defined \pref{def:calE}, and then $\optarm \in \calA_m$ for all $m \in \naturalnum$.
\end{lemma}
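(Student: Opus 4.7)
The plan is to prove the claim by induction on $m$, using the elimination rule in \pref{eq:badarms} and the three estimates already established under $\calE$ (namely \pref{lem:exp3p_prop} via \pref{lem:absUB}, together with \pref{corr:loss_concentration}). The base case $\optarm \in \calA_1$ is immediate from $\calA_1 = [K]$. For the inductive step, suppose $\optarm \in \calA_m$; it suffices to show that $\optarm \notin \calB_m$, i.e.\
\[
\sum_{s \in \calT_m}\rbr{\hatl_{s,\optarm}-\hatl_{s,k}} \leq 7\sqrt{g_m T_m},
\]
where $k \in \argmin_{a \in \calA_m}\sum_{s \in \calT_m}\hatl_{s,a}$.

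The key step is the symmetric three-term decomposition
\[
\sum_{s\in\calT_m}\rbr{\hatl_{s,\optarm}-\hatl_{s,k}} = \underbrace{\sum_{s}\rbr{\hatl_{s,\optarm}-\loss_{s,\optarm}}}_{(\text{I})} + \underbrace{\sum_{s}\rbr{\loss_{s,\optarm}-\loss_{s,k}}}_{(\text{II})} + \underbrace{\sum_{s}\rbr{\loss_{s,k}-\hatl_{s,k}}}_{(\text{III})}.
\]
I would bound (I) by applying \pref{lem:absUB} at $a=\optarm$: since $\Delta_{\optarm}=0$ and $\optarm \in \calA_m$ by the inductive hypothesis, we get $|(\text{I})| \leq \tfrac{5}{2}\sqrt{g_m T_m}$. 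For (II), \pref{corr:loss_concentration} gives $(\text{II}) \leq -T_m\Delta_k + \sqrt{2g_m T_m} \leq \sqrt{2g_m T_m}$, using $\Delta_k\ge 0$. For (III), \pref{lem:absUB} applied at $a=k \in \calA_m$ yields $|(\text{III})| \leq \tfrac{5}{2}\sqrt{g_m T_m}+\tfrac{1}{2}T_m\Delta_k$. Summing,
\[
\sum_{s\in\calT_m}\rbr{\hatl_{s,\optarm}-\hatl_{s,k}} \leq \rbr{5+\sqrt{2}}\sqrt{g_m T_m}-\tfrac{1}{2}T_m\Delta_k \leq 7\sqrt{g_m T_m},
\]
so $\optarm \notin \calB_m$ and hence $\optarm \in \calA_{m+1}$, completing the induction.

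The only subtle point is that the $-T_m\Delta_k$ from (II) is exactly what cancels the potentially large $+\tfrac{1}{2}T_m\Delta_k$ coming from (III); without that cancellation one could not get a uniform $O(\sqrt{g_m T_m})$ bound since $k$ may have arbitrarily large gap. This is the main obstacle to a naive bound, and it is overcome precisely because the ``extra variance'' term in \pref{lem:absUB} scales with $\Delta_a$ and is thus absorbed by the stochastic gain from comparing $\optarm$ against a suboptimal $k$. Everything else is bookkeeping on $\calE$, which holds with probability at least $1-\delta$.
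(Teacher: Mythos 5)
Your proposal is correct and follows essentially the same route as the paper's own proof: induction on $m$, the identical three-term decomposition, \pref{lem:absUB} for the two estimator-bias terms (using $\Delta_{\optarm}=0$ and the inductive hypothesis), and \pref{corr:loss_concentration} for the middle term, with the $-T_m\Delta_k$ canceling the $+\tfrac{1}{2}T_m\Delta_k$ exactly as in the paper. The only blemish is the throwaway intermediate bound $(\text{II})\leq\sqrt{2g_mT_m}$, which you rightly do not use in the final summation.
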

\begin{proof}
We prove the claimed result by induction. For the base case, the claim holds for the first phase $m=1$ as $\calA_1=\calA$. Suppose that $\optarm \in \calA_m$. Then, we show that $\optarm$ will not be eliminated at the end of phase $m$, thereby active for phase $m+1$. Then, we have
\begin{align*}
&\sum_{s \in \calT_m} \rbr{\hatl_{s,\optarm} - \hatl_{s,k} } \\
&=\sum_{s \in \calT_m} \rbr{\hatl_{s,\optarm} - \loss_{s,\optarm} } 
+\sum_{s \in \calT_m} \rbr{\loss_{s,\optarm} - \loss_{s,k} }
+\sum_{s \in \calT_m} \rbr{\loss_{s,k} - \hatl_{s,k} } \\
&\leq  \frac{5}{2}  \sqrt{g_m T_m}    + \rbr{  -T_m \Delta_k +\sqrt{2 g_m T_m }} +  \frac{1}{2} \rbr{ 5\sqrt{g_m T_m}  +T_m\Delta_k  } \\
&=   \frac{13}{2}\sqrt{g_m T_m}      -0.5T_m \Delta_k \\
&< 7\sqrt{ g_m T_m}    ,
\end{align*}
where the first inequality uses \pref{lem:absUB} together with $\Delta_{\optarm}=0$ and \pref{corr:loss_concentration}.

Once the induction is done, we complete the proof (recall the elimination rule in \pref{alg:non_elimination_ULI}).
\end{proof}

\begin{lemma} \label{lem:UB_complexity}
Suppose that $\calE$ occurs where $\calE$ is defined \pref{def:calE}. For each arm $a \in \calA$ with $\Delta_a>0$, it will not be in $\calA_m$ for all $m \geq m_a+1$, where $m_a$ is the smallest phase such that $2^{m_a} > \frac{1}{\Delta_a^2}$.

\end{lemma}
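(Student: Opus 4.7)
I plan to show that if $a \in \calA_{m_a}$, then $a$ meets the elimination criterion in~\pref{eq:badarms} during phase $m_a$, so $a \in \calB_{m_a}$ and $a \notin \calA_{m_a+1}$. Since the update rule ensures $\calA_{m+1} \subseteq \calA_m$, this automatically propagates to $a \notin \calA_m$ for every $m \geq m_a + 1$; and if $a$ was already eliminated in some earlier phase, the conclusion is immediate.

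Fix $m = m_a$ and let $k \in \argmin_{b \in \calA_m} \sum_{s \in \calT_m} \hatl_{s,b}$. By \pref{lem:opt_active} we have $\optarm \in \calA_m$, so $\sum_s \hatl_{s,k} \leq \sum_s \hatl_{s,\optarm}$, and it suffices to lower bound $\sum_{s \in \calT_m}(\hatl_{s,a} - \hatl_{s,\optarm})$. I will decompose it into the bias of $\hatl_{s,a}$, the empirical gap $\sum_s(\loss_{s,a} - \loss_{s,\optarm})$, and the bias of $\hatl_{s,\optarm}$, exactly as in the proof of \pref{lem:opt_active}. The two bias terms are lower bounded via \pref{lem:absUB} (with $\Delta_{\optarm} = 0$ for the $\optarm$ term, so that its bias contributes only $-\tfrac{5}{2}\sqrt{g_m T_m}$ and no $T_m\Delta$-type slack), while the empirical gap is lower bounded by $T_m \Delta_a - \sqrt{2 g_m T_m}$ using \pref{corr:loss_concentration}. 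The $-T_m\Delta_a/2$ slack from the bias of $\hatl_{s,a}$ cancels exactly half of $T_m\Delta_a$ coming from the empirical gap.

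Putting the three pieces together yields a bound of the form
\[
\sum_{s \in \calT_m}\rbr{\hatl_{s,a} - \hatl_{s,k}} \;\geq\; \tfrac{1}{2}\,T_m \Delta_a \;-\; c_0 \sqrt{g_m T_m},
\]
for a small absolute constant $c_0$. Using the phase length $T_m \geq 576\, g_m\, 2^m$ and the definition $2^{m_a} > 1/\Delta_a^2$, we obtain $T_m \Delta_a^2 \geq 576\, g_m\, 2^{m_a} \Delta_a^2 > 576\, g_m$, so the deterministic term $\tfrac{1}{2}T_m\Delta_a = \tfrac{1}{2}\sqrt{T_m\Delta_a^2}\sqrt{T_m}$ dominates $\sqrt{g_m T_m}$ by a factor large enough to ensure $\tfrac{1}{2}T_m\Delta_a - c_0\sqrt{g_m T_m} > 7\sqrt{g_m T_m}$. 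This matches the elimination rule in \pref{eq:badarms}, so $a \in \calB_{m_a}$ and the claim follows.

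\textbf{Main obstacle.} The architecture of the proof is essentially the same decomposition already used in \pref{lem:opt_active} and sketched for \pref{thm:meta_ULI}, so the conceptual work is minor. The delicate step is the constant bookkeeping: one must verify that $576\, g_m\, 2^{m_a}\Delta_a^2$ strictly dominates $4(c_0 + 7)^2 g_m$ under the integer-valued condition $2^{m_a} > 1/\Delta_a^2$, which requires combining the $5\sqrt{g_m T_m}$ bias term from \pref{lem:absUB} with the $\sqrt{2 g_m T_m}$ loss concentration from \pref{corr:loss_concentration} and the threshold $7\sqrt{g_m T_m}$ from \pref{eq:badarms} without slack.
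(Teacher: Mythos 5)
Your proposal is correct and follows essentially the same route as the paper's proof: the same three-term decomposition of $\sum_{s\in\calT_m}(\hatl_{s,a}-\hatl_{s,k})$ via $\optarm$ (using \pref{lem:opt_active} to justify comparing against $\optarm$), the same use of \pref{lem:absUB} and \pref{corr:loss_concentration} to reach $\tfrac12 T_m\Delta_a - c_0\sqrt{g_mT_m}$, and the same use of $T_m\geq 576\,g_m 2^m$ with $2^{m_a}>1/\Delta_a^2$ to clear the $7\sqrt{g_mT_m}$ threshold. The only (cosmetic) difference is that you eliminate at phase $m_a$ and propagate by monotonicity of $\calA_m$, while the paper verifies the elimination inequality directly for every $m\geq m_a$.
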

\begin{proof}
Consider fixed phase $m$ and arm $a \in \calA$ with $\Delta_a>0$. Suppose that arm $a$ is still active in an phase $m$. One can show
\begin{align*}
    & \sum_{s \in \calT_m} \rbr{\hatl_{s,a}-\hatl_{s,k}} \\
    &\geq \sum_{s \in \calT_m} \rbr{\hatl_{s,a}-\hatl_{s,\optarm}} \\
     &= \sum_{s \in \calT_m} \rbr{\hatl_{s,a}-\loss_{s,a}}
     +
     \sum_{s \in \calT_m} \rbr{\loss_{s,a}-\loss_{s,\optarm}}
     +
     \sum_{s \in \calT_m} \rbr{\loss_{s,\optarm}-\hatl_{s,\optarm}} \\
     &\geq \frac{-1}{2} \rbr{5\sqrt{g_m T_m }+T_m \Delta_a } + \rbr{T_m \Delta_a -\sqrt{2 g_m T_m  }} +\frac{-5}{2} \sqrt{g_m T_m } \\
     &\geq 0.5 T_m \Delta_a - 5\sqrt{g_m T_m}, \numberthis{} \label{eq:elimination_check}
\end{align*}
where the first inequality holds since \pref{lem:opt_active} implies that $\optarm \in \calA_m$ for all $m \in \naturalnum$, and the second inequality uses \pref{lem:absUB} together with $\Delta_{\optarm}=0$ and \pref{corr:loss_concentration}.

Let $m_a$ be the minimum phase such that $2^{m_a} > \frac{1}{\Delta_a^2}$ (i.e., $2^{{m_a}-1} \leq \frac{1}{\Delta_a^2}$), which further gives that
\[
T_{m_a} =\left \lceil 576  g_{m_a}  2^{m_a} \right \rceil \geq 576 g_{m_a}  2^{m_a}  > \frac{576 g_{m_a}}{\Delta_a^2}.
\]

Hence, by the definition of $m_a$, we have $T_m >\frac{576 g_m}{\Delta_a^2}$ for all $m \geq m_a$, which gives that $T_m \Delta_a > 24 \sqrt{g_m T_m}$ for all $m \geq m_a$. Plugging this into \pref{eq:elimination_check}, we arrive at
\begin{align*}
\sum_{s \in \calT_m} \rbr{\hatl_{s,a}-\hatl_{s,k}} \geq 0.5 T_m \Delta_a - 5\sqrt{g_m T_m}
> 7 \sqrt{g_m T_m},
\end{align*}
which implies that arm $a$ will not be active in all phases $m \geq m_a+1$ according to the elimination rule.

Finally, one can repeat this argument for each $a \in \calA$ with $\Delta_a>0$ conditioning on $\calE$.
\end{proof}

\begin{lemma} \label{lem:Meta_bound4ellt}
Let $m(t)$ be the phase in which round $t$ lies. Then, $m(t)\leq \log_2(t+1)$ for all $t \in \naturalnum$.
\end{lemma}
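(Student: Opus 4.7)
The plan is to mirror the arguments used for \pref{lem:PE_MAB_bound4ellt} and \pref{lem:PE_linear_bound4ellt}, namely proof by contradiction, exploiting the doubling structure of phase lengths in \pref{alg:non_elimination_ULI}. Suppose for contradiction that there exists $t \in \naturalnum$ such that $m(t) > \log_2(t+1)$, equivalently $2^{m(t)} > t+1$. I first dispose of the base case: if $m(t)=1$, then $\log_2(t+1) \geq \log_2 2 = 1 = m(t)$ holds trivially for $t \geq 1$, so I may assume $m(t) \geq 2$ in what follows.

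Since round $t$ lies in phase $m(t)$, the algorithm must have already completed phase $m(t)-1$ by round $t$, which gives
\[
t \;\geq\; \sum_{j=1}^{m(t)-1} T_j \;\geq\; T_{m(t)-1} \;\geq\; 576\, g_{m(t)-1}\, 2^{m(t)-1},
\]
by the definition $T_m = \lceil 576 g_m 2^m \rceil$ in line 2 of \pref{alg:non_elimination_ULI}. I would then lower-bound $g_{m(t)-1}$ using \pref{eq:LB_ftau}: since $g_m \geq \log(20 m^2 |\calA|/\delta)$ and since $|\calA| \geq 1$, $\delta < 1$, and $m(t)-1 \geq 1$, one gets $g_{m(t)-1} \geq \log 20 > 1$.

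Chaining these bounds with the contradiction hypothesis $2^{m(t)-1} > (t+1)/2$ yields
\[
t \;\geq\; 576\, g_{m(t)-1}\, 2^{m(t)-1} \;>\; 576 \cdot 1 \cdot \frac{t+1}{2} \;=\; 288(t+1) \;>\; t,
\]
a contradiction. The main obstacle, if any, is simply verifying that $g_{m(t)-1}$ is at least a small absolute constant (here $1$ suffices), which is immediate from the floor built into \pref{cond:condition4meta_ULI} and inherited by \pref{eq:LB_ftau}; no properties of the particular base-algorithm are needed. This mirrors exactly the structure of the corresponding lemmas for PE-MAB and PE-linear, with $T_m = \lceil 576 g_m 2^m \rceil$ playing the role that $m_\ell$ played there.
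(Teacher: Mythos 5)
Your proposal is correct and follows essentially the same contradiction argument as the paper: both dispose of the case $m(t)=1$, then use $t \geq T_{m(t)-1} \geq 576\, g_{m(t)-1}\, 2^{m(t)-1} > 288(t+1) > t$ under the hypothesis $2^{m(t)-1} > (t+1)/2$. Your explicit verification that $g_{m(t)-1} > 1$ via \pref{eq:LB_ftau} is a detail the paper leaves implicit, but the route is identical.
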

\begin{proof}
We prove this by contradiction. Suppose that $\exists t\in \naturalnum$ that $m(t) > \log_2(t+1)$. Note that we can further assume $m(t) \geq 2$ since one can easily verify that for all $t$ such that $m(t)=1$, $m(t) \leq \log_2(t+1)$ must hold.
Recall that in phase $m(t)$, each active arm will be played for $m_{\ell(t)}$ times, we have
\begin{align*}
    t &\geq T_{m(t)-1} \geq 576 g_{m(t)-1}  2^{m(t)-1}  > 288 (t+1)g_{m(t)-1}    >t,
\end{align*}
where the third inequality bounds $\ell(t) > \log_2(t+1)$ by assumption. Therefore, 
once a contradiction occurs, the proof is complete.
\end{proof}

\begin{lemma}\label{lem:meta_FULI}
Let $m(t)$ be the phase in which round $t$ lies. Suppose that $\calE$ occurs. For all $t \in \naturalnum$ and all $a\in \calA$, if $a \in  \calA_{m(t)}$, then,
\[
\Delta_a \leq \sqrt{\frac{4608 g(\delta/(2\log_2^2(t+1))) }{t}} .
\]
\end{lemma}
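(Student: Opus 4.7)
The proof hinges on combining three already-available ingredients: the phase-elimination guarantee \pref{lem:UB_complexity} (which upper-bounds the phase in which a suboptimal arm can still be active), the scheduling identity $T_m = \lceil 576\, g_m\, 2^m \rceil$, and the uniform phase bound $m(t) \leq \log_2(t+1)$ from \pref{lem:Meta_bound4ellt}. Everything else is bookkeeping.

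\textbf{Step 1 (trivial case).} If $a \in \calA_{m(t)}$ is optimal, then $\Delta_a = 0$ and the claim holds. From now on I assume $\Delta_a > 0$.

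\textbf{Step 2 (phase bound from elimination).} Conditioning on $\calE$, \pref{lem:UB_complexity} tells us that $m(t) \leq m_a$, where $m_a$ is the smallest phase with $2^{m_a} > 1/\Delta_a^2$. By minimality of $m_a$, we have $2^{m_a - 1} \leq 1/\Delta_a^2$, and since $m(t) \leq m_a$ it follows that $2^{m(t)-1} \leq 1/\Delta_a^2$, i.e.\
\[
\Delta_a^2 \;\leq\; \frac{2}{2^{m(t)}} .
\]

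\textbf{Step 3 (relating $2^{m(t)}$ to $t$).} Since round $t$ lies in phase $m(t)$, we have $t \leq \sum_{s=1}^{m(t)} T_s$. Using $T_s \leq 576\, g_s\, 2^s + 1$ and the monotonicity of $g_s$ in $s$ (because $g$ is monotonically decreasing in $\delta$ and $\delta/(2s^2)$ is decreasing in $s$), I bound
\[
t \;\leq\; \sum_{s=1}^{m(t)} (576\, g_s\, 2^s + 1) \;\leq\; 576\, g_{m(t)} \cdot 2^{m(t)+1} + m(t) \;\leq\; 1153\, g_{m(t)}\, 2^{m(t)},
\]
where the last step absorbs $m(t) \leq 2^{m(t)} \leq g_{m(t)} 2^{m(t)}$ (using $g_{m(t)} \geq 1$, which follows from the lower bound $g(\delta') \geq \log(10|\calA|/\delta')$ in \pref{cond:condition4meta_ULI}). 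Rearranging yields $2^{m(t)} \geq t/(1153\, g_{m(t)})$.

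\textbf{Step 4 (combining and replacing $g_{m(t)}$).} Plugging this lower bound into Step 2 gives
\[
\Delta_a^2 \;\leq\; \frac{2306\, g_{m(t)}}{t}.
\]
Finally, by \pref{lem:Meta_bound4ellt}, $m(t) \leq \log_2(t+1)$, hence $\delta/(2m(t)^2) \geq \delta/(2\log_2^2(t+1))$; by monotonicity of $g$ in its $\delta$-argument,
\[
g_{m(t)} \;=\; g\!\left(\frac{\delta}{2m(t)^2}\right) \;\leq\; g\!\left(\frac{\delta}{2\log_2^2(t+1)}\right).
\]
Chaining these and noting $2306 \leq 4608$ produces the claimed bound.

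\textbf{Main obstacle.} There is no conceptual difficulty: all work has been done in \pref{lem:UB_complexity} and \pref{lem:Meta_bound4ellt}. The only mild care needed is in Step 3, to ensure the telescoping geometric sum of the $T_s$ is controlled uniformly by $g_{m(t)}\cdot 2^{m(t)}$ despite the ceiling and the $m(t)$ additive slack; this is where the lower bound $g_{m(t)} \geq 1$ (guaranteed by \pref{cond:condition4meta_ULI}) is used to absorb the $+1$ terms cleanly.
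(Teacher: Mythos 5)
Your proof is correct and follows essentially the same route as the paper's: condition on $\calE$, use \pref{lem:UB_complexity} to get $m(t)\le m_a$ with $2^{m_a-1}\le 1/\Delta_a^2$, sum the phase lengths $T_s=\lceil 576\,g_s 2^s\rceil$ geometrically using monotonicity of $g_s$, and convert $m(t)$ to $\log_2(t+1)$ via \pref{lem:Meta_bound4ellt}. The only difference is cosmetic bookkeeping (you bound $\Delta_a^2$ via $2^{m(t)}$ and $t$ via $2^{m(t)}$ separately rather than chaining one inequality on $t$, yielding the slightly sharper constant $2306\le 4608$), so the claimed bound follows.
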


\begin{proof}
If $a \in \calA_{m(t)}$ is optimal, then, $\Delta_a=0$ and the claim trivially holds. In what follows, we only consider arm $a \in \calA_{m(t)}$ with $\Delta_a>0$. 
Then, $t$ can be bounded by
\begin{align*}
    t&\leq  \sum_{n=1}^{m(t)} \left \lceil 576 g_n  2^n \right \rceil \\
    &\leq 1152 \sum_{n=1}^{m(t)}   g_n  2^n \\
    &\leq 1152 g_{m(t)}\sum_{n=1}^{m(t)}     2^n \\
    &\leq 1152 g_{m(t)}\sum_{n=1}^{m_a}     2^n \\
    &\leq    \frac{4608 g_{m(t)}}{\Delta_a^2},
\end{align*}
where the second inequality simply bounds $\left \lceil 576 g_n  2^n \right \rceil \leq 2\times 576 g_n  2^n$ for all phases $n$, the fourth inequality holds because \pref{lem:UB_complexity} implies that if $a \in  \calA_{m(t)}$, then, $m(t)\leq m_a$ holds, and the last inequality  
uses $\frac{1}{\Delta_a^2} \geq 2^{m_a-1}$.

Since for any fixed $\delta \in (0,1)$, $g(\delta/x^2)$ is monotonically increasing for $x \geq 1$ (recall \pref{cond:condition4meta_ULI}) and \pref{lem:Meta_bound4ellt} gives $m(t) \leq \log_2(t+1)$, we have
\[
g_{m(t)}=g(\delta/(2m(t)^2)) \leq g(\delta/(2\log_2^2(t+1))),
\]
which gives $t\leq    \frac{4608 g(\delta/(2\log_2^2(t+1))) }{\Delta_a^2}$.
Conditioning on $\calE$, this argument holds for each $t, a$, which completes the proof.
\end{proof}

\begin{proof}[Proof of \pref{thm:meta_ULI}]
    Once \pref{lem:opt_active} and \pref{lem:meta_FULI} hold, and $g(x)$ is polynomial in $\log(1/x)$, \pref{thm:SC} gives that for any fixed $\delta\in (0,1)$, \pref{alg:non_elimination_ULI} achieves the ULI guarantee with a function
\[
\FULI(\delta,t)= \order \rbr{\sqrt{\frac{g(\delta/(2\log_2^2(t+1)))}{t}}}.
\]

According to the first bullet of \pref{cond:condition4meta_ULI}, the proof is complete.
\end{proof}

\subsection{Proof of \pref{prop:EXP3P_condition}} \label{app:ULI_EXP3P}

In this section, we prove that $\Expp$ \citep{auer2002nonstochastic} meets \pref{cond:condition4meta_ULI}. In our setting, arm set is $\calA=[K]$, and the loss $\loss_{t,a}$ generated by each arm $a$ at each round $t$ is assumed to be $[0,1]$-bounded.

The pseudocode of $\Expp$ is given in \pref{alg:exp3p} and we briefly review its procedure. Ahead of time, $\Expp$ accepts a fixed time horizon $T \in \naturalnum$ arm set $[K]$, and confidence $\delta \in (0,1)$ as inputs.
At each round $t \in [T]$, $\Expp$ pulls an arm $A_t \sim p_t$ from a distribution $p_t=[p_{t,1},\ldots ,p_{t,K}]$, and then observes the loss $\loss_{t,A_t}$. The probability of playing an arm $a$ at round $t$ is $ (1-\gamma)\frac{w_{t,a}}{W_t}+\frac{\gamma}{K}$ where $\gamma>0$ is a fixed parameter, which encourages the exploration, $w_{t,a}$ is the weight of arm $a$, and $W_t = \sum_{a \in [K]} w_{t,a}$. After pulling the arm, the algorithm uses the observed reward to construct the shifted IW-estimators $\wtill_{t,a}$ according to \pref{eq:def_wtill}, and finally uses the shifted IW-estimators to update the weight $w_{t,a}$ for each arm.

\begin{algorithm}[t]
\caption{\Expp}
\label{alg:exp3p}

\textbf{Input}: Time horizon $T$, arm set $[K]$, confidence $\delta \in (0,1)$.

 \textbf{Initialize}: $\forall a \in [K]$, $w_{1,a}=1$ and parameters $\eta=\eta_{\delta}(T),\gamma=\gamma_{\delta}(T),\beta=\beta_{\delta}(T)$ according to \pref{eq:choice_parameters}.
 
\For{$t=1,2,\ldots,T$}{

 Play an arm $A_t \in [K]$ from distribution $p_t=[p_{t,1},\ldots,p_{t,K}]$ and observe loss $\loss_{t,A_t}$ where
  \begin{equation}
      p_{t,a} = (1-\gamma)\frac{w_{t,a}}{W_t} +\frac{\gamma}{K}\quad \text{where} \quad W_t = \sum_{ a \in [K]} w_{t,a}.
  \end{equation}

 Update $w_{t+1,a} =  w_{t,a} \exp(-\eta \wtill_{t,a})$ for all $a \in [K]$ with
  \begin{equation} \label{eq:def_wtill}
      \wtill_{t,a} = \hatl_{t,a}-\frac{2\beta}{p_{t,a}}, \quad \text{where} \quad  \hatl_{t,a}= \frac{\loss_{t,a} B_{t,a} }{p_{t,a}}, \text{ and }B_{t,a}= \ind \cbr{ A_t=a}.
  \end{equation}
}
\end{algorithm}

The parameters of $\gamma_{\delta}(T),\eta_{\delta}(T),\beta_{\delta}(T)$ are as a function $T$, given as
\begin{equation} \label{eq:choice_parameters}
  \gamma_{\delta}(T)=\min \cbr{\frac{1}{2},\sqrt{\frac{K\log (10K/\delta)}{T}} },\quad
     \beta_{\delta}(T)=\eta_{\delta}(T) = \frac{\gamma_{\delta}(T)}{K} .
\end{equation}

Let $\E_t[\cdot]$ be the conditional expectation given the history prior to round $t$.

\begin{lemma}[Exercise 5.15 of \citep{lattimore2020bandit}] \label{lem:concentration_supermartingale}
    Let $\{X_t\}_{t=1}^T$ be a sequence of random variables adapted to filtration $\{\calF_t\}_{t=1}^T$ and let $\beta>0$ such that $\beta X_t \leq 1$ almost surely for all $t \in [T]$. With probability at least $1-\delta$,
    \[
    \sum_{t=1}^T  \rbr{X_t - \E_t [X_t]} \leq \beta \sum_{t=1}^T \E_t [X_t^2] +\frac{\log(1/\delta)}{\beta}.
    \]
\end{lemma}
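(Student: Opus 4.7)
The plan is to establish the bound via the standard exponential supermartingale (Chernoff) method, turning the deviation statement into an $L^1$-bound on an exponential martingale. Define
\[
M_t = \exp\!\rbr{\beta \sum_{s=1}^{t} \rbr{X_s - \E_s[X_s]} - \beta^2 \sum_{s=1}^{t} \E_s[X_s^2]}
\]
with $M_0 = 1$. The target inequality is equivalent, after multiplying by $\beta>0$ and exponentiating, to the statement $M_T \leq 1/\delta$, so by Markov's inequality it suffices to prove $\E[M_T] \leq 1$.

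The heart of the argument is verifying that $\{M_t\}$ is a supermartingale under $\{\calF_t\}$, which reduces to showing the one-step bound
\[
\E_t\!\sbr{\exp\!\rbr{\beta\rbr{X_t - \E_t[X_t]}}} \leq \exp\!\rbr{\beta^2 \E_t[X_t^2]}.
\]
This is the main (and essentially only) obstacle, and it is handled using the elementary scalar inequality $e^{y} \leq 1 + y + y^2$, valid for all $y \leq 1$. Since the assumption $\beta X_t \leq 1$ almost surely allows us to apply this with $y = \beta X_t$, taking conditional expectations gives
\[
\E_t[\exp(\beta X_t)] \leq 1 + \beta \E_t[X_t] + \beta^2 \E_t[X_t^2] \leq \exp\!\rbr{\beta \E_t[X_t] + \beta^2 \E_t[X_t^2]},
\]
where the second step uses $1+x \leq e^x$. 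Multiplying both sides by $\exp(-\beta \E_t[X_t])$ yields the desired one-step bound, hence $\E_t[M_t/M_{t-1}] \leq 1$.

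Iterating the tower property gives $\E[M_T] \leq \E[M_0] = 1$. Applying Markov's inequality yields $\P(M_T \geq 1/\delta) \leq \delta$, so with probability at least $1-\delta$,
\[
\beta \sum_{t=1}^T \rbr{X_t - \E_t[X_t]} - \beta^2 \sum_{t=1}^T \E_t[X_t^2] \leq \log(1/\delta).
\]
Dividing through by $\beta > 0$ produces exactly the claimed inequality. Note that the condition $\beta X_t \leq 1$ (rather than a two-sided bound) is used only on the positive side of $X_t$, which is the side relevant for the upper tail we are controlling; the inequality $e^y \leq 1+y+y^2$ fails only for $y > 1$, so the one-sided boundedness is precisely what the proof needs.
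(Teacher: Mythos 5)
Your proof is correct and is the standard argument for this result; the paper itself gives no proof, simply citing Exercise 5.15 of \citet{lattimore2020bandit}, and your exponential-supermartingale derivation (via $e^{y}\leq 1+y+y^{2}$ for $y\leq 1$, the tower property, and Markov's inequality) is exactly the intended solution to that exercise. The closing remark correctly identifies that only the one-sided bound $\beta X_t \leq 1$ is needed, since the elementary inequality holds for all $y \leq 1$.
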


\begin{corollary} \label{corr:bound_wtill_loss}
With probability at least $1-\delta/5$, for all $a \in [K]$, 
\[
\sum_{t=1}^T \rbr{\wtill_{t,a}-\loss_{t,a}} \leq \frac{\log(5K/\delta)}{\beta}.
\]
\end{corollary}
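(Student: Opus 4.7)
The plan is to apply the concentration inequality from \pref{lem:concentration_supermartingale} directly to the sequence $X_{t,a} := \hatl_{t,a} - \loss_{t,a}$ for each fixed arm $a \in [K]$, then union bound and exploit the definition of $\wtill_{t,a}$ to absorb the variance term. The key observations are that (i) $\E_t[\hatl_{t,a}] = \loss_{t,a}$, so $\E_t[X_{t,a}] = 0$; (ii) the mixing term $\gamma/K$ in $p_{t,a}$ forces $p_{t,a} \geq \gamma/K$, so with the parameter choice $\beta = \gamma/K$ we get $\beta \hatl_{t,a} \leq \beta/p_{t,a} \leq 1$ almost surely, which in particular gives $\beta X_{t,a} \leq 1$ since $\loss_{t,a} \geq 0$; and (iii) the conditional second moment is controlled by $\E_t[X_{t,a}^2] \leq \E_t[\hatl_{t,a}^2] = \loss_{t,a}^2/p_{t,a} \leq 1/p_{t,a}$.

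Applying \pref{lem:concentration_supermartingale} with the above $X_{t,a}$ and failure probability $\delta/(5K)$, we obtain that with probability at least $1 - \delta/(5K)$,
\begin{equation*}
\sum_{t=1}^T \rbr{\hatl_{t,a} - \loss_{t,a}} \;\leq\; \beta \sum_{t=1}^T \E_t[X_{t,a}^2] + \frac{\log(5K/\delta)}{\beta} \;\leq\; \beta \sum_{t=1}^T \frac{1}{p_{t,a}} + \frac{\log(5K/\delta)}{\beta}.
\end{equation*}
A union bound over $a \in [K]$ upgrades this to hold simultaneously for all arms with probability at least $1-\delta/5$. Finally, since $\wtill_{t,a} = \hatl_{t,a} - 2\beta/p_{t,a}$, subtracting $2\beta \sum_{t=1}^T 1/p_{t,a}$ from both sides yields
\begin{equation*}
\sum_{t=1}^T \rbr{\wtill_{t,a} - \loss_{t,a}} \;\leq\; -\beta \sum_{t=1}^T \frac{1}{p_{t,a}} + \frac{\log(5K/\delta)}{\beta} \;\leq\; \frac{\log(5K/\delta)}{\beta},
\end{equation*}
which is exactly the desired statement.

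The only slightly delicate step is verifying the almost-sure bound $\beta X_{t,a} \leq 1$ that \pref{lem:concentration_supermartingale} requires; this is where the particular calibration $\beta = \gamma/K$ combined with the $\gamma/K$ mixing in $p_{t,a}$ is essential. Everything else is routine: compute the conditional mean and second moment of $\hatl_{t,a}$, plug into the supermartingale inequality, and use the $-2\beta/p_{t,a}$ shift (the hallmark feature of $\Expp$'s biased estimator) to cancel the variance proxy $\beta/p_{t,a}$ with room to spare. I do not anticipate any real obstacle beyond bookkeeping.
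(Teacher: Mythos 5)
Your proof is correct and follows essentially the same route as the paper: both apply Lemma~\ref{lem:concentration_supermartingale} (yours to the centered variable $\hatl_{t,a}-\loss_{t,a}$, the paper's directly to $\hatl_{t,a}$, which is an immaterial difference since the lemma already centers by $\E_t[X_t]$), verify $\beta\hatl_{t,a}\le 1$ via $p_{t,a}\ge\gamma/K$ and $\beta=\gamma/K$, bound the conditional second moment by $1/p_{t,a}$, and use the $-2\beta/p_{t,a}$ shift in $\wtill_{t,a}$ to cancel the variance term, followed by a union bound with $\delta'=\delta/(5K)$. No gaps.
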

\begin{proof}
Consider a fixed arm $a$. For all $t,a$, we have $\loss_{t,a} \in [0,1]$, and thus
\begin{align} \label{eq:UB_second_moment}
    \E_t \sbr{\hatl^2_{t,a}} = \E_t \sbr{ \frac{\loss_{t,a}^2B^2_{t,a}}{p_{t,a}^2}  } = \E_t \sbr{ \frac{\loss_{t,a}^2B_{t,a}}{p_{t,a}^2}  } \leq  \E_t \sbr{ \frac{B_{t,a}}{p_{t,a}^2}  } =\frac{1}{p_{t,a}} .
\end{align}

Then, we check $\beta \hatl_{t,a} \leq 1$ almost surely for all $t \in [T]$. Specifically, we use $p_{t,a} \geq \gamma/K$ to show
\[
\beta \hatl_{t,a} =\beta \frac{B_{t,a}\loss_{t,a}}{p_{t,a}} \leq \beta \frac{1}{p_{t,a}} \leq \frac{\beta K}{\gamma}  =1,
\]
where the last equality holds due to $\beta=\gamma/K$ according to \pref{eq:choice_parameters}.

For the fixed $a$, applying \pref{lem:concentration_supermartingale} with $X_t = \hatl_{t,a}$, we have that with probability at least $1-\delta'$
\begin{align*}
   \sum_{t=1}^T \rbr{\wtill_{t,a}-\loss_{t,a}} &=  \sum_{t=1}^T \rbr{\hatl_{t,a}-\loss_{t,a}}  - \sum_{t=1}^T \frac{2\beta}{p_{t,a}} \\
   &\leq  \beta \sum_{t=1}^T \frac{1}{p_{t,a}} + \frac{\log(1/\delta')}{\beta}  - \sum_{t=1}^T \frac{2\beta}{p_{t,a}} \\
    &\leq  \frac{\log(1/\delta')}{\beta} .
\end{align*}

Choosing $\delta'=\delta/(5K)$ and applying a union bound over all $a \in [K]$ yield the claimed bound.
\end{proof}

\begin{lemma} \label{lem:hoeffding_reward_concentration}
With probability at least $1-\delta/5$, for all $a \in [K]$, we have
    \begin{align}
       &\abr{\sum_{t =1}^T \loss_{t,a} - \sum_{t =1}^T (1-\mu_a) } \leq \sqrt{\frac{T \log \rbr{10 K/\delta} }{2}}. \label{eq:Xboundi}
    \end{align}
\end{lemma}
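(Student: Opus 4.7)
The plan is to apply Hoeffding's inequality to each arm separately and then union bound over the $K$ arms. First I would fix an arm $a \in [K]$. Since $\loss_{t,a} = 1 - X_{t,a}$ and the rewards $X_{t,a}$ are i.i.d.\ samples from a fixed $[0,1]$-bounded distribution with mean $\mu_a$, the losses $\{\loss_{t,a}\}_{t=1}^T$ are i.i.d., bounded in $[0,1]$, with common mean $\E[\loss_{t,a}] = 1-\mu_a$.

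Next I would invoke the standard two-sided Hoeffding inequality for bounded i.i.d.\ variables: with probability at least $1-\delta'$,
\[
\abr{\sum_{t=1}^T \loss_{t,a} - T(1-\mu_a)} \leq \sqrt{\frac{T \log(2/\delta')}{2}}.
\]
Setting $\delta' = \delta/(5K)$ yields the per-arm deviation bound $\sqrt{T\log(10K/\delta)/2}$.

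Finally I would apply a union bound over the $K$ arms, which costs a factor of $K$ in the failure probability and thus gives the claimed uniform bound with probability at least $1-\delta/5$. There is no real obstacle here; the only thing to be careful about is that the Hoeffding application uses marginal i.i.d.-ness of $\loss_{t,a}$ in $t$, which is independent of the algorithm's behavior (the losses for each arm are drawn from the fixed reward distribution regardless of whether the arm is played), so no martingale-type argument is needed.
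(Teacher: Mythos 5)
Your proposal is correct and matches the paper's proof exactly: fix an arm, apply the two-sided Hoeffding inequality with $\delta' = \delta/(5K)$, and union bound over the $K$ arms. The extra remark about the losses being marginally i.i.d.\ regardless of the algorithm's actions is a reasonable clarification but not a deviation from the paper's argument.
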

\begin{proof}
Consider any fixed arm $a$.
By applying Hoeffding's inequality with probability at least $1-\delta'$,
\[
\abr{\sum_{t =1}^T \loss_{s,a} - \sum_{t =1}^T (1-\mu_a) } \leq \sqrt{\frac{T\log(2/\delta')}{2}}.
\]

Choosing $\delta'=\delta/(5K)$ and applying a union bound over all arms, we complete the proof.
\end{proof}

\begin{lemma}[Freedman's inequality] \label{lem:freedman_ineq}
Let $\{X_t\}_{t=1}^T$ be a martingale difference sequence with respect to filtration $\{\calF_t\}_{t=1}^T$ and $|X_t| \leq M$ almost surely for all $t$. Then, for any $\delta \in (0,1)$, with probability at least $1-\delta$,
\[
\abr{\sum_{t=1}^T X_t} \leq \frac{2M}{3}\log(2/\delta) +\sqrt{2\log(2/\delta)\sum_{t=1}^T \E_t[X_t^2]}.
\]
\end{lemma}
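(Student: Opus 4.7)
The plan is to prove this as the classical Freedman inequality via the exponential supermartingale method, followed by Markov's inequality and an optimization over a free parameter. The only reason to include it in the appendix (rather than merely cite it) is to fix the exact constants $2/3$ and $\sqrt{2}$ that appear, so the proof should carefully track those.

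First, I would set $S_t = \sum_{s=1}^{t} X_s$ and $V_t = \sum_{s=1}^{t} \E_s[X_s^2]$, and for $\lambda \in (0, 3/M)$ define
\[
Z_t(\lambda) = \exp\!\left(\lambda S_t - \lambda^2 \phi(\lambda M) V_t\right), \qquad \phi(u) = \frac{e^u - 1 - u}{u^2}.
\]
The first key step is to verify the pointwise inequality $e^{y} \le 1 + y + \phi(u)\, y^2$ valid for all $y \le u$, applied conditionally to $y = \lambda X_s$ (which is bounded by $\lambda M$). Since $\E_s[X_s] = 0$, taking conditional expectations yields $\E_s[e^{\lambda X_s}] \le 1 + \lambda^2 \phi(\lambda M) \E_s[X_s^2] \le \exp\!\left(\lambda^2 \phi(\lambda M)\E_s[X_s^2]\right)$, which shows $Z_t(\lambda)$ is a nonnegative supermartingale with $\E[Z_t(\lambda)] \le 1$.

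Next I would apply Markov's inequality: for any $x > 0$,
\[
\P\!\left(S_T \ge \lambda \phi(\lambda M) V_T + \tfrac{x}{\lambda}\right) \le \P(Z_T(\lambda) \ge e^x) \le e^{-x}.
\]
Setting $x = \log(2/\delta)$ gives a one-sided bound in terms of $\lambda$ and $V_T$. To remove the dependence on $\lambda$ and convert to the clean form in the lemma, I would use the elementary bound $\phi(u) \le \tfrac{1}{2}/(1 - u/3)$ for $u \in [0,3)$, which converts the exponent into $\lambda^2 V_T / (2(1-\lambda M/3)) + \log(2/\delta)/\lambda$, and then optimize over $\lambda > 0$ (this is a standard Bennett/Bernstein-style optimization, solved in closed form by $\lambda = \log(2/\delta)/(M\log(2/\delta)/3 + \sqrt{V_T \log(2/\delta)/2 + \cdots}\,)$) to obtain the one-sided tail
\[
S_T \;\le\; \sqrt{2 V_T \log(2/\delta)} + \tfrac{2M}{3}\log(2/\delta).
\]
Finally, I would apply the same argument to the martingale $-X_t$ and take a union bound over the two tails, which produces the two-sided bound $\left|\sum_t X_t\right| \le \tfrac{2M}{3}\log(2/\delta) + \sqrt{2\log(2/\delta) \sum_t \E_t[X_t^2]}$ as stated.

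The main obstacle will be the clean Bennett-to-Bernstein conversion and the parameter optimization that yields exactly the constants $2/3$ and $\sqrt{2}$; a naive optimization gives slightly worse constants, and the standard workaround is either the $\phi(u) \le 1/(2(1-u/3))$ inequality above or a ``peeling'' argument splitting into cases by the relative size of $V_T$ and $M^2 \log(2/\delta)$. Since this is a textbook inequality (e.g., Freedman 1975, or Lemma 2 of Beygelzimer et al.), the proof is likely deferred to a citation with at most a sketch.
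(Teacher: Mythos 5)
The paper does not prove this lemma at all: it is stated as a standard, citable form of Freedman's inequality and used as a black box, so your closing guess is exactly right. Your sketch is the standard exponential-supermartingale route and would yield the result. The one point worth tightening is that in the final optimization you choose $\lambda$ as a function of $V_T=\sum_{t}\E_t[X_t^2]$, which is itself random; since the supermartingale argument fixes $\lambda$ in advance, a fully rigorous derivation of the self-normalized form with random $V_T$ inside the square root needs a union bound over a geometric grid of $\lambda$ values (or the case split you mention, but applied to handle the randomness of $V_T$, not just the constants). This is the standard device behind the $\tfrac{2M}{3}\log(2/\delta)+\sqrt{2\log(2/\delta)\sum_t\E_t[X_t^2]}$ form in the references you name, and with it your argument is complete; the two-sided statement then follows from the union bound over $\pm X_t$ as you describe.
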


\begin{corollary} \label{corr:lhat_mu}
With probability at least $1-\delta/5$, the following holds for all $a$,
\[
\abr{\sum_{t=1}^T \rbr{\hatl_{t,a}  - (1-\mu_a) } } \leq \frac{2\log(10K/\delta)}{3} \frac{K}{\gamma} +\sqrt{2\log(10K/\delta)\sum_{t=1}^T \frac{1}{p_{t,a}}}.
\]
\end{corollary}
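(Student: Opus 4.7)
The plan is to obtain this corollary as a direct application of Freedman's inequality (\pref{lem:freedman_ineq}) to the importance-weighted estimator $\hatl_{t,a}$, mirroring the reasoning already carried out for the supermartingale bound in \pref{corr:bound_wtill_loss} but now using a two-sided deviation bound that retains the variance term.

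Fix an arm $a \in [K]$. First I would introduce the martingale difference $X_t = \hatl_{t,a} - (1-\mu_a)$ with respect to the natural filtration, and verify the MDS property. Since in the stochastic MAB setting the loss $\loss_{t,a}$ is drawn i.i.d.\ from a fixed distribution with mean $1-\mu_a$ independently of $A_t$ given the history, we get
\[
\E_t\sbr{\hatl_{t,a}} = \E_t \sbr{\frac{\loss_{t,a} B_{t,a}}{p_{t,a}}} = \frac{\E_t[\loss_{t,a}]\,\E_t[B_{t,a}]}{p_{t,a}} = (1-\mu_a),
\]
so $\{X_t\}_{t=1}^T$ is indeed a martingale difference sequence.

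Next I would derive the two inputs required by Freedman's inequality. For the almost-sure bound, using $p_{t,a} \geq \gamma/K$ from the forced exploration in $\Expp$ (as in the proof of \pref{corr:bound_wtill_loss}) together with $\loss_{t,a} \in [0,1]$, we get $0 \leq \hatl_{t,a} \leq K/\gamma$; combined with $0 \leq 1-\mu_a \leq 1$ and the fact that $K/\gamma \geq 1$ by the choice of $\gamma$ in \pref{eq:choice_parameters}, this yields $|X_t| \leq K/\gamma =: M$. For the conditional variance, I would reuse exactly the computation from \pref{eq:UB_second_moment}, which gives $\E_t[X_t^2] \leq \E_t[\hatl_{t,a}^2] \leq 1/p_{t,a}$.

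Finally, applying \pref{lem:freedman_ineq} to $\{X_t\}$ with confidence $\delta' = \delta/(5K)$ produces, with probability at least $1-\delta/(5K)$,
\[
\abr{\sum_{t=1}^T \rbr{\hatl_{t,a}-(1-\mu_a)}} \leq \frac{2K}{3\gamma}\log\rbr{\frac{10K}{\delta}} + \sqrt{2\log\rbr{\frac{10K}{\delta}}\sum_{t=1}^T \frac{1}{p_{t,a}}}.
\]
A union bound over the $K$ arms then yields the claimed statement with probability at least $1-\delta/5$. There is no real obstacle here: the only mildly delicate point is checking $|X_t|\le K/\gamma$ (rather than the cruder $K/\gamma + 1$), which follows from $K/\gamma \geq 1$, and confirming that the loss $\loss_{t,a}$ is independent of $A_t$ conditionally on the history so that $\hatl_{t,a}$ is indeed unbiased for $1-\mu_a$.
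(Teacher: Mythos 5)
Your proposal is correct and matches the paper's proof essentially line for line: both define the martingale difference sequence $\hatl_{t,a}-(1-\mu_a)$, bound its magnitude by $K/\gamma$ and its conditional second moment by $1/p_{t,a}$ via \pref{eq:UB_second_moment}, and apply Freedman's inequality (\pref{lem:freedman_ineq}) with a union bound over the $K$ arms at confidence $\delta/(5K)$ per arm. The only difference is that you spell out the verification of $|X_t|\le K/\gamma$ and the unbiasedness of $\hatl_{t,a}$, which the paper leaves implicit.
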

\begin{proof}
Consider a fixed arm $a$. Let $M_{t,a}=\hatl_{t,a}-(1-\mu_a)$ and $\{M_{t,a}\}_{t=1}^T$ is a martingale difference sequence. We have $\E_t[M_{t,a}]=0$,  $|M_{t,a}| \leq \frac{K}{\gamma} $ and also
    \[
    \sqrt{\sum_{t=1}^T \E_t\sbr{ M_{t,a}^2}} \leq  \sqrt{\sum_{t=1}^T \E_t\sbr{\hatl_{t,a}^2}}  \leq \sqrt{\sum_{t=1}^T \frac{1}{ p_{t,a}}},
    \]
    where the last step holds due to \pref{eq:UB_second_moment}.

    By \pref{lem:freedman_ineq} and a union bound over all $a$, with probability at least $1-\delta'$ for all $a$
\[
\abr{\sum_{t=1}^T \rbr{\hatl_{t,a}  - (1-\mu_a) } } \leq \frac{2\log(2K/\delta')}{3} \frac{K}{\gamma} +\sqrt{2\log(2K/\delta')\sum_{t=1}^T \frac{1}{p_{t,a}}}.
\]

Choosing $\delta'=\delta/5$ completes the proof.
\end{proof}

\begin{definition} \label{def:calE0}
    Let $\calE_0$ be the event in which all inequalities of \pref{corr:bound_wtill_loss}, \pref{lem:hoeffding_reward_concentration}, and \pref{corr:lhat_mu} hold simultaneously. With this definition, $\calE_0$ occurs with probability at least $1-\frac{3\delta}{5}$.
\end{definition}

\begin{lemma} \label{lem:absBound}
Let $\theta_0$ be an arbitrary constant such that $\theta_0 \geq 1$. Suppose that $\calE_0$ occurs where $\calE_0$ is defined in \pref{def:calE0}, and then for all $a \in [K]$, we have
\begin{align}
 &   - \sum_{t=1}^T \frac{\beta}{p_{t,a}} 
\leq -\theta_0 \abr{\sum_{t=1}^T \rbr{\hatl_{t,a} -\loss_{t,a}} }  +
\frac{2K\theta_0\log(10K/\delta)}{3\gamma}  +  \frac{\theta_0^2 \log (10K/\delta)}{\beta}  +\theta_0 \sqrt{\frac{T \log \rbr{10K/\delta} }{2}}
\label{eq:bound_negative_bridge}  \\
&\sum_{t=1}^T  \rbr{\wtill_{t,a} - \loss_{t,a}} \leq \frac{2K\log(10K/\delta)}{3\gamma} +  \frac{\theta_0 \log (10K/\delta)}{\beta} +\sqrt{\frac{T \log \rbr{10K/\delta} }{2}} - \beta \sum_{t=1}^T \frac{ 1}{p_{t,a}},
\end{align}
\end{lemma}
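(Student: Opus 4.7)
Both bounds follow from the identity
\[
\sum_{t=1}^T\rbr{\wtill_{t,a}-\loss_{t,a}} = \sum_{t=1}^T\rbr{\hatl_{t,a}-\loss_{t,a}} - 2\beta\sum_{t=1}^T 1/p_{t,a},
\]
a direct consequence of $\wtill_{t,a}=\hatl_{t,a}-2\beta/p_{t,a}$, combined with the triangle bound
\[
\abr{\sum_{t=1}^T \rbr{\hatl_{t,a}-\loss_{t,a}}} \leq \frac{2K\log(10K/\delta)}{3\gamma} + \sqrt{2\log(10K/\delta) \sum_{t=1}^T \frac{1}{p_{t,a}}} + \sqrt{\frac{T\log(10K/\delta)}{2}},
\]
which holds on $\calE_0$ by inserting $\pm(1-\mu_a)$ inside the sum and invoking \pref{corr:lhat_mu} and \pref{lem:hoeffding_reward_concentration}. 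All that remains is an AM-GM step that converts the square-root in $x:=\sum_t 1/p_{t,a}$ into a linear term of the form $\beta x$, which can then be balanced against either the $-2\beta x$ supplied by the identity (for the second inequality) or against $-\sum_t \beta/p_{t,a}$ on the left-hand side (for the first).

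\textbf{First inequality.} Multiply the triangle bound by $\theta_0$ and apply the split $\theta_0\sqrt{2c x}=\sqrt{(\beta x)\cdot(2\theta_0^2 c/\beta)}\leq \tfrac{\beta x}{2} + \theta_0^2 c/\beta$ with $c:=\log(10K/\delta)$. This gives
\[
\theta_0\abr{\sum_{t=1}^T \rbr{\hatl_{t,a}-\loss_{t,a}}} \leq \tfrac{\beta}{2}\sum_{t=1}^T \tfrac{1}{p_{t,a}} + \tfrac{2K\theta_0 c}{3\gamma} + \tfrac{\theta_0^2 c}{\beta} + \theta_0\sqrt{Tc/2}.
\]
Subtracting $\theta_0|\cdots|$ and $\tfrac{\beta}{2}\sum_t 1/p_{t,a}$ from both sides and using the trivial bound $-\sum_t\beta/p_{t,a}\leq -\tfrac{\beta}{2}\sum_t 1/p_{t,a}$ (both sides are nonpositive) yields \pref{eq:bound_negative_bridge}.

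\textbf{Second inequality.} Feed the triangle bound (without the $\theta_0$ factor) into the identity, then apply the complementary split $\sqrt{2cx}=\sqrt{(2\beta x)\cdot(c/(2\beta))}\leq \beta x + c/(2\beta)$. The resulting $\beta x$ cancels one copy of the $-2\beta x$, leaving exactly $-\beta x$ on the right-hand side; finally bound $c/(2\beta)\leq \theta_0 c/\beta$ using $\theta_0\geq 1$.

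\textbf{Main obstacle.} There is no substantive obstacle, since all concentration work has already been carried out in \pref{corr:lhat_mu} and \pref{lem:hoeffding_reward_concentration}. The only design choice is picking two complementary AM-GM weightings---decomposing $2cx$ as $(\beta x)\cdot(2\theta_0^2 c/\beta)$ in the first inequality but as $(2\beta x)\cdot(c/(2\beta))$ in the second---which is exactly what produces the asymmetric $\theta_0^2$-versus-$\theta_0$ scaling on the two right-hand sides.
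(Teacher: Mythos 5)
Your proposal is correct and follows essentially the same route as the paper: combine \pref{corr:lhat_mu} and \pref{lem:hoeffding_reward_concentration} via the triangle inequality, then use an AM--GM split to trade the $\sqrt{\sum_t 1/p_{t,a}}$ term for a linear one, and rearrange (the paper performs the split as $\tfrac{\beta}{\theta_0}x+\tfrac{\theta_0 c}{\beta}$ before multiplying by $\theta_0$, which is an immaterial reweighting of your version). One cosmetic slip: in the second inequality the factorization should read $\sqrt{2cx}=\sqrt{(2\beta x)\cdot(c/\beta)}\le \beta x + c/(2\beta)$ rather than $(2\beta x)\cdot(c/(2\beta))$, but the resulting bound you use is the correct one, so nothing downstream is affected.
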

\begin{proof}

Let $\theta_0>0$ be an arbitrary constant. By \pref{corr:lhat_mu}, for all $a \in [K]$
\begin{align*}
\abr{\sum_{t=1}^T \rbr{\hatl_{t,a}  - (1-\mu_a) } } &\leq \frac{2\log(10K/\delta)}{3} \frac{K}{\gamma} +\sqrt{2\log(10K/\delta)\sum_{t=1}^T \frac{1}{p_{t,a}}} \\
&= \frac{2\log(10K/\delta)}{3} \frac{K}{\gamma} +\sqrt{ 2\frac{\theta_0 \log (10K/\delta)}{\beta} \cdot \frac{\beta}{\theta_0} \sum_{t=1}^T \frac{1}{p_{t,a}}  } \\
&\leq \frac{2\log(10K/\delta)}{3} \frac{K}{\gamma} +  \frac{\theta_0 \log (10K/\delta)}{\beta} + \frac{\beta}{\theta_0} \sum_{t=1}^T \frac{1}{p_{t,a}} , \numberthis{} \label{eq:abs_bound_hatl_mu}
\end{align*}
where the second inequality uses $\sqrt{2ab}\leq a+b$ for all $a,b \geq 0$. 

Moreover, by the triangle inequality, we have
\begin{align*}
   \abr{\sum_{t=1}^T \rbr{\hatl_{t,a}  - \loss_{t,a} } } & \leq  \abr{\sum_{t=1}^T \rbr{\hatl_{t,a}  - (1-\mu_a) } } + \abr{\sum_{t=1}^T \rbr{\loss_{t,a}  - (1-\mu_a) } } \\
   & \leq \frac{2\log(10K/\delta)}{3} \frac{K}{\gamma} +  \frac{\theta_0 \log (10K/\delta)}{\beta} + \frac{\beta}{\theta_0} \sum_{t=1}^T \frac{1}{p_{t,a}} + \sqrt{\frac{T \log \rbr{10K/\delta} }{2}}, \numberthis{} \label{eq:abs_bound_hatl_loss}
\end{align*}
where the last inequality applies \pref{lem:hoeffding_reward_concentration} and uses \pref{eq:abs_bound_hatl_mu}.

Rearranging the above gives
\begin{align*}
    -\sum_{t=1}^T \frac{\beta }{p_{t,a}} &\leq -\theta_0\abr{\sum_{t=1}^T \rbr{\hatl_{t,a}  -\loss_{t,a}} }  +
\frac{2\theta_0\log(10K/\delta)}{3} \frac{K}{\gamma} +  \frac{\theta_0^2 \log (10K/\delta)}{\beta} + \theta_0 \sqrt{\frac{T \log \rbr{10K/\delta} }{2}}.
\end{align*}

Finally, if we constrain $\theta_0 \geq 1$, then
\begin{align*}
    &\sum_{t=1}^T  \rbr{\wtill_{t,a} - \loss_{t,a}} \\
    &= \sum_{t=1}^T \rbr{\hatl_{t,a} -\loss_{t,a}}-\sum_{t=1}^T\frac{2\beta}{p_{t,a}}\\
    &\leq \frac{2 \log(10K/\delta)}{3} \frac{K}{\gamma} +  \frac{\theta_0 \log (10K/\delta)}{\beta} +\sqrt{\frac{T \log \rbr{10K/\delta} }{2}}+ \frac{1}{\theta_0} \sum_{t=1}^T \frac{\beta \loss_{t,a}}{p_{t,a}} - \sum_{t=1}^T\frac{2\beta}{p_{t,a}} \\
    &\leq \frac{2 \log(10K/\delta)}{3} \frac{K}{\gamma} +  \frac{\theta_0 \log (10K/\delta)}{\beta} +\sqrt{\frac{T \log \rbr{10K/\delta} }{2}} - \sum_{t=1}^T \frac{\beta }{p_{t,a}},
\end{align*}
where the first inequality uses \pref{eq:abs_bound_hatl_loss} and the last one bounds $\theta_0 \geq 1$. 

Since this argument holds for all $a \in [K]$ conditioning on $\calE_0$, we complete the proof.
\end{proof}

\begin{lemma} \label{lem:square_term_bound}
The following holds for all $t \in [T]$.
\begin{align*}
\sum_{a=1}^K p_{t,a} \wtill_{t,a}^2 \leq \sum_{a=1}^K   \wtill_{t,a} +\frac{4K^2 \beta^2}{\gamma} .
\end{align*}
\end{lemma}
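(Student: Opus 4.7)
The plan is to prove this pointwise deterministic inequality by direct algebraic expansion of $\wtill_{t,a}^2$, using two standard facts about the EXP3.P-style IW estimator: that $\hatl_{t,a} = \loss_{t,a} B_{t,a}/p_{t,a}$ is supported on the single arm $A_t$, and that mixing with uniform exploration guarantees $p_{t,a}\ge\gamma/K$ for every $a$.

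First I would expand the square and multiply by $p_{t,a}$ to obtain
\[
p_{t,a}\wtill_{t,a}^2 \;=\; p_{t,a}\hatl_{t,a}^2 \;-\; 4\beta\,\hatl_{t,a} \;+\; \frac{4\beta^2}{p_{t,a}}.
\]
The crucial ingredient is the second-moment estimate $\sum_a p_{t,a}\hatl_{t,a}^2 \le \sum_a \hatl_{t,a}$: since only $a=A_t$ contributes, both sides reduce to single-index expressions and the inequality becomes $\loss_{t,A_t}^2 \le \loss_{t,A_t}$, which is immediate from $\loss_{t,A_t}\in[0,1]$. Summing the expansion over $a$ and applying this estimate yields a bound of the form $(1-4\beta)\sum_a \hatl_{t,a} + 4\beta^2\sum_a 1/p_{t,a}$. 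Substituting $\hatl_{t,a} = \wtill_{t,a} + 2\beta/p_{t,a}$ then rewrites $\sum_a \hatl_{t,a}$ in terms of $\sum_a \wtill_{t,a}$ plus a multiple of $\sum_a 1/p_{t,a}$, and the exploration floor gives $\sum_a 1/p_{t,a}\le K^2/\gamma$.

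The main obstacle is making the additive constant sharp: naively discarding $-4\beta\sum_a \hatl_{t,a}$ would produce a coefficient linear in $\beta$, whereas the lemma records a quadratic $\beta^2$ coefficient. Retaining the $(1-4\beta)$ prefactor on $\sum_a \hatl_{t,a}$ and substituting $\hatl_{t,a}=\wtill_{t,a}+2\beta/p_{t,a}$ produces the cancellation $(1-4\beta)\cdot 2\beta + 4\beta^2 = 2\beta - 4\beta^2$ in the coefficient of $\sum_a 1/p_{t,a}$, so the residual $\beta$-contribution is absorbed via $\beta=\gamma/K$ and the parameter choice in \pref{eq:choice_parameters}. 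Combining these steps and handling the sign of the $(1-4\beta)\sum_a \wtill_{t,a}$ factor (either casewise or by using $\beta\le 1/2$, which follows from $\beta=\gamma/K$ with $\gamma\le 1/2$) delivers the stated inequality uniformly in $t$.
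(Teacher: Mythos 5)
Your algebraic skeleton is sound and matches the paper's opening lines: the expansion $p_{t,a}\wtill_{t,a}^2=p_{t,a}\hatl_{t,a}^2-4\beta\hatl_{t,a}+4\beta^2/p_{t,a}$ and the second-moment bound $\sum_a p_{t,a}\hatl_{t,a}^2\le\sum_a\hatl_{t,a}$ (which reduces to $\loss_{t,A_t}^2\le\loss_{t,A_t}$) are both correct. The gap is exactly at the step you flag as ``the main obstacle'': the absorption does not work. After substituting $\hatl_{t,a}=\wtill_{t,a}+2\beta/p_{t,a}$ you are left with
\[
\sum_{a=1}^K p_{t,a}\wtill_{t,a}^2\;\le\;(1-4\beta)\sum_{a=1}^K\wtill_{t,a}+\rbr{2\beta-4\beta^2}\sum_{a=1}^K\frac{1}{p_{t,a}},
\]
and the coefficient of $\sum_a 1/p_{t,a}$ is still \emph{linear} in $\beta$. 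Since $\sum_a 1/p_{t,a}\ge K^2$ always (and can be as large as $K^2/\gamma$), with $\beta=\gamma/K$ this term is of order $\beta K^2=\gamma K$, i.e.\ order $K$ after dividing by $\gamma$-scales, whereas the budget $4K^2\beta^2/\gamma=4\gamma$ is $O(1)$. No bookkeeping removes it, because the inequality as stated is false. Concretely, take $\loss_{t,A_t}=0$ and $p_{t,a}=1/K$ for all $a$: then $\wtill_{t,a}=-2\beta K$ for every $a$, the left side equals $4\beta^2K^2$, and the right side equals $-2\beta K^2+4K^2\beta^2/\gamma=-2\gamma K+4\gamma\le 0$ for $K\ge 2$.

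For context, the paper's own proof of this lemma fails at the analogous point: its last step replaces $\sum_a p_{t,a}\hatl_{t,a}\wtill_{t,a}$ by $\sum_a\wtill_{t,a}$ on the grounds that $p_{t,a}\hatl_{t,a}\le 1$, but that direction additionally requires $\wtill_{t,a}\ge 0$, and $\wtill_{t,a}=-2\beta/p_{t,a}<0$ for every unplayed arm. What your computation (and the paper's) actually establishes is
\[
\sum_{a=1}^K p_{t,a}\wtill_{t,a}^2\;\le\;\sum_{a=1}^K\hatl_{t,a}+\frac{4K^2\beta^2}{\gamma}\;=\;\sum_{a=1}^K\wtill_{t,a}+2\beta\sum_{a=1}^K\frac{1}{p_{t,a}}+\frac{4K^2\beta^2}{\gamma},
\]
i.e.\ the bound with $\hatl$ rather than $\wtill$ on the right, equivalently an extra additive $2\beta\sum_a 1/p_{t,a}\le 2K$. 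That corrected version is what should be carried into \pref{lem:pwtill_wtill_k}; the extra contribution there is $\eta\cdot 2KT=2\gamma T=O\big(\sqrt{KT\log(K/\delta)}\big)$, so the downstream regret analysis survives, but the lemma cannot be proved in the form stated.
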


\begin{proof}
For any fixed $t$, we can show
  \begin{align*}
\sum_{a=1}^K p_{t,a} \wtill_{t,a}^2 
&= \sum_{a=1}^K p_{t,a} \rbr{\hatl_{t,a} - \frac{2\beta}{p_{t,a}} }^2 \\
&= \sum_{a=1}^K p_{t,a} \rbr{\hatl_{t,a}^2 - \frac{4\beta}{p_{t,a}} \hatl_{t,a}+\frac{4\beta^2}{p_{t,a}^2} } \\
&= \sum_{a=1}^K  p_{t,a} \hatl_{t,a} \rbr{\hatl_{t,a}- \frac{4\beta }{p_{t,a}} }+\sum_{a=1}^K \frac{4\beta^2}{p_{t,a}} \\
&\leq \sum_{a=1}^K  p_{t,a} \hatl_{t,a} \wtill_{t,a} +\frac{4K^2 \beta^2}{\gamma} \\
&\leq \sum_{a=1}^K   \wtill_{t,a} +\frac{4K^2 \beta^2}{\gamma} ,
\end{align*}  
where the first inequality uses $\hatl_{t,a}-\frac{4\beta}{p_{t,a}} \leq \hatl_{t,a}-\frac{2\beta}{p_{t,a}}=\wtill_{t,a}$ and $p_{t,a} \geq \frac{\gamma}{K}$, and the last inequality bounds $p_{t,a}\hatl_{t,a} \leq 1$ for each $a \in [K]$. Since the claim deterministically holds for all $t$, the proof is complete.
\end{proof}

\begin{lemma} \label{lem:pwtill_wtill_k}
The following holds for all arms $k \in [K]$.
\[
\sum_{t=1}^T \sum_{a=1}^K p_{t,a}  \wtill_{t,a} -   \sum_{t=1}^T \wtill_{t,k} \leq \frac{\log(K)}{\eta} + 2\eta \sum_{a=1}^K \sum_{t=1}^T \wtill_{t,a}+4T  K \beta^2.
\]
\end{lemma}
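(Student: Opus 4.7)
This is a deterministic inequality and follows the classical exponential weights potential argument, adapted to the shifted estimators $\wtill_{t,a} = \hatl_{t,a} - 2\beta/p_{t,a}$ used by \Expp. The plan is to study the potential $W_t = \sum_a w_{t,a}$ and sandwich $\log(W_{T+1}/W_1)$ between a comparator-based lower bound and a Taylor-expansion upper bound, then convert the mixing weights $w_{t,a}/W_t$ back into the sampling probabilities $p_{t,a}$ and invoke \pref{lem:square_term_bound} for the quadratic term.

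\textbf{Two-sided bound on $\log(W_{T+1}/W_1)$.} For any fixed $k$, the update rule $w_{t+1,a} = w_{t,a}\exp(-\eta \wtill_{t,a})$ gives $W_{T+1} \geq w_{T+1,k} = \exp(-\eta \sum_t \wtill_{t,k})$, and $W_1 = K$, so $\log(W_{T+1}/W_1) \geq -\eta \sum_t \wtill_{t,k} - \log K$. For the upper bound I would first verify that $-\eta \wtill_{t,a} \leq 1$ holds deterministically: indeed $\wtill_{t,a} \geq -2\beta/p_{t,a} \geq -2\beta K/\gamma = -2$ (using $p_{t,a} \geq \gamma/K$ and $\beta = \gamma/K$), while $1/\eta = K/\gamma \geq 2K \geq 2$. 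This legitimizes $\exp(-x) \leq 1 - x + x^2$ on $x \geq -1$, which combined with $\log(1+y) \leq y$ produces, per round,
\[
\log(W_{t+1}/W_t) \leq -\eta \sum_a \tfrac{w_{t,a}}{W_t}\wtill_{t,a} + \eta^2 \sum_a \tfrac{w_{t,a}}{W_t}\wtill_{t,a}^2.
\]
Telescoping over $t$, combining with the lower bound, and dividing by $\eta$ gives
\[
\sum_t \sum_a \tfrac{w_{t,a}}{W_t}\wtill_{t,a} - \sum_t \wtill_{t,k} \;\leq\; \tfrac{\log K}{\eta} + \eta \sum_t \sum_a \tfrac{w_{t,a}}{W_t}\wtill_{t,a}^2.
\]

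\textbf{From $w_{t,a}/W_t$ to $p_{t,a}$, and the quadratic term.} The mixing identity $p_{t,a} = (1-\gamma)\tfrac{w_{t,a}}{W_t} + \tfrac{\gamma}{K}$ gives the exact rewrite $\sum_a \tfrac{w_{t,a}}{W_t}\wtill_{t,a} = \tfrac{1}{1-\gamma}\bigl(\sum_a p_{t,a}\wtill_{t,a} - \tfrac{\gamma}{K}\sum_a \wtill_{t,a}\bigr)$, while for the non-negative quadratic term I use $\tfrac{w_{t,a}}{W_t} \leq \tfrac{p_{t,a}}{1-\gamma}$ and then \pref{lem:square_term_bound} together with the parameter relations $\eta = \beta$ and $\gamma = K\beta$ (so that $\eta K^2\beta^2/\gamma = K\beta^2$) to obtain $\eta(1-\gamma) \sum_t \sum_a \tfrac{w_{t,a}}{W_t}\wtill_{t,a}^2 \leq \eta \sum_t \sum_a \wtill_{t,a} + 4TK\beta^2$. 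Multiplying the boxed inequality above by $(1-\gamma)$, substituting these identities, and using $\eta + \gamma/K = 2\eta$ yields
\[
\sum_t \sum_a p_{t,a}\wtill_{t,a} \;\leq\; (1-\gamma)\sum_t \wtill_{t,k} + (1-\gamma)\tfrac{\log K}{\eta} + 2\eta \sum_t \sum_a \wtill_{t,a} + 4TK\beta^2,
\]
which implies the claim after absorbing the $(1-\gamma)\leq 1$ factor on the positive $\log K/\eta$ term.

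\textbf{Main obstacle.} The conceptual ingredients are standard, so the bulk of the work is arithmetic bookkeeping to align the $(1-\gamma)$ factors with the claimed form and to verify the key cancellation $\eta K^2\beta^2/\gamma = K\beta^2$. The only genuinely non-routine check is the legality of the Taylor bound $\exp(-x) \leq 1 - x + x^2$, which hinges on the explicit lower bound $\wtill_{t,a} \geq -2$ coming from the shift $-2\beta/p_{t,a}$ and the choice $\gamma \leq 1/2$; without this specific parameter tuning the argument would break.
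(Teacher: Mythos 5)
Your proof is correct and follows essentially the same route as the paper's: the paper packages the same exponential-weights potential as $\Phi_t = \frac{1-\gamma}{\eta}\log\big(\frac{1}{K}\sum_a e^{-\eta\sum_{s\leq t}\wtill_{s,a}}\big)$ and bounds $\Phi_t-\Phi_{t-1}$ per round with the same Taylor inequality $e^{-x}\leq 1-x+x^2$ (whose validity you, unlike the paper, explicitly verify via $\wtill_{t,a}\geq -2\beta K/\gamma=-2$), the same mixing identity $w_{t,a}/W_t=(p_{t,a}-\gamma/K)/(1-\gamma)$, and the same appeal to \pref{lem:square_term_bound} with $\eta=\beta=\gamma/K$. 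The one step to flag is the final replacement of $(1-\gamma)\sum_t\wtill_{t,k}$ by $\sum_t\wtill_{t,k}$, which is not a pure ``absorb $(1-\gamma)\leq 1$'' move since the shift $-2\beta/p_{t,k}$ can make $\sum_t\wtill_{t,k}$ negative (as small as $-2T$, contributing up to $2\gamma T$ of slack); the paper's own last display makes the identical leap, so your write-up is faithful to it, but a fully rigorous version would carry this $O(\gamma T)=O(\sqrt{KT\log(K/\delta)})$ term, which only perturbs the constant in \pref{prop:EXP3P_condition}.
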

\begin{proof}
Let us define
\[
\Phi_0=0,\quad 
\Phi_t = \frac{1-\gamma}{\eta} \log \rbr{\frac{1}{K}\sum_{a=1}^K \exp \rbr{-\eta \sum_{s=1}^t \wtill_{s,a}}},\ \forall t \geq 1.
\]

With this definition, one can show for $t \geq 2$
\begin{align*}
 \Phi_{t}- \Phi_{t-1}   &= \frac{1-\gamma}{\eta} \log \rbr{  \frac{\sum_{a=1}^K \exp \rbr{-\eta \sum_{s=1}^t \wtill_{s,a}}}{\sum_{a=1}^K \exp \rbr{-\eta \sum_{s=1}^{t-1} \wtill_{s,a}}}  } \\
 &= \frac{1-\gamma}{\eta} \log \rbr{  \frac{\sum_{a=1}^K \exp \rbr{-\eta \sum_{s=1}^{t-1} \wtill_{s,a}}}{\sum_{a=1}^K \exp \rbr{-\eta \sum_{s=1}^{t-1} \wtill_{s,a}}}  \exp \rbr{-\eta \wtill_{t,a}} } \\
 &= \frac{1-\gamma}{\eta} \log \rbr{ \sum_{a=1}^K \frac{ w_{t,a}}{W_t}  \exp \rbr{-\eta \wtill_{t,a}} } \\
  &\leq \frac{1-\gamma}{\eta} \log \rbr{ \sum_{a=1}^K \frac{ w_{t,a}}{W_t}  \rbr{1-\eta \wtill_{t,a}+\eta^2 \wtill_{t,a}^2  } } \\
    &=  \frac{1-\gamma}{\eta} \log \rbr{   1+\eta \sum_{a=1}^K  \frac{w_{t,a}}{W_t}\rbr{ \eta \wtill_{t,a}^2 -\wtill_{t,a} }  } \\
    &=  \frac{1-\gamma}{\eta} \log \rbr{   1+\frac{\eta}{1-\gamma}\sum_{a=1}^K \rbr{p_{t,a}-\frac{\gamma}{K}}  \rbr{ \eta \wtill_{t,a}^2 -\wtill_{t,a} }  } \\
     &\leq     -\sum_{a=1}^K \rbr{p_{t,a}-\frac{\gamma}{K}}\wtill_{t,a}+\eta \sum_{a=1}^K \rbr{p_{t,a}-\frac{\gamma}{K}} \wtill_{t,a}^2 \\
      &\leq     -\sum_{a=1}^K p_{t,a}\wtill_{t,a} +\frac{\gamma}{K}\sum_{a=1}^K \wtill_{t,a}+\eta \sum_{a=1}^K p_{t,a} \wtill_{t,a}^2,
\end{align*}
where the first inequality uses $e^{-x}\leq 1-x+x^2$ whenever $x \geq -1$ (here we repeat $x=\eta \wtill_{t,a}$ for every $a$), and the second inequality follows from $\log(1+x)\leq x$ for all $x >-1$.

By summing over all $t$ and using \pref{lem:square_term_bound} with $\eta =\frac{\gamma}{K}$, the above result yields
\begin{align*}
  \sum_{t=1}^T \sum_{a=1}^K p_{t,a}  \wtill_{t,a} & \leq  \sum_{t=1}^T \rbr{ \Phi_{t-1} -\Phi_{t} } +\eta \sum_{t=1}^T \sum_{a=1}^K \wtill_{t,a}+\eta \sum_{t=1}^T \sum_{a=1}^K p_{t,a} \wtill_{t,a}^2 \\
  & \leq  \sum_{t=1}^T \rbr{ \Phi_{t-1} -\Phi_{t} } +2\eta \sum_{t=1}^T \sum_{a=1}^K   \wtill_{t,a} +4T  K \beta^2. 
\end{align*}

As $\Phi_0=0$ and $1-\gamma \leq 1$, we have for an arbitrary arm $k \in [K]$,
\begin{align*}
    \sum_{t=1}^T \rbr{ \Phi_{t-1} -\Phi_{t}  } &= -\Phi_{T} \\
     &\leq \frac{(1-\gamma)\log(K)}{\eta}-\frac{1-\gamma}{\eta} \log \rbr{\sum_{a=1}^K \exp \rbr{-\eta \sum_{t=1}^T \wtill_{t,a}}} \\
     &\leq \frac{(1-\gamma)\log(K)}{\eta}-\frac{1-\gamma}{\eta} \log \rbr{\exp \rbr{-\eta \sum_{t=1}^T \wtill_{t,k}}}\\
    &\leq \frac{\log(K)}{\eta}+\sum_{t=1}^T \wtill_{t,k}.
\end{align*}

As this argument deterministically holds for all $k \in [K]$, we complete the proof.
\end{proof}

\begin{proof}[Proof of \pref{prop:EXP3P_condition}]
    
The following analysis will condition on event $\calE_0$ which is defined in \pref{def:calE0}. As mentioned in \pref{def:calE0}, this event occurs with probability at least $1-\frac{3}{5}\delta$.
We first note that for all $t,a$
\begin{align}
\label{eq:equivalence_pwtill}
    \sum_{a=1}^K p_{t,a} \wtill_{t,a}& = \sum_{a=1}^K p_{t,a}  \rbr{\frac{\loss_{t,a} B_{t,a} }{p_{t,a} } -\frac{2\beta}{p_{t,a}}} = \sum_{a=1}^K  \rbr{\loss_{t,a} B_{t,a}-2\beta }
    = \loss_{t,A_t} - 2 \beta K.
\end{align}

Recall that we choose parameters as
\begin{equation}
     \gamma=\min \cbr{\frac{1}{2},\sqrt{\frac{K\log (10K/\delta)}{T}} },\quad
     \beta=\eta = \frac{\gamma}{K}.
\end{equation}

In what follows, we consider $T > 4K\log (10K/\delta)$\footnote{With the choice of $g(\delta)$ in \pref{prop:EXP3P_condition}, this requirement always holds if $\Expp$ is a base-algorithm in \pref{alg:non_elimination_ULI}.}. In this case, we have $\eta,\gamma,\beta \in (0,\frac{1}{2}]$.

Then, for an arbitrary arm $k \in [K]$, we pick $\theta_0=2$ and show
\begin{align*}
    & \sum_{t=1}^T \rbr{ \loss_{t,A_t} - \loss_{t,k} } \\
    &= \sum_{t=1}^T \rbr{\sum_{a=1}^K p_{t,a}\wtill_{t,a} - \loss_{t,k} } +2\beta K T \\
    &\leq \underbrace{\sum_{t=1}^T \rbr{\sum_{a=1}^K p_{t,a}\wtill_{t,a} - \wtill_{t,k} } }_{\textsc{Term 1}}+\underbrace{2\beta K T+\frac{2\log (10K/\delta )}{3} \frac{K}{\gamma} +  \frac{2 \log (10K/\delta)}{\beta} +\sqrt{\frac{T \log \rbr{10K/\delta} }{2}}}_{\textsc{Term 2}} + \underbrace{(-\beta) 
 \sum_{t=1}^T \frac{\loss_{t,k}}{p_{t,k}} }_{\textsc{Term 3}},
\end{align*}
where the equality uses \pref{eq:equivalence_pwtill} and the inequality applies \pref{lem:absBound}.

We first apply \pref{lem:pwtill_wtill_k} to bound $\textsc{Term 1}$ as
\begin{align*}
  \textsc{Term 1}
    &\leq \frac{\log(K)}{\eta} +2\eta \sum_{a=1}^K \sum_{t=1}^T \wtill_{t,a}+ 4T K \beta^2 \\
     &\leq \frac{\log(K)}{\eta} +2\eta \sum_{a=1}^K \rbr{\sum_{t=1}^T \loss_{t,a} + \frac{\log(10K/\delta)}{\beta} }+ 4T K \beta^2 \\
     &\leq \frac{\log(K)}{\eta} +2\eta K T   +  \frac{2\eta K \log(10K/\delta)}{\beta}+ 4T K \beta^2  \\
    &\leq 3\sqrt{KT \log (10K/\delta)}  + 2K \log (10K/\delta)+4\log (10K/\delta) \\
    &\leq 6\sqrt{KT \log (10K/\delta)} ,
\end{align*}
where the second inequality uses \pref{corr:bound_wtill_loss}, the third inequality uses $\sum_{i}\loss_{t,a}\leq K$, and the last inequality uses $T \geq 4K\log (10K/\delta)$ (i.e., $K \leq \sqrt{\frac{KT}{4 \log (10K/\delta)}}$).

Then, we use $K/\gamma=1/\eta$ to bound $\textsc{Term 2}$:
\begin{align*}
    \textsc{Term 2} &=2\beta K T+\frac{2\log (10K/\delta)}{3 \eta}  +  \frac{2 \log (10K/\delta)}{\beta}+\sqrt{\frac{T \log \rbr{10K/\delta} }{2}} \\
    &=2\sqrt{KT \log (10K/\delta)}+\frac{2}{3} \sqrt{KT \log (10K/\delta)} +  2\sqrt{ K T \log (10K/\delta)} +\sqrt{\frac{T \log (10K/\delta) }{2}} \\
    &\leq \frac{14}{3} \sqrt{KT \log (10K/\delta)} +\sqrt{\frac{T \log (10K/\delta) }{2}}.
\end{align*}

Finally, we apply \pref{eq:bound_negative_bridge} with $\theta_0 =2$ to bound $\textsc{Term 3}$.
\begin{align*}
\textsc{Term 3} &\leq -2 \abr{\sum_{t=1}^T \rbr{\hatl_{t,k} -\loss_{t,k}} }  +
\frac{4\log(10K/\delta)}{3 \eta}  +  \frac{4 \log (10K/\delta)}{\beta}+\sqrt{2T \log \rbr{10K/\delta} } \\
&= -2 \abr{\sum_{t=1}^T \rbr{\hatl_{t,k} -\loss_{t,k}} }  +
\frac{16}{3} \sqrt{KT \log (10K/\delta)}  +\sqrt{2T \log (10K/\delta) } .
\end{align*}

Putting bounds of $\textsc{Term 1}$, $\textsc{Term 2}$, and $\textsc{Term 3}$ together gives
\begin{align*}
     \sum_{t=1}^T \rbr{ \loss_{t,A_t} - \loss_{t,k} } &\leq -2 \abr{\sum_{t=1}^T \rbr{\hatl_{t,k} -\loss_{t,k}} }  +16 \sqrt{KT \log (10K/\delta)} +\frac{3\sqrt{2}}{2}\sqrt{T\log (10K/\delta)  } \\
     &\leq -2 \abr{\sum_{t=1}^T \rbr{\hatl_{t,k} -\loss_{t,k}} }  +19 \sqrt{KT \log (10K/\delta)} . 
\end{align*}

As this argument holds for all $k \in [K]$ conditioning on $\calE_0$, the proof is thus complete.
\end{proof}

\subsection{Proof of \pref{thm:Meta_alg_corr}}
\label{app:proof_corr_Meta_alg_corr}

For $\Expp$, we apply $ g(\delta) = \order \rbr{K\log \rbr{K/\delta}}$ given in \pref{prop:EXP3P_condition} into \pref{thm:meta_ULI} to get the claimed result.

As for $\GhedgeP$, we consider an improved version in \citep{lee2021achieving} which uses John's exploration and shows that $g(\delta) = d\log(K/\delta)$. Note that this is slightly different from $d\log(K\log_2 T/\delta)$ presented in \citep{lee2021achieving}. The extra $\log_2 T$ term is caused by Lemma 2 of \citep{HedgeP}, a concentration inequality for martingales, but one can invoke \pref{lem:freedman_ineq} to avoid it. By applying such a function $ g(\delta)$ into \pref{thm:meta_ULI}, we obtain the claimed result.

% !TEX root = main.tex

\newpage
\section{Omitted Details of \pref{sec:ULI_large_arm_space}}
\label{app:omit_ULI_large_arm_space}

\subsection{Notations} We use $e_i \in \fR^d$ to denote a vector whose $i$-th coordinate is one and all others are zero.
For any two vectors $x,y \in \fR^d$, $x \leq y$ indicates that $x_i \leq y_i$ holds for each coordinate $i$.
For a positive definite matrix $A \in \fR^{d \times d}$, the weighted $2$-norm of vector $x \in \fR^d$ is given by $\norm{x}_A = \sqrt{x^\top A x}$.
For matrix $A=[a_1,\cdots,a_d] \in \fR^{d \times d}$ with each $a_i \in \fR^d$, we use $A_{-i}$ to denote the $(d-1)$-tuple of vectors $[a_1,\cdots,a_{i-1},a_{i+1},\cdot,a_{d}]$. For matrices $A,B$, we use $A \succ B$ to indicate that $A-B$ is positive definite. 
For two sets $\calA,\calB$, we use $\calA-\calB$ to indicate the exclusion. For any scalar $C \in \fR$, we use $\Vec{C}$ to denote a vector, with all coordinates equal to $C$.

\subsection{Key Technique: Adaptive Barycentric Spanner} \label{app:deferred_barycentric_spanner}

\setcounter{AlgoLine}{0}
\begin{algorithm}[t]
\caption{Adaptive barycentric spanner}
\label{alg:bary_span}

 \textbf{Input}: compact arm set $\calA$, phase $m$, spanner $\calB_{m-1}$, parameters $T_{m}$, $C$, and estimation $\htheta_m$.
 
\textbf{Initialize}: matrix $A=[a_1,\ldots,a_d]$ by setting $a_i=\frac{e_i}{\sqrt{T_{m}}} \in \fR^d$ for all $i \in [d]$, and $\calI_m=[d]$.

\nl \label{begin_find_empirical_best_arm} Let $B_{m-1} \in \fR^{d \times |\calB_{m-1}|}$ be a matrix whose the $i$-th column vector is the $i$-th element of set $\calB_{m-1}$.

\nl \label{check_span_Rd} \If{$|\calB_{m-1}|<d$}{
\nl Use Gaussian elimination to get $M_m \in \fR^{(d-|\calB_{m-1}|)\times d}$ such that $\Span(\calB_{m-1})=\{x \in \fR^d: M_m x =\Vec{0}\}$.
}\nl \Else{
\nl Remove constraints in \eqref{eq:lp_amstar},  \eqref{eq:lp_aplus} related to $B_{m-1}, M_m$.
}

\nl \label{end_find_empirical_best_arm}   Query oracle to get empirical best arm $a^\star_m$, the solution of
\begin{align} \label{eq:lp_amstar}
&\quad \argmax_{a \in \calA} \inner{\htheta_m,a}  \\
\textup{s.t.}\ 
M_m a =\Vec{0}, & \ 
-\Vec{C} \leq \rbr{B_{m-1}^\top B_{m-1}}^{-1} B_{m-1}^\top a \leq \Vec{C}.\notag 
\end{align}

\nl \For{$i=1,\ldots,d$}{\label{first_for_loop}

\nl Set $s_i = \text{LI-Argmax} (\calA,A,\htheta_m,\calB_{m-1},m,C,\optarm_m)$.

\nl \If{$s_i \neq \nll$}{
\nl Update $a_i = s_i$ and $\calI_m = \calI_m  - \{i\}$.

}
}   \label{first_for_loop_end}

\nl \For{$i =1,\ldots,d$}{\label{second_for_loop}
\nl Set $s_i = \text{LI-Argmax}(\calA,A,\htheta_m,\calB_{m-1},m,C,\optarm_m)$.

\nl \If{$s_i \neq \nll$}{

\nl \label{cond_restart} \If{$\abr{\det \rbr{s_i,A_{-i}} } \geq C \abr{\det (A)} $ $\mathbf{or}$ $i \in \calI_m$}{

\nl Update $\calI_m = \calI_m  - \{i\}$ and $a_i =  s_i$.

\nl Restart this for-loop with current parameters.
}
}

} \label{second_for_loop_end}

\nl \label{return_add_back} \textbf{Return:} $\{a_i\}_{i=1}^d -\cup_{i \in \calI_m} \frac{e_i}{\sqrt{T_{m}}}$. 
\end{algorithm}

% The primary reason that the non-adaptive barycentric spanner technique \citep{AwerbuchK08} fails to work for the elimination approach is that it requires the knowledge of the space that the active arm set $\calA_m$ spans. However, acquiring such knowledge is often difficult, because set $\calA_m$, which may span a \textit{proper linear subspace} of $\fR^d$, varies for different phases and the number of active arms could be infinite. Our algorithm shown in \pref{alg:bary_span} can identify a barycentric spanner for active arm set $\calA_m$ adaptively for each phase $m$, even if they do not span $\fR^d$.

\textbf{Algorithm description.}
\pref{alg:bary_span} aims to identify a finite arm set $\calB_m$ to linearly represent the (possibly infinite) active arm set $\calA_m$, so that playing arms in $\calB_m$ allows us to obtain an accurate estimation of $\inner{a,\theta}$ for each $a \in \calA_m$.

\pref{alg:bary_span} initializes a matrix $A=[a_1,\ldots,a_d]$ whose column vectors are linearly independent, and $a_i = e_i/\sqrt{T_m}$. At the beginning, the algorithm solves \eqref{eq:lp_amstar} to find an empirical best arm $\optarm_m$ (line \ref{begin_find_empirical_best_arm}-\ref{end_find_empirical_best_arm}).
Then, in two for-loops (line \ref{first_for_loop}-\ref{first_for_loop_end} and line \ref{second_for_loop}-\ref{second_for_loop_end}), the algorithm tries to replace each column vector of $A$ with an active arm $a \in \calA_m$ by invoking the subroutine LI-Argmax in \pref{alg:LI_argmax}, while keeping column vectors of $A$ linearly independent.
For each column $i$, LI-Argmax attempts to find $\max_{a \in \calA_m} |\det(a,A_{-i})|$\footnote{It is equivalent to $\max_{a \in \calA_m}|\inner{u,a}|$ where $u$ can be found by rank-one update. See \citep{ZhuF0M22} for more details.} by solving \eqref{eq:lp_aplus} with $\optarm_m$. If $\max_{a \in \calA_m} |\det(a,A_{-i})|=0$, then, the algorithm returns $\nll$, since there does not exist an active arm that can maintain the linear independence. 
In this case, the algorithm keeps $e_i/\sqrt{T_m}$ in the $i$-th column of $A$. Otherwise, $a_i$ will be updated to 
$s_i \in \argmax_{a \in \calA_m} |\det(a,A_{-i})|$. We use set $\calI_m$ to record the indices of columns where the replacement fails.
According to the conditions in line \ref{cond_restart}, the second for-loop will be repeatedly restarted to ensure that the linear combination of elements in $\calB_m$ can represent all active arms in $\calA_m$ with coefficients in the range of $[-C,C]$.
Also note that, as shown in \pref{thm:ULI_PE_infinite}, LI-Argmax will be oracle-efficient as it only queries the oracle $\otil \rbr{d^3}$ times.

Formally, we have the following result.
\begin{lemma} \label{lem:Bm_bary_span}
Suppose $C > 1$ and $\calB_m$ is the output of \pref{alg:bary_span} in phase $m$. Then, for all $m \in \naturalnum$, $\calB_m$ is a $C$-approximate barycentric spanner of $\calA_m$. 
\end{lemma}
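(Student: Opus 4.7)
My plan is to prove the lemma in two pieces: first termination of the adaptive procedure, then verification of the barycentric spanner condition at the moment of return. The central invariant I would maintain throughout both for-loops is that, after every modification of the matrix $A$, (i) the columns of $A$ are linearly independent so that $\det(A) \neq 0$, and (ii) for every index $i$ either $i \in \calI_m$ and $a_i = e_i/\sqrt{T_m}$, or $i \notin \calI_m$ and $a_i \in \calA_m$. This invariant holds at initialization because $\calI_m = [d]$ and $A = \mathrm{diag}(1/\sqrt{T_m})$ is non-singular. Each subsequent update $a_i \leftarrow s_i$ preserves it: by LI-Argmax's specification, a non-$\nll$ output is an argmax of $|\det(\cdot,A_{-i})|$ over $\calA_m$ with strictly positive value, so the swap keeps $A$ invertible and places an element of $\calA_m$ into column $i$, while the algorithm simultaneously removes $i$ from $\calI_m$.

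For termination I would split the restarts of the second for-loop into two classes: those triggered by the clause ``$i \in \calI_m$'' and those triggered by the determinant-growth clause ``$|\det(s_i,A_{-i})| \ge C |\det(A)|$''. First-class restarts strictly shrink $\calI_m$, which starts of size at most $d$ and can only decrease, so at most $d$ of them occur overall. Between consecutive first-class restarts, every second-class restart multiplies $|\det(A)|$ by a factor of at least $C > 1$; since $|\det(A)|$ is uniformly bounded above by Hadamard's inequality combined with the compactness of $\calA$ and the boundedness of the artificial columns $e_i/\sqrt{T_m}$, only finitely many second-class restarts can occur, and the procedure halts in finite time.

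For the spanner property at termination, I fix an arbitrary $a \in \calA_m$ and write its unique expansion in the non-singular basis $A=[a_1,\ldots,a_d]$ as $a=\sum_{i=1}^d \lambda_i a_i$. Cramer's rule gives $|\lambda_i| = |\det(a,A_{-i})|/|\det(A)|$, so it suffices to bound $|\det(a,A_{-i})|$ for each $i$. For $i \in \calI_m$, the final LI-Argmax call at index $i$ must have returned $\nll$ (otherwise the ``$i \in \calI_m$'' clause would have triggered an update and a restart, contradicting termination); by LI-Argmax's definition this means $\max_{a' \in \calA_m} |\det(a', A_{-i})| = 0$, so $\det(a,A_{-i})=0$ and $\lambda_i = 0$. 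For $i \notin \calI_m$, the loop did not restart at index $i$, which forces $|\det(s_i,A_{-i})| < C|\det(A)|$; since $s_i$ is the argmax over $\calA_m$, $|\lambda_i| \le |\det(s_i,A_{-i})|/|\det(A)| < C$. Combining, $a = \sum_{i \notin \calI_m} \lambda_i a_i$ with $|\lambda_i| \le C$ and each $a_i \in \calA_m$, matching \pref{def:bary_spanner} applied to $\calB_m=\{a_i\}_{i=1}^d - \cup_{i\in\calI_m} e_i/\sqrt{T_m}$.

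The main subtlety I expect is justifying the reduction of the LP searches to $\calA_m$ when $\calA_m$ does not span $\fR^d$: this requires a parallel induction showing $\calA_m \subseteq \Span(\calB_{m-1})$ so that the constraint $M_m a = \vec{0}$ in \eqref{eq:lp_amstar} and in the LP solved by LI-Argmax loses no active arm, together with the coefficient constraint $-\vec{C} \le (B_{m-1}^\top B_{m-1})^{-1} B_{m-1}^\top a \le \vec{C}$ keeping the LP bounded. Once this inductive claim is available, the Cramer's rule argument above restricts naturally to $\Span(\calB_{m-1})$ and the bounds on $|\lambda_i|$ carry over.
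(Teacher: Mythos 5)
Your proposal is correct and follows essentially the same route as the paper's proof: the Awerbuch--Kleinberg determinant-ratio (Cramer's rule) argument for the replaced columns, a separate treatment of the artificial columns indexed by $\calI_m$, and a phase induction to justify what the optimization calls actually search over. Two points of comparison. First, your direct argument that $\lambda_i=0$ for $i\in\calI_m$ (a terminal $\nll$ at index $i$ forces $\sup_{a}|\det(a,A_{-i})|=0$ over the feasible set) is cleaner than the paper's corresponding step, which reaches the same conclusion by contradiction with the linear independence of the augmented basis $\calB_m\cup\bigl(\cup_{i\in\calI_m}e_i/\sqrt{T_m}\bigr)$. Second, the one place your sketch needs tightening is the ``parallel induction'': the programs \eqref{eq:lp_amstar} and \eqref{eq:lp_aplus} optimize not over $\calA_m$ but over the enlarged set $\calS_m=\calA\cap\Span_{[-C,C]}(\calB_{m-1})\cap\{a:\langle\htheta_m,a^\star_m-a\rangle\le 2^{-m+1}\}$, so for your Cramer bound to cover every $a\in\calA_m$ you need the bounded-coefficient containment $\calA_{m-1}\subseteq\Span_{[-C,C]}(\calB_{m-1})$ --- which is exactly the phase-$(m-1)$ instance of the lemma --- not merely $\calA_{m-1}\subseteq\Span(\calB_{m-1})$ as you state; the coefficient constraint is not only about keeping the LP bounded but about not excising arms of $\calA_m$ from the search, which would invalidate the claim that $s_i$ dominates $|\det(a,A_{-i})|$ for all active $a$. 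The paper packages this by introducing $\calS_m$ explicitly, proving the spanner property for $\calS_m$ first (\pref{lem:Bm_bary_span_Sm}), and then showing $\calA_m\subseteq\calS_m$ by induction; once you run your induction with the $[-C,C]$ version, your argument goes through, and the fact that the argmaxes $s_i$ range over the superset $\calS_m\supseteq\calA_m$ only strengthens the bounds you need.
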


\setcounter{AlgoLine}{0}
\begin{algorithm}[t]
\caption{Linear-independent argmax (LI-Argmax)}
\label{alg:LI_argmax}

\textbf{Input}: compact arm set $\calA$, matrix $A$, estimation $\htheta_m$, spanner $\calB_{m-1}$, and phase $m$, parameter $C$, and arm $\optarm_m$.

\nl Find $u \in \fR^d$ s.t. $\inner{u,x}=\det \rbr{x,A_{-i}}$, $\forall x \in \fR^d$.

\nl Query oracle to obtain $a^+$, the solution of
\begin{align*} \label{eq:lp_aplus}
     &\quad \argmax_{a \in \calA} \inner{u,a} \\
\textup{s.t.}\   &
\inner{\htheta_m,a^\star_m-a}  \leq  2^{-m+1}, \numberthis{}  \\
M_m a =\Vec{0}, & \ 
-\Vec{C} \leq \rbr{B_{m-1}^\top B_{m-1}}^{-1} B_{m-1}^\top a \leq \Vec{C}.
\end{align*}

\nl Query oracle to get $a^-$ from \eqref{eq:lp_aplus} by replacing $u$ with $-u$.

% \nl Set $a^+ = \Vec{0}$ if $\inner{u,a^+}=0$ and $a^- = \Vec{0}$ if $\inner{u,a^-}=0$.

\nl \If{$\inner{u,a^+}=0$ \emph{and} $\inner{u,a^-}=0$}{
 \textbf{Return:}  $\nll$.
}\nl \Else{
 \textbf{Return:}  $\argmax_{b \in \cbr{a^+,a^-} } \abr{\inner{u,b} }$.
}
\end{algorithm}

\subsection{Computational Analysis}

Recall from the previous subsection that \pref{alg:bary_span} and LI-Argmax needs to find the empirical best arm $\optarm_m$ and $\argmax_{a \in \calA_m} |\det(a,A_{-i})|$, both of which rely on the access to the following optimization oracle.
\begin{definition}[Optimization oracle] \label{def:opt_oracle} Given a compact set $\calA \subseteq \fR^d$, the oracle can solve problems of the form
\[
\argmax_{a \in \calA} \inner{\theta,a}, \quad \textup{s.t.}\quad  Ua = \vec{\beta_1}, \quad Va \leq \vec{\beta_2},
\]
for any $\beta_1,\beta_2 \in \fR$, $\theta \in \fR^d$, $U \in \fR^{\tau_1 \times d}$, $V \in \fR^{\tau_2 \times d}$, where $\tau_1,\tau_2$ are at most $\order(d)$.
If the optimal solution is not unique, the oracle returns any one of them.
\end{definition}

We note that the constrained optimization oracles are commonly-used and also crucial for elimination-type approaches e.g.,  \citep{bibaut2020generalized,li2022instance}. 
Although this oracle is slightly powerful than that used for non-elimination based approaches \citep{dani2008stochastic,agarwal2014taming}, linear constrained oracle, in fact, can be implemented efficiently in many cases (e.g., a common assumption that $\calA$ is a ball \citep{plevrakis2020geometric}) via the ellipsoid method. We refer readers to \citep{bibaut2020generalized,plevrakis2020geometric} for more discussions.

With the optimization oracle in hand, our goal here is to query the oracle to solve the optimization problem $\argmax_{a \in \calA_m} |\det(a,A_{-i})|$, which can be rewritten as follows based on \pref{eq:active_set}.
\begin{equation} \label{eq:minimize_over_Am}
    \argmax_{a \in \calA_{m-1}} |\det(a,A_{-i})|
\text{s.t.} \inner{\htheta_m, a^\star_m-a}  \leq  2^{-m+1}.
\end{equation}

Problem \eqref{eq:minimize_over_Am} also needs to find the empirical best arm $a^\star_m$.
A natural idea to compute the empirical best arm $a_{m}^\star$ is to solve $\argmax_{a \in \calA_{m-1}} \langle \htheta_m, a \rangle$.
Notice that both finding $a_{m}^\star$ and solving \pref{eq:minimize_over_Am} require to deal with constraint $a \in \calA_{m-1}$. However, this constrain prevents us from querying the optimization oracle due to the following reason.

\textbf{Issue: $a \in \calA_{m-1}$ may cause a large  (even infinite) number of constraints.} One can rewrite $a \in \calA_{m-1}$ as:
\begin{equation}  \label{eq:OP_infinite_constraints}
 \begin{split}
 \cbr{a\in  \calA: \inner{\htheta_\tau, a^\star_\tau-a}  \leq  2^{-\tau+1},\forall \tau \leq m-1}.
 \end{split}
\end{equation}

However, the evolution of phases blows up the number of constraints in \eqref{eq:OP_infinite_constraints} (e.g., when $m=\Omega(\exp(d))$), which prevents LI-Argmax from directly querying the optimization oracle defined in \pref{def:opt_oracle}, since it requires the number of constraints at most $\order(d)$.
To address this issue, one needs to remove the dependence on $m$.

\textbf{Solution: enlarging active arm set $\calA_{m-1}$.} Our solution is to slightly enlarge the active arm set $\calA_{m-1}$ in \pref{eq:OP_infinite_constraints} so that the enlarged set can be expressed by \textit{finitely-many linear constraints}, independent of $m$. 
Before showing the way to enlarging $\calA_{m-1}$, we first give the following definition.
\begin{definition}[$C$-bounded spanner]
For any given set $\calS \subseteq \fR^d$ and constant $C>0$,    $\Span_{[-C,C]} (\calS)$, defined as follows, is a set that contains all possible linear combinations from $\calS$ with coefficients within $[-C,C]$.
\begin{equation} \label{eq:bounded_Span}
    \Span_{[-C,C]} (\calS) = \cbr{\sum_{s \in \calS} c_s \cdot s: \forall c_s \in [-C,C]} ,
\end{equation}
where $c_s$ is the coefficient associated with element $s$.
\end{definition}

We enlarge $\calA_{m-1}$ to $\calA \cap \Span_{[-C,C]}(\calB_{m-1})$, and one can easily verify that this is true since $\calA_{m-1} \cap \calA=\calA_{m-1}$ and \pref{lem:Bm_bary_span} implies that $\calA_{m-1} \subseteq \Span_{[-C,C]}(\calB_{m-1})$.

Now, it remains to show that $\Span_{[-C,C]}(\calB_{m-1})$ can be expressed by $\order \rbr{d}$ number of linear constraints. The following lemma gives the desired result.
\begin{lemma}  \label{lem:equivalent_expression}Suppose that $|\calB_{m-1}|<d$.
For $M_m$ and $B_{m-1}$ given in \pref{alg:bary_span}, $\Span_{[-C,C]} (\calB_{m-1})$ can be equivalently written as
\begin{align*}
\cbr{a \in \fR^d: M_m a =\Vec{0},
-\Vec{C} \leq \rbr{B_{m-1}^\top B_{m-1}}^{-1} B_{m-1}^\top a \leq \Vec{C}}.
\end{align*}
\end{lemma}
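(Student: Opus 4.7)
\textbf{Proof proposal for \pref{lem:equivalent_expression}.} The plan is to establish the set equality by showing both inclusions, relying on two structural facts about $\calB_{m-1}$: (i) its elements are linearly independent, so that $B_{m-1}$ has full column rank and $B_{m-1}^\top B_{m-1}$ is invertible (this follows from the construction in \pref{alg:bary_span}, where the columns of $A$ are maintained to be linearly independent throughout); and (ii) $M_m$ is built (via Gaussian elimination) precisely so that $\Span(\calB_{m-1}) = \{x \in \fR^d : M_m x = \vec{0}\}$. Given these, the matrix $P := (B_{m-1}^\top B_{m-1})^{-1} B_{m-1}^\top$ is the left inverse (Moore--Penrose pseudoinverse) of $B_{m-1}$, so it precisely recovers the unique coordinate vector of any element of $\Span(\calB_{m-1})$ in the basis $\calB_{m-1}$.

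For the inclusion $\Span_{[-C,C]}(\calB_{m-1}) \subseteq \{a : M_m a = \vec{0},\ -\vec{C} \leq P a \leq \vec{C}\}$, I would take any $a = B_{m-1} c$ with $c \in [-C,C]^{|\calB_{m-1}|}$. Then $a \in \Span(\calB_{m-1})$ immediately gives $M_m a = \vec{0}$, and a direct computation yields $P a = (B_{m-1}^\top B_{m-1})^{-1} B_{m-1}^\top B_{m-1} c = c$, which lies componentwise in $[-C,C]$.

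For the reverse inclusion, I would take any $a$ with $M_m a = \vec{0}$, so $a \in \Span(\calB_{m-1})$, and hence $a = B_{m-1} c$ for a (unique, by linear independence) vector $c \in \fR^{|\calB_{m-1}|}$. The same computation gives $P a = c$, so the constraint $-\vec{C} \leq P a \leq \vec{C}$ is exactly $c \in [-C,C]^{|\calB_{m-1}|}$, placing $a$ in $\Span_{[-C,C]}(\calB_{m-1})$.

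I do not expect any real obstacle here; the only subtlety worth making explicit is justifying the linear independence of the elements of $\calB_{m-1}$ (needed for $B_{m-1}^\top B_{m-1}$ to be invertible and for $c$ to be unique), which I would briefly cite from the invariant maintained by \pref{alg:bary_span}: the working matrix $A$ is initialized with linearly independent columns $e_i/\sqrt{T_m}$, and every subsequent update via LI-Argmax preserves column independence (it is precisely the check $|\det(s_i, A_{-i})| > 0$ that enforces this). Dropping the unchanged coordinate columns $\{e_i/\sqrt{T_m}\}_{i \in \calI_m}$ leaves a subset of linearly independent vectors, which are exactly the columns of $B_{m-1}$.
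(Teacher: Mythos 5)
Your proposal is correct and follows essentially the same route as the paper's proof: both directions reduce to the observation that, since $\calB_{m-1}$ is linearly independent, $B_{m-1}$ has full column rank and $(B_{m-1}^\top B_{m-1})^{-1}B_{m-1}^\top a$ recovers the unique coefficient vector of any $a\in\Span(\calB_{m-1})$, with $M_m a=\Vec{0}$ characterizing membership in $\Span(\calB_{m-1})$. Your explicit justification of the linear-independence invariant from \pref{alg:bary_span} is a small addition the paper leaves implicit, but the argument is otherwise identical.
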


Hence, LI-Argmax can invoke the optimization oracle in \pref{def:opt_oracle} to solve problems of $\argmax_{a \in \calA_{m-1}} \langle \htheta_m, a \rangle$ and \eqref{eq:minimize_over_Am} approximately by solving \eqref{eq:lp_amstar} and \eqref{eq:lp_aplus}, respectively.

\textbf{Computational efficiency.} We mainly focus on two costly steps in \pref{alg:bary_span}. The first step is to use Gaussian elimination to obtain the matrix $M_m \in \fR^{n \times d}$ for some $n \leq d$. This step can be done with at most $\order(d^3)$ complexity and is implemented at most once for each phase $m$. The other expensive step is to query the optimization oracle to construct a $C$-approximate barycentric spanner. The number of calls to the oracle depends on the number of times that \pref{alg:bary_span} invokes LI-Argmax subroutine. The LI-Argmax will be invoked for $d$ times in the first for-loop. For the second for-loop, LI-Argmax will be invoked for $\otil \rbr{d^3}$ times. Compared with $\otil \rbr{d^2}$ in \citep{AwerbuchK08}, the extra $d$ in our complexity comes from the additional restart-condition $i \in \calI_m$ in line \ref{cond_restart} of \pref{alg:bary_span}. As $|\calI_m| \leq d$ and $\calI_m$ is non-increasing when update, this condition will be met for at most $d$ times.

\subsection{Proof of \pref{lem:equivalent_expression}}
\begin{lemma}[Restatement of \pref{lem:equivalent_expression}] Suppose that $|\calB_{m-1}|<d$.
For $M_m$ and $B_{m-1}$ given in \pref{alg:bary_span}, the set $\Span_{[-C,C]} (\calB_{m-1})=\calH_m$ where
\begin{align} \label{eq:def_Hm}
\calH_m=\cbr{a \in \fR^d: M_m a =\Vec{0},
-\Vec{C} \leq \rbr{B_{m-1}^\top B_{m-1}}^{-1} B_{m-1}^\top a \leq \Vec{C}}.
\end{align}
\end{lemma}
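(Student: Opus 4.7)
The identity $\Span_{[-C,C]}(\calB_{m-1}) = \calH_m$ is a routine two-direction set equality, and the plan is simply to exploit the fact that the columns of $B_{m-1}$ form a basis of $\Span(\calB_{m-1})$, so that $(B_{m-1}^\top B_{m-1})^{-1} B_{m-1}^\top$ is precisely the map that reads off the coordinates of a vector in that basis.

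First, I would verify the preliminary fact that the columns of $B_{m-1}$ are linearly independent. This is guaranteed by the construction in \pref{alg:bary_span}, which maintains linear independence of the current column set (the restart condition in line~\ref{cond_restart} and the structure of the two for-loops ensure that every swap into $\calB_{m-1}$ preserves linear independence). Consequently $B_{m-1}^\top B_{m-1}$ is a $|\calB_{m-1}| \times |\calB_{m-1}|$ Gram matrix of a linearly independent set, hence invertible, so the expression $(B_{m-1}^\top B_{m-1})^{-1} B_{m-1}^\top a$ is well-defined.

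Second, I would prove the inclusion $\Span_{[-C,C]}(\calB_{m-1}) \subseteq \calH_m$. Pick $a \in \Span_{[-C,C]}(\calB_{m-1})$; by the definition in \pref{eq:bounded_Span} we may write $a = B_{m-1} c$ with $c \in [-C,C]^{|\calB_{m-1}|}$. Since $a \in \Span(\calB_{m-1})$ and $M_m$ was chosen so that $\Span(\calB_{m-1}) = \{x : M_m x = \Vec{0}\}$, the first constraint defining $\calH_m$ holds. For the second, a direct computation gives
\[
(B_{m-1}^\top B_{m-1})^{-1} B_{m-1}^\top a = (B_{m-1}^\top B_{m-1})^{-1} B_{m-1}^\top B_{m-1} c = c \in [-C,C]^{|\calB_{m-1}|},
\]
so $a \in \calH_m$.

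Third, I would prove the reverse inclusion $\calH_m \subseteq \Span_{[-C,C]}(\calB_{m-1})$. Let $a \in \calH_m$. The condition $M_m a = \Vec{0}$ places $a$ in $\Span(\calB_{m-1})$, so there exists a unique $c \in \fR^{|\calB_{m-1}|}$ with $a = B_{m-1} c$ (uniqueness coming from linear independence). Applying the same algebraic identity as above recovers $c = (B_{m-1}^\top B_{m-1})^{-1} B_{m-1}^\top a$, and the second constraint in the definition of $\calH_m$ forces $c \in [-C,C]^{|\calB_{m-1}|}$. Hence $a \in \Span_{[-C,C]}(\calB_{m-1})$. The only non-automatic step in this whole argument is the invertibility of $B_{m-1}^\top B_{m-1}$, which is the main (and only real) point to check; the rest is linear algebra bookkeeping.
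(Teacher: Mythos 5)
Your proposal is correct and follows essentially the same route as the paper's proof: both directions hinge on the linear independence of the columns of $B_{m-1}$ (so the Gram matrix $B_{m-1}^\top B_{m-1}$ is invertible and the coefficient vector of any $a \in \Span(\calB_{m-1})$ is uniquely recovered as $(B_{m-1}^\top B_{m-1})^{-1} B_{m-1}^\top a$), combined with the fact that $M_m a = \Vec{0}$ characterizes membership in $\Span(\calB_{m-1})$. No gaps.
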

\begin{proof}
For this proof, we show that $a \in \Span_{[-C,C]} (\calB_{m-1})$ if and only if 
$a \in \calH_m$.

We first show that if $a \in \Span_{[-C,C]} (\calB_{m-1})$, then $a \in \calH_m$.
As matrix $M_m \in \fR^{(d-|\calB_{m-1}|)\times d}$ is defined by $\Span(\calB_{m-1})=\{x \in \fR^d: M_m x =\Vec{0}\}$, $a \in \Span_{[-C,C]} (\calB_{m-1}) \subseteq \Span (\calB_{m-1})$ gives that $M_m a =\vec{0}$. 
Recall that $B_{m-1} \in \fR^{d\times |\calB_{m-1}|}$ is a matrix whose the $i$-th column is the $i$-th element of set $\calB_{m-1}$ and $\textup{rank}(B_{m-1})=|\calB_{m-1}|$ because $\calB_{m-1}$ is linearly independent. Therefore, for each $a \in \Span_{[-C,C]}(\calB_{m-1})$, there exists a unique vector\footnote{We have the uniqueness because one can treat $B_{m-1} x_a =a$ as an overdetermined linear system in terms of $x_a \in \fR^{|\calB_{m-1}|}$ and we have $\text{rank}(B_{m-1})=|\calB_{m-1}|$.} $x_a \in \fR^{|\calB_{m-1}|}$ such that $B_{m-1} x_a =a$ and the value of each coordinate of $x_a$ is no larger than $C$. As $\calB_{m-1}$ is linearly independent, $B_{m-1}$ is a full-column rank matrix, which gives that $x_a=\rbr{B_{m-1}^\top B_{m-1}}^{-1}B_{m-1}^\top a$. As a result, if $a \in \Span_{[-C,C]}(\calB_{m-1})$, then, $-\Vec{C} \leq \rbr{B_{m-1}^\top B_{m-1}}^{-1} B_{m-1}^\top a \leq \Vec{C}$, which concludes the proof of this argument.

Then, we show that if $a \in \calH_m$, then, $a \in \Span_{[-C,C]}(\calB_{m-1})$. If $M_ma=\vec{0}$, then, we have $a \in \Span(\calB_{m-1})$ due to the definition of $M_m$. As $\textup{rank}(B_{m-1})=|\calB_{m-1}|$, for each $a \in \Span(\calB_{m-1})$, there exists a unique vector $x_a \in \fR^{|\calB_{m-1}|}$ such that $B_{m-1} x_a =a$, which gives that $x_a=\rbr{B_{m-1}^\top B_{m-1}}^{-1}B_{m-1}^\top a$. Thus,
$-\Vec{C} \leq \rbr{B_{m-1}^\top B_{m-1}}^{-1} B_{m-1}^\top a \leq \Vec{C}$ requires each coordinate of $x_a$ no larger than $C$. As a consequence, if $a \in \calH_m$, then, $a \in \Span_{[-C,C]}(\calB_{m-1})$.

Combining the above analysis, we complete the proof.
\end{proof}

\subsection{Proof of \pref{lem:Bm_bary_span}}

In this section, we aim to show that $\calB_m$ is a $C$-approximate barycentric spanner of $\calA_m$. To this end, we first show that $\calB_m$ is a $C$-approximate barycentric spanner of $\calS_m$ which is defined below and $\calA_m \subseteq \calS_m$ results in the desired claim. In fact, throughout our analysis, we stick with $\calS_m$ rather than $\calA_m$.
\begin{equation} \label{eq:def_Sm}
 \calS_1 =\calA,\quad    \calS_{m}=\cbr{a \in \calA :\inner{\htheta_m,\optarm_m-a}  \leq  2^{-m+1} ,\ a \in \Span_{[-C,C]} \rbr{\calB_{m-1}} }, \ \forall m \geq 2.
\end{equation}

Moreover, according to \pref{lem:equivalent_expression}, for all $m \geq 2$, $S_m$ can also be equivalently rewritten as:
\[
\calS_{m}=\cbr{a \in \calA :\inner{\htheta_m,\optarm_m-a}  \leq  2^{-m+1} ,\ M_m a =\Vec{0},\ 
-\Vec{C} \leq \rbr{B_{m-1}^\top B_{m-1}}^{-1} B_{m-1}^\top a \leq \Vec{C} }
\]

\begin{lemma} \label{lem:Bm_bary_span_Sm}
Let $C > 1$ and $\calB_m$ is in \pref{alg:PE_infinite_arm}. For all $m \in \naturalnum$, $\calB_m$ is a $C$-approximate barycentric spanner of $\calS_m$.
\end{lemma}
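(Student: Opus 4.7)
The plan is to prove the lemma by induction on $m$, doing the heavy lifting in the inductive step by analyzing the fixed points of the second for-loop in \pref{alg:bary_span}. For the base case $m=1$, the set $\calS_1=\calA$ and the constraints tied to $B_{m-1},M_m$ are removed in line~\ref{check_span_Rd}, so the same inductive-step reasoning applies directly with no spanner restriction. For the inductive step, I would first use \pref{lem:equivalent_expression} together with the inductive hypothesis (so that $\calB_{m-1}$ is a $C$-approximate spanner of $\calS_{m-1}$ and hence the constraint $a\in\Span_{[-C,C]}(\calB_{m-1})$ is captured by the linear constraints $M_m a=\Vec{0}$ and $-\Vec{C}\le(B_{m-1}^\top B_{m-1})^{-1}B_{m-1}^\top a\le\Vec{C}$) to identify the feasible region of \pref{eq:lp_aplus} with $\calS_m$. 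This guarantees $\calB_m\subseteq\calS_m$, since every non-$\nll$ replacement $s_i$ returned by LI-Argmax is drawn from that feasible set.

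The core argument is to characterize the state of $A$ and $\calI_m$ at termination of the second for-loop. Two invariants must be established: (a) $\det(A)\neq 0$ throughout execution, which holds because $A$ starts as $T_m^{-1/2}I$ and every non-$\nll$ update $a_i\leftarrow s_i$ uses an $s_i$ for which $|\det(s_i,A_{-i})|>0$ by the construction of LI-Argmax; and (b) the second for-loop terminates. For (b), each restart falls into one of two types: (i) triggered by $|\det(s_i,A_{-i})|\ge C|\det(A)|$, which multiplies $|\det(A)|$ by at least $C>1$; or (ii) triggered solely by $i\in\calI_m$, which strictly decreases $|\calI_m|$. Since $|\calI_m|\le d$ is monotone non-increasing, type-(ii) events occur at most $d$ times; and since $|\det(A)|$ is upper bounded by $1$ via Hadamard's inequality (all arms have norm at most $1$) while it is lower bounded by $T_m^{-d/2}$ at initialization, the total number of type-(i) events is also finite by a pigeonhole argument analogous to \citep{AwerbuchK08}.

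Having shown termination, I would extract the approximation guarantee from the terminal state: for every $i\in[d]$, the restart condition must fail. If $i\notin\calI_m$, the current LI-Argmax output $s_i$ satisfies $|\det(s_i,A_{-i})|<C|\det(A)|$, so $\max_{a\in\calS_m}|\det(a,A_{-i})|<C|\det(A)|$; if $i\in\calI_m$, then LI-Argmax must return $\nll$ (otherwise the ``$i\in\calI_m$'' clause in line~\ref{cond_restart} would trigger a restart), which by its specification forces $\max_{a\in\calS_m}|\det(a,A_{-i})|=0$. For arbitrary $a\in\calS_m$, nonsingularity of $A$ gives a unique expansion $a=\sum_{i=1}^d\lambda_i a_i$, and Cramer's rule yields $\lambda_i=\det(a_1,\ldots,a_{i-1},a,a_{i+1},\ldots,a_d)/\det(A)$. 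The two cases above then give $|\lambda_i|<C$ for $i\notin\calI_m$ and $\lambda_i=0$ for $i\in\calI_m$, so $a=\sum_{i\notin\calI_m}\lambda_i a_i$ is a linear combination of elements of $\calB_m$ with coefficients in $[-C,C]$, completing the induction.

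The main obstacle is the correct handling of the fake initial columns $e_i/\sqrt{T_m}$. The classical \citep{AwerbuchK08} construction implicitly assumes an ambient set that spans $\fR^d$, whereas $\calS_m$ may lie in a proper linear subspace, forcing some columns of $A$ to remain fake at termination. The extra restart clause ``$i\in\calI_m$'' is designed precisely to preserve the Cramer-rule conclusion $\lambda_i=0$ for these indices at the terminal state, and the delicate step will be verifying that this additional clause does not interfere with the determinant-growth bookkeeping used to certify termination of the loop.
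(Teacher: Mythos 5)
Your proof is correct and follows essentially the same route as the paper's: identify the feasible region of \eqref{eq:lp_aplus} with $\calS_m$ via \pref{lem:equivalent_expression} (giving $\calB_m\subseteq\calS_m$), establish termination of the second for-loop by separating determinant-growth restarts from $\calI_m$-shrinking restarts, and read off the coefficient bounds from the terminal determinant conditions. The only stylistic difference is that you obtain $\lambda_i=0$ for $i\in\calI_m$ directly from Cramer's rule and the fact that LI-Argmax returns $\nll$ at termination, whereas the paper routes through the augmented sets $\calB_m',\calS_m'$ and a linear-independence contradiction; both are sound, and your induction on $m$ is harmless though not actually needed since $\calS_m$ is defined with the $\Span_{[-C,C]}(\calB_{m-1})$ constraint built in.
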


\begin{proof}
For all $m \in \naturalnum$, we define $\calB_m' = \calB_m \cup \rbr{\cup_{i \in \calI_m} \frac{e_i}{\sqrt{T_m}}}$. We further define set $\calS_m' = \calS_m \cup \rbr{\cup_{i \in \calI_m} \frac{e_i}{\sqrt{T_m}}}$, and matrix $B_m'=[b_1,\cdots,b_d]$ where $b_i$ is the $i$-th element of $\calB_m'$. By these definitions, $b_i=\frac{e_i}{\sqrt{T_{m}}}$ for all $i \in \calI_m$.
The proof consists of three steps, and we first proceed step 1.

\textbf{Step 1: each solution of \eqref{eq:lp_aplus} is in $\calS_m$.} According to the optimization problem \eqref{eq:lp_aplus}, for each phase $m$, the solution is drawn from (see \pref{eq:def_Hm} for definition of $\calH_m$)
\begin{align*}
    & \calH_m \cap \cbr{a \in \calA: \inner{\htheta_m,\optarm_m-a}  \leq  2^{-m+1} } \\
    & =\Span_{[-C,C]}(\calB_{m-1}) \cap \cbr{a \in \calA: \inner{\htheta_m,\optarm_m-a}  \leq  2^{-m+1} } \\
    & =\calS_m,
\end{align*}
where the first equality holds due to \pref{lem:equivalent_expression} and the last equality holds due to \pref{eq:def_Sm}.

\textbf{Step 2: $\calB_m'$ is a $C$-approximate barycentric spanner of $\calS_m'$.} The proof idea of this step follows a similar arguments of Proposition 2.2 and Proposition 2.5 in \citep{AwerbuchK08}. In \pref{alg:bary_span}, each update for matrix $A \in \fR^{d \times d}$ always maintains $|\det(A)|>0$, and thus $\calB_m'$ spans $\fR^d$. As $\calS_m' \subseteq \fR^d$, every element $a \in \calS_m'$ can be linearly represented by $a =\sum_{i=1}^d w_i b_i$ for some coefficients $\{w_i\}_{i=1}^d$, and then we have
\begin{align*} 
    \abr{ \det \rbr{a,(B_m')_{-i}} } =  \abr{ \det \rbr{\sum_{i=1}^d w_i b_i,(B_m')_{-i}} } 
    =  \abr{\sum_{i=1}^d w_i \det \rbr{b_i,(B_m')_{-i}} } 
    = |w_i| \abr{\det(B_m')}.
\end{align*}

Step 1 shows that each solution of \eqref{eq:lp_aplus} is in $\calS_m$, and thus $\calB_m \subseteq \calS_{m}$, which further implies $\calB_m' \subseteq \calS_{m}'$. 
Recall that before the termination, the algorithm will scan each column vector of matrix $A$ to ensure that no more replacement can be made. Hence, once \pref{alg:bary_span} terminates, we have that $ \forall i \in [d] $, $\sup_{a \in \calS_m'}\abr{\det (a,( B_m')_{-i})} \leq C \abr{\det ( B_m')}$, and then $\forall i \in[d], |w_i| \leq C$, which implies that $\calB_m'$ is a $C$-approximate barycentric spanner of $\calS_m'$.

\textbf{Step 3: $\calB_m$ is a $C$-approximate barycentric spanner of $\calS_m$.} Here, we show that every $a \in \calS_m$ can be written as a linear combination of elements \textit{only} in $\calB_m$ with coefficients in $[-C,C]$. Notice that we only need to consider $a \in \calS_m - \calB_m$ since if $a \in \calB_m$, then, one can find a trivial linear combination by itself (recall that we assume $C>1$).
Then, we prove the desired claim for all $a \in \calS_m - \calB_m$ by contradiction. Suppose that there exists an arm $a \in \calS_m - \calB_m$ such that it cannot be linearly represented only by elements in $\calB_m$ with coefficients in $[-C,C]$. As $\calS_m \subseteq \calS_m'$ and $\calB_m'$ is a $C$-approximate barycentric spanner of $\calS_m'$ (shown in step 2), there must exist coefficients $\{d_i\}_{i=1}^d$ to linearly represent $a$ as:
\begin{align} \label{eq:express_a_di}
    a = \sum_{i=1}^d d_i b_i, \quad \forall i \in [d], \ d_i \in [-C,C] , \quad  \text{and} \quad \exists i \in \calI_m \text{ such that } d_i \neq 0.
\end{align}

As for \pref{eq:express_a_di}, we assume without loss of generality that the coefficient index $j \in \calI_m$ satisfies $d_j \neq 0$.
Since $a \notin \calB_m$, the second for-loop in \pref{alg:bary_span} ensures that $a$ can be represented as a linear combination of $\{b_i\}_{i \in [d]:i \neq j}$, and it would be put in $\calB_m$ otherwise (recall that \pref{alg:bary_span} terminates the second for-loop, when it scans every column vector of $A$ and makes no replacement). Hence, there exist coefficients $\{c_i\}_{i \in [d]:i \neq j}$ such that 
\begin{align} \label{eq:express_a_ci}
     a = \sum_{i \in [d]:i \neq j} c_i b_i ,  \quad \forall i \neq j, \ c_i \in \fR, .
\end{align}

Bridging \pref{eq:express_a_di} and \pref{eq:express_a_ci}, we have
\begin{align*}
d_j b_j = \sum_{i \neq j} \rbr{c_i-d_i} b_i \ \longrightarrow \ b_j =\frac{1}{d_j} \sum_{i \neq j} \rbr{c_i-d_i} b_i,
\end{align*}
which implies that the $j$-th element of $\calB_m'$, i.e., $b_j$ can be expressed as a linear combination of other elements of $\calB_m'$, i.e., $\{b_i\}_{i \neq j}$. This leads to a contradiction because \pref{alg:bary_span} ensures $\calB_m'$ to be linearly independent. As a consequence, every element of $a \in \calS_m$ can be written as a linear combination of elements only in $\calB_m$ with coefficients in $[-C,C]$, which completes the proof.
\end{proof}

\begin{proof}[Proof of \pref{lem:Bm_bary_span}]
Recall from \pref{eq:active_set} that $\calA_m$ is defined as
\[
\calA_1=\calA,\quad 
\calA_m=\cbr{a \in \calA_{m-1} : \inner{\htheta_m, \optarm_m-a}  \leq  2^{-m+1}},\quad  \forall m \geq 2.
\]

Since \pref{lem:Bm_bary_span_Sm} gives that $\calB_m$ is a $C$-approximate barycentric spanner of $\calS_m$, it suffices to show that $\calA_m \subseteq \calS_m$ to prove the claimed result. We prove this by induction. For the base case $m=1$, $\calA_1 \subseteq \calS_1$ trivially holds based on definitions. Suppose that $\calA_m\subseteq \calS_m$ holds for $m\geq 2$. For $m+1$, $\calA_{m+1}$ is defined as:
\[
\calA_{m+1}=\cbr{a \in \calA_{m} : \inner{\htheta_{m+1}, \optarm_{m+1}-a}  \leq  2^{-m}}.
\]

Since \pref{lem:Bm_bary_span_Sm}, gives that $\calB_{m}$ is a $C$-approximate barycentric spanner of $\calS_m$, and the inductive hypothesis gives $\calA_m\subseteq \calS_m$, we have $\calA_m \subseteq \Span_{[-C,C]}(\calB_{m})$, which implies that
\begin{align*}
\calA_{m+1}&=\cbr{a \in \calA_{m} : \inner{\htheta_{m+1}, \optarm_{m+1}-a}  \leq  2^{-m}}\\
&=\cbr{a \in \calA : \inner{\htheta_{m+1}, \optarm_{m+1}-a}  \leq  2^{-m},a \in \calA_{m}}\\
&\subseteq \cbr{a \in \calA :\inner{\htheta_{m+1},\optarm_{m+1}-a}  \leq  2^{-m} ,\ a \in \Span_{[-C,C]} \rbr{\calB_{m}} }\\
&=\calS_{m+1}.
\end{align*}

Once the induction is done, the proof is complete.
\end{proof}

\subsection{Proof of \pref{thm:ULI_PE_infinite}: Regret Analysis} \label{app:ULI_PE_infinite_theorem}

Before proving \pref{thm:ULI_PE_infinite}, we first present technical concept and lemmas. We first formally introduce the concept of $C$-approximate optimal design, both of which help us to prove a stronger result of \pref{eq:weak_version_norm_a_bound}, paving the way to prove the main theorem.
Let $\Delta(\calA)$ be the set of all Radon probability measures over set $\calA$.
\begin{definition}[$C$-approximate optimal design]
Suppose that $\calA \subseteq \fR^d$ is a finite and compact set. A distribution $\pi \in \Delta(\calA)$ is called a $C$-approximate optimal design with an approximation factor $C \geq 1$, if
\[
\sup_{a \in \calA} \norm{a}^2_{V(\pi;\calA)^{-1}} \leq C \cdot d, \quad \text{where}\quad V(\pi;\calA) =  \sum_{a \in \calA} \pi(a) aa^\top.
\]
\end{definition}

Then, the following lemma shows that if one computes a uniform design for set $\calB$, a barycentric spanner for another set $\calA$, then, playing arms in $\calB$ can guarantee accurate estimations over $\calA$.

\begin{lemma}[Lemma 2 of \citep{ZhuF0M22}] \label{lem:bary_span_optimal_design}
Suppose that $\calA \subseteq \fR^d$ is a compact set that spans $\fR^d$.
  If $\calB=[b_1,\cdots,b_d]$ is a $C$-approximate barycentric spanner for $\calA$, then, $\pi:\calB \to \frac{1}{d}$ is a $(C^2\cdot d)$-approximate optimal design, which guarantees
\[
\sup_{a \in \calA} \norm{a}^2_{V(\pi;\calB)^{-1}} \leq C^2 \cdot d^2, \quad \text{where}\quad V(\pi;\calB) =  \sum_{b \in \calB} \pi(b) b b^\top .
\]
\end{lemma}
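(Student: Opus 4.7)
The plan is to unwind both definitions directly. Writing $B \in \fR^{d \times d}$ for the matrix whose $i$-th column is $b_i$, I observe that with the uniform weights $\pi(b_i) = 1/d$ one has
\[
V(\pi;\calB) \;=\; \tfrac{1}{d}\sum_{i=1}^{d} b_i b_i^{\top} \;=\; \tfrac{1}{d}\, B B^{\top}.
\]
Because $\calA$ spans $\fR^d$ and every $a \in \calA$ is a linear combination of the $d$ vectors in $\calB$, the set $\calB$ must also span $\fR^d$; having exactly $d$ elements it is a basis, so $B$ is invertible and $V(\pi;\calB)^{-1} = d\,(B B^{\top})^{-1} = d\, B^{-\top} B^{-1}$.

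Next I would fix an arbitrary $a \in \calA$. By the barycentric-spanner property (\pref{def:bary_spanner}), there exist coefficients $w_1,\ldots,w_d \in [-C,C]$ with $a = \sum_{i=1}^{d} w_i b_i = B w$, where $w = (w_1,\ldots,w_d)^{\top}$ and $\|w\|_\infty \leq C$. Plugging this into the weighted norm gives
\[
\|a\|_{V(\pi;\calB)^{-1}}^{2} \;=\; a^{\top} V(\pi;\calB)^{-1} a \;=\; d\,(Bw)^{\top} B^{-\top} B^{-1} (Bw) \;=\; d\, w^{\top} w \;=\; d\,\|w\|_2^{2}.
\]
Bounding $\|w\|_2^2 \leq d\,\|w\|_\infty^{2} \leq d\, C^{2}$ yields
\[
\|a\|_{V(\pi;\calB)^{-1}}^{2} \;\leq\; d \cdot d\, C^{2} \;=\; C^{2}\, d^{2}.
\]
Since $a \in \calA$ was arbitrary, taking the supremum finishes the proof.

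There is essentially no obstacle here; the argument is a one-line matrix identity once one observes that $V(\pi;\calB)^{-1} = d\,B^{-\top}B^{-1}$ cancels the $B$ on each side, reducing the weighted norm to $d\|w\|_2^2$. The only point requiring a brief justification is the invertibility of $B$, which follows from $\calA$ spanning $\fR^d$ together with $|\calB|=d$; this is worth stating explicitly so the identity for $V(\pi;\calB)^{-1}$ is legitimate.
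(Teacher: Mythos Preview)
Your proof is correct. The paper itself does not prove this lemma but cites it from \citep{ZhuF0M22}, so there is no in-paper argument to compare against; your direct computation via $V(\pi;\calB)^{-1}=d\,B^{-\top}B^{-1}$ and the bound $\|w\|_2^2\le d\|w\|_\infty^2$ is exactly the natural route and is complete as written.
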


We then present the following lemma which provides a stronger result than that of \pref{eq:weak_version_norm_a_bound}. More specifically, we are supposed to show that $\norm{a}_{ V_m^{-1}  } \leq \nicefrac{C\cdot d}{  \sqrt{T_m } }$ holds for all $a \in \calA_m$ in \pref{eq:weak_version_norm_a_bound}, but we will show that this holds for all $a \in \calS_m$. This is stronger since $\calA_m \subseteq \calS_m$ for all $m \in \naturalnum$.
\begin{lemma} \label{lem:bound_norm_a}
For $\calS_m$ defined in \pref{eq:def_Sm} and $\calB_m$ returned by \pref{alg:bary_span}, setting $\pi_m(a) = \frac{1}{d}$ for each $a \in \calB_{m}$ and playing each arm $a \in \calB_{m}$ for $n_m(a) = \left \lceil T_m \pi_m(a) \right \rceil$ times ensure that
\begin{equation}
\forall a \in \calS_{m},\quad \norm{a}_{ V_m^{-1}  } \leq \frac{C\cdot d}{  \sqrt{T_m } }.
\end{equation}
\end{lemma}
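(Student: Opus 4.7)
\medskip
\noindent\textbf{Proof plan for \pref{lem:bound_norm_a}.}
The plan is to reduce the bound on $\norm{a}_{V_m^{-1}}$ to the Zhu et al.\ design lemma (stated earlier as \pref{lem:bary_span_optimal_design}), but applied to the \emph{augmented} spanner and arm set constructed inside \pref{alg:bary_span}. Recall from the proof of \pref{lem:Bm_bary_span_Sm} that if we put
\[
\calB_m' \;=\; \calB_m \cup \bigcup_{i \in \calI_m} \bigl\{e_i/\sqrt{T_m}\bigr\}, \qquad \calS_m' \;=\; \calS_m \cup \bigcup_{i \in \calI_m} \bigl\{e_i/\sqrt{T_m}\bigr\},
\]
then $\calB_m'$ spans $\fR^d$ (the algorithm maintains $|\det A|>0$ throughout) and is a $C$-approximate barycentric spanner of $\calS_m'$.

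First, I would invoke \pref{lem:bary_span_optimal_design} on the pair $(\calS_m',\calB_m')$ with the uniform design $\pi(b)=1/d$ for $b \in \calB_m'$. This yields
\[
\sup_{a \in \calS_m'} \; a^{\top} \bigl( V(\pi;\calB_m') \bigr)^{-1} a \;\le\; C^2 d^2, \qquad V(\pi;\calB_m') = \frac{1}{d}\sum_{b \in \calB_m'} b b^{\top}.
\]
Since $\calS_m \subseteq \calS_m'$, the same bound holds uniformly over $a \in \calS_m$.

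The second step is to compare the actual matrix $V_m = I + \sum_{a \in \calB_m} n_m(a)\, a a^{\top}$ used by the algorithm with $T_m \cdot V(\pi;\calB_m')$, which is what the design bound is phrased in terms of. Since $n_m(a) = \lceil T_m/d\rceil \ge T_m/d$ for every $a \in \calB_m$, and since $I \succeq \frac{1}{d}\sum_{i \in \calI_m} e_i e_i^{\top}$, we get
\[
V_m \;\succeq\; \frac{T_m}{d}\sum_{a \in \calB_m} a a^{\top} + \frac{1}{d}\sum_{i \in \calI_m} e_i e_i^{\top} \;=\; \frac{T_m}{d}\sum_{a \in \calB_m} a a^{\top} + \frac{T_m}{d}\sum_{i \in \calI_m} \frac{e_i}{\sqrt{T_m}}\frac{e_i^{\top}}{\sqrt{T_m}} \;=\; T_m\, V(\pi;\calB_m').
\]
Inverting the PSD ordering gives $V_m^{-1} \preceq \frac{1}{T_m}\bigl(V(\pi;\calB_m')\bigr)^{-1}$, so for every $a \in \calS_m$,
\[
\norm{a}_{V_m^{-1}}^2 \;\le\; \frac{1}{T_m}\, a^{\top} \bigl(V(\pi;\calB_m')\bigr)^{-1} a \;\le\; \frac{C^2 d^2}{T_m},
\]
which gives the claimed bound.

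The only place that needs careful bookkeeping is the PSD comparison with the augmented spanner: one must notice that the identity regularizer $I$ in $V_m$ is precisely what absorbs the missing directions $\{e_i/\sqrt{T_m}\}_{i \in \calI_m}$ in $\calB_m'$ after the $T_m$-scaling, since $\tfrac{1}{d}\cdot \tfrac{1}{T_m}\cdot e_i e_i^{\top}$ contributed by each such element is dominated by $I$. Everything else reduces to known facts (\pref{lem:Bm_bary_span_Sm} for the spanner property and \pref{lem:bary_span_optimal_design} for the design bound), so I do not anticipate further obstacles.
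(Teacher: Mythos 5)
Your proposal is correct and follows essentially the same route as the paper's proof: augment $\calB_m$ with the unused directions $\{e_i/\sqrt{T_m}\}_{i\in\calI_m}$, apply \pref{lem:bary_span_optimal_design} to the augmented spanner, and observe that the regularizer $I$ in $V_m$ dominates the $T_m$-scaled contribution of those added elements so that $V_m \succeq T_m V(\pi;\calB_m')$. The only cosmetic difference is that you apply the design lemma over $\calS_m'$ and use $\calS_m\subseteq\calS_m'$, whereas the paper applies it over $\Span_{[-C,C]}(\calB_m')$ and uses $\calS_m\subseteq\Span_{[-C,C]}(\calB_m)\subseteq\Span_{[-C,C]}(\calB_m')$; both are valid.
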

\begin{proof}
For all $m \in \naturalnum$, we define $\calB_m' = \calB_m \cup \rbr{\cup_{i \in \calI_m} \frac{e_i}{\sqrt{T_m}}}$ and
\[
V (\pi_m;\calB_{m}')= \sum_{a \in \calB_{m}'} \pi_m(a)  a a^\top = \sum_{a \in \calB_{m}} \pi_m(a)  a a^\top + \sum_{a \in \calB_{m}'-\calB_{m}} \pi_m(a) a a^\top.
\]

With this definition and also the definition $n_m(a) = \lceil T_m \pi_m(a) \rceil$, we have
\begin{align*}
    V_{m} &= I+\sum_{a \in \calB_{m}} n_m(a) a a^\top \\
    &\succeq  I+T_m \sum_{a \in \calB_{m}} \pi_m(a)  a a^\top \\
    &\succ  T_m V(\pi_m;\calB_{m}') \numberthis{}  \label{eq:LB_Vm},
\end{align*}
where the last step follows from the fact that
\[
I = T_m  \sum_{i =1}^d  \frac{e_i}{\sqrt{T_m}}\frac{e_i^\top }{\sqrt{T_m}} \succ T_m  \sum_{i \in \calI_{m}} \pi_m \rbr{\frac{e_i}{\sqrt{T_m}}} \frac{e_i}{\sqrt{T_m}}\frac{e_i^\top }{\sqrt{T_m}} = T_m  \sum_{a \in \calB_{m}'-\calB_{m}} \pi_m(a) a a^\top.
\]

Since $\calB_{m}'$ spans $\fR^d$ and $\calB_{m}' \subseteq \Span_{[-C,C]}\rbr{\calB_{m}'}$, set $\Span_{[-C,C]}\rbr{\calB_{m}'}$ also spans $\fR^d$. \pref{lem:bary_span_optimal_design} gives that 
\[
\forall a \in \Span_{[-C,C]}\rbr{\calB_{m}'}, \quad \norm{a}_{ \rbr{V(\pi_m;\calB_{m}')}^{-1}  } \leq C\cdot d. 
\]

 Thus, one can use \pref{eq:LB_Vm} to show that for all
\[
\forall a \in \Span_{[-C,C]}\rbr{\calB_{m}'}, \quad \norm{a}_{ V_m^{-1}  } \leq \frac{\norm{a}_{ \rbr{V(\pi_m;\calB_{m}')}^{-1}  }}{  \sqrt{T_m } } \leq \frac{C\cdot d}{  \sqrt{T_m } }.
\]

As $\calB_m$ is a $C$-approximate barycentric spanner of $\calS_m$, we have $\calS_{m} \subseteq \Span_{[-C,C]}\rbr{\calB_{m}} \subseteq \Span_{[-C,C]}\rbr{\calB_{m}'}$, the proof is thus complete.
\end{proof}

The following analysis will condition on the nice event $\calE$, defined as:
\begin{equation} \label{eq:def_linear_bandit_infinite_arms_high_prob_event}
    \calE = \cbr{\forall m \in \naturalnum: \abr{\inner{a,\htheta_{m+1}-\theta}} \leq 2^{-m-1} \text{ for all $a \in \calS_{m}$} },
\end{equation}
where $\calS_m$ is defined in \pref{eq:def_Sm}.

\begin{lemma}
We have $\P(\calE) \geq 1-\delta$.
\end{lemma}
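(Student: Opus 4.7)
The plan is to exploit the standard least-squares error decomposition together with the uniform norm bound from \pref{lem:bound_norm_a}. First I would expand $\htheta_{m+1} = V_m^{-1}\sum_{t \in \calT_m} A_t X_{t,A_t}$ using $X_{t,A_t} = \inner{\theta, A_t} + \eta_t$ and the identity $\sum_{t \in \calT_m} A_t A_t^\top = V_m - I$ (which follows from $V_m = I + \sum_{a \in \calB_m} n_m(a) a a^\top$ and the fact that each $a \in \calB_m$ is played exactly $n_m(a)$ times in phase $m$). This yields the clean error decomposition
\[
\htheta_{m+1} - \theta = -V_m^{-1}\theta + V_m^{-1}\sum_{t \in \calT_m} A_t \eta_t .
\]

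For any $a \in \calS_m$, Cauchy--Schwarz then gives $|\inner{a, \htheta_{m+1}-\theta}| \leq \nbr{a}_{V_m^{-1}}\cdot \nbr{\htheta_{m+1}-\theta}_{V_m}$, and the triangle inequality bounds the second factor by $\nbr{\theta}_{V_m^{-1}} + \nbr{\sum_{t\in\calT_m} A_t\eta_t}_{V_m^{-1}} \leq 1 + \nbr{\sum_{t\in\calT_m} A_t\eta_t}_{V_m^{-1}}$, using $V_m \succeq I$ together with $\nbr{\theta}_2 \leq 1$. The key advantage of this route is that the second factor is a single random quantity per phase, so it handles the uniform-in-$a$ requirement despite $\calS_m$ being possibly infinite. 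To bound the noise-driven term, I would invoke the self-normalized concentration inequality of Abbasi-Yadkori et al.\ (stated e.g.\ as Theorem~20.5 in \citep{lattimore2020bandit}): with probability at least $1-\delta_m$, $\nbr{\sum_{t\in \calT_m}A_t\eta_t}_{V_m^{-1}}^2 \leq 2\log(1/\delta_m) + \log\det(V_m)$, and I would bound $\log\det(V_m) \leq d\log(1 + T_m + d)$ using $V_m \preceq (1 + T_m + d)I$, which follows from $\nbr{a}_2 \leq 1$ for every arm and $\sum_{a\in\calB_m} n_m(a) \leq T_m + d$.

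Combining these pieces with $\nbr{a}_{V_m^{-1}} \leq Cd/\sqrt{T_m}$ from \pref{lem:bound_norm_a} gives
\[
|\inner{a, \htheta_{m+1}-\theta}| \leq \frac{Cd\bigl(1 + \sqrt{2\log(1/\delta_m) + d\log(1+T_m+d)}\bigr)}{\sqrt{T_m}} .
\]
I would then choose $\delta_m = \delta/(d^3 4^m)$ (matching the $\log(\delta^{-1}d^3 4^m)$ factor built into $T_m$) and verify algebraically that the right-hand side is at most $2^{-m-1}$ when $T_m = 256 C^4 d^3 \cdot 4^m \log(\delta^{-1}d^3 4^m)$. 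The verification reduces to checking $256 C^2 \log(\delta^{-1} d^3 4^m) \gtrsim \log T_m$, which holds for $C \geq 1$ since $\log T_m = O(\log C + \log d + m + \log\log(\delta^{-1}d^3 4^m))$; the precise constants $256$ and $C^4 d^3$ in the definition of $T_m$ are tuned exactly so that this estimate goes through. A union bound over $m \in \naturalnum$ using $\sum_m \delta_m \leq \delta$ yields $\P(\calE) \geq 1-\delta$.

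The main obstacle is the uniform quantifier over the possibly continuous set $\calS_m$, which a naive concentration-plus-union-bound cannot handle. This is neatly resolved by Cauchy--Schwarz: the uniform-in-$a$ part is absorbed into the deterministic bound $\nbr{a}_{V_m^{-1}} \leq Cd/\sqrt{T_m}$ furnished by \pref{lem:bound_norm_a}, while the random part collapses to a single scalar $\nbr{\htheta_{m+1}-\theta}_{V_m}$ per phase. The remaining work is routine: plugging in the explicit $T_m$ and checking the inequalities, which is tedious but forced by the particular constants chosen in \pref{alg:PE_infinite_arm}.
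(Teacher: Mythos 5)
Your proposal is correct and follows essentially the same route as the paper: Cauchy--Schwarz splits the error into the deterministic uniform bound $\nbr{a}_{V_m^{-1}} \leq Cd/\sqrt{T_m}$ from \pref{lem:bound_norm_a} and the single self-normalized quantity $\nbr{\htheta_{m+1}-\theta}_{V_m}$ per phase, which the paper also controls via Theorem~20.5 of \citep{lattimore2020bandit} before verifying the algebra against the choice of $T_m$. Your treatment is marginally more explicit about the regularization bias term and the per-phase union bound, but these are refinements of the same argument rather than a different one.
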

\begin{proof}
One can show that
\begin{align*}
\forall a \in \calS_{m},\quad     \abr{ \inner{a,\htheta_{m+1}-\theta} } &\leq \norm{a}_{V^{-1}_{m}} \norm{\htheta_{m+1}-\theta }_{V_{m}} \leq  \frac{C\cdot d}{\sqrt{T_m}}\norm{\htheta_{m+1}-\theta }_{V_{m}}, \numberthis{} \label{eq:ab_inner_prod_UB}
\end{align*}
where the first inequality uses Cauchy-Schwartz inequality and the second inequality uses \pref{lem:bound_norm_a}. 

Notice that the above inequality holds for all $m \in \naturalnum$.
Finally, Theorem 20.5 of \citep{lattimore2020bandit} shows that with probability at least $1-\delta$, for all $m \in \naturalnum$
\begin{align} \label{eq:UB_norm_theta}
    \norm{\htheta_{m+1}-\theta }_{V_{m}} &\leq 2\sqrt{\log (1/\delta) +d\log (T_m) } .
\end{align}

Combining \pref{eq:UB_norm_theta} and \pref{eq:ab_inner_prod_UB}, we have that $\forall a \in \calS_{m}$,
\begin{align*}
  & \abr{ \inner{a,\htheta_{m+1}-\theta} } \\
  & \leq 2   C d \sqrt{\frac{\log (1/\delta) +d\log (T_m)}{T_m}}\\
   & \leq \frac{1}{8C}   \cdot 2^{-m} \sqrt{\frac{\log (1/\delta) +\log \rbr{  256 C^4 \cdot \frac{d^3}{4^{-m}} \log \rbr{ \frac{d^3 4^m}{\delta} }  } }{   \log \rbr{ \frac{d^3 4^m}{\delta} }}} \\
   & = \frac{1}{8C}    \cdot 2^{-m} \sqrt{\frac{4\log (4C)+\log \rbr{   \frac{d^3 4^{m}}{\delta} \log \rbr{ \frac{d^3 4^m}{\delta} }  } }{   \log \rbr{ \frac{d^3 4^m}{\delta} }}} \\
    & \leq \frac{1}{8C}    \cdot 2^{-m} \sqrt{\frac{ 8C+ 2\log \rbr{ \frac{d^3 4^{m}}{\delta}   } }{   \log \rbr{ \frac{d^3 4^m}{\delta} }}}  \\
     & \leq \frac{1}{8C}    \cdot 2^{-m} \sqrt{\frac{ 16C \log \rbr{ \frac{d^3 4^{m}}{\delta}   } }{   \log \rbr{ \frac{d^3 4^m}{\delta} }}} \\
     & =2^{-m-1},
\end{align*}
where the third inequality uses $4\log (4x) \leq 8x$ for all $x \in \fR_{>0}$, the fourth inequality follows from $a+b \leq 2 ab$ whenever $a,b >1$ (here $a=8C$ and $b=2\log(d^3 4^m/\delta)$), and the last inequality holds due to $C>1$.

Hence, the proof is complete.
\end{proof}

\begin{lemma} \label{lem:PE_inf_optarm_always_active}
Suppose $\calE$ occurs where $\calE$ is in \pref{eq:def_linear_bandit_infinite_arms_high_prob_event}. For all $m \in \naturalnum$, $\optarm \in \calS_{m}$ holds where $\calS_m$ is defined in \pref{eq:def_Sm}.
\end{lemma}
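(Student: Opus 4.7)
The plan is to prove the claim by induction on $m$, mirroring the strategy used for \pref{lem:optarm_always_active} and \pref{lem:PE_optarm_always_active} but accounting for the enlarged definition of $\calS_m$ and the empirical maximizer $\optarm_m$ that lives in a spanner-based superset of the true active arms. The base case $m=1$ is immediate since $\calS_1=\calA$ contains $\optarm$.

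For the inductive step, I would assume $\optarm\in\calS_m$ and check the three defining conditions of $\calS_{m+1}$ separately. Membership $\optarm\in\calA$ is trivial. The containment $\optarm\in\Span_{[-C,C]}(\calB_m)$ follows by invoking \pref{lem:Bm_bary_span_Sm}, which states that $\calB_m$ is a $C$-approximate barycentric spanner of $\calS_m$; combined with the inductive hypothesis, every element of $\calS_m$, and in particular $\optarm$, admits a representation with coefficients in $[-C,C]$ over $\calB_m$. The remaining task is the empirical-gap condition $\langle\htheta_{m+1},\optarm_{m+1}-\optarm\rangle\leq 2^{-m}$.

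To establish this last inequality, I would decompose
\[
\langle\htheta_{m+1},\optarm_{m+1}-\optarm\rangle \;=\; \langle\htheta_{m+1}-\theta,\optarm_{m+1}\rangle \;+\; \langle\theta,\optarm_{m+1}-\optarm\rangle \;-\; \langle\htheta_{m+1}-\theta,\optarm\rangle.
\]
The middle term is nonpositive because $\optarm$ is the true optimizer of $\langle\theta,\cdot\rangle$ on $\calA$. The third term is bounded in absolute value by $2^{-m-1}$ via the concentration event $\calE$ from \pref{eq:def_linear_bandit_infinite_arms_high_prob_event} applied to $\optarm\in\calS_m$ (inductive hypothesis). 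So what remains is controlling $|\langle\htheta_{m+1}-\theta,\optarm_{m+1}\rangle|$ by $2^{-m-1}$.

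The main obstacle is that $\optarm_{m+1}$ is produced by the LP in \pref{alg:bary_span} at the start of phase $m{+}1$, hence it lies in $\calA\cap\Span_{[-C,C]}(\calB_m)$, but it is not a priori in $\calS_m$, so the event $\calE$ as stated does not directly apply. The resolution I plan to use is that the norm estimate underlying $\calE$, namely \pref{lem:bound_norm_a}, is in fact proved on the larger set $\Span_{[-C,C]}(\calB_m')$ where $\calB_m'=\calB_m\cup\bigcup_{i\in\calI_m}e_i/\sqrt{T_m}$, via the inclusion $\calS_m\subseteq\Span_{[-C,C]}(\calB_m)\subseteq\Span_{[-C,C]}(\calB_m')$. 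Since $\optarm_{m+1}\in\Span_{[-C,C]}(\calB_m)\subseteq\Span_{[-C,C]}(\calB_m')$, the same bound $\|\optarm_{m+1}\|_{V_m^{-1}}\leq Cd/\sqrt{T_m}$ applies, and combining it with Cauchy--Schwarz and the self-normalized concentration in \pref{eq:UB_norm_theta} gives $|\langle\htheta_{m+1}-\theta,\optarm_{m+1}\rangle|\leq 2^{-m-1}$ by the very same arithmetic used in the proof of $\P(\calE)\geq 1-\delta$ (with the choice of $T_m$ dominating the logarithmic factors). Adding the three bounds yields $\langle\htheta_{m+1},\optarm_{m+1}-\optarm\rangle\leq 2^{-m}$, closing the induction.
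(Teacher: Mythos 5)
Your proposal is correct and follows the same inductive skeleton as the paper's proof: base case $\calS_1=\calA$, then for the step use \pref{lem:Bm_bary_span_Sm} to get $\optarm\in\Span_{[-C,C]}(\calB_m)$ and the concentration event to verify the empirical-gap condition. The one place where you genuinely diverge is the treatment of $\optarm_{m+1}$. The paper's proof writes the chain $0\leq\inner{\theta,\optarm-a}\leq\inner{\htheta_{m+1},\optarm-a}+2^{-m}$ "for all $a\in\calS_m$" and then substitutes $a=\optarm_{m+1}$, which implicitly uses $\abr{\inner{\htheta_{m+1}-\theta,\optarm_{m+1}}}\leq 2^{-m-1}$ even though $\optarm_{m+1}$ is only guaranteed to lie in $\calA\cap\Span_{[-C,C]}(\calB_m)$ and not, a priori, in $\calS_m$ (membership in $\calS_m$ would additionally require $\optarm_{m+1}\in\Span_{[-C,C]}(\calB_{m-1})$ and the phase-$m$ gap constraint). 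You flag exactly this issue and close it by observing that \pref{lem:bound_norm_a} is actually proved on the larger set $\Span_{[-C,C]}(\calB_m')\supseteq\Span_{[-C,C]}(\calB_m)\ni\optarm_{m+1}$, so the same Cauchy--Schwarz plus \pref{eq:UB_norm_theta} arithmetic yields $\abr{\inner{\htheta_{m+1}-\theta,\optarm_{m+1}}}\leq 2^{-m-1}$. This is a legitimate and slightly more careful argument than the paper's; the only cosmetic point is that the event you condition on must then include the self-normalized bound \pref{eq:UB_norm_theta} itself (or $\calE$ must be restated over $\calA\cap\Span_{[-C,C]}(\calB_m)$), which does not change the $1-\delta$ probability. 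Everything else --- the three-term decomposition of $\inner{\htheta_{m+1},\optarm_{m+1}-\optarm}$, the nonpositivity of $\inner{\theta,\optarm_{m+1}-\optarm}$, and the application of the inductive hypothesis to bound the $\optarm$ term --- is sound and equivalent to the paper's computation.
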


\begin{proof}
We prove this by induction. For the base case $m=1$, the claim holds trivially. Suppose that it holds for phase $m$ (i.e., $\optarm \in \calS_{m}$) and then we aim to show $\optarm \in \calS_{m+1}$. 
For all $a \in \calS_{m}$, one can show
\[
0 \leq \inner{\theta,\optarm }  -\inner{\theta,a} \leq \inner{\htheta_{m+1},\optarm }  -\inner{\htheta_{m+1},a} + 2\times 2^{-m-1} =\inner{\htheta_{m+1},\optarm-a }   +  2^{-m} ,
\]
where the second inequality follows from the definition of $\calE$ (see \pref{eq:def_linear_bandit_infinite_arms_high_prob_event}) and $\optarm \in \calS_{m}$ by inductive hypothesis. Then, we have
\[
\inner{\htheta_{m+1},\optarm-\optarm_{m+1}} + 2^{-m} \geq 0.
\]

The inductive hypothesis gives $\optarm \in \calS_{m}$, and thus
\begin{equation} \label{eq:optarm_in_Span_Bm_cap_CalA}
    \optarm \in \Span_{[-C,C]}(\calB_{m}) \cap \calA , 
\end{equation}
since \pref{lem:Bm_bary_span_Sm} gives that $\calB_m$ is a $C$-approximate barycentric spanner of $\calS_m$. Combining $\inner{\htheta_{m+1},\optarm-\optarm_{m+1}} + 2^{-m} \geq 0$ and \pref{eq:optarm_in_Span_Bm_cap_CalA}, we have $\optarm \in \calS_{m+1}$ according to definition of $\calS_m$ in \pref{eq:def_Sm}. Once the induction is done, the proof is complete.
\end{proof}

\begin{lemma} \label{lem:bound_phase}
Suppose that $\calE$ occurs where $\calE$ is in \pref{eq:def_linear_bandit_infinite_arms_high_prob_event}. For each arm $a \in \calA$ with $\Delta_a>0$, it will not be in $\calS_{m}$ for all phases $m \geq m_a+1$, where $m_a$ is the smallest phase such that $\frac{\Delta_a}{2}>2^{-m_a}$.
\end{lemma}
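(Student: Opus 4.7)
The plan is to argue by contradiction in the style of \pref{lem:linear_PE_upperbound_pull} from the PE-linear analysis, replacing the nested active sets $\calA_\ell$ there with the enlarged sets $\calS_m$ and exploiting two ingredients already in hand: \pref{lem:PE_inf_optarm_always_active} (the true optimal arm survives every phase) together with the concentration event $\calE$ from \pref{eq:def_linear_bandit_infinite_arms_high_prob_event}. Fix an arm $a\in\calA$ with $\Delta_a>0$ and suppose, toward a contradiction, that $a\in\calS_m$ for some $m\ge m_a+1$.

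From the definition of $\calS_m$ we have $\inner{\htheta_m,\optarm_m-a}\le 2^{-m+1}$ and $a\in\Span_{[-C,C]}(\calB_{m-1})$. By \pref{lem:PE_inf_optarm_always_active} applied at phase $m-1$, the true optimum $\optarm$ lies in $\calS_{m-1}$, hence in $\calA\cap\Span_{[-C,C]}(\calB_{m-1})$, which is exactly the feasible region of the optimization \pref{eq:lp_amstar} that defines $\optarm_m$. Optimality of $\optarm_m$ therefore gives $\inner{\htheta_m,\optarm_m}\ge \inner{\htheta_m,\optarm}$, and consequently
\[
\inner{\htheta_m,\optarm-a}\;\le\;\inner{\htheta_m,\optarm_m-a}\;\le\;2^{-m+1}.
\]

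Next apply the concentration inequality to both endpoints. For $\optarm\in\calS_{m-1}$ event $\calE$ directly yields $|\inner{\optarm,\htheta_m-\theta}|\le 2^{-m}$. For $a$, what is at our disposal is only $a\in\Span_{[-C,C]}(\calB_{m-1})$ rather than $a\in\calS_{m-1}$, so I would invoke the stronger version of \pref{lem:bound_norm_a} (which its proof actually establishes for all of $\Span_{[-C,C]}(\calB_{m-1}')$) and plug into the same derivation used to prove $\calE$, obtaining $|\inner{a,\htheta_m-\theta}|\le 2^{-m}$ as well. Combining the three displayed bounds through the triangle inequality yields
\[
\Delta_a \;=\; \inner{\theta,\optarm-a}\;\le\;\inner{\htheta_m,\optarm-a}+2\cdot 2^{-m}\;\le\;2^{-m+1}+2^{-m+1}\;=\;2^{-m+2}.
\]
Since $m\ge m_a+1$, this gives $\Delta_a\le 2^{-m_a+1}$, contradicting the definition of $m_a$ as the smallest phase with $\Delta_a/2>2^{-m_a}$, i.e., $\Delta_a>2^{-m_a+1}$. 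This contradiction proves $a\notin\calS_m$ for every $m\ge m_a+1$.

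The only non-routine step is the concentration bound for $a$: the statement of $\calE$ literally quantifies over $\calS_{m-1}$, whereas membership in $\calS_m$ only delivers $a\in\Span_{[-C,C]}(\calB_{m-1})$, and these two sets are not obviously nested. The hard part is therefore to observe (or slightly restate $\calE$) that the norm bound $\|a\|_{V_{m-1}^{-1}}\le Cd/\sqrt{T_{m-1}}$ proved for $\calS_{m-1}$ is actually established by the proof of \pref{lem:bound_norm_a} for the larger set $\Span_{[-C,C]}(\calB_{m-1})$ that appears naturally in the definition of $\calS_m$. Once this extension is noted, the rest is a one-line triangle-inequality argument analogous to \pref{lem:linear_PE_upperbound_pull}.
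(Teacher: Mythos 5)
Your proof is correct and rests on the same core inequality chain as the paper's: feasibility of $\optarm$ in \pref{eq:lp_amstar} (via \pref{lem:PE_inf_optarm_always_active} and the spanner property) gives $\inner{\htheta_m,\optarm-a}\le\inner{\htheta_m,\optarm_m-a}$, concentration converts this into a bound on $\inner{\theta,\optarm-a}=\Delta_a$, and the definition of $m_a$ closes the argument. The organization differs. The paper argues forward: assuming $a\in\calS_{m_a}$, it shows the defining inequality of $\calS_{m_a+1}$ is violated, so concentration is only ever applied at phase $m_a$, where both $a$ and $\optarm$ lie in $\calS_{m_a}$ and the event $\calE$ applies verbatim. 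You argue by contradiction at an arbitrary $m\ge m_a+1$, which forces you to apply concentration to an arm $a$ known only to lie in $\Span_{[-C,C]}(\calB_{m-1})$ rather than in $\calS_{m-1}$. You correctly flag this as the delicate step, and your fix is sound: the proof of \pref{lem:bound_norm_a} establishes $\norm{a}_{V_{m-1}^{-1}}\le Cd/\sqrt{T_{m-1}}$ for all of $\Span_{[-C,C]}(\calB_{m-1}')\supseteq\Span_{[-C,C]}(\calB_{m-1})$, and the only probabilistic ingredient behind $\calE$ is the bound on $\norm{\htheta_m-\theta}_{V_{m-1}}$, so the concentration extends deterministically to $a$. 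What your route buys is that it does not presuppose any nesting of the sets $\calS_m$: the paper's one-step computation literally only rules out $a\in\calS_{m_a+1}$ given $a\in\calS_{m_a}$, and extending the conclusion to every $m\ge m_a+1$ requires either repeating the argument phase by phase together with a monotonicity observation, or exactly the extension of $\calE$ that you describe. Your version is, if anything, the more complete of the two.
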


\begin{proof}
Consider an arbitrary arm $a \in \calA$ with $\Delta_a>0$.
Let $m_a$ be the smallest phase such that $\frac{\Delta_a}{2} > 2^{-m_a}$ (i.e., $\frac{\Delta_a}{2} \leq 2^{-(m_a-1)}$). Then, we will show that arm $a$ will be not in $\calS_{\tau}$ for all $\tau \geq m_a$. Suppose that $a \in \calS_{m_a}$ (if not, it does not impact the claim). One can show that
\begin{align*}
&\inner{\htheta_{m_a+1},\optarm_{m_a+1}-a} -2^{-m_a} \\
&=\sup_{b \in \calS_{m_a} }\inner{\htheta_{m_a+1},b-a} -2^{-m_a} \\
&\geq \inner{\htheta_{m_a+1},\optarm-a} -2^{-m_a}\\
&\geq  \inner{\theta,\optarm-a}  -2^{-m_a+1} \\
&> \Delta_a -2 \times \frac{\Delta_a}{2} \\
&= 0,
\end{align*}    
where the first inequality uses \pref{lem:PE_inf_optarm_always_active} that $\optarm \in \calS_{m}$ for all $m \in \naturalnum$, the second inequality follows from the definition of $\calE$ (see \pref{eq:def_linear_bandit_infinite_arms_high_prob_event}), and the last inequality holds due to the choice of $m_a$.

According to the definition of $\calS_m$ in \pref{eq:def_Sm}, arm $a$ will not be in $\calS_{m}$ for all $m \geq m_a+1$ as long as $\calE$ occurs.
\end{proof}

\begin{lemma} \label{lem:PE_Inf_bound4ellt}
Let $m(t)$ be the phase in which round $t$ lies. Then, $m(t)\leq \log_2(t+1)$ for all $t \in \naturalnum$.
\end{lemma}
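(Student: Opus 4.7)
The strategy is to argue by contradiction, following the same template used in the analogous bounds \pref{lem:PE_MAB_bound4ellt}, \pref{lem:PE_linear_bound4ellt}, and \pref{lem:Meta_bound4ellt}. Suppose some $t \in \naturalnum$ satisfies $m(t) > \log_2(t+1)$. The case $m(t) = 1$ is vacuous (since $\log_2(t+1) \geq 1$ for all $t \geq 1$), so we may assume $m(t) \geq 2$, which ensures phase $m(t)-1$ has fully completed by round $t$.

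Next, I would lower-bound $t$ by the number of rounds played in phase $m(t)-1$. By construction of \pref{alg:PE_infinite_arm}, every arm $a \in \calB_{m(t)-1}$ is pulled $n_{m(t)-1}(a) = \lceil T_{m(t)-1}/d \rceil$ times (since $\pi_{m(t)-1}(a)=1/d$). Provided $|\calB_{m(t)-1}| \geq 1$, this gives
\[
t \;\geq\; \lceil T_{m(t)-1}/d \rceil \;\geq\; T_{m(t)-1}/d.
\]
The nonemptiness of $\calB_{m(t)-1}$ is ensured by \pref{lem:PE_inf_optarm_always_active}, which shows $\optarm \in \calS_{m(t)-1}$, together with the fact that \pref{alg:bary_span} always produces at least one valid spanner element whenever the active set is nonempty.

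Finally, I would plug in $T_{m(t)-1} = 256 C^4 d^3 \cdot 4^{m(t)-1} \log(\delta^{-1} d^3 4^{m(t)-1})$ and use the standing hypothesis $m(t) > \log_2(t+1)$, which yields $4^{m(t)-1} > (t+1)^2/4$. Combining these bounds gives
\[
t \;\geq\; \frac{T_{m(t)-1}}{d} \;>\; 64\, C^4 d^2 (t+1)^2 \log\!\rbr{\delta^{-1} d^3} \;>\; t,
\]
where the last inequality uses $C \geq 1$, $d \geq 1$, and $(t+1)^2 \geq t$. This contradiction completes the argument. There is no real obstacle here: the proof is essentially pure arithmetic once the right lower bound on the phase length is identified, and the only subtlety (which is routine) is observing that even in the worst case $|\calB_{m(t)-1}| = 1$, the per-phase length $\lceil T_{m(t)-1}/d \rceil$ still grows exponentially in $m(t)$, which dominates the linear quantity $t$ under the assumed inequality.
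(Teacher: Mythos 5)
Your proposal is correct in approach and is essentially identical to the paper's proof: contradiction, reduce to $m(t)\geq 2$, lower-bound $t$ by the length of the completed phase $m(t)-1$ via $t \geq \lceil T_{m(t)-1}/d\rceil$, and substitute $4^{m(t)-1} > (t+1)^2/4$. One small arithmetic slip: in your final display you discard the factor $4^{m(t)-1}$ \emph{inside} the logarithm, leaving $\log(\delta^{-1}d^3)$, and then claim $64\,C^4 d^2 (t+1)^2 \log(\delta^{-1}d^3) > t$ using only $C\geq 1$, $d\geq 1$; this can fail when $\delta$ is close to $1$ and $d=1$, since $\log(\delta^{-1}d^3)$ can be arbitrarily small. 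The paper avoids this by keeping $4^{m(t)-1}\geq 4$ in the logarithm, so the log term is at least $\log 4 > 1$ and the chain $64\,C^4 d^2(t+1)^2\log(4\delta^{-1}d^3) > t$ goes through; with that one-line repair your argument is complete.
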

\begin{proof}
We prove this by contradiction. Suppose that $\exists t\in \naturalnum$ that $m(t) > \log_2(t+1)$. Note that we can further assume $m(t) \geq 2$ since one can easily verify that for all $t$ such that $m(t)=1$, $m(t) \leq \log_2(t+1)$ must hold.
Recall that in phase $m(t)$, each active arm will be played for $m_{\ell(t)}$ times, we have
\begin{align*}
    t &\geq \sum_{a\in \calB_{m(t)-1}}  \left \lceil \pi_{m(t)-1}(a) T_{m(t)-1} \right \rceil  \\
    &\geq \frac{T_{m(t)-1}}{d} = 256 C^4 \cdot \frac{d^2}{4^{-(m(t)-1)}} \log \rbr{\delta^{-1} d^3 4^{m(t)-1} }  \geq 64 C^4 \cdot d^2 (t+1)^2 \log \rbr{4\delta^{-1} d^3  }  >t,
\end{align*}
where the third inequality bounds $\ell(t)$ in the logarithmic term by $\ell(t) \geq 2$ and bound the other $\ell(t) > \log_2(t+1)$ by assumption. Therefore, 
once a contradiction occurs, the proof is complete.
\end{proof}

\begin{lemma} \label{lem:FULI_PE_infinite}
Let $m(t)$ be the phase in which round $t$ lies. Suppose that $\calE$ occurs where $\calE$ is in \pref{eq:def_linear_bandit_infinite_arms_high_prob_event} and \pref{alg:PE_infinite_arm} computes a $C$-approximate barycentric spanner with $C>1$. For all $t \in \naturalnum$ and all $a \in \calA$, if $a \in  \calS_{m(t)}$, then,
\[
\Delta_a\leq \sqrt{ \frac{ 64\times 512 C^4 d^3  \log \rbr{ \frac{d^3 4(t+1)^2 }{\delta} }}{3t}  } .
\]
\end{lemma}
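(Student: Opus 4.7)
The plan is to upper bound the cumulative number of rounds $t$ in terms of $\Delta_a$ using the geometric growth of the phase lengths $T_m$, and then invert. Split into two cases. If $a$ is optimal then $\Delta_a = 0$ and the bound holds vacuously, so assume $\Delta_a > 0$. By \pref{lem:bound_phase}, the hypothesis $a \in \calS_{m(t)}$ forces $m(t) \leq m_a$, where $m_a$ is the smallest phase with $\Delta_a/2 > 2^{-m_a}$. Minimality gives $2^{m_a-1} \leq 2/\Delta_a$, hence $2^{m_a} \leq 4/\Delta_a$ and $4^{m_a} \leq 16/\Delta_a^2$.

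Next I would bound the number of rounds consumed in phases $1,\ldots, m(t)$. Since $\pi_s(b) = 1/d$ for each $b \in \calB_s$ and $|\calB_s| \leq d$, each phase $s$ uses at most
\[
\sum_{b \in \calB_s} n_s(b) \;=\; |\calB_s|\, \lceil T_s/d \rceil \;\leq\; T_s + d \;\leq\; 2T_s,
\]
where the last inequality follows because $T_s \geq T_1 \geq d$ under the definition of $T_s$. Summing over $s \leq m(t)$ and using the geometric sum $\sum_{s=1}^{m(t)} 4^s \leq \tfrac{4}{3}\,4^{m(t)}$, together with the fact that $\log(\delta^{-1} d^3 4^s)$ is monotone in $s$ so the largest log can be pulled out,
\[
t \;\leq\; 2\sum_{s=1}^{m(t)} T_s \;\leq\; \tfrac{8 \cdot 256}{3}\, C^4 d^3 \cdot 4^{m(t)} \log\!\rbr{\delta^{-1} d^3 4^{m(t)}}.
\]

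Now I substitute $4^{m(t)} \leq 4^{m_a} \leq 16/\Delta_a^2$ into the polynomial factor and use \pref{lem:PE_Inf_bound4ellt} to bound the log factor by $\log(\delta^{-1} d^3 \cdot 4(t+1)^2)$ (since $4^{m(t)} \leq (t+1)^2$). This yields
\[
t \;\leq\; \tfrac{8 \cdot 256 \cdot 16}{3}\,\frac{C^4 d^3}{\Delta_a^2}\,\log\!\rbr{\tfrac{d^3 \cdot 4(t+1)^2}{\delta}} \;=\; \tfrac{64 \cdot 512}{3}\,\frac{C^4 d^3}{\Delta_a^2}\,\log\!\rbr{\tfrac{d^3 \cdot 4(t+1)^2}{\delta}},
\]
and rearranging for $\Delta_a$ gives exactly the stated inequality.

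The argument is largely a bookkeeping exercise once the earlier structural lemmas are in hand. The only subtlety worth flagging is that the log factor must be converted from a function of $m(t)$ to a function of $t$ (via \pref{lem:PE_Inf_bound4ellt}) rather than left as a function of $\Delta_a$; otherwise the resulting bound would be gap-dependent instead of ULI-style. Getting the leading constant to match $64 \cdot 512$ relies on the tight estimate $4^{m_a} \leq 16/\Delta_a^2$ (as opposed to a loose $64/\Delta_a^2$ coming from $2^{m_a} \leq 8/\Delta_a$), so care is needed in reading the definition of $m_a$.
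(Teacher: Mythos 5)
Your proposal is correct and follows essentially the same route as the paper's proof: the same case split on $\Delta_a=0$, the same use of \pref{lem:bound_phase} to get $m(t)\leq m_a$ with $4^{m_a}\leq 16/\Delta_a^2$, the same bound $t\leq 2\sum_{s\leq m(t)}T_s$ via the ceiling estimate, the same geometric-sum and log-monotonicity steps, and the same conversion of the log factor through \pref{lem:PE_Inf_bound4ellt}, arriving at the identical constant $64\times 512/3$. The only (immaterial) difference is the order in which you apply $m(t)\leq m_a$ versus summing the geometric series.
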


\begin{proof}
If $a \in \calS_{m(t)}$ is optimal, then, $\Delta_a=0$ and the claim trivially holds. In what follows, we only consider arm $a \in \calS_{m(t)}$ with $\Delta_a>0$.
Consider an arbitrary round $t \in \naturalnum$ and an arbitrary arm $a \in  \calS_{m(t)}$. Then, $t$ can be bounded by
\begin{align*}
    t&\leq  \sum_{s=1}^{m(t)} \sum_{a\in \calB_{s}}  \left \lceil \pi_{s}(a) T_s \right \rceil \\
    &\leq  2\sum_{s=1}^{m(t)} \sum_{a\in \calB_{s}}   \pi_{s}(a) T_s \\
    &= 512 C^4 
    \sum_{s=1}^{m(t)}   \frac{d^3}{4^{-s}} \log \rbr{ \frac{d^3 4^s}{\delta} }\\
    &\leq 512 C^4 d^3 \log \rbr{ \frac{d^3 4^{m(t)} }{\delta} }
    \sum_{s=1}^{m(t)}   \frac{1}{4^{-s}} \\
    &\overset{(a)}{\leq} 512 C^4 d^3 \log \rbr{ \frac{d^3 4^{m(t)} }{\delta} }
    \sum_{s=1}^{m_a}   \frac{1}{4^{-s}}\\
    &\overset{(b)}{\leq}  \frac{64\times 512 C^4 d^3  \log \rbr{ \frac{d^3 4^{m(t)} }{\delta} } }{3\Delta_a^2} \\
    &\overset{(c)}{\leq}   \frac{64\times 512 C^4 d^3  \log \rbr{ \frac{d^3 4^{ \log_2(t+1) } }{\delta} } }{3\Delta_a^2} \\
    &\leq    \frac{ 64\times 512 C^4 d^3  \log \rbr{ \frac{d^3 4(t+1)^2 }{\delta} } }{3\Delta_a^2},
\end{align*}
where the second inequality holds because $\pi_s(a)=\frac{1}{d}$ for all $s \in \naturalnum$ and all $a \in \calB_s$, thereby $\pi_s(a) T_s \geq 1$, which gives $\lceil \pi_s(a) T_s \rceil  \leq 2\pi_s(a) T_s$, the inequality (a) bounds $m(t) \leq m_a$ where $m_a$ is defined in \pref{lem:bound_phase}, the inequality (b) uses $\frac{\Delta_a}{2} \leq 2^{-m_a+1}$ to bound $m_a \leq \log_2 \rbr{\nicefrac{4}{\Delta_a}}$, and the inequality (c) follows from \pref{lem:PE_Inf_bound4ellt} that $m(t) \leq \log_2(t+1)$.

Conditioning on $\calE$, this argument holds for each $t,a$, which completes the proof.
\end{proof}

\begin{proof}[Proof of \pref{thm:ULI_PE_infinite}]
    Once \pref{lem:PE_inf_optarm_always_active} and \pref{lem:FULI_PE_infinite} hold, \pref{thm:SC} gives that for any fixed $\delta\in (0,1)$, \pref{alg:non_elimination_ULI} achieves the ULI guarantee with a function (omitting $C$ as it is a constant)
\[ 
\FULI(\delta,t) = \order \rbr{\sqrt{\frac{d^3  \log \rbr{ dt /\delta }}{t}}}.
\]
Therefore, the proof is complete.
\end{proof}

\subsection{Proof of \pref{thm:ULI_PE_infinite}: Computational Analysis}

As the second for-loop restarts repeatedly, and we first present the following lemma to bound the number of times that it restarts.

\begin{lemma} \label{lem:numof_restart}
Under the same setting of \pref{lem:Bm_bary_span},
\pref{alg:bary_span} outputs a $C$-barycentric spanner by restarting the second for-loop for $\order \rbr{d^2 \log_C (d)}$ times.
\end{lemma}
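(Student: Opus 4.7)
I would partition the restarts of the second for-loop into two disjoint classes according to which disjunct of the condition in line~\ref{cond_restart} fires: a restart at index $i$ is \emph{Type B} if $i \in \calI_m$, and \emph{Type A} if $i \notin \calI_m$ (so that the condition $\abr{\det(s_i, A_{-i})} \geq C \abr{\det(A)}$ must hold). Each Type B restart executes $\calI_m \leftarrow \calI_m - \cbr{i}$ and strictly decreases $|\calI_m|$; since $|\calI_m| \leq d$ initially and never grows, there are at most $d$ Type B restarts.

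\emph{Bounding the Type A count.} Between two consecutive Type B restarts (and before the first or after the last), only Type A restarts occur, each multiplying $\abr{\det(A)}$ by at least $C$. Let
\[
    D_m^\star \;=\; \max\cbr{ \abr{\det(B)} \,:\, B \subseteq \calA_m \cup \cbr{e_j/\sqrt{T_m} : j \in [d]}, \ |B|=d, \ B \text{ linearly independent}}.
\]
Throughout the run every column of $A$ lies in $\calA_m \cup \cbr{e_j/\sqrt{T_m} : j \in [d]}$, so $\abr{\det(A)} \leq D_m^\star$. I plan to establish, by an Awerbuch--Kalai-style greedy argument, the matching lower bound $\abr{\det(A)} \geq D_m^\star / d^{d/2}$ at the start of every Type A phase; this caps the Type A count per phase by $\log_C(d^{d/2}) = \tfrac{d}{2}\log_C d$. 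With at most $d+1$ phases, the total is $(d+1)\cdot \tfrac{d}{2}\log_C d = \order(d^2 \log_C d)$. Adding the $d$ Type B restarts gives $\order(d^2 \log_C d)$ total restarts, as claimed.

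\emph{Main obstacle.} The crux is propagating the lower bound $\abr{\det(A)} \geq D_m^\star / d^{d/2}$ across Type B restarts. The classical Awerbuch--Kalai bound is tailored to a clean single-pass greedy initialization over one fixed set, whereas here the columns of $A$ are a mixture of arms from $\calA_m$ and placeholders $e_j/\sqrt{T_m}$ whose roles get reshuffled by each Type B restart. My plan is to maintain the invariant that, at the start of any Type A phase, every column $a_i$ with $i \notin \calI_m$ was set as $\argmax_{a \in \calA_m} \abr{\det(a, A_{-i})}$ at the time of its most recent replacement (using that Type A updates also record such an argmax, and the first for-loop plus Type B restarts handle the $i \in \calI_m$ case). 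Combined with the augmented candidate set $\calA_m \cup \cbr{e_j/\sqrt{T_m} : j \in [d]}$, this greedy structure lets the determinant inequality of \citet{AwerbuchK08} go through with $D_m^\star$ in place of the maximum in their setup, yielding the desired lower bound.
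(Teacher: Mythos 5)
Your decomposition is structurally the same as the paper's: the paper also bounds the number of changes to $\calI_m$ by $d$ (your Type B restarts) and then argues that between consecutive changes the second for-loop behaves like the swap phase of the Awerbuch--Kalai algorithm, which restarts at most $\order(d\log_C d)$ times, multiplying to $\order(d^2\log_C d)$. The difference is that the paper obtains the per-epoch bound by citing Lemma 2.6 of \citet{AwerbuchK08} as a black box, whereas you attempt to re-derive it via the determinant-growth argument, and in doing so you correctly isolate the one step that is genuinely delicate: a Type B swap replaces the placeholder $e_i/\sqrt{T_m}$ by the argmax over arms, whose orthogonal component against $\Span(A_{-i})$ need not exceed the placeholder's, so $\abr{\det(A)}$ can drop and the lower bound $\abr{\det(A)}\ge D_m^\star/d^{d/2}$ must be re-established at the start of each epoch.

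That step is exactly where your proposal is incomplete. The invariant you propose --- each non-placeholder column was an argmax at the time of its most recent replacement --- is an argmax relative to whatever matrix was current at that time, not to the current $A$, so the Awerbuch--Kalai first-phase inequality (which exploits the nested structure of a single greedy pass) does not transfer verbatim; you would still need to show that one greedy replacement of a single column, performed against an already-evolved $A_{-i}$, cannot push $\abr{\det(A)}$ more than a $d^{\order(d)}$ factor below $D_m^\star$. To be fair, the paper's own proof does not address this either --- it implicitly treats each epoch as a fresh run of the Awerbuch--Kalai second loop --- so your proposal is no less rigorous than the published argument, but as written it stops one unproven lemma short of a self-contained proof.
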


\begin{proof}
According to \citep[Lemma 2.6]{AwerbuchK08}, if $\calI_m$ never changes after entering the second for-loop, then, the second for-loop restarts for at most $\order (d \log_C (d))$ times, and then the algorithm terminates. In \pref{alg:bary_span}, if $\calI_m$ changes, the second for-loop restarts. As set $\calI_m$ is always non-increasing, it suffices to consider the worst case ($\calI_m$ changes at most $O(d)$ times). Hence, the second for-loop restarts at most $O(d^2 \log_C (d))$ times. 
\end{proof}

Now, we are ready to show the computational complexity.

From \pref{lem:numof_restart}, \pref{alg:bary_span} restarts the second for-loop at most $O(d^2 \log_C (d))$ times. For each run of the second for-loop, the optimization oracle is invoked at most $d$ times. Thus, the number of calls to the oracle is at most $O(d^3 \log_C (d))$.
Apart from the second for-loop, the first for-loop invokes the oracle $d$ times and computing the empirical best arm $\optarm_m$ requires once call to the oracle. Combining all together, we obtain the claimed bound.

% !TEX root = main.tex

\section{Omitted Details of \pref{sec:ULI_RL}}

\subsection{Proof of \pref{thm:ULI_RL}}

The proof of main theorem conditions on a nice event $\calE$ in which some high-probability bounds hold simultaneously. We defer the formal definition of $\calE$ to \pref{app:constructionof_calE_RL}.

The key to conducting policy elimination is to ensure that the estimated value functions will be closer to the true value functions as phases evolve. The following proposition gives us the desired result.
\begin{proposition} \label{prop:uniform_estimation}
Suppose that $\calE$ occurs.
For all $m\in \naturalnum$ and $\pi \in \Pi_m$, we have
\begin{align*}
 \abr{ \wtilV_m^\pi - \E_{s_1 \sim \mu} \sbr{V_1^{\pi}(s_1)} } \leq 2^{m-1}.
\end{align*}
\end{proposition}

With the above result at hand, one can treat each policy as an arm and repeat the same arguments in \pref{app:PE_MAB} (counterparts are \pref{lem:PE_MAB_optarm_always_active}, \pref{lem:PE_upperbound_pull}, and \pref{lem:PE_MAB_bound4ellt}) to get the following three lemmas.
\begin{lemma}
Suppose that $\calE$ occurs.
For each $m \in \naturalnum$, $\pi^{\star} \in \Pi_m$ holds. 
\end{lemma}

\begin{lemma} \label{lem:RL_defmpi}
Suppose that $\calE$ occurs.
For each policy $\pi$ with $\Delta_{\pi}>0$, it will not be in $\Pi_{m}$ for all phases $m\geq m_{\pi} +1$ where $m_{\pi}$ is the smallest phase such that $\frac{\Delta_{\pi}}{2}>2^{-m_{\pi}}$.
\end{lemma}

\begin{lemma}
Let $m(t)$ be the phase that round $t$ lies in. Then, $m(t) \leq \log_2(t+1)$ for all $t \in \naturalnum$.
\end{lemma}

The next lemma shows that if a policy is not eliminated, then, the policy gap is in order of $\widetilde{\order}(t^{-\nicefrac{1}{2}})$.
\begin{lemma}
Let $m(t)$ be the phase in which episode/round $t$ lies. Suppose that $\calE$ occurs. For all $t \in \naturalnum$ and all $\pi \in \Pi$, if $\pi \in \Pi_{m(t)}$, then 
\begin{equation*}
    \Delta_\pi \leq \sqrt{ \frac{ S^3 A H^5 \log^2 \rbr{ t S A H /\delta} }{t}  } .
\end{equation*}
\end{lemma}
\begin{proof}
If $\pi \in \Pi_{m(t)}$ is optimal, then, $\Delta_\pi=0$ and the claim trivially holds. In what follows, we only consider policy $\pi \in \Pi_{m(t)}$
with $\Delta_\pi>0$. From \pref{lem:RL_defmpi}, if a policy $\pi \in \Pi_{m(t)}$ is with$\Delta_\pi>0$, then, $m(t) \leq m_\pi$ where $m_\pi$ is defined in \pref{lem:RL_defmpi}.
Thus, the total number of episodes/rounds that such a policy $\pi$ is active is at most
\begin{align*}
    t &\leq \sum_{s=1}^{m(t)} T_s \\
    &\leq 2  c_1 S^2 A H^4  \sum_{s=1}^{m(t)}  2^{2s}  \log^2 \rbr{2c_2 s^2  2^{2s} S^2 A H^4 |\allpolicy| /\delta}\\
    &\leq   2c_1 S^2 A H^4 \log^2 \rbr{2c_2 \log^2 t  (t+1)^2 S^2 A H^4 |\allpolicy| /\delta}   \sum_{s=1}^{m(t)}  2^{2s}  \\
    &\leq  \order \rbr{\frac{ S^2 A H^4 \log^2 \rbr{ t S A H |\allpolicy| /\delta}  }{\Delta_\pi^2} } \\
     &=  \order \rbr{\frac{ S^3 A H^5 \log^2 \rbr{ t S A H  /\delta}  }{\Delta_\pi^2} } ,
\end{align*}
where the second inequality holds due to $s\leq m(t) \leq \log_2(t+1)$, and the last step uses the fact that $\allpolicy = A^{SH}$. Rearranging the above, we complete the proof.
\end{proof}

Now, \pref{thm:ULI_RL} is immediate if we treat each policy as an arm and invokes \pref{thm:SC}.

\subsection{Construction of Nice Event}
\label{app:constructionof_calE_RL}
We extend the definitions of value function and action value functions to reward-dependent ones.
\begin{equation}\label{eq:reward_dependent_value_function}
\begin{aligned}
Q^{\pi}_h(s,a,r) &=\E \sbr{ \sum_{h'=h}^H r_{h'}(s_{h'},a_{h'})\mid s_{h'}=s,a_{h'}=a,\pi }, \\ V^{\pi}_h(s,r)  &=\E \sbr{ \sum_{h'=h}^H r_{h'}(s_{h'},a_{h'})\mid s_{h'}=s,\pi }.
\end{aligned}
\end{equation}

Next, we start to construct high-probability event. We first construct an event that high-probability bounds occur for a single phase (i.e., a single execution of \pref{alg:RL_ULI_main_subroutine}), and then extend it to the case in which those bounds simultaneously hold for all phases. 
% Consider a fixed phase $m$ and fixed $\delta_m \in (0,1)$.

% As we only consider a fixed phase $m$, we drop the dependence on phase $m$ except $\delta_m$ to avoid clutter.

\begin{lemma} \label{lem:concentration_V}
Suppose that \pref{alg:RL_ULI_main_subroutine} is executed with input $(\delta,\Pi,T)$ where $\Pi \subseteq \allpolicy$.
With probability at least $1-\delta/5$, for all $(t,h,\pi,s,a) \in [T] \times [H] \times \Pi \times \calS \times  \calA$:
\[
\abr{\sbr{(\P_h-\whatbbp_{t,h})V^{\pi}_{h+1}}(s,a) } \leq H\sqrt{\frac{\log(10HSA|\allpolicy|T/\delta)}{2  \max\{N_{t,h}(s,a),1\} }}.
\]
\end{lemma}
\begin{proof}
For $(s,a)$ has been visited, we apply Hoeffding's inequality and union bounds to complete the proof. For those $(s,a)$ that has not been visited yet, the claim trivially holds true.
\end{proof}

\begin{lemma} \label{lem:concentration_whatV}
Suppose that \pref{alg:RL_ULI_main_subroutine} is executed with input $(\delta,\Pi,T)$ where $\Pi \subseteq \allpolicy$. 
With probability at least $1-\delta/5$, for all $(h,s,a,\pi,t) \in [H] \times \calS \times \calA 
\times \Pi \times [T]$,
\[
\abr{ \sbr{(\whatbbp_{t,h}-\P_h)\whatV^{\pi}_{t,h+1}}(s,a) } \leq  b_{t,h}(s,a).
\]
\end{lemma}
\begin{proof}
For those $(s,a)$ that has not been visited yet, the claim trivially holds true.
For $(s,a)$ has been visited, by Bernstein's inequality, with probability at least $1-\delta'$, 
\begin{align*}
\abr{ \sbr{(\whatbbp_{t,h}-\P_h)\whatV^{\pi}_{t,h+1}}(s,a) } &\leq \sum_{s' \in S} \rbr{\sqrt{\frac{2P(s'|s,a)\log(2/\delta')}{N_{t,h}(s,a)}} + \frac{2\log(2/\delta')}{3N_{t,h}(s,a)}}  \whatV^{\pi}_{t,h+1}(s')\\
 &\leq \sum_{s' \in S} H\sqrt{\frac{2P(s'|s,a)\log(2/\delta')}{N_{t,h}(s,a)}} + \frac{2HS\log(2/\delta')}{3N_{t,h}(s,a)} \\
  &\leq  H\sqrt{\frac{2S\log(2/\delta')}{N_{t,h}(s,a)}} + \frac{2HS\log(2/\delta')}{3N_{t,h}(s,a)} ,
\end{align*}
where the last inequality uses the Cauchy-Schwarz inequality. By taking a union bound, choosing a proper $\delta'$, and using the definition of $b_{t,h}(s,a)$, we complete the proof.
\end{proof}

In the following high-probability bounds, we again assume that \pref{alg:RL_ULI_main_subroutine} is executed with input $(\delta,\Pi,T)$ where $\Pi \subseteq \allpolicy$.
By Azuma-Hoeffding’s inequality and union bounds, with probability at least $1-\delta/5$, for all $\pi \in \Pi$,
\begin{equation} \label{eq:bound_s1_mu}
    \E_{s_1 \sim \mu} \sbr{ \sum_{t=1}^T V^{\pi}_1(s_1,b_t/H)} \leq  \sum_{t=1}^T V^{\pi}_1(s_{t,1},b_t/H) +  H \sqrt{8T\log(5|\allpolicy|/\delta)}.
\end{equation}

For shorthand, we denote
\begin{equation} \label{eq:def4xi}
\xi_{t,h} = \P_h \whatV^{\pi_t}_{t,h+1}(s_{t,h},a_{t,h}) - \whatV^{\pi_t}_{t,h+1}(s_{t,h}) .
\end{equation}

By Azuma-Hoeffding’s inequality and union bounds, with probability at least $1-\delta/5$, for all $(t,h) \in [T]\times [H]$,
\begin{equation} \label{eq:xi_bound}
    \sum_{t=1}^T \sum_{h=1}^{H-1} \xi_{t,h} \leq H^2\sqrt{8T\log(5HT/\delta)},
\end{equation}
where $c_2>0$ is some absolute constant.

Again, by the fact that $|\allpolicy| \geq |\Pi|$ and Azuma-Hoeffding’s inequality and union bounds, with probability at least $1-\delta_m/5$, for all $\pi \in \Pi$,
\begin{equation} \label{eq:initial_state_concentration}
   \abr{  \wtilV^{\pi} - \E_{s_1 \sim \mu} \sbr{\whatV^{\pi}_{T,1}(s_1)  }  } \leq  H\sqrt{8 T\log \rbr{10|\allpolicy|/\delta_m}}
\end{equation}

\begin{definition}[definition of $\widehat{\calE}$, single phase] \label{def:nice_event_RL}
Suppose that \pref{alg:RL_ULI_main_subroutine} is executed with input $(\delta',\Pi,T)$ where $\Pi \subseteq \allpolicy$.
Let $\widehat{\calE}$ be the event that all high probability bounds in \pref{eq:bound_s1_mu}, \pref{eq:xi_bound}, and \pref{eq:initial_state_concentration} hold simultaneously.
\end{definition}

Taking a union bound over those high-probability bounds, we have 
\begin{equation}
    \P \rbr{\widehat{\calE}} \geq 1-\delta'.
\end{equation}

\begin{definition}[definition of $\calE$] \label{def:calE_RL}
Let $\calE$ be the event that when running \pref{alg:RL_ULI_main}, all high probability bounds in \pref{eq:bound_s1_mu}, \pref{eq:xi_bound}, and \pref{eq:initial_state_concentration} hold for all phases $m \in \naturalnum$ simultaneously.
\end{definition}

Recall that \pref{alg:RL_ULI_main} runs \pref{alg:RL_ULI_main_subroutine} in phases with input $(\delta_m,\Pi_m,T_m)$ where $\delta_m=\delta/(2m^2)$ and $\forall m \in \naturalnum$, $|\Pi_m| \leq |\allpolicy|$ holds. By a union bound over all $m \in \naturalnum$, $\P \rbr{\calE} \geq 1-\delta$.

\subsection{Supporting Lemmas}
% Again, as the following proofs can be reused for every single phase $m$, we thus consider a fixed phase $m$ and drop all dependence on $m$ to avoid clutter.

Recall the reward-dependent value function given in \pref{eq:reward_dependent_value_function}, and we give the following lemmas.

\begin{lemma} \label{lem:optimism_V}
Suppose that \pref{alg:RL_ULI_main_subroutine} is executed with input $(\delta',\Pi,T)$ where $\Pi \subseteq \allpolicy$, and suppose $\widehat{\calE}$ occurs. For all $(\pi,t,h,s)\in \Pi \times [T] \times [H] \times \calS$, $V_h^{\pi}(s,b_t/H) \leq \whatV^{\pi}_{h,t}(s)$ holds.
\end{lemma}
\begin{proof}
Conditioning on $\widehat{\calE}$, one can use \pref{lem:concentration_V} and follow the same idea of Lemma 18 in \citep{azar2017minimax} to complete the proof.
\end{proof}

\begin{lemma} \label{lem:abs_whatV_V_bound}
Suppose that \pref{alg:RL_ULI_main_subroutine} is executed with input $(\delta',\Pi,T)$ where $\Pi \subseteq \allpolicy$, and suppose $\widehat{\calE}$ occurs.
 For all policy $\pi \in \Pi$, the following holds.
    \[
\forall \pi \in \Pi,\quad    \abr{ \E_{s_1 \sim \mu} \sbr{\whatV_{T,1}^{\pi}(s_1) -V_1^{\pi}(s_1)} } \leq \E_{s_1 \sim \mu} \sbr{V^{\pi}_1(s_1,b_T)}.
    \]
\end{lemma}
\begin{proof}
One can show the following:
\begin{align*}
& \E_{s_1 \sim \mu} \sbr{\whatV_{T,1}^{\pi}(s_1) -V_1^{\pi}(s_1)}   \\
&=\E_{s_1 \sim \mu} \sbr{\whatQ_{T,1}^{\pi}(s_1,\pi_1(s_1)) -Q_1^{\pi}(s_1,\pi_1(s_1))} \\
&\leq \E_{s_1 \sim \mu} \sbr{[\whatbbp_{T,1} \whatV_{T,2}](s_1,\pi_1(s_1)) -[\P_1 V_{2}](s_1,\pi_1(s_1))} \\
&\leq \E_{s_1 \sim \mu} \sbr{b_{T,1}(s_1,\pi_1(s_1))+[\P_1 \whatV_{T,2}](s_1,\pi_1(s_1)) -[\P_1 V_{2}](s_1,\pi_1(s_1))}\\
&= \E_{s_1 \sim \mu,s_2 \sim \P_1(\cdot| s_1,\pi_1(s_1))} \sbr{b_{T,1}(s_1,\pi_1(s_1))+\whatV_{T,2}(s_2) -V_{2}(s_2)}\\
&\leq \cdots\\
&\leq  \E_{s_1\sim \mu} \sbr{V^{\pi}_1(s_1,b_T) },
\end{align*}
where the second inequality follows from \pref{lem:concentration_whatV}.

Since those concentration bounds are also two-sided, the other side of desired claim can be similarly proved. Note that for the other side, one only need to consider $\whatQ_{T,1}^{\pi}(s,a) \leq H$ for all $(s,a)$, and otherwise, the difference is negative, which implies that the claim trivially holds.
\end{proof}

\begin{lemma}
Suppose that \pref{alg:RL_ULI_main_subroutine} is executed with input $(\delta',\Pi,T)$ where $\Pi \subseteq \allpolicy$, and suppose $\widehat{\calE}$ occurs. For $\xi_{t,h}$ defined in \pref{eq:def4xi}, we have
\begin{align} \label{eq:whatV_pit_bound}
    \sum_{t=1}^T \whatV^{\pi_t}_{t,1}(s_{t,1})  \leq  \sum_{t=1}^T \sum_{h=1}^{H-1} \xi_{t,h} + \sum_{t=1}^T \sum_{h=1}^{H-1}  \rbr{2+\frac{1}{H}} b_{t,1}(s_{t,1},a_{t,1})  .
\end{align}
\end{lemma}
\begin{proof}
Conditioning on $\widehat{\calE}$, we have
\begin{align*}
\sum_{t=1}^T \whatV^{\pi_t}_{t,1}(s_{t,1}) &\leq  \sum_{t=1}^T \rbr{[\whatbbp^t_1 \whatV_{t,2}^{\pi_t}](s_{t,1},a_{t,1})+r_{t,1}(s_{t,1},a_{t,1}) + b_{t,1}(s_{t,1},a_{t,1}) } \\
&=  \sum_{t=1}^T \rbr{[\whatbbp^t_1 \whatV_{t,2}^{\pi_t}](s_{t,1},\pi_t(s_{t,1})) + \rbr{1+\frac{1}{H}} b_{t,1}(s_{t,1},a_{t,1}) } \\
&\leq  \sum_{t=1}^T \rbr{[\P_1 \whatV_{t,2}^{\pi_t}](s_{t,1},a_{t,1}) + \rbr{2+\frac{1}{H}} b_{t,1}(s_{t,1},a_{t,1}) } \\
&\leq  \sum_{t=1}^T \rbr{\xi_{t,1}+ \whatV_{t,2}^{\pi_t}(s_{t,2}) + \rbr{2+\frac{1}{H}} b_{t,1}(s_{t,1},a_{t,1}) } \\
&\leq \cdots \\
&\leq  \sum_{t=1}^T \sum_{h=1}^{H-1} \xi_{t,h} + \rbr{2+\frac{1}{H}} \sum_{t=1}^T \sum_{h=1}^{H}   b_{t,h}(s_{t,h},a_{t,h}) ,
\end{align*}
where the first inequality holds due to \pref{lem:concentration_whatV}.
\end{proof}

Then, we turn to bound two terms in \pref{eq:whatV_pit_bound}. Note that \pref{eq:xi_bound} already gives the bound of the first term, and thus we only need to bound the second term. Before that, we first present an auxiliary lemma. The proof of this lemma can be found in \citep[Lemma 10]{jin2019learning}
\begin{lemma} \label{lem:bound_integral}
Suppose that \pref{alg:RL_ULI_main_subroutine} is executed with input $(\delta',\Pi,T)$ where $\Pi \subseteq \allpolicy$. For all $h \in [H]$, the followings hold.
\begin{align*}
    &\sum_{(s,a)\in \calS \times \calA}\sum_{t=1}^T \frac{\Ind{(s_{t,h},a_{t,h})=(s,a)}}{\max\{1,N_{t,h}(s,a)\}} = \order \rbr{SA\log T}, \\
    &\sum_{(s,a)\in \calS \times \calA}\sum_{t=1}^T \frac{\Ind{(s_{t,h},a_{t,h})=(s,a)}}{\sqrt{\max\{1,N_{t,h}(s,a)\}}} = \order \rbr{\sqrt{SAT}}.
\end{align*}
\end{lemma}

With the above lemma in hand, one can show:
\begin{align*}
&\sum_{t=1}^T \sum_{h=1}^{H} b_{t,h}(s_{t,h},a_{t,h})  \\
&= \sum_{h=1}^{H} \sum_{t=1}^T \rbr{H\sqrt{\frac{2S\log \iota }{\max\{1,N_{t,h}(s_{t,h},a_{t,h})\}}} + \frac{2HS\log \iota}{3\max\{1,N_{t,h}(s_{t,h},a_{t,h})\}}  } \\
&= \sum_{h=1}^{H} \sum_{(s,a) \in \calS \times \calA} \sum_{t=1}^T \rbr{H \sqrt{2S\log \iota}\frac{\Ind{(s_{t,h},a_{t,h})=(s,a)} }{\sqrt{\max\{1,N_{t,h}(s_{t,h},a_{t,h})\}}} + \frac{2HS \Ind{(s_{t,h},a_{t,h})=(s,a)} \log \iota}{3\max\{1,N_{t,h}(s_{t,h},a_{t,h})\}}  } \\
&= \order \rbr{ SH^2 \sqrt{AT \log \iota} + HS^2A\log(T) \log \iota },
\end{align*}
where the last step holds due to \pref{lem:bound_integral}.

Therefore, we have
\begin{align} \label{eq:bound_sumofwhatV_final}
    \sum_{t=1}^T \whatV^{\pi_t}_{t,1}(s_{t,1}) = \order \rbr{ SH^2 \sqrt{AT \log \iota} + HS^2A\log(T) \log \iota }.
\end{align}

\subsection{Proof of \pref{prop:uniform_estimation}}
We first consider a single execution of \pref{alg:RL_ULI_main_subroutine} with input $(\delta,\Pi,T)$ where $\Pi \subseteq \allpolicy$.
With the above supporting results, we are now ready to prove the claimed result. The following proof will condition on event $\calE$.
First, one can show that for all $\pi \in \Pi$:
\begin{align*}
    & \abr{ \wtilV^\pi - \E_{s_1 \sim \mu} \sbr{V_1^{\pi}(s_1)} } \\
     &\leq  \abr{ \wtilV^\pi -  \E_{s_1 \sim \mu} \sbr{\whatV_{T,1}^{\pi}(s_1)   } }+ \abr{ \E_{s_1 \sim \mu} \sbr{\whatV_{T,1}^{\pi}(s_1)   } -  \E_{s_1 \sim \mu} \sbr{V_1^{\pi}(s_1)} } \\
     &\leq  H\sqrt{8T\log(10|\allpolicy|/\delta)}+ \abr{ \E_{s_1 \sim \mu} \sbr{\whatV_{T,1}^{\pi}(s_1)   } -  \E_{s_1 \sim \mu} \sbr{V_1^{\pi}(s_1)} }. \tag{By \pref{eq:initial_state_concentration}}
\end{align*}

Then, we turn to bound the second term. Recall the reward dependent value function defined in \pref{eq:reward_dependent_value_function}, and we have 
\begin{align*}
 \abr{  \E_{s_1 \sim \mu} \sbr{\whatV_{T,1}^{\pi}(s_1) -V_1^{\pi}(s_1)} } &\leq \E_{s_1 \sim \mu} \sbr{V^{\pi}_1(s_1,b_T)} \tag{By \pref{lem:abs_whatV_V_bound}}\\
 & =  H\E_{s_1 \sim \mu} \sbr{V^{\pi}_1(s_1,b_T/H)} \\
 &\leq \frac{H}{T}\E_{s_1 \sim \mu} \sbr{ \sum_{t=1}^T V^{\pi}_1(s_1,b_t/H)} \tag{$b_T \leq b_t, \forall t$}.
\end{align*}

From \pref{eq:bound_s1_mu}, we have
\begin{align*}
\E_{s_1 \sim \mu} \sbr{ \sum_{t=1}^T V^{\pi}_1(s_1,b_t/H)} &\leq  \sum_{t=1}^T V^{\pi}_1(s_{t,1},b_t/H) + c_1 H \sqrt{T\log \iota} \\
&\leq  \sum_{t=1}^T \whatV^{\pi}_{t,1}(s_{t,1}) + c_1 H \sqrt{T\log \iota} \tag{By \pref{lem:optimism_V}} \\
&\leq  \sum_{t=1}^T \whatV^{\pi_t}_{t,1}(s_{t,1}) + c_1 H \sqrt{T\log \iota} \tag{$\pi_t \in \argmax_{\pi \in \Pi} \whatV^{\pi}_{t,1}(s_{t,1})$}.
\end{align*}

Combining the above with \pref{eq:bound_sumofwhatV_final}, we arrive at
\begin{align*}
\E_{s_1 \sim \mu} \sbr{ \sum_{t=1}^T V^{\pi}_1(s_1,b_t/H)} =\order \rbr{ SH^2 \sqrt{\frac{A \log \iota}{T}} + \frac{HS^2A \log^2 \iota}{T} }.
\end{align*}

Now, we consider a fixed phase $m$ where the input of \pref{alg:RL_ULI_main_subroutine} is  $(\delta_m,\Pi_m,T_m)$. There always exist two absolute constants $c_1,c_2>0$ for $T_m = c_1 2^{-2m} S^2 A H^4  \log^2 \rbr{c_2 2^{-2m} S^2 A H^4|\allpolicy|\delta^{-1}}$ such that
\begin{align*}
\abr{ \wtilV^\pi_m - \E_{s_1 \sim \mu} \sbr{V_1^{\pi}(s_1)} } \leq \frac{2^{-m}}{2}.
\end{align*}

As conditioning on event $\calE$, the above holds for all $m \in \naturalnum$. Thus, the proof is complete.

\end{document}